\newtcolorbox{definitionbox}{
  colback=blue!10,          %
  colframe=black,           %
  coltitle=black,           
  fonttitle=\bfseries,
  boxrule=1pt,              %
  arc=5pt,                  %
  left=8pt, right=8pt,
  top=8pt, bottom=8pt,
  breakable                 %
}
\definecolor{amaranth}{rgb}{0.8, 0, 0.34}
\newcolumntype{C}{>{\centering\arraybackslash}X}
\def\dom{\operatorname{Dom}}
\def\step{\eta}
\def\niteration{T}
\def\lentrunc{H}
\def\sizebatch{B}
\def\iteration{t}
\newcommandx{\randState}[1][1=]{Z_{#1}}
\newcommandx{\rebuttal}[1]{\textcolor{black}{#1}}
\def\vartraj{\mathfrak{T}}
\newcommand{\policygen}{\pi}
\newcommand{\policy}{\pi}
\newcommand{\policyspace}{\Pi}
\newcommand{\initdist}{\rho}
\newcommand{\initdistmin}{\rho_{\min}}
\newcommand{\discount}{\gamma}
\newcommand{\pA}{\mathcal{P}(\A)}
\newcommand{\pS}{\mathcal{P}(\S)}
\newcommand{\elementpa}{\nu}
\newcommand{\kergen}{\mathsf{P}}
\newcommandx{\kerMDP}[1][1=]{\kergen_{#1}}
\newcommandx{\lawMDP}[1][1=]{\mathbb{P}_{#1}}
\def\vartraj{\mathfrak{T}}
\newcommandx{\regreward}[1][1=]{\rewardgen_{#1}^{f}}
\newcommandx{\weighreward}[1][1=]{\hat{\rewardgen}_{#1}^{f}}
\newcommandx{\boundwweighreward}[1][1=\param]{\mathrm{B}_{#1}^{f}}
\newcommandx{\matrixreinforce}[1][1=\param]{\mathrm{F}_{#1}^{f}}
\newcommandx{\partregreward}[1][1=\param]{\widetilde{\rewardgen}_{#1}^{f}}
\def\rewardgen{\mathsf{r}}
\newcommandx{\rewardMDP}[1][1=]{\rewardgen_{#1}}
\newcommandx{\bellman}[1][1=\policy]{\mathsf{T}_{#1}}
\newcommandx{\regbellman}[2][1=\policy,2=f]{\mathsf{T}^{#2}_{#1}}
\def\valuefuncgen{v}
\def\qfuncgen{q}
\def\advfuncgen{A}
\newcommandx{\valuefunc}[1][1=]{\valuefuncgen_{\hspace{0.07em}#1}}
\newcommandx{\regvaluefunc}[2][1=,2=f]{\valuefuncgen_{\hspace{0.07em}#1}^{#2}}
\newcommandx{\qfunc}[1][1=]{\qfuncgen_{\hspace{0.07em}#1}}
\newcommandx{\regqfunc}[2][1=,2=f]{\qfuncgen_{\hspace{0.07em}#1}^{#2}}
\newcommandx{\advvalue}[1][1=]{\advfuncgen_{\hspace{0.07em}#1}}
\newcommandx{\regadvvalue}[2][1=,2=f]{\advfuncgen_{\hspace{0.07em}#1}^{#2}}
\newcommandx{\occupancy}[2][1=,2=]{\rho_{#1}^{\hspace{0.07em}#2}}
\newcommandx{\visitation}[2][1=,2=]{d_{#1}^{\hspace{0.07em}#2}}
\newcommand{\Ind}{\mathsf{1}}
\newcommand{\temp}{\lambda}
\newcommandx{\regobjective}[1][1=f]{J^{#1}}
\newcommandx{\locfunc}[1][1=]{f^{\hspace{0.07em}#1}}
\newcommandx{\reglocfunc}[1][1=]{\tilde{J}_{\hspace{0.07em}#1}}
\newcommandx{\regularisation}[1][1=]{\mathcal{H}^{\hspace{0.07em}#1}}
\newcommandx{\divergence}[3][1=,2=,3=f]{\operatorname{D}^{#3}(#1\|#2)}
\def\divergencemax{\mathrm{d}_{f}}
\def\A{\mathcal{A}}
\def\S{\mathcal{S}}
\def\nstates{|\mathcal{S}|}
\def\nactions{|\mathcal{A}|}
\newcommand{\cM}{\mathcal{M}}
\newcommand{\smooth}{L_{f}}
\newcommand{\smoothmax}{L_{f}}
\newcommandx{\refpolicy}{\policygen_{\operatorname{ref}}}
\newcommandx{\refp}{\nu_{\operatorname{ref}}}
\newcommand{\ffirstregconstant}{\zeta_{f}}
\newcommand{\sfirstregconstant}{\omega_{f}}
\newcommand{\secregconstant}{\kappa_{f}}
\newcommand{\thirdregconstant}{\iota_{f}}
\newcommand{\constantbound}{\mathrm{c}_{f}}
\newcommand{\grb}[1]{\mathrm{g}_{#1}^{f}}
\newcommand{\egrb}{\mathrm{g}^f}
\def\KL{\text{\tiny \textup{KL}}} 
\def\TS{\text{\tiny \textup{TS}}} 
\newcommandx{\softargmax}[1][1=f]{\operatorname{#1-softargmax}}
\newcommandx{\softmax}[1][1=f]{\operatorname{#1-softmax}}
\newcommandx{\cpolicy}[2][1=f]{\policy_{#2}^{#1}}
\newcommandx{\softarg}[2][1=f]{\nu_{#2}^{#1}}
\newcommandx{\optpolicy}[1][1=f]{\policy_{\star}^{#1}}
\newcommand{\truncpolicy}{\Tilde{\policy}}
\newcommandx{\algparam}[1][1=]{\param_{#1}}
\newcommandx{\intparam}[1][1=]{\bar{\param}_{#1}}
\newcommand{\param}{\theta}
\newcommand{\apparam}{x}
\newcommand{\paramspace}{\Theta}
\newcommand{\logitspace}{\rset^{|\S||\A|}}
\newcommand{\elogitspace}{\rset^{\S \times \A}}
\newcommand{\mincoeffsoft}{\underline{\nu_{\text{ref}}}}
\newcommandx{\state}[1][1=]{s_{#1}}
\newcommandx{\action}[1][1=]{a_{#1}}
\newcommandx{\varstate}[1][1=]{S_{#1}}
\newcommandx{\varaction}[1][1=]{A_{#1}}
\newcommandx{\bellmanpolicy}[1][1=]{\mathcal{B}_{#1}^{f}}
\def\fdiff{b}
\def\sdiff{c}
\def\assumptionfsoft{\hyperref[assumF:functionF]{$\textbf{A}_{f}(\underline{\nu_{\text{ref}}})$}}
\def\assumptionpref{\hyperref[assumP:refpolicy]{$\textbf{P}(\underline{\refpolicy})$}}
\def\assumptionmdp{\hyperref[assum:sufficient_exploration]{$\textbf{A}_{\rho}$}}
\def\assumptionfref{\hyperref[assumF:functionF]{$\textbf{A}_{f}(\underline{\refpolicy})$}}
\newcommandx{\improveop}[1][1=]{\textsc{Imp}^{f}}
\newcommandx{\policytoparam}[1][1=]{M^{f}}
\newcommandx{\improveopparam}[1][1=]{\widetilde{\textsc{Imp}}^{f}}
\newcommandx{\impopbounded}[1][1=]{T^{f}}
\newcommandx{\impopunbounded}[1][1=]{T^{f}}
\def\projop{\mathcal{T}}
\def\projoppol{\mathcal{U}}
\newcommand{\sampledist}[1]{\nu(#1)}
\newcommandx{\weights}[1][1=]{\operatorname{w}_{#1}^{f}}
\newcommandx{\weightsarg}[2][1=f]{\operatorname{w}_{#2}^{#1}}
\newcommandx{\firstsum}[1][1=]{\operatorname{W}_{#1}^f}
\newcommandx{\secsum}[1][1=]{\operatorname{Y}_{#1}^f}
\def\secsummax{\mathrm{y}_{f}}
\newcommandx{\firstsumarg}[2][1=f]{\operatorname{W}_{#2}^{#1}}
\newcommandx{\secsumarg}[1][1=]{\operatorname{Y}_{#1}^f}
\newcommand{\vecte}[1]{\mathbf{e}_{#1}}
\newcommandx{\reduce}[1][1=]{\underline{#1}}
\def\plcoeff{\mu_{f}}
\def\plcoeffmin{\underline{\mu}_{f}}
\def\bias{\beta_{f}}
\def\variance{\sigma_{f}}
\def\sthreshold{\tau_{\temp}} 
\newcommandx{\filtr}[1]{\mcF_{#1}}
\NewDocumentCommand{\definealphabet}{mmmm}
 {%
  \int_step_inline:nnn { `#3 } { `#4 }
   {
    \cs_new_protected:cpx { #1 \char_generate:nn { ##1 }{ 11 } }
     {
      \exp_not:N #2 { \char_generate:nn { ##1 } { 11 } }
     }
   }
 }
\newcommand*{\ie}{i.e.\@\xspace}
\newcommand*{\etc}{%
    \@ifnextchar{.}%
        {etc}%
        {etc.\@\xspace}%
}
\crefname{assum}{A\hspace{-2pt}}{A\hspace{-2pt}}
\def\rset{\mathbb{R}}
\def\nset{\mathbb{N}}
\def\param{\theta}
\newcommandx{\firstsentence}[1]{\textcolor{purple}{#1}}
\newcommandx{\todoi}[1]{\todo[inline,color=red!50]{TODO: #1}}
\def\eqsp{\enspace}
\newcommand{\eqdef}{:=}
\def\Id{\mathrm{Id}}
\DeclareMathOperator*{\argmin}{\arg\min}
\newcommand{\indi}[1]{\mathbbm{1}_{#1}}
\def\PE{\mathbb{E}}
\newcommandx{\CPE}[3][1=]{{\mathbb E}_{#1}\left[#2 \middle\vert #3 \right]}
\newcommandx{\Varp}[2][2=]{\ifthenelse{\equal{#2}{}}{\operatorname{H}_{#1}}{\operatorname{H}_{#1}(#2)}}
\newcommandx{\norm}[2][2=]{ \left\| #1 \right\|_{#2} }
\newcommandx{\bnorm}[2][2=]{ \Big\lVert #1 \Big\rVert_{#2} }
\newcommandx{\pscal}[3][3=]{ \langle #1 , #2 \rangle_{#3}}
\newcommandx{\bpscal}[3][3=]{ \Big\langle #1 , #2 \Big\rangle_{#3}}
\newcommandx{\abs}[1]{ | #1 | }
\newcommandx{\babs}[1]{ \Big| #1 \Big| }
\newtheoremstyle{italicClaim} %
  {3pt}                     %
  {3pt}                     %
  {\itshape}                %
  {}                         %
  {\bfseries}               %
  {.}                        %
  {5pt plus 1pt minus 1pt}   %
  {}                         %
\theoremstyle{italicClaim}
\Crefname{assumLHG}{\textbf{A}-\hspace{-3pt}}{\textbf{A}-\hspace{-3pt}}
\crefname{assumLHG}{\textbf{A}}{\textbf{A}\hspace{-2pt}}
\newtheorem*{assumF}{Assumption $\textbf{A}_{f}(\underline{\refpolicy})$}
\newtheorem*{assumP}{Assumption $\textbf{P}(\underline{\refpolicy})$}
\newtheorem*{assumMDP}{Assumption $\textbf{A}_{\rho}$}
\theoremstyle{plain}
\newtheorem{theorem}{Theorem}[section]
\newtheorem{lemma}[theorem]{Lemma}
\newtheorem{corollary}[theorem]{Corollary}
\theoremstyle{definition}
\newtheoremstyle{bolditalic}{3pt}{3pt}{\itshape}{0pt}{\bfseries\itshape}{}{.5em}{}
\theoremstyle{bolditalic}
\newtheorem{remark}[theorem]{Remark}
\theoremstyle{remark}
\newcommand{\newcheckmark}{\textrm{\color{ForestGreen}\ding{51}}}%
\newcommand{\newcrossmark}{\textrm{\color{BrickRed}\ding{55}}}%
\newcommand{\poly}{\newcheckmark\!\,_{\text{poly}}}
\newcommand{\expo}{\newcheckmark\!\,_{\text{exp}}}
\DeclareMathOperator{\diag}{diag}
\DeclareMathOperator*{\argmax}{arg\,max}
\renewcommandx{\iint}[2]{\{#1,\dots,#2\}}
\newcommandx{\paul}[1]{%
  \todo[color=purple!20]{PM: \parbox[t]{0.85\linewidth}{\tiny #1}}%
}
\newcommandx{\pauli}[1]{%
  \todo[inline,color=purple!20]{PM: #1}%
}
\newcommandx{\eric}[1]{%
  \todo[color=blue!20]{EM: \parbox[t]{0.85\linewidth}{\tiny #1}}%
}
\newcommandx{\erici}[1]{%
  \todo[inline,color=blue!20]{EM: #1}%
}
\newcommand{\PG}{\hyperref[algo:PG]{\texttt{$f$-PG}}}
\newcommand{\boldPG}{\hyperref[algo:PG]{\texttt{$\boldsymbol{f}$-PG}}}
\newcommand{\term}[1]{\mathbf{(#1)}}
\title{Beyond Softmax and Entropy: Convergence Rates of Policy Gradients with \texorpdfstring{$\boldsymbol{f}$}{f}-SoftArgmax Parameterization \texorpdfstring{\&}{&} Coupled Regularization}
\author{Safwan Labbi\textsuperscript{1}, Daniil Tiapkin\textsuperscript{1,2}  , Paul Mangold\textsuperscript{1}, Eric Moulines\textsuperscript{3,4} \\
\textsuperscript{1}  CMAP, CNRS, École Polytechnique, Institut Polytechnique de Paris, 91120 Palaiseau, France \\
\textsuperscript{2}  
Université Paris-Saclay, CNRS, LMO, 91405, Orsay, France\\
\textsuperscript{3} Mohamed bin Zayed University of Artificial Intelligence, UAE 
\\
\textsuperscript{4} LRE EPITA , 94270 Le Kremlin-Bicêtre, France
\\
\texttt{\{safwan.labbi, daniil.tiapkin, paul.mangold\}@polytechnique.edu}
\\
\texttt{eric.moulines@mbzuai.ac.ae}
}
\begin{document}

\maketitle

\begin{abstract}
Policy gradient methods are known to be highly sensitive to the choice of policy parameterization. In particular, the widely used softmax parameterization can induce ill-conditioned optimization landscapes and lead to exponentially slow convergence. Although this can be mitigated by preconditioning, this solution is often computationally expensive. Instead, we propose replacing the softmax with an alternative family of policy parameterizations based on the generalized $f$-\emph{softargmax}.
We further advocate coupling this parameterization with a regularizer induced by the same $f$-divergence, which improves the optimization landscape and ensures that the resulting regularized objective satisfies a Polyak--Łojasiewicz inequality. Leveraging this structure, we establish the \emph{first explicit non-asymptotic last-iterate convergence guarantees} for stochastic policy gradient methods for finite MDPs \emph{without any form of preconditioning}. We also derive sample-complexity bounds for the unregularized problem and show that \PG\, with Tsallis divergences achieves \emph{polynomial sample complexity} in contrast to the exponential complexity incurred by the standard softmax parameterization.
\end{abstract}

\doparttoc %
\faketableofcontents %

\vspace{-1.5em}

\begin{SCfigure}[1.0][h] %
        \centering
    \begin{subfigure}[b]{0.3\textwidth}
        \centering
        \includegraphics[width=\textwidth]{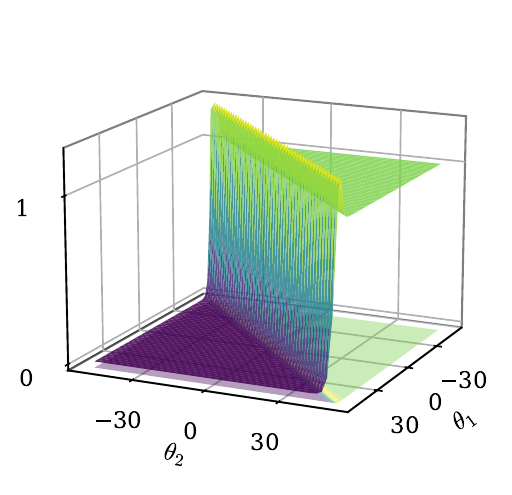}
        \caption{\centering Softmax / \\ \centering Entropy Regularization}
        \label{subfig:landscape_softmax_entropy_main}
    \end{subfigure}
    \begin{subfigure}[b]{0.3\textwidth}
        \centering
        \includegraphics[width=\textwidth]{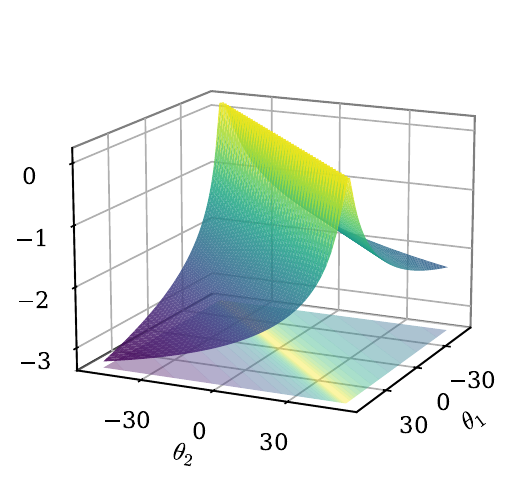}
        \caption{\centering $0.1$-Tsallis SoftArgmax / \\ \centering $0.1$- Tsallis Regularization}        \label{subfig:landscape_tsallis_tsallis_main}
    \end{subfigure}
    \caption{\small 
Regularized value landscapes (with temperature $\temp=1$) for a one-state, two-action MDP:
softmax with entropy (left) versus \(\alpha\)-Tsallis SoftArgmax with \(\alpha\)-Tsallis regularization (right, \(\alpha=0.1\)). The value of the classical coupling Entropy--Softmax is much flatter than for our proposed coupling Tsallis--Tsallis. Using the latter removes flat areas that are far from the solution, allowing \emph{policy gradient methods to escape the gravitational pull}.
}
    \label{fig:landscape}
\end{SCfigure}

\vspace{-0.5em}

\section{Introduction}
\label{sec:introduction}
Policy gradient methods are a cornerstone of modern reinforcement learning (RL) and underpin many of its most notable successes. Algorithms such as Trust-Region Policy Optimization (TRPO; \citealp{schulman2015trust}) and Proximal Policy Optimization (PPO; \citealp{schulman2017proximal}) have demonstrated strong empirical performance across a wide range of domains \citep{berner2019dota, akkaya2019solving}. Despite these successes, it has become increasingly clear that the performance and convergence behavior of policy gradient methods are highly sensitive to seemingly low-level design choices, among which the choice of policy parameterization plays a central role \citep{hsu2020revisiting}.

In discrete control scenarios, the default choice is the softmax parameterization, typically coupled with entropy regularization.
While ubiquitous, several recent results have revealed fundamental limitations of softmax-based policy gradient methods \citep{mei2020escaping, li2023softmax}. In particular, in the absence of regularization, the softmax parameterization can induce extremely flat regions in the optimization landscape, leading to an unavoidable exponential lower bound on the rate of convergence \citep{li2023softmax}. Although entropy regularization is sometimes introduced in an attempt to mitigate this issue, no polynomial convergence guarantees are currently known in this setting. Even with entropy regularization, the landscape remains flat (see \Cref{subfig:landscape_softmax_entropy_main} for an illustration). These observations motivate treating the policy parameterization itself as a \emph{design choice}: rather than varying the regularizer within softmax, we ask whether moving beyond softmax can fundamentally improve the conditioning of policy gradient methods.

\begin{table*}[t]
\centering
\small
\caption{Comparison of the performance of $\textit{scalable}^{(1)}$ policy gradient methods with global convergence guarantees on the unregularized objective.}
\label{table:results}

\renewcommand{\arraystretch}{1.2}
\resizebox{\linewidth}{!}{%
    \begin{tabular}{lccccc} 
    \toprule
         &  \multicolumn{2}{c}{Configuration}  & Stochastic & Last iterate & 
         $\text{Explicit Rates}^{(3)}$
         \\[-0.3em]
         &\small ~~Parameterization. & \small Regularization.  & &&  
          \\[-0.2em]
       \midrule
       \citet{mei2020global}
       & $\operatorname{softmax}$
       & \newcrossmark
       & \newcrossmark & \newcheckmark  & \newcrossmark  
       \\
        \citet{mei2020escaping}
       & $\operatorname{Escort Transform}$
       & \newcrossmark
       & \newcrossmark & \newcheckmark & \newcrossmark 
       \\  \citet{zhang2021sample}
       & $\operatorname{softmax}$
       & Log-Barrier
       & \newcheckmark & \newcrossmark & $\poly^{(4)}$
       \\
       \citet{liu2025linear}
       & $\operatorname{Hadamard}$
       & \newcrossmark
       & \newcrossmark & \newcheckmark & \newcrossmark 
       \\
       \midrule
       \textbf{Ours} (\Cref{coro:entropy-app})
       & $\operatorname{softmax}$
       & Entropy
       & \newcheckmark & \newcheckmark & $\expo^{(5)}$ 
       \\
       \textbf{Ours} (\Cref{coro:tsallis-app})
       & $\softargmax[f_\alpha]^{(2)}$
       & $\alpha$-Tsallis
       & \newcheckmark & \newcheckmark & $\poly^{(4)}$ 
       \\
    \bottomrule
    \end{tabular}
}

\vspace{4pt}
\begin{minipage}{\linewidth}
    \footnotesize
    $(1)$ We refer by $\textit{scalability}$ to policy gradient methods that do not use any form of preconditionning; ${(2)}$ refers to the parameterization induced by using the $\alpha$-Tsallis divergence generator (see \Cref{tab:softargmax_summary}); ${(3)}$ Explicit rates means an explicit dependency on all problem parameters and not on intractable quantities; $(4)$ $\poly$ indicates an explicit convergence rate with explicit polynomial dependency on all problem parameters; $(5)$ $\expo$ indicates an explicit convergence rate with exponential dependence on at least one parameter.
\end{minipage}
\vspace{-0.5em}
\end{table*}

In this paper, we follow the line of work of \citep{mei2020escaping,liu2025linear} and propose a new flexible family of alternative parameterizations induced by divergence generators (denoted $f$ in the following), which we refer to as \emph{$f$-softargmax parameterizations}. We regularize the objective with the corresponding $f$-divergence, and we refer to this as a \emph{coupled} parameterization--regularization pair (i.e., the same generator $f$ induces both the parameterization and the regularizer). This viewpoint generalizes the classical softmax--entropy pairing, in which the policy is both induced and regularized by Shannon entropy. 
Similar constructions have recently shown theoretical and practical benefits for supervised learning \citep{blondel2020learning, roulet2025loss}, but remain largely unexplored in reinforcement learning. 
This takes policy gradient methods \emph{beyond the softmax and its coupled entropy regularization}, yielding a better conditioned optimization landscape (see \Cref{subfig:landscape_tsallis_tsallis_main}).
In particular, we show that when these parameterizations are \emph{coupled} with the regularizer induced by the corresponding $f$-divergence, policy gradient methods enjoy improved convergence rates.
Remarkably, for the Tsallis divergence, this leads to convergence rates exponentially faster compared to the softmax--entropy pairing.

Formally, we study the $f$-regularized value function under the $f$-softargmax parameterization. We show that it satisfies a \emph{non-uniform \L{}ojasiewicz inequality} and a monotonicity property.
This monotonicity allows us to restrict the optimization to regions that are easy to project onto and in which the \L{}ojasiewicz coefficient is uniformly lower bounded, resulting in the \emph{uniform Polyak-\L{}ojasiewicz} inequality over the region of interest. Building on these observations, we establish \emph{global last-iterate convergence guarantees} for stochastic policy gradient methods in the tabular setting, with \emph{fully explicit constants}. To the best of our knowledge, these are the first guarantees of this type for policy gradient methods, even with entropy regularization and softmax parameterization, that do not rely on preconditioning or exponentially large batch sizes.
For the KL-induced parameterization–regularization pair (softmax--entropy), the resulting uniform Polyak–\L{}ojasiewicz constant is exponentially small in the problem parameters, recovering known exponential convergence rates \citep{ding2025beyond,labbi2026on}. In contrast, for Tsallis divergence generators, this constant scales only polynomially, reflecting a substantially better-conditioned optimization landscape.
Additionally, our analysis shows that moving beyond the entropy-softmax pairing yields a better trade-off between regularization bias and sample complexity. In particular, 
Tsallis-type couplings yield \emph{polynomial} last-iterate convergence guarantees even for the \emph{unregularized} objective, improving upon the worst-case guarantees known for the standard softmax (see \Cref{table:results}).

Overall, our contributions are threefold:
\begin{itemize}[leftmargin=0.3em,itemsep=0.2em,topsep=0.1em,parsep=0pt,partopsep=0pt]
\item We introduce $f$-softargmax policy parameterizations and study the regularity of the associated $f$-regularized value function as a function of the policy parameters. We show that it is smooth, satisfies a non-uniform \L{}ojasiewicz inequality, and, by exploiting a monotonicity property, admits a uniform bound on a Polyak-\L{}ojasiewicz constant on a region of interest that is easy to project onto.
\item We prove global last-iterate convergence guarantees for stochastic tabular policy gradient, with fully explicit sample complexity bounds that apply to both regularized and unregularized objectives.
\item We demonstrate that alternative couplings beyond entropy--softmax lead to improved sample complexity for an unregularized problem both theoretically and empirically. In particular, the Tsallis coupling yields polynomial dependencies on problem parameters, resulting in an exponential improvement over softmax, and provides additional flexibility for practical adaptation.
\end{itemize}
The paper is organized as follows. %
\Cref{sec:backgound} introduces the necessary background. \Cref{sec:prop-rvf} presents the $f$-softargmax parameterization and the properties of the $f$-regularized value under this parameterization. Convergence rates for policy gradient are established in \Cref{sec:convergence_analysis_fpg}, and numerical experiments are reported in \Cref{sec:experiments}.

\textbf{Related work.}
\textit{(Policy gradient methods.)}
Global convergence guarantees are known to hold for unregularized policy gradient methods with deterministic gradients, achieving sublinear rates with constant step-sizes \citep{mei2020global,liu2024elementaryanalysispolicygradient}. %
However, the convergence rates of softmax-based policy gradient methods is exponential in the problem parameters \citep{mei2020escaping,li2023softmax}. Two strategies were proposed to mitigate this issue: first, preconditioning, most notably through natural policy gradient methods \citep{kakade2001natural}, which can alleviate ill-conditioning, but scales poorly to larger problems due to the nature of the updates. Second, log-barrier regularization \citep{zhang2021sample}, which yields polynomial rates but has no last iterate convergence guarantees and is unstable in practice. In contrast, our approach avoids preconditioning altogether and therefore retains the scalability of standard policy gradient methods, while providing explicit polynomial convergence guarantees for the \emph{last iterate}.

\textit{(Alternative Parameterizations.)}
Alternatives to softmax have been proposed and studied in optimization \citep{martins2016softmax,peters2019sparse,roulet2025loss}. In RL, the study of alternative parameterizations is still in its early stages. The escort transform of \citet{mei2020escaping} avoids exponential slowdowns in deterministic settings, but its guarantees rely on increasing step-sizes and do not extend to stochastic gradients. The Hadamard parameterization \citep{liu2025linear} yields local linear convergence in deterministic regimes, but without explicit constants.
In this work, we propose a more flexible family of parameterizations that can adapt to various problems and provide explicit guarantees in the stochastic setting.

\section{Background}
\label{sec:backgound}
\textbf{Reinforcement learning.}  
Consider a discounted Markov decision process
$\cM = (\S, \A, \discount, \kerMDP, \rewardMDP, \initdist)$
with finite state and action spaces $\S$ and $\A$, discount factor
$\discount \in (0,1)$, transition kernel $\kerMDP\colon \S \times \A \to \pS$,
bounded reward function $\rewardMDP\colon \S \times \A \to [0,1]$,
and initial distribution $\initdist$. The value of a policy $\policy\colon \S \to \pA$ is defined by
\begin{align}
\label{def:value_function}
\valuefunc[\policy](s)\eqdef \PE_{\state}^{\policy}\left[\sum\nolimits_{t=0}^\infty\discount^t \rewardMDP(\varstate[t],\varaction[t])\right],
\end{align}
where $\varstate[0] = \state$, $\varaction[t]\sim\policy(\cdot|\varstate[t])$, and $\varstate[t+1]\sim\kerMDP(\cdot|\varstate[t],\varaction[t])$. For $\initdist\in\pS$, we define $\valuefunc[\policy](\initdist)\eqdef\sum_s \initdist(s)\valuefunc[\policy](\state)$. For any $\policy$, we define a corresponding discounted occupancy measure 
\[
\visitation[\initdist][\policy](\state) \eqdef \frac{1}{1-\discount}\PE_{\state}^{\policy}\Big[\sum_{t=0}^\infty \discount^t \Ind_{s}(\varstate[t])\Big]
    \eqsp.
\]

\textbf{Parameterizations on the simplex.}  Following
\citet{roulet2025loss}, we study a family of parameterizations of the simplex based on divergence generators. For a given \emph{generator} $f\colon(0,\infty)\to\rset$ strictly convex with $f(1)=0$, and reference distribution $q$ with \emph{full support}, we define %
\begin{align*}
\textstyle
\softmax(x,q) & \textstyle \eqdef \max\limits_{\elementpa \in \pA} \big\{\pscal{\elementpa}{x}-\divergence[\elementpa][q]\big\}\eqsp,
\\
\textstyle \softargmax(x,q) & \textstyle \eqdef \argmax\limits_{\elementpa \in \pA} \big\{\pscal{\elementpa}{x}-\divergence[\elementpa][q]\big\}\eqsp,
\end{align*}
where $\divergence[p][q]$ is the $f$-divergence between $p$ and $q$ (see \citealp{csiszar1967information}, or \Cref{sec:notations}).
Since $\divergence[p][q]$ is strictly convex in its first argument on the simplex, the output of the $\softargmax$ operator is well defined and unique, as it corresponds to the $\argmax$ of a strictly concave function over a compact set. %
 This construction recovers the classical softmax as a special case and yields a rich family of alternative parameterizations (see \Cref{tab:softargmax_summary}). Computing $\softargmax$ reduces to solving a one-dimensional root-finding problem, which can be done efficiently by dichotomy; see \citet{roulet2025loss} and \Cref{lem:implicit_equation_appendix} for details.
\begin{table*}
 \small
 \centering
  \caption{Example of classical divergence generators $f$ included in our framework and their associated $\softargmax$ operators.}
 \resizebox{\linewidth}{!}{ \begin{tabular}{ccc}
 \toprule
 \textbf{Name} & $\mathbf{f(u)}$ & $\softargmax[f](\apparam,\refp)[a]^{(1)}$ \\
 \midrule
 KL & $u \log u - (u-1)$ & $ \refp(\action)\exp(x(a))/(\sum_{b \in \A} \refp(b)\, \exp(x(b)))$ \\
 Tsallis ($0<\alpha<1$) & $(u^\alpha-\alpha u+\alpha-1)/\alpha(\alpha-1)$ & $ \refp(\action)\, (1 + (\alpha-1)(x(a) - \mu_{\apparam}^{\alpha}))^{1/(\alpha-1)}$ \\
  Jensen-Shannon & $u \log(u) - (u+1) \log(\frac{u+1}{2})$ & $ \refp(\action)\exp(2(\apparam(\action)-\mu_\apparam))/(2 - \exp(2(\apparam(\action)-\mu_\apparam)))$ \\
 \bottomrule
 \end{tabular}}
 \label{tab:softargmax_summary}
 \begin{tablenotes}
 \small
\item[]$(1)$  Here $\mu_{\apparam}^{\alpha}$ and $\mu_{x}$ are normalization factors that ensures that the weights sum to $1$.
 \end{tablenotes}
 \end{table*}
 
\textbf{$\boldsymbol{f}$-regularized value functions.} Given a reference policy $\refpolicy$, temperature 
$\temp\geq 0$, and a divergence generator $f$, the $f$-regularized value function of 
$\policy$ is defined by
\begin{align}
\nonumber
\textstyle
\!\regvaluefunc[\policy](s)\!\eqdef\!\PE_{\state}^{\policy}\!\!\left[\sum\limits_{t=0}^\infty\!\discount^t( \rewardMDP(\varstate[t], \varaction[t]) \!-\! \temp \divergence[{\policy(\cdot|\varstate[t])}][{\refpolicy(\cdot|\varstate[t])}]\right],
\end{align}
A key result \citep{geist2019theory} is that the optimal regularized value $\smash{\regvaluefunc[\star](s) \eqdef \max_{\policy} \regvaluefunc[\policy](s)}$, together with optimal policy $\smash{\optpolicy}$ admits a closed-form Bellman characterization:
\begin{gather}\label{eq:optimal_value}
\textstyle \regvaluefunc[\star](s)
=\max\limits_{\nu\in\pA}\{\pscal{\nu}{\regqfunc[\star](s,\cdot)}-\temp\,\divergence[\nu][\refpolicy(\cdot|s)]\}, 
\\ 
\textstyle 
\hspace{-1.1em}\optpolicy(\cdot|s)
= \!\argmax\limits_{\nu\in\pA}\{\pscal{\nu}{\regqfunc[\star](s,\cdot)} \!-\!\temp\,\divergence[\nu][\refpolicy(\cdot|s)]\},\label{eq:optimal_policy}
\end{gather}
where $\regqfunc[\star](\state,\action)\eqdef \rewardMDP(\state,\action) + \discount \kerMDP \regvaluefunc[\star](\state,\action)$.

\section{Coupling~Parameterization \& Regularization}
\label{sec:prop-rvf}
We introduce a new class of policy parameterizations for reinforcement learning, which we refer to as $\softargmax$ policies. Let $\refpolicy$ denote a full-support reference policy. For $\param \in \logitspace$, we define the $\softargmax$ policy by
\begin{align} \label{eq:definition_soft_f_argmax} \!\!\cpolicy{\param}(\cdot|\state) \eqdef \softargmax(\param(\state, \cdot), \refpolicy( \cdot|\state)), \eqsp \forall \state \in \S. \end{align}
This parameterization can be directly used within unregularized policy gradient methods (see \Cref{sec:app:unregularised_PG}). However, in practice, unregularized methods tend to over-exploit and converge prematurely to suboptimal policies. %
This suggests that the choice of parameterization should be guided by the geometry of a suitably regularized objective, rather than considered in isolation. 

To understand which regularization is naturally associated with the $\softargmax$ family, we examine the structure of the $f$-regularized problem \eqref{eq:optimal_policy}. The optimal policy of this problem admits the following representation:
\begin{align*}
\textstyle
\optpolicy(\cdot|\state)
= \softargmax\!\big(\regqfunc[\star](s,\cdot)/\temp,\;\refpolicy(\cdot|\state)\big)\eqsp.
\end{align*}
In particular, if we choose logits $\param_{\star}^{f}(s,\cdot)=\regqfunc[\star](s,\cdot)/\temp + b(s)$, where $b\colon \S \to \mathbb{R}$ is an arbitrary state-dependent baseline, then the $\softargmax$ mapping exactly recovers the optimal $f$-regularized policy, i.e., $\cpolicy{\param_\star}=\optpolicy$. 

This shows that the $\softargmax$ parameterization is not arbitrary: it is precisely matched to the geometry of the $f$-regularized problem. Under this parameterization, learning the policy is equivalent to learning the regularized optimal $Q$-function, and the associated $f$-divergence regularizer arises naturally from the variational characterization of the optimal policy.

To further formalize the benefits of such coupling, we now establish the smoothness, as well as a Polyak-Łojasiewicz inequality, of the $f$-regularized value with coupled parameterization $\regvaluefunc[\param] \eqdef \regvaluefunc[\cpolicy{\param}]$.
We derive these properties under the following two assumptions on $f$ and $\refpolicy$. 
\begin{assumP}\label{assumP:refpolicy}
There exists a number $\underline{\refpolicy}>0$ such that $\min_{(\state,\action)} \refpolicy(\action|\state)> \underline{\refpolicy}$.
\end{assumP}
\begin{assumF}
\label{assumF:functionF} 
The generator function $f$ satisfies:
\begin{enumerate}[label=(\roman*),leftmargin=15pt,topsep=-0.5ex,itemsep=-0.5ex,partopsep=0.5ex,parsep=0.5ex]
    \item $f$ is bounded, strictly convex on $[0;  1\!/\!\underline{\refpolicy}]$, $f(1)\!=\!0$, and $f$ is thrice differentiable on $(0,1/\underline{\refpolicy})$;
    \item $\lim_{u \downarrow 0^+} f'(u) \!=\! -\infty$, and $\lim_{u \rightarrow 0}|f'(u)/f''(u)|\!<\! \infty$;
    \item there exists $\sfirstregconstant \in [1,\infty)$, and $\secregconstant \in (0, \infty)$, such that for any $u \in [0;  1/\underline{\refpolicy}]$, we have $
\smash{1/(uf''(u)) \leq \sfirstregconstant, 
\text{ and }  |f'''(u)/f''(u)^2| \leq \secregconstant}$;
    \item  there exists $\!\thirdregconstant \!\in\! (0,1]$ such that $f''$ decreases on $[0; \thirdregconstant]$ and for $\!u\!\in\! [\thirdregconstant; \!1/\underline{\refpolicy}]$, $f''(\thirdregconstant) \!\geq\! f''(u)$.
\end{enumerate}
\end{assumF}
These conditions are met by a broad class of commonly used divergence generators, like the KL, Tsallis with $\alpha \leq 1$, and Jensen-Shannon (see  \Cref{sec:app:appli-f-div}).
\begin{remark}
Tsallis divergences with $\alpha > 1$ violate condition (ii): since $f'(0)$ is finite, the induced policies are sparse, leading to non-smooth parameterizations.
\end{remark}
Under \assumptionpref\, and \assumptionfref, we can define the weights $\weights[\param](\action | \state)$ and the sum $\firstsum[\param](\state)$, defined as
\begin{gather}
\label{eq:weight-and-sum}
    \weights[\param](\action| \state) \eqdef 
\frac{1}{\firstsum[\param](\state)} \frac{\refpolicy(\action| \state)}{f''( \cpolicy{\param}(\action| \state)/\refpolicy(\action| \state))} 
          \eqsp,
          \quad 
          \firstsum[\param](\state) \eqdef 
\sum_{\action \in \A} 
          \frac{\refpolicy(\action| \state)}{f''( \cpolicy{\param}(\action| \state)/\refpolicy(\action| \state))}
    \eqsp,
\end{gather}
which will play a central role in our analysis.
In the KL divergence case, %
we recover simple expressions $\smash{\weights[\param](\action| \state) \equiv \cpolicy{\param}(\action| \state)}$ and $\smash{\firstsum[\param](\state) = 1}$. 
Using the notations in \eqref{eq:weight-and-sum}, we can express the gradient of the regularized value.
\begin{lemma}
\label{lem:expression_gradient_main}
Assume that, for some $\underline{\refpolicy}>0$, $f$ and $\refpolicy$ satisfy \assumptionfref\ and \assumptionpref. For any $\state \in \S$, we have
\begin{align*}
\frac{\partial \regvaluefunc[\param](\initdist)}{\partial \param(\state, \cdot)}  =  (1- \discount)^{-1}\firstsum[\param](\state) \visitation[\initdist][\param](\state)H(\weights[\param](\cdot|\state))
\left[ \regqfunc[\param](\state,\cdot) 
- \temp \param(\state, \cdot)  \right],
\end{align*}
where for any vector $u \in \rset^{|\A|}$, $H(u) \eqdef \diag(u) - u u^{\top}$, $\regqfunc \eqdef \rewardMDP+\discount \kerMDP\regvaluefunc[\param]$, and $\visitation[\initdist][\param](\state) \eqdef \visitation[\initdist][\cpolicy{\param}](\state)$.
\end{lemma}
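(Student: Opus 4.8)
The plan is to combine a differentiated form of the regularized Bellman recursion with an explicit computation of the Jacobian of the $\softargmax$ map. Fix a state $\state\in\S$ and abbreviate $p\eqdef\cpolicy{\param}(\cdot|\state)$, $q\eqdef\refpolicy(\cdot|\state)$, and $x\eqdef\param(\state,\cdot)$. Starting from the regularized Bellman identity $\regvaluefunc[\param](\sstate)=\pscal{\cpolicy{\param}(\cdot|\sstate)}{\regqfunc[\param](\sstate,\cdot)}-\temp\,\divergence[{\cpolicy{\param}(\cdot|\sstate)}][{\refpolicy(\cdot|\sstate)}]$ with $\regqfunc[\param]=\rewardMDP+\discount\kerMDP\regvaluefunc[\param]$, I would differentiate with respect to $\param(\state,\cdot)$. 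Since $\cpolicy{\param}(\cdot|\sstate)$ depends on $\param(\state,\cdot)$ only when $\sstate=\state$, the vector $g(\sstate)\eqdef\partial\regvaluefunc[\param](\sstate)/\partial\param(\state,\cdot)$ satisfies the linear recursion $g(\sstate)=\indi{\sstate=\state}\,L(\state)+\discount\sum_{\sstate'}\big(\sum_{\action}\cpolicy{\param}(\action|\sstate)\kerMDP(\sstate'|\sstate,\action)\big)g(\sstate')$, whose only source term is the local sensitivity $L(\state)\eqdef\sum_{\action}\frac{\partial\cpolicy{\param}(\action|\state)}{\partial\param(\state,\cdot)}\regqfunc[\param](\state,\action)-\temp\,\frac{\partial}{\partial\param(\state,\cdot)}\divergence[p][q]$. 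Unrolling this geometric recursion against $\initdist$ produces $\sum_{t\ge0}\discount^t\PR^{\param}_{\initdist}(\varstate[t]=\state)=\visitation[\initdist][\param](\state)/(1-\discount)$, so the whole statement reduces to evaluating $L(\state)$.

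The heart of the argument is the Jacobian of $p=\softargmax(x,q)$. By assumption (ii), $\lim_{u\downarrow0^+}f'(u)=-\infty$ forces the maximizer into the relative interior of $\pA$, so the first-order conditions hold with a single multiplier $\ell$ enforcing $\sum_{\action}p(\action)=1$, namely $f'(p(\action)/q(\action))=x(\action)-\ell$ for all $\action$ (the stationarity condition underlying the implicit equation of \Cref{lem:implicit_equation_appendix}). Differentiating this identity in $x(b)$ gives $\tfrac{1}{q(\action)}f''(p(\action)/q(\action))\,\partial_{x(b)}p(\action)=\indi{\action=b}-\partial_{x(b)}\ell$, hence $\partial_{x(b)}p(\action)=\frac{q(\action)}{f''(p(\action)/q(\action))}\big(\indi{\action=b}-\partial_{x(b)}\ell\big)$. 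Summing over $\action$ and using $\sum_{\action}\partial_{x(b)}p(\action)=0$ together with the definition of $\firstsum[\param](\state)$ in \eqref{eq:weight-and-sum} solves for the multiplier sensitivity, $\partial_{x(b)}\ell=\weights[\param](b|\state)$, which is exactly the weight. Substituting back yields the compact symmetric form $\partial_{x(b)}p(\action)=\firstsum[\param](\state)\big(\weights[\param](\action|\state)\indi{\action=b}-\weights[\param](\action|\state)\weights[\param](b|\state)\big)=\firstsum[\param](\state)\,[H(\weights[\param](\cdot|\state))]_{\action b}$.

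With the Jacobian in hand, both pieces of $L(\state)$ collapse. The reward piece equals $\firstsum[\param](\state)\,H(\weights[\param](\cdot|\state))\,\regqfunc[\param](\state,\cdot)$ by symmetry of $H$. For the regularizer piece I would expand $\divergence[p][q]=\sum_{\action}q(\action)f(p(\action)/q(\action))$ by the chain rule into $\partial_{x(b)}\divergence[p][q]=\sum_{\action}f'(p(\action)/q(\action))\,\partial_{x(b)}p(\action)$, then invoke the stationarity relation $f'(p(\action)/q(\action))=x(\action)-\ell$ and the column-sum-zero property $\sum_{\action}\partial_{x(b)}p(\action)=0$ to cancel the multiplier, leaving $\partial_{x(b)}\divergence[p][q]=\sum_{\action}x(\action)\,\partial_{x(b)}p(\action)=\firstsum[\param](\state)\,[H(\weights[\param](\cdot|\state))\,\param(\state,\cdot)]_{b}$. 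Combining the two pieces gives $L(\state)=\firstsum[\param](\state)\,H(\weights[\param](\cdot|\state))\,[\regqfunc[\param](\state,\cdot)-\temp\,\param(\state,\cdot)]$, and inserting this into the occupancy prefactor yields the claim.

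The main obstacle is rigor around the implicit differentiation rather than the algebra. I would first justify that $p=\softargmax(x,q)$ is a well-defined and continuously differentiable function of $x$: strict convexity of $f$ on $[0,1/\underline{\refpolicy}]$ (assumption (i)) makes the Hessian of the variational objective nondegenerate on the tangent space of the simplex, while $\lim_{u\downarrow0^+}f'(u)=-\infty$ (assumption (ii)) rules out boundary solutions, so the implicit function theorem applies to the stationarity system and delivers smoothness of both $p$ and $\ell$ as functions of $x$. A secondary technical point is the interchange of differentiation with the infinite-horizon sum when unrolling $g$; this follows from the $\discount$-contraction of the policy-induced transition operator together with boundedness of $\regqfunc[\param]$ and of $f$ on $[0,1/\underline{\refpolicy}]$, which gives absolute convergence uniformly in a neighborhood of $\param$.
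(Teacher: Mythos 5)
Your proposal is correct and follows essentially the same route as the paper: differentiate the regularized Bellman recursion, invert (equivalently, unroll) the resulting linear system against $\initdist$ to produce the factor $\visitation[\initdist][\param](\state)/(1-\discount)$, obtain the $\softargmax$ Jacobian $\firstsum[\param](\state)\,H(\weights[\param](\cdot|\state))$ by implicit differentiation of the KKT stationarity system, and use the stationarity identity $f'\big(\cpolicy{\param}(\action|\state)/\refpolicy(\action|\state)\big)=\param(\state,\action)-\mu_\param(\state)$ together with the column-sum-zero property of the Jacobian to convert the divergence derivative into the $\temp\,\param(\state,\cdot)$ term. This is exactly the argument of \Cref{lem:derivative_soft_argmax}, \Cref{lem:derivative_policy} and \Cref{lem:expression_gradient} in the paper, so no further changes are needed.
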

Next, we introduce three quantities that that arise naturally in the expression of the Hessian of
$\regvaluefunc[\param](\initdist)$.
\begin{gather}
\label{def:bound_functionnal_f}
\secsummax\eqdef\textstyle 
\max\limits_{(\state, \elementpa) \in \S \times \pA}\! \sum\limits_{\action \in \A} \!\refpolicy(\action|\state)  \frac{\left| f'\left(\elementpa(\action)/\refpolicy(\action|\state) \right)\right|}{f''\left(\elementpa(\action)/\refpolicy(\action|\state) \right)}, \eqsp
\divergencemax
\eqdef\textstyle
\max\limits_{(\state, \elementpa) \in \S \times \pA} \divergence[\elementpa][\refpolicy(\cdot|\state)]
\eqsp,
\\
\label{def:lower_bound_W}
\ffirstregconstant\eqdef\textstyle \min\limits_{(\state, \elementpa) \in \S \times \pA} \sum_{\action \in \A} \frac{\refpolicy(\action|\state)}{f''(\elementpa(\action) / \refpolicy(\action|\state))} > 0 \eqsp.
\end{gather}
Since they are bounded under \assumptionfref\ and \assumptionpref, we can establish the smoothness of $\smash{\regvaluefunc[\param](\initdist)}$.
\begin{theorem}
\label{thm:simplified_smoothness}
Assume \assumptionfref\ and \assumptionpref\, for some $\underline{\refpolicy}>0$. For any $\param \in \logitspace$, $\norm{\smash{\nabla^2 \regvaluefunc[\param](\initdist)}}[2] \leq \smash{\smoothmax}$ with
\begin{align*}
\smoothmax
&  \eqdef \mathcal{O}\left( \frac{\sfirstregconstant^2 + \sfirstregconstant \secregconstant + \temp \cdot ( \sfirstregconstant^2 \divergencemax + \sfirstregconstant ( \secregconstant \divergencemax + \secsummax) + \sfirstregconstant + 2 \secregconstant \secsummax)}{(1-\discount)^3} \right)\eqsp.
\end{align*}
\end{theorem}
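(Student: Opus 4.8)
The plan is to compute the Hessian $\nabla^2 \regvaluefunc[\param](\initdist)$ by differentiating the gradient expression from \Cref{lem:expression_gradient_main}, and then bound its spectral norm by controlling each resulting term using \assumptionfref\ and \assumptionpref\ together with the three functionals \eqref{def:bound_functionnal_f}--\eqref{def:lower_bound_W}. Since the gradient decomposes blockwise over states as
\begin{align*}
\frac{\partial \regvaluefunc[\param](\initdist)}{\partial \param(\state, \cdot)} = \firstsum[\param](\state) \frac{\visitation[\initdist][\param](\state)}{1-\discount} H(\weights[\param](\cdot|\state))\left[ \regqfunc[\param](\state,\cdot) - \temp \param(\state, \cdot)\right],
\end{align*}
I would differentiate this with respect to $\param(\sstate, \cdot)$ for an arbitrary second state $\sstate$, using the product rule over the four $\param$-dependent factors: (i) the scalar sum $\firstsum[\param](\state)$, (ii) the occupancy $\visitation[\initdist][\param](\state)$, (iii) the matrix $H(\weights[\param](\cdot|\state))$, and (iv) the bracketed residual involving $\regqfunc[\param]$ and the explicit $-\temp\param(\state,\cdot)$ term. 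The key auxiliary computations are the sensitivities $\partial \cpolicy{\param}(\action|\state)/\partial \param$, $\partial \weights[\param]/\partial\param$, $\partial \firstsum[\param]/\partial\param$, and $\partial \visitation[\initdist][\param]/\partial\param$, each of which pulls out a factor controlled by $1/(uf''(u))$, $f'''/(f'')^2$, or $|f'|/f''$, explaining precisely why $\sfirstregconstant$, $\secregconstant$, and $\secsummax$ appear in the bound.

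The backbone of the estimate is the policy sensitivity: differentiating the implicit characterization of $\softargmax$ (the root-finding relation referenced after \eqref{eq:definition_soft_f_argmax}) shows that $\partial \cpolicy{\param}(\cdot|\state)/\partial\param(\state,\cdot) = H(\weights[\param](\cdot|\state))$ up to the normalizing factor $\firstsum[\param](\state)$, which is exactly the structure already visible in the gradient. This is the $f$-softargmax analogue of the familiar softmax Jacobian $\diag(\policy)-\policy\policy^\top$, and condition (iii) of \assumptionfref\ is what keeps $\firstsum[\param](\state) \leq \sfirstregconstant \nactions$ and $\norm{H(\weights[\param])}$ uniformly bounded. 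For the temperature-free part, I would reuse the standard performance-difference / policy-gradient machinery to bound the sensitivity of $\regqfunc[\param]$ and of the occupancy $\visitation[\initdist][\param]$ through $\regvaluefunc[\param]$, each contributing a factor of $(1-\discount)^{-1}$; accumulating the three $(1-\discount)^{-1}$ factors (one from the occupancy, one from the value-sensitivity, one already present in the gradient) is the origin of the $(1-\discount)^{-3}$ denominator. The explicit $\temp\param$ term must be handled with care: on its own $\param$ is unbounded, but under the $\softargmax$ parameterization the \emph{effective} logit that enters the residual is $\param(\state,\cdot)$ only through $\regqfunc[\star]/\temp$-type quantities, and differentiating $-\temp\param(\state,\cdot)$ contributes the term proportional to $\temp$ with the $\divergencemax$ and $\secsummax$ dependencies, since the residual itself is controlled by the $f$-divergence bound $\divergencemax$.

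**The main obstacle** I anticipate is controlling the term where the Hessian differentiates the residual bracket $\regqfunc[\param](\state,\cdot) - \temp\param(\state,\cdot)$, because $\regqfunc[\param]$ depends on $\param$ through the entire trajectory distribution and $\param$ itself is a priori unbounded in $\logitspace$. The resolution is that $H(\weights[\param](\cdot|\state))$ annihilates constant shifts (it satisfies $H(u)\vectone=\vectzero$ whenever $\sum u = 1$), so the baseline-invariance noted after \eqref{eq:optimal_policy} means only the \emph{projected} logit $H(\weights)\param(\state,\cdot)$ matters, and this projection is exactly $-\temp^{-1}$ times an $f$-divergence-type quantity bounded by $\divergencemax$ and $\secsummax$. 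Concretely I would rewrite $H(\weights[\param])\param(\state,\cdot)$ using the stationarity of the $\softargmax$ map (which relates $\param$ back to $f'(\cpolicy{\param}/\refpolicy)$), turning the apparently unbounded $\param$ into the controlled quantity $\sum_\action \refpolicy(\action|\state)\,|f'(\cdot)|/f''(\cdot) \le \secsummax$.

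**Assembling the bound,** I would collect the contributions: the $\temp$-free terms yield the $\sfirstregconstant^2 + \sfirstregconstant\secregconstant$ block (the square from differentiating two $H$/$\firstsum$ factors, the $\secregconstant$ from the $f'''$ term in $\partial\weights[\param]/\partial\param$), while the $\temp$-dependent terms produce the remaining block, with each summand tracking which combination of $\sfirstregconstant$, $\secregconstant$, $\divergencemax$, and $\secsummax$ arose from differentiating the sum, the weight matrix, the occupancy, or the explicit regularizer. Taking a uniform spectral-norm bound over all state-blocks (the cross-state $\sstate \neq \state$ contributions vanish or are absorbed through the occupancy sensitivity, since $\visitation[\initdist][\param]$ couples states but is bounded by $(1-\discount)^{-1}$) and invoking $\firstsum[\param](\state) \le \sfirstregconstant\nactions$, $\firstsum[\param](\state)\ge \ffirstregconstant$, $\norm{\visitation[\initdist][\param]}[1]=1$, and the boundedness of $\regqfunc[\param]$ by $(1-\discount)^{-1}$, gives the claimed $\mathcal{O}(\cdot)$ expression for $\smoothmax$.
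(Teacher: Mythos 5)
Your route is genuinely different from the paper's. The paper (\Cref{sec:app-smooth-objective}) never differentiates the gradient expression of \Cref{lem:expression_gradient_main}: it writes $\regvaluefunc[\param_{\alpha}](\state) = \vecte{\state}^{\top}(\Id - \discount \kerMDP[\param_{\alpha}])^{-1}\regreward[\param_{\alpha}]$ with $\param_{\alpha} = \param + \alpha u$, differentiates this identity twice in $\alpha$, and bounds the four resulting terms through directional derivatives of the policy-induced kernel (\Cref{lem:derivative_kernel}) and of the regularized reward (\Cref{lem:bound_reward}), concluding via Nesterov's criterion (\Cref{lem:smoothness_inequality}). Your plan instead differentiates the policy-gradient-theorem form of the gradient, which obliges you to control derivatives of the occupancy $\visitation[\initdist][\param]$, of $\weights[\param]$, of $\firstsum[\param]$, and of $\regqfunc[\param]$ --- more objects, but the same underlying quantities, and your accounting of where $\sfirstregconstant$, $\secregconstant$, $\divergencemax$, $\secsummax$ and the three $(1-\discount)^{-1}$ factors arise is consistent with the paper's. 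In particular, your mechanism for the a priori unbounded $\temp\param$ term --- $H(\weights[\param](\cdot|\state))\Ind_{\nactions} = 0$, so only the projection matters, and the stationarity identity $\param(\state,\action) - f'(\cpolicy{\param}(\action|\state)/\refpolicy(\action|\state)) = \mu_{\param}(\state)$ converts $H(\weights[\param])\param(\state,\cdot)$ into $f'$-terms controlled by $\secsummax$ --- is sound, and is exactly what the paper exploits implicitly (in the proofs of \Cref{lem:expression_gradient} and \Cref{lem:bound_reward}). What the paper's resolvent route buys is economy: no occupancy or $Q$-function derivatives are ever formed, and the spectral-norm conclusion is immediate; what your route buys is transparency about how each constant is generated by each factor of the gradient.

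Two points must be repaired before your assembly yields the stated bound. First, you twice invoke $\firstsum[\param](\state) \leq \sfirstregconstant \nactions$; this is needlessly loose and would pollute the final estimate with $\nactions$ factors that are absent from $\smoothmax$. The sharp bound, used in \Cref{lem:bound_important_quantities}, is $\firstsum[\param](\state) \leq \sfirstregconstant$: condition (iii) of \assumptionfref\ gives $1/f''(u) \leq \sfirstregconstant u$, hence
\begin{align*}
\firstsum[\param](\state) = \sum_{\action \in \A} \frac{\refpolicy(\action|\state)}{f''\big(\cpolicy{\param}(\action|\state)/\refpolicy(\action|\state)\big)} \leq \sfirstregconstant \sum_{\action \in \A} \cpolicy{\param}(\action|\state) = \sfirstregconstant \eqsp.
\end{align*}
Second, your final step (``taking a uniform spectral-norm bound over all state-blocks'') is not an argument: the cross-state blocks coming from $\partial \visitation[\initdist][\param]/\partial\param$ and $\partial \regqfunc[\param]/\partial\param$ do not vanish, and the spectral norm of a block matrix is not the maximum of the block norms --- a naive union over $|\S|$ blocks would introduce spurious $|\S|$ factors. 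The clean fix, which is precisely the paper's device, is to carry out your entire computation as a second directional derivative $u^{\top} \nabla^2 \regvaluefunc[\param](\initdist)\, u$ along an arbitrary $u \in \logitspace$, bounding the directional derivative of the occupancy in total variation (in the spirit of \Cref{lem:bound_difference_stationary_visitation_measure}); since the Hessian is symmetric, a uniform bound $|u^{\top} \nabla^2 \regvaluefunc[\param](\initdist)\, u| \leq L \norm{u}[2]^2$ gives $\norm{\nabla^2 \regvaluefunc[\param](\initdist)}[2] \leq L$ directly, and \Cref{lem:smoothness_inequality} then upgrades it to smoothness. With these two corrections your argument goes through and recovers the claimed constants.
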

We refer to \Cref{sec:app-smooth-objective} for a proof and a complete expression of $\smoothmax$. 
We now introduce the classical exploration assumption \citep{mei2020escaping,mei2020global,agarwal2021theory}.
\begin{assumMDP}
\label{assum:sufficient_exploration}
\!The coefficient $\initdistmin \eqdef \min_{\state \in \S} \initdist(\state)$ of the initial distribution $\initdist$ satisfies $\initdistmin \!> 0$.
\end{assumMDP}
Next, we derive a Non-Uniform Łojasiewicz inequality.
\begin{theorem}
\label{thm:pl_objective_main}
Assume that, for some $\underline{\refpolicy}>0$,  \assumptionfref\ and \assumptionpref\ hold. Assume in addition that the initial distribution $\initdist$ satisfies \assumptionmdp. Then, it holds that
\begin{align*}
&\norm{\smash{\nabla_\param  \regvaluefunc[\param](\initdist)}}[2]^2 \geq \plcoeff(\param) ( \regvaluefunc[\star](\initdist) - \regvaluefunc[\param](\initdist) ) \eqsp,
\end{align*}
with $\plcoeff(\param) \eqdef \temp (1-\discount) \initdistmin^2  (\ffirstregconstant/\sfirstregconstant)^2 \min_{\state, \action} \weights[\param](\action|\state)^2$.
\end{theorem}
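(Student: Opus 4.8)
The plan is to combine the gradient formula of \Cref{lem:expression_gradient_main} with a single, carefully chosen test direction and to trade one factor of the suboptimality gap against the strong convexity of the $f$-divergence regulariser. Write $g(\state,\cdot)\eqdef\regqfunc[\param](\state,\cdot)-\temp\param(\state,\cdot)$, so that the state-$\state$ block of the gradient is $\firstsum[\param](\state)\tfrac{\visitation[\initdist][\param](\state)}{1-\discount}H(\weights[\param](\cdot|\state))g(\state,\cdot)$. The algebraic observation driving the proof is that the direction $\xi(\state,\action)\eqdef(\optpolicy(\action|\state)-\cpolicy{\param}(\action|\state))/\weights[\param](\action|\state)$ satisfies $H(\weights[\param](\cdot|\state))\xi(\state,\cdot)=\optpolicy(\cdot|\state)-\cpolicy{\param}(\cdot|\state)$, since $\sum_\action(\optpolicy(\action|\state)-\cpolicy{\param}(\action|\state))=0$ annihilates the rank-one part of $H$. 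Using $\norm{\nabla_\param\regvaluefunc[\param](\initdist)}[2]\geq\pscal{\nabla_\param\regvaluefunc[\param](\initdist)}{\xi}/\norm{\xi}[2]$ together with the symmetry of $H$, I reduce the statement to two estimates: a lower bound on the numerator $\pscal{\nabla_\param\regvaluefunc[\param](\initdist)}{\xi}\gtrsim\ffirstregconstant\,\initdistmin\,(\regvaluefunc[\star](\initdist)-\regvaluefunc[\param](\initdist))$, and an upper bound $\norm{\xi}[2]^2\lesssim(\regvaluefunc[\star](\initdist)-\regvaluefunc[\param](\initdist))$. Squaring $\norm{\nabla_\param\regvaluefunc[\param](\initdist)}[2]\geq\pscal{\nabla_\param\regvaluefunc[\param](\initdist)}{\xi}/\norm{\xi}[2]$ then produces the linear (Polyak--\L{}ojasiewicz) dependence on the gap with the advertised $\plcoeff(\param)$.

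For the numerator, moving $H$ onto $\xi$ gives $\pscal{\nabla_\param\regvaluefunc[\param](\initdist)}{\xi}=\tfrac1{1-\discount}\sum_\state\firstsum[\param](\state)\visitation[\initdist][\param](\state)\pscal{g(\state,\cdot)}{\optpolicy(\cdot|\state)-\cpolicy{\param}(\cdot|\state)}$. The $\softargmax$ optimality condition yields $\param(\state,\action)=f'(\cpolicy{\param}(\action|\state)/\refpolicy(\action|\state))$ up to an additive constant in $\action$, which the zero-sum vector $\optpolicy-\cpolicy{\param}$ kills; hence $\pscal{g(\state,\cdot)}{\optpolicy-\cpolicy{\param}}$ is exactly the directional derivative at $\cpolicy{\param}(\cdot|\state)$ of the strictly concave per-state objective $\nu\mapsto\pscal{\nu}{\regqfunc[\param](\state,\cdot)}-\temp\divergence[\nu][\refpolicy(\cdot|\state)]$. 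I then replace $\regqfunc[\param]$ by $\regqfunc[\star]$: for the $\regqfunc[\star]$-based objective, whose maximiser is $\optpolicy(\cdot|\state)$ by \eqref{eq:optimal_policy}, the directional derivative toward $\optpolicy$ dominates the per-state increment $\pscal{\optpolicy-\cpolicy{\param}}{\regqfunc[\star](\state,\cdot)}-\temp(\divergence[\optpolicy(\cdot|\state)][\refpolicy(\cdot|\state)]-\divergence[\cpolicy{\param}(\cdot|\state)][\refpolicy(\cdot|\state)])$, which is nonnegative by optimality. The regularised performance-difference lemma identifies $\tfrac1{1-\discount}\sum_\state\visitation[\initdist][\param](\state)$ times these increments with $\regvaluefunc[\star](\initdist)-\regvaluefunc[\param](\initdist)$; nonnegativity is what lets me insert $\firstsum[\param](\state)\geq\ffirstregconstant$ and the occupancy lower bound in terms of $\initdistmin$ from \assumptionmdp\ without tracking signs, giving the claimed numerator bound.

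For the denominator I need the quadratic-growth bound $\norm{\optpolicy-\cpolicy{\param}}[2]^2\lesssim\regvaluefunc[\star](\initdist)-\regvaluefunc[\param](\initdist)$. This is where \assumptionfref\ enters: the condition $1/(uf''(u))\leq\sfirstregconstant$ together with \assumptionpref\ forces $f''$ to be bounded below by a positive multiple of $1/\sfirstregconstant$ on the admissible range, so $\nu\mapsto\divergence[\nu][\refpolicy(\cdot|\state)]$ is strongly convex with modulus of order $1/\sfirstregconstant$ and the $\regqfunc[\star]$-based objective is strongly concave with modulus of order $\temp/\sfirstregconstant$. Lower-bounding each per-state increment by $\tfrac{\temp}{2\sfirstregconstant}\norm{\optpolicy(\cdot|\state)-\cpolicy{\param}(\cdot|\state)}[2]^2$ in the same performance-difference identity and again using the $\initdistmin$ occupancy bound yields quadratic growth with a factor of order $\temp\,\initdistmin/\sfirstregconstant$; dividing by $\min_{\state,\action}\weights[\param](\action|\state)^2$ then controls $\norm{\xi}[2]^2$. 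Combining the two estimates and squaring reproduces $\plcoeff(\param)$ with its dependence on $\temp$, $(1-\discount)$, $\initdistmin^2$, $(\ffirstregconstant/\sfirstregconstant)^2$ and $\min_{\state,\action}\weights[\param](\action|\state)^2$. I expect the main obstacle to be the numerator step: the gradient is weighted by the current occupancy $\visitation[\initdist][\param]$ while the gap is weighted by $\visitation[\initdist][\optpolicy]$, and the per-state directional derivatives written with $\regqfunc[\param]$ are not individually signed; passing to the $\regqfunc[\star]$-based objective fixes the sign but leaves a correction term $\pscal{\regqfunc[\param]-\regqfunc[\star]}{\optpolicy-\cpolicy{\param}}=\discount\pscal{\kerMDP(\regvaluefunc[\param]-\regvaluefunc[\star])}{\optpolicy-\cpolicy{\param}}$ that must be shown to be of higher order in the gap, which is the delicate part of the argument.
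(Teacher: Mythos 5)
Your skeleton (the identity $H(\weights[\param](\cdot|\state))\,\xi(\state,\cdot)=\optpolicy(\cdot|\state)-\cpolicy{\param}(\cdot|\state)$, Cauchy--Schwarz, and the strong-concavity bound for the denominator $\norm{\xi}[2]^2$) is sound, but the numerator step fails, and it is exactly the step you flagged as delicate. The gradient in \Cref{lem:expression_gradient_main} involves $\regqfunc[\param]$, so concavity only lower-bounds $\pscal{g(\state,\cdot)}{\optpolicy(\cdot|\state)-\cpolicy{\param}(\cdot|\state)}$ by the increments of $h_\state(\nu)\eqdef\pscal{\nu}{\regqfunc[\param](\state,\cdot)}-\temp\divergence[\nu][\refpolicy(\cdot|\state)]$, and these increments are genuinely signed: the maximizer of $h_\state$ is $\softargmax(\regqfunc[\param](\state,\cdot)/\temp,\refpolicy(\cdot|\state))$, which is neither $\optpolicy(\cdot|\state)$ nor $\cpolicy{\param}(\cdot|\state)$. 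Without per-state nonnegativity you can insert neither $\firstsum[\param](\state)\geq\ffirstregconstant$ nor the occupancy comparison between $\visitation[\initdist][\param]$ and $\visitation[\initdist][\optpolicy]$, so you pass to $\regqfunc[\star]$, which restores the sign via \eqref{eq:optimal_policy} at the price of the correction $\mathrm{corr}_\state\eqdef\discount\,\pscal{\kerMDP(\regvaluefunc[\param]-\regvaluefunc[\star])(\state,\cdot)}{\optpolicy(\cdot|\state)-\cpolicy{\param}(\cdot|\state)}$. This correction is \emph{not} of higher order in the gap $\Delta\eqdef\regvaluefunc[\star](\initdist)-\regvaluefunc[\param](\initdist)$ uniformly in $\param$. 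The only available estimates are $\norm{\regvaluefunc[\star]-\regvaluefunc[\param]}[\infty]\leq\Delta/\initdistmin$ (pointwise nonnegativity of the value gap plus \assumptionmdp) and $\norm{\optpolicy(\cdot|\state)-\cpolicy{\param}(\cdot|\state)}[1]\leq 2$; weighting by $\firstsum[\param](\state)\visitation[\initdist][\param](\state)/(1-\discount)$ and summing (recall $\sum_\state\visitation[\initdist][\param](\state)=1$ and $\firstsum[\param]\leq\sfirstregconstant$ by \Cref{lem:bound_important_quantities}) bounds the correction only by $2\discount\sfirstregconstant\Delta/((1-\discount)\initdistmin)$, which swamps the main term of order $\ffirstregconstant\Delta$ since $\ffirstregconstant\leq\sfirstregconstant$. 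Even if you invoke your quadratic-growth estimate to get $\norm{\optpolicy(\cdot|\state)-\cpolicy{\param}(\cdot|\state)}[1]\lesssim\sqrt{\nactions\sfirstregconstant\Delta/(\temp\initdistmin)}$, the correction is of order $\Delta^{3/2}$ with constants in $1/\temp$, $1/\initdistmin$, $1/(1-\discount)$: that is ``higher order'' only as $\Delta\to 0$, whereas the theorem asserts a Polyak--Łojasiewicz inequality for \emph{every} $\param\in\logitspace$ with no smallness restriction on the gap, and with a coefficient $\plcoeff(\param)$ containing none of those factors. A further warning sign: were the correction simply dropped, your argument would produce a coefficient of order $\temp\initdistmin\ffirstregconstant^2\min_{\state,\action}\weights[\param](\action|\state)^2/\sfirstregconstant$, strictly stronger than the theorem's, evidence that the discarded term does real work.

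The paper's proof is organized precisely to avoid any comparison between $\regqfunc[\param]$ and $\regqfunc[\star]$. It routes \emph{both} sides of the inequality through the single per-state nonnegative quantity $\norm{\zeta_{\param}^{f}(\state)}[2]^2$, where $\zeta_{\param}^{f}(\state)$ is the projection of $\regqfunc[\param](\state,\cdot)/\temp-\param(\state,\cdot)$ orthogonal to $\Ind_{\nactions}$. For the gap, starting from \Cref{lem:performance_difference_lemma}, each per-state term is upper bounded by a Bregman-type remainder of $\softmax(\cdot,\refpolicy(\cdot|\state))$, and a second-order Taylor expansion with the Hessian bound of \Cref{lem:properties_softmax} yields $\Delta\leq\frac{\temp\sfirstregconstant}{1-\discount}\sum_\state\norm{\zeta_{\param}^{f}(\state)}[2]^2$, where $\visitation[\initdist][\optpolicy]\leq 1$ is applied to nonnegative terms (\Cref{lem:upper-bound-optimality-gap}). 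For the gradient, the spectral bound of \Cref{lem:spectral_norm_H} applied blockwise gives $\norm{\nabla_\param\regvaluefunc[\param](\initdist)}[2]^2\geq\temp^2\initdistmin^2\ffirstregconstant^2\min_{\state,\action}\weights[\param](\action|\state)^2\sum_\state\norm{\zeta_{\param}^{f}(\state)}[2]^2$, where $\visitation[\initdist][\param]\geq(1-\discount)\initdist$ is applied, again to nonnegative terms (\Cref{lem:lower-bound-optimality-gap}). Taking the ratio gives exactly $\plcoeff(\param)$. If you want to keep your Cauchy--Schwarz formulation, the repair is to build the test direction from $\zeta_{\param}^{f}$ itself (equivalently, to use \Cref{lem:spectral_norm_H}) rather than from $\optpolicy-\cpolicy{\param}$, so that $\regqfunc[\star]$, and hence the uncontrolled correction term, never appears on the gradient side.
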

We prove this theorem in \Cref{sec:app:non-unif-lojasiewicz}.
To highlight the main steps of the proof, we give a sketch of the proof in the bandits setting, where the state space is a singleton.

\textbf{\emph{Sketch of the proof in the bandits case.}}
The proof consists of two steps: $(1)$ we bound the sub-optimality gap by the distance between the logit and the rescaled reward; $(2)$ then link it to the gradient of the function.

\textit{Step 1:}
By \eqref{eq:optimal_value}, the optimal regularized value is equal to
$\textstyle \regvaluefunc[\star] = \temp \softmax(\rewardMDP/\temp, \refpolicy)$.
Next, since $\softmax(\theta,\refpolicy) = \pscal{\cpolicy{\param}}{\param} - \divergence[\cpolicy{\param}][\refpolicy]$, and $\regvaluefunc[\param] \;=\; \pscal{\cpolicy{\param}}{\rewardMDP} - \temp \divergence[\cpolicy{\param}][\refpolicy] $, the suboptimality gap rewrite as 
\begin{align*}
\textstyle \regvaluefunc[\star] - \regvaluefunc[\param]
&=
\temp \cdot
\softmax(\frac{\rewardMDP}{\temp},\refpolicy)
\!-\!
\pscal{\cpolicy{\param}}{\rewardMDP} \!+\! \temp\divergence[\cpolicy{\param}][\refpolicy] 
\\
& \textstyle= \!
\temp\big[
\softmax(\frac{\rewardMDP}{\temp},\!\refpolicy\!)
\!-\!
\softmax(\param,\refpolicy)\!-\! 
\pscal{\cpolicy{\param}}{\frac{\rewardMDP}{\temp}\!-\!\param}\!\big].
\end{align*}
Combining $\cpolicy{\param}\!=\!\nabla \softmax(\theta,\refpolicy)$, and that for $ a \in \rset$, $ \softmax(\param+\alpha \Ind_{\nactions},\refpolicy) = \softmax(\param,\refpolicy)+\alpha$  yields
\begin{align*}
\textstyle
&\textstyle \!\regvaluefunc[\star]\!\! -\! \regvaluefunc[\param]
=
\temp\big[\!
\softmax(\frac{\rewardMDP}{\temp},\!\refpolicy\!)\!
-\!
\softmax(\param \!+\! K_{\param}^{f} \Ind_{\nactions},\! \refpolicy)
\\
&\qquad\qquad\qquad \textstyle- 
\pscal{\nabla \softmax(\param + K_{\param}^{f} \Ind_{\nactions},\refpolicy)}{\rewardMDP/\temp-\param - K_{\param}^{f} \Ind_{\nactions}} \big],
\end{align*}
where we have defined $K_{\param}^{f} =\pscal{\rewardMDP/\temp - \param}{\Ind_{\nactions}}/\nactions$. Defining $\zeta_{\param}^f = \rewardMDP/\lambda-\param - K_{\param}^{f} \Ind_{\nactions}$ and using a second-order Taylor expansion of the function
$\softmax(\cdot)$ between, we obtain
\begin{align}
\label{eq:sec-order-after-taylor}
\textstyle
\regvaluefunc[\star] - \regvaluefunc[\param]
=
\tfrac{\temp}{2}
(\zeta_{\param}^f)^\top
\nabla^2 \softmax_f(\xi)
\zeta_{\param}^f,
\end{align}
for some $\xi$ on the segment joining $\param + K_{\param}^{f} \Ind_{\nactions}$ and $r/\lambda$. Next, by \Cref{lem:properties_softmax}, it holds that $\left\|\nabla^2\softmax(\xi) \right\|_2 \leq 2 \sfirstregconstant$, which implies $\regvaluefunc[\star] - \regvaluefunc[\param]
\;\le\;
\temp \sfirstregconstant \|\zeta_{\param}^f \|_2^2$.

\textit{Step 2:} Using \Cref{lem:expression_gradient_main}, we have $
 \nabla \regvaluefunc[\param] = 
\firstsum[\param] H(\weights[\param])\left[ \rewardMDP  - \temp \param \right]$. Next, applying
Lemma 23 of \citet{mei2020global} (see \Cref{lem:spectral_norm_H}) gives
\[
\| \nabla \regvaluefunc[\param] \|_2^{2} \geq \firstsum[\param] \min_{\action \in \A} \weights[\param] \|\zeta_{\param}^f \|_2^2 \geq \ffirstregconstant \min_{\action \in \A} \weights[\param] \|\zeta_{\param}^f \|_2^2  \eqsp.
\]
where by \Cref{lem:bound_important_quantities}, we have $\smash{\firstsum[\param]\geq \ffirstregconstant}$. Combining the two bounds concludes the proof
\hfill$\square$\par
For softmax parameterization coupled with entropy regularization, we retrieve a property outlined by \cite{mei2020global}. However, their proof is highly specific to the entropy-softmax pairing case and cannot be extended to general 
$f$-divergences, because it relies in an essential way on the logarithm's special properties. Indeed, their proof require  rewriting the soft sub-optimality gap
$v_{\star}^{\KL}(\initdist) - v_{\param}^{\KL}(\initdist)$ as $\frac{\temp}{1-\gamma} \sum_{s'\in \S} \visitation[\initdist][\param](s') \divergence[\policy^{\KL}_{\param}(\cdot|\state')][\policy_{\star}^{\KL}(\cdot|s)][\KL]$ which is not possible for a general $f$.
Our proof is more natural, as it simply relies on Taylor expansion to obtain the inequality \eqref{eq:sec-order-after-taylor}.

\paragraph{From Non-Uniform Łojasiewicz to Polyak-Łojasiewicz.}
To obtain a Polyak-Łojasiewicz bound, we need to bound the coefficient $\plcoeff(\param)$ uniformly.
To this end, we restrict the optimization to a smaller subspace, eliminating policies for which the regularization is too large.
Given a policy $\policy$ and $\smash{0 < \tau < \underline{\refpolicy}/2}$, we define the projection-like operator $\projoppol_\tau$, for every $(\state,\action) \in \S\times\A$, as
\begin{equation*}
\textstyle
\!\projoppol_\tau(\policy)(\action|\state) \!\!=\!\!
\begin{cases}
\refpolicy(\action|\state)\tau, 
& \text{if }
\policy(\action|\state) \!\leq\! \refpolicy(\action|\state)\tau/2, 
\\ %
\policy(\action|\state) 
   - \textstyle b_{\policy}(\state),
   & \text{if } 
   \action = a_{\policy}^{\max}(\state),
   \\
\policy(\action|\state), 
& \text{otherwise},
\end{cases}
\end{equation*}
where we define $\smash{b_{\policy}(\state) \!\!=\!\ \sum_{\fdiff \in \A_{\tau}^{\policy}(\state)}
     \refpolicy(\fdiff|\state)\tau - \policy(\fdiff|\state)}$, $\smash{a_{\policy}^{\max}(\state) = \argmax_{\action\in \A} \{ \policy(\action|\state)/\refpolicy(\action|\state)\}}$, where ties in the $\argmax$ are resolved arbitrary, and 
\begin{align*}
\textstyle
\A_{\tau}^{\policy}(\state) \eqdef \left\{ \action \in \A, \policy(\action|\state)/\refpolicy(\action|\state) 
\leq \tau/2 \right\} \eqsp.
\end{align*} 
This operator prevents policies from becoming "too deterministic": if the probability of any action gets too close to zero, it is increased above a threshold that depends on $\tau$ and $\refpolicy$. For a proper choice of $\tau$, applying this operator on a policy returns a policy with a higher regularized value.
\begin{theorem}
\label{lem:choice-tau}
Assume that, for some $\underline{\refpolicy}>0$, $f$ and $\refpolicy$ satisfy \assumptionfref\ and \assumptionpref, and that $\initdist$ satisfies \assumptionmdp.
Let 
\begin{align*}
\!\sthreshold \!=\!\min (  [f']^{-1}( -\tfrac{16 + 8 \discount\temp \divergencemax}{\temp(1- \discount)^2 \initdistmin}), [f']^{-1}( -  4| f'(\tfrac{1}{2})|) , \tfrac{\underline{\refpolicy}}{2}) \eqsp.
\end{align*}
Then, for any policy $\policy$ and for $\widetilde{\pi} = \projoppol_{\sthreshold}(\policy)$, it holds that  $\regvaluefunc[{\widetilde{\pi}}](\initdist) \geq \regvaluefunc[\policy](\initdist)$ and that $\widetilde{\pi}(\action|\state) \geq \underline{\refpolicy} \sthreshold$.
\end{theorem}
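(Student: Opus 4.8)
The plan is to prove the two conclusions separately, the value-monotonicity statement being the substantial one. For $\regvaluefunc[{\widetilde\pi}](\initdist) \geq \regvaluefunc[\policy](\initdist)$ I would invoke the regularized performance difference lemma (\citealp{geist2019theory}), which writes the gap $\regvaluefunc[{\widetilde\pi}](\initdist) - \regvaluefunc[\policy](\initdist)$ as a sum over states, weighted by the nonnegative occupancy measure $\visitation[\initdist][{\widetilde\pi}](\state)$, of the per-state gains
\[
\Delta(\state)\eqdef \pscal{\widetilde\pi(\cdot|\state)-\policy(\cdot|\state)}{\regqfunc[\policy](\state,\cdot)} - \temp\big(\divergence[{\widetilde\pi(\cdot|\state)}][{\refpolicy(\cdot|\state)}] - \divergence[{\policy(\cdot|\state)}][{\refpolicy(\cdot|\state)}]\big).
\]
Since $\projoppol_{\sthreshold}$ leaves $\policy(\cdot|\state)$ untouched at every state having no action with ratio $\policy(\action|\state)/\refpolicy(\action|\state)\le\sthreshold/2$, we have $\Delta(\state)=0$ there, so it suffices to lower bound $\Delta(\state)$ at the modified states and check the weighted sum is nonnegative.

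At a modified state the increment $\widetilde\pi(\cdot|\state)-\policy(\cdot|\state)$ is supported on the small-ratio set $\A_{\sthreshold}^{\policy}(\state)$ (raised to mass $\refpolicy(\cdot|\state)\sthreshold$) and on $a_{\policy}^{\max}(\state)$ (lowered by $b_{\policy}(\state)$), and its positive and negative parts each have total mass $b_{\policy}(\state)$. I would control the two pieces of $\Delta(\state)$ as follows. For the reward term, the range bound $\regqfunc[\policy](\state,\action)\in[-\discount\temp\divergencemax/(1-\discount),\,1/(1-\discount)]$ (from $\rewardMDP\in[0,1]$ and $\divergence[\cdot][\cdot]\le\divergencemax$) gives $\pscal{\widetilde\pi(\cdot|\state)-\policy(\cdot|\state)}{\regqfunc[\policy](\state,\cdot)} \geq -b_{\policy}(\state)\tfrac{1+\discount\temp\divergencemax}{1-\discount}$. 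For the regularization term, convexity of $f$ yields a genuine decrease: on each small action, $f\big(\policy(\action|\state)/\refpolicy(\action|\state)\big)-f(\sthreshold)\geq (\sthreshold/2)\,|f'(\sthreshold)|$ by the mean value theorem (as $f$ is decreasing on $(0,\sthreshold]$ and $\sthreshold\le\underline{\refpolicy}/2<1$), and summing with $b_{\policy}(\state)\le\sthreshold\sum_{\action\in\A_{\sthreshold}^{\policy}(\state)}\refpolicy(\action|\state)$ produces a gain of at least $b_{\policy}(\state)\,|f'(\sthreshold)|/2$.

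The delicate point is the maximum-ratio action: lowering its probability can raise the divergence once its ratio dips below the minimizer of $f$. Since that ratio starts at $\geq 1$ (the $\refpolicy(\cdot|\state)$-weighted average of the ratios $\policy(\action|\state)/\refpolicy(\action|\state)$ equals $1$) and decreases by at most $b_{\policy}(\state)/\refpolicy(a_{\policy}^{\max}(\state)|\state)\le\sthreshold/\underline{\refpolicy}<1/2$, it stays above $1/2$, so the mean value theorem bounds the possible increase by $b_{\policy}(\state)\,|f'(1/2)|$. The second defining inequality of $\sthreshold$, namely $|f'(\sthreshold)|\geq 4|f'(1/2)|$, then guarantees a net regularization gain of at least $b_{\policy}(\state)\,|f'(\sthreshold)|/4$. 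Weighting this nonnegative gain by the occupancy lower bound guaranteed by \assumptionmdp\ (proportional to $\initdistmin$) and the reward loss by the trivial occupancy upper bound, the first defining inequality $|f'(\sthreshold)|\geq (16+8\discount\temp\divergencemax)/(\temp(1-\discount)^2\initdistmin)$ is exactly calibrated so that the occupancy-weighted regularization gain overpowers the occupancy-weighted reward loss, yielding $\regvaluefunc[{\widetilde\pi}](\initdist)\geq\regvaluefunc[\policy](\initdist)$.

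The pointwise bound $\widetilde\pi(\action|\state)\geq\underline{\refpolicy}\sthreshold$ finally follows from a direct case analysis of $\projoppol_{\sthreshold}$: bumped actions are set to $\refpolicy(\action|\state)\sthreshold\geq\underline{\refpolicy}\sthreshold$; the maximum-ratio action keeps mass at least $\underline{\refpolicy}-\sthreshold\geq\underline{\refpolicy}\sthreshold$ (its ratio being $\geq1$ and $b_{\policy}(\state)\le\sthreshold$, with $\sthreshold\le\underline{\refpolicy}/2$); and untouched actions retain ratio above $\sthreshold/2$. I expect the regularization-gain estimate to be the main obstacle: one must show that the divergence decrease accrued on the small actions always dominates the adverse effect of shrinking the favored action, and the whole argument hinges on the precise calibration of $\sthreshold$ through the two $[f']^{-1}$ thresholds.
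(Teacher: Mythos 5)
Your proposal is correct, and it takes a genuinely different route from the paper. The paper does \emph{not} use the performance-difference lemma here: it expands both $\regvaluefunc[{\widetilde\pi}](\initdist)$ and $\regvaluefunc[\policy](\initdist)$ directly through their occupancy measures and splits the difference into three terms — (I) a term coming from the \emph{change of visitation measure}, controlled by the perturbation bound $\norm{\visitation[\initdist][{\widetilde\pi}]-\visitation[\initdist][\policy]}[1]\le \tfrac{\discount}{1-\discount}\sup_{\state}\norm{\widetilde\pi(\cdot|\state)-\policy(\cdot|\state)}[1]$, (II) the change in expected reward under the old occupancy, and (III) the regularization gain under the old occupancy, bounded via the convexity inequality $f(u)-f(v)\ge f'(v)(u-v)$. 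Because in that decomposition the gain (III) and the losses (I)--(II) are weighted by \emph{different} occupancy values, the paper genuinely needs \assumptionmdp\ (via $\visitation[\initdist][\policy](\state)\ge(1-\discount)\initdistmin$) to let the gain dominate; this is exactly where the $(1-\discount)^2\initdistmin$ in $\sthreshold$ is consumed. Your route through the regularized performance-difference lemma (which the paper proves separately, as \Cref{lem:performance_difference_lemma}, and uses only for the Łojasiewicz inequality) eliminates term (I) altogether: the gap becomes $\tfrac{1}{1-\discount}\sum_{\state}\visitation[\initdist][{\widetilde\pi}](\state)\Delta(\state)$, and your per-state estimates — the reward loss bounded by $b_\policy(\state)\tfrac{1+\discount\temp\divergencemax}{1-\discount}$ via the range of $\regqfunc[\policy]$, and the regularization gain $b_\policy(\state)|f'(\sthreshold)|/4$ via the MVT/convexity argument and the $4|f'(\tfrac12)|$ threshold — already give $\Delta(\state)\ge 0$ for \emph{every} state under the stated $\sthreshold$ (indeed $\temp|f'(\sthreshold)|/4 \ge \tfrac{4+2\discount\temp\divergencemax}{(1-\discount)^2\initdistmin}\ge\tfrac{1+\discount\temp\divergencemax}{1-\discount}$). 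Since all $\Delta(\state)$ are nonnegative and the occupancy weights are nonnegative, monotonicity follows with no occupancy lower bound at all; your closing paragraph about weighting the gain by $\initdistmin$ and the loss by the trivial upper bound is therefore unnecessary (and mildly confused, since both pieces of $\Delta(\state)$ carry the \emph{same} weight $\visitation[\initdist][{\widetilde\pi}](\state)$) — though even that lossier weighting still closes under the given threshold. Net effect: your argument is cleaner, avoids the visitation-perturbation lemma, and shows the monotonicity claim does not actually require \assumptionmdp.

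One shared blemish worth flagging: the pointwise bound $\widetilde\pi(\action|\state)\ge\underline{\refpolicy}\sthreshold$. Your own case analysis exposes the problem: untouched actions are only guaranteed ratio $\policy(\action|\state)/\refpolicy(\action|\state)>\sthreshold/2$, hence only $\widetilde\pi(\action|\state)>\underline{\refpolicy}\sthreshold/2$, which is a factor of $2$ weaker than what is claimed (take $\refpolicy(\action|\state)$ close to $\underline{\refpolicy}$ and the ratio just above $\sthreshold/2$). The paper has the same gap — it asserts this bound without proof — and it is harmless downstream (the Polyak--Łojasiewicz machinery only needs a ratio lower bound of order $\sthreshold$, so replacing $\sthreshold$ by $\sthreshold/2$ or redefining $\A^{\policy}_{\tau}(\state)$ with threshold $\tau$ rather than $\tau/2$ fixes it with only constant-factor changes), but as stated the second conclusion needs this small repair in both your proof and the paper's.
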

Since $\projoppol_\tau$ operates in the space of policies, we lift it to the parameter space by defining an operator $\projop_\tau$ such that for $\param\in \logitspace$, we have $\cpolicy{\projop_\tau \theta} = \projoppol_\tau \cpolicy{\theta}$ (see \Cref{sec:app:improvement-operators} for an explicit construction).  
Finally, we show that with the choice of threshold from \Cref{lem:choice-tau}, we can give a uniform lower bound of $\plcoeff$ on the set of restricted logits. This shows how the non-uniform Łojasiewicz inequality is upgraded to a Polyak--Łojasiewicz condition:\emph{ it suffices to restrict the search to parameters that encode non-degenerate policies, since such policies are provably suboptimal.}
\begin{corollary}
\label{lem:simplified_expression_min_pl_main}
Assume that, for some $\underline{\refpolicy}>0$, $f$ and $\refpolicy$ satisfy \assumptionfref\ and \assumptionpref\, and that $\initdist$ satisfies \assumptionmdp. If 
$\temp \lesssim \tfrac{1}{(1-\discount)^2 \initdistmin} \min(\tfrac{1}{|f'(\thirdregconstant)|}, \frac{1}{|f'(\tfrac{1}{2})|}, \frac{1}{ |f'(\tfrac{1}{2}\underline{\refpolicy})|})$
where $\sthreshold$ is defined in \Cref{lem:choice-tau}. 
Under this condition, it holds that $\inf_{\theta \in \rset^d} \plcoeff(\projop_{\sthreshold} \theta) \geq \plcoeffmin
$, where
\begin{align*}
\textstyle
\plcoeffmin \eqdef \temp (1-\discount) \initdistmin^2  \ffirstregconstant^2 \underline{\refpolicy}^2 (f^{\star})''\big(\tfrac{-16-8\discount \temp \divergencemax}{\temp(1-\discount)^2 \initdistmin}\big)^2/\sfirstregconstant^2 \eqsp,
\end{align*}
where $f^\star$ is a convex conjugate of $f$.
\end{corollary}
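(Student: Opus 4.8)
The plan is to reduce everything to a single scalar quantity. By \Cref{thm:pl_objective_main}, the \L{}ojasiewicz coefficient factorizes as $\plcoeff(\param) = \temp(1-\discount)\initdistmin^2(\ffirstregconstant/\sfirstregconstant)^2\min_{\state,\action}\weights[\param](\action|\state)^2$, in which every factor except $\min_{\state,\action}\weights[\param](\action|\state)$ is a problem constant independent of $\param$. Hence it suffices to establish a uniform lower bound $\min_{\state,\action}\weights[{\projop_{\sthreshold}\param}](\action|\state)\geq\underline{\refpolicy}\,(f^{\star})''(y_0)$ over all $\param\in\logitspace$, where $y_0\eqdef-\tfrac{16+8\discount\temp\divergencemax}{\temp(1-\discount)^2\initdistmin}$, and then substitute this into the displayed formula; the infimum over $\param$ is then immediate because the right-hand side no longer depends on $\param$.

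Two ingredients drive the bound. First, the restriction operator does exactly what it was designed for: by \Cref{lem:choice-tau}, every policy in the image of $\projoppol_{\sthreshold}$ satisfies $\cpolicy{\projop_{\sthreshold}\param}(\action|\state)\geq\underline{\refpolicy}\,\sthreshold$, and since $\refpolicy(\cdot|\state)$ is a probability vector the induced ratios $u_{\state,\action}\eqdef\cpolicy{\projop_{\sthreshold}\param}(\action|\state)/\refpolicy(\action|\state)$ stay in the compact interval $[\sthreshold,\,1/\underline{\refpolicy}]$, bounded away from $0$ uniformly in $\param$. Second, I will exploit the identity $\firstsum[\param](\state)\,\weights[\param](\action|\state)=\refpolicy(\action|\state)/f''(u_{\state,\action})$ that is immediate from \eqref{eq:weight-and-sum}, which gives $\min_{\action}\weights[\param](\action|\state)=\big(\min_{\action}\refpolicy(\action|\state)/f''(u_{\state,\action})\big)/\firstsum[\param](\state)$ since the normalizer is common to all actions. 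This cleanly splits the task into bounding a numerator below and the normalizer above.

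For the numerator I would use $\refpolicy(\action|\state)\geq\underline{\refpolicy}$ together with the monotonicity in \assumptionfref(iv): provided $\sthreshold\leq\thirdregconstant$, one has $f''(u)\leq f''(\sthreshold)$ for all $u\in[\sthreshold,1/\underline{\refpolicy}]$ (on $[\sthreshold,\thirdregconstant]$ by monotone decrease, and beyond $\thirdregconstant$ because $f''(\thirdregconstant)\geq f''(u)$), hence $\min_{\action}\refpolicy(\action|\state)/f''(u_{\state,\action})\geq\underline{\refpolicy}/f''(\sthreshold)$. The final identification $1/f''(\sthreshold)=(f^{\star})''(f'(\sthreshold))=(f^{\star})''(y_0)$ uses the conjugacy relation $(f^{\star})''=1/(f''\circ[f']^{-1})$ and the fact that $\sthreshold=[f']^{-1}(y_0)$; this last equality is precisely why the stated temperature condition is needed. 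For the normalizer I would invoke \assumptionfref(iii) in the form $1/f''(u)\leq\sfirstregconstant u$, so that $\firstsum[\param](\state)=\sum_{\action}\refpolicy(\action|\state)/f''(u_{\state,\action})\leq\sfirstregconstant\sum_{\action}\refpolicy(\action|\state)u_{\state,\action}=\sfirstregconstant\sum_{\action}\cpolicy{\projop_{\sthreshold}\param}(\action|\state)=\sfirstregconstant$, a constant bound. Combining the two and substituting into the formula for $\plcoeff$ yields the claimed $\plcoeffmin$.

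It remains to check that the three-way minimum defining $\sthreshold$ in \Cref{lem:choice-tau} is realized by its first argument and that $\sthreshold\leq\thirdregconstant$, since both were used above. This is where the temperature condition $\temp\lesssim\tfrac{1}{(1-\discount)^2\initdistmin}\min(\tfrac{1}{|f'(\thirdregconstant)|},\tfrac{1}{|f'(1/2)|},\tfrac{1}{|f'(\underline{\refpolicy}/2)|})$ enters: the $1/|f'(\thirdregconstant)|$ term forces $[f']^{-1}(y_0)\leq\thirdregconstant$ (equivalently $y_0\leq f'(\thirdregconstant)$), which simultaneously guarantees $\sthreshold\leq\thirdregconstant$ and that the first branch binds, while the $1/|f'(1/2)|$ and $1/|f'(\underline{\refpolicy}/2)|$ terms dominate the other two branches. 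I expect the main obstacle to be precisely this constant bookkeeping: unlike the KL case, where $\firstsum[\param]\equiv1$ and $\weights[\param]\equiv\cpolicy{\param}$ make the numerator/normalizer split trivial, for general generators the normalizer $\firstsum[\param]$ is genuinely nonconstant, so obtaining a lower bound that matches $\underline{\refpolicy}(f^{\star})''(y_0)$ with the correct power of $\sfirstregconstant$ requires the interplay of \assumptionfref(iii) with the simplex normalization to be tracked carefully.
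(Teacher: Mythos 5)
Your strategy coincides with the paper's: via \Cref{thm:pl_objective_main} everything reduces to a uniform lower bound on $\min_{\state,\action}\weights[\projop_{\sthreshold}\param](\action|\state)$ over the image of the projection; the ratios are confined using \Cref{lem:choice-tau}; \assumptionfref\,(iv) places the maximum of $f''$ at $\sthreshold$ once $\sthreshold\leq\thirdregconstant$; and the identity $(f^{\star})''(y)=1/f''([f']^{-1}(y))$ together with the analysis of which branch of $\sthreshold$ binds converts $1/f''(\sthreshold)$ into $(f^{\star})''(y_0)$, with $y_0=-\tfrac{16+8\discount\temp\divergencemax}{\temp(1-\discount)^2\initdistmin}$; this is exactly \Cref{lem:improvement_on_theta} combined with \Cref{lem:simplified_expression_min_pl}. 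The problem is that your argument does not deliver the stated constant. What your two ingredients actually give is $\min_{\state,\action}\weights[\projop_{\sthreshold}\param](\action|\state)\geq\underline{\refpolicy}\,(f^{\star})''(y_0)/\sfirstregconstant$ (numerator bounded below by $\underline{\refpolicy}/f''(\sthreshold)$, normalizer bounded above by $\firstsum[\projop_{\sthreshold}\param](\state)\leq\sfirstregconstant$), and substituting this into $\plcoeff$ yields $\inf_{\param}\plcoeff(\projop_{\sthreshold}\param)\geq\plcoeffmin/\sfirstregconstant^{2}$, which is strictly weaker than the claim whenever $\sfirstregconstant>1$. Your announced sufficient condition, $\min\weights[\projop_{\sthreshold}\param]\geq\underline{\refpolicy}(f^{\star})''(y_0)$ with no $\sfirstregconstant$, is not what you derive, and it is in fact false in general: with two actions, $\refpolicy=(1/2,1/2)$, $\underline{\refpolicy}$ just below $1/2$, and the Tsallis generator ($f''(u)=u^{\alpha-2}$), the policy with ratios $(\sthreshold,\,2-\sthreshold)$ lies in the image of $\projoppol_{\sthreshold}$ and its smallest weight equals $\sthreshold^{2-\alpha}/(\sthreshold^{2-\alpha}+(2-\sthreshold)^{2-\alpha})\approx 2^{\alpha-2}\sthreshold^{2-\alpha}$, which falls below $\underline{\refpolicy}\,\sthreshold^{2-\alpha}=\underline{\refpolicy}(f^{\star})''(y_0)$ as soon as $2^{2-\alpha}\underline{\refpolicy}>1$. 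So the final ``combining'' step is a genuine gap, not bookkeeping: no argument of this shape can produce $\plcoeffmin$ as written. (For what it is worth, the paper's own proof makes the same jump silently --- it asserts the weights bound directly from $\cpolicy{\projop_{\sthreshold}\param}\geq\underline{\refpolicy}\sthreshold$ --- so your careful accounting exposes a factor of $\sfirstregconstant^{2}$ that the paper glosses over; but as a proof of the statement as stated, the gap stands.)

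A second, smaller flaw concerns the ratio interval. From $\cpolicy{\projop_{\sthreshold}\param}(\action|\state)\geq\underline{\refpolicy}\sthreshold$ and $\refpolicy(\action|\state)\leq 1$ you can only conclude $u_{\state,\action}\geq\underline{\refpolicy}\sthreshold$, not $u_{\state,\action}\geq\sthreshold$ as you claim (the latter would require $\refpolicy(\action|\state)\leq\underline{\refpolicy}$, contradicting \assumptionpref). To place the ratios above a fixed threshold one must argue from the definition of $\projoppol_{\sthreshold}$ itself: raised actions sit at ratio exactly $\sthreshold$ and the max action at ratio at least $1/2$, but untouched actions are only guaranteed ratio above $\sthreshold/2$. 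Whichever valid lower bound you adopt ($\underline{\refpolicy}\sthreshold$ or $\sthreshold/2$), the point at which $f''$ is evaluated shifts and the resulting constant changes again. This imprecision is inherited from the paper's own statements, but in a self-contained proof it needs to be addressed rather than asserted.
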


\section{\texorpdfstring{Convergence Analysis of \boldPG}{Convergence Analysis of f-PG}}
\label{sec:convergence_analysis_fpg}
\begin{algorithm}[t]
\caption{{$f$-SoftArgmax Policy Gradient}}%
\label{algo:PG}
\begin{algorithmic}[1]
\STATE \textbf{Initialization:} Learning rate $\step > 0$, initial parameter $\algparam[0]$, divergence generator $f$, batch size $B$.

\FOR{$\iteration = 0$ to $\niteration -1$}
    \STATE Collect %
    $\randState[\iteration] := (\varstate[\iteration,0:\lentrunc-1]^{b}, \varaction[\iteration,0:\lentrunc-1]^{b})_{b=0}^{\sizebatch-1}$ using $\cpolicy{\expandafter{\algparam[\iteration]}}$
    \STATE Compute the gradient $ \grb{\randState[\iteration]}(\algparam[\iteration])$ using \eqref{eq:expression_of_stochastic_gradient_main} 
    \STATE Update $\algparam[\iteration+1] = \projop_{\sthreshold}(\algparam[\iteration] + \step \grb{\randState[\iteration]}(\algparam[\iteration]))$ 
\ENDFOR 
\STATE Return $\algparam[\niteration]$
\end{algorithmic}
\end{algorithm}

In this section, we aim to optimize the $f$-regularized value under $\softargmax$ parameterization.
\begin{equation}
\label{eq:pb-opt-reg-value}
\textstyle
\max\limits_{\param \in \paramspace} \bigl\{ \regobjective(\param) \eqdef \regvaluefunc[\cpolicy{\param}](\initdist) \bigr\}
\eqsp.
\end{equation}
\textbf{The $\boldPG$ algorithm.}
We introduce $\PG$ (\Cref{algo:PG}), an $f$-SoftArgmax policy gradient method with coupled regularization.
At each iteration of $\PG$, the agent samples a batch of independent truncated trajectories of length $\lentrunc$ from $\sampledist{\policy; \cdot}$ defined for a single truncated trajectory $z = (\state[h], \action[h])_{h=0}^{\lentrunc-1} \in (\S\times\A)^{\lentrunc}$ by $\sampledist{\policy; z } \eqdef \initdist(\state[0]) \policy(\action[0] | \state[0])  \!\prod_{h=0}^{\lentrunc-1} \kerMDP(\state[h]|\state[h-1], \action[h-1]) \policy(\action[h] |\state[h])$. Then, the agent performs the update
\begin{equation}\label{eq:def_local_step_general}
    \algparam[\iteration +1] 
    = 
    \projop_{\sthreshold} \big(\algparam[\iteration] + \step \cdot \grb{\randState[\iteration]}(\algparam[\iteration])
    \big)
    \eqsp,
    \quad \text{ for } t \ge 0
    \eqsp,
\end{equation}
where $\step > 0$ is a learning rate, $\projop_{\sthreshold} \colon \logitspace \rightarrow \logitspace$ is the projection-like operator defined in \Cref{sec:prop-rvf}, and $ \smash{\grb{\randState[\iteration]}(\algparam[\iteration])}$ is a REINFORCE-like estimator \citep{williams1992simple} of $\smash{\nabla\regvaluefunc[\expandafter{\algparam[t]}](\initdist)}$ that uses a batch of $\sizebatch$ independent trajectories $\randState[\iteration] \sim  [\sampledist{\algparam[t]}]^{\otimes \sizebatch}$. 
For a batch of trajectories $\smash{z = (\state[0:\lentrunc-1]^b, \action[0:\lentrunc-1]^b)_{b=0}^{\sizebatch-1}}$, this estimator is defined 
\begin{align}
\label{eq:expression_of_stochastic_gradient_main} \grb{z}(\algparam)
\eqdef \frac{1}{\sizebatch}\! \sum\limits_{b=0}^{\sizebatch-1}\sum\limits_{h=0}^{\lentrunc-1}
\!\Big\{ 
\sum\limits_{\ell=0}^{h} \! \frac{\partial \log \cpolicy{\param}(\action[\ell]^b|\state[\ell]^b)}{\partial \param} \discount^{h}
\rewardMDP[\param]^{f}(\state[h]^b, \action[h]^b) 
- \temp %
\discount^{h} \matrixreinforce(\state[h]^b) \Big\} \,,
\end{align}
where $ \rewardMDP[\param]^{f}(\state[h]^b, \action[h]^b) = \rewardMDP(\state[h]^b, \action[h]^b) - \temp  \divergence[\expandafter{\cpolicy{\param}(\cdot|\state[h]^b)}][\expandafter{\refpolicy(\cdot|\state[h]^b)}]$, and $\matrixreinforce(s) \in \logitspace$ is defined by
\begin{align*}
[\matrixreinforce[\param](\state)]_{(\state',\fdiff)} \!\eqdef\! \Ind_{\state'}(\state) \firstsum[\param](\state) \weights[\param](\fdiff|\state) %
\notag
\Big( f'(\!\tfrac{\cpolicy{\param}(\fdiff|\state)}{\refpolicy(\fdiff|\state)}\!) \!-\! \sum_{\action \in \A} \!  \weights[\param](\action|\state) f'\!(\tfrac{\cpolicy{\param}(\action|\state)}{\refpolicy(\action|\state)}) \Big), \forall (\state',\fdiff) \in \S \times \A\,.
\end{align*}

\vspace{-6pt}
\begin{remark}[Connection with (Lazy) Mirror Descent.]
\label{rmq:link-mirror-descent}
We stress that $\PG$ is fundamentally different from mirror descent.
Let $\Phi(\policy) = \sum_{s \in \S}\divergence[\policy(\cdot|s)][\refpolicy(\cdot |s)]$, mirror descent iterations are
\begin{align}
    \label{eq:lazy_md_main}
    \!\!\!\textstyle \nabla \Phi(\widetilde\policy_{t+1}) 
     \!=\! \nabla \Phi(\widetilde\policy_{t}) + \eta \nabla_{\policy}\regvaluefunc[\policy](\rho) \vert_{\policy= \policy_t}, 
    \eqsp %
    \textstyle\policy_{t+1} \!\!=\! \textstyle{\argmin\limits_{\policy}} \{
     \Phi(\policy)%
     - \langle \nabla \Phi(\widetilde\policy_{t+1}), \policy - \widetilde\policy_{t+1} \rangle \}
    .
\end{align}
where $\pi \in \pA^{|\S|}$ is a policy. %
Denoting $\param_t = \nabla \Phi(\widetilde\policy_{t})$, one obtains updates that resemble \eqref{eq:def_local_step_general} (without $\projop_{\sthreshold}$), with one key difference: the gradient in \eqref{eq:lazy_md_main} is taken with respect to the policy $\pi$ whereas in \eqref{eq:def_local_step_general} it is computed w.r.t the "dual" parameter $\theta$ (in the mirror descent terminology). Moreover, the update \eqref{eq:lazy_md_main} can be expressed as, by the chain rule, %
   $ \smash{\textstyle \algparam[t+1] = \algparam[t] + \eta [ \frac{\partial \cpolicy{\param}}{\partial \param} \big|_{\param=\algparam[t]} ]^{-1} \nabla_{\param} \regobjective(\algparam[t])}$, %
which have an additional preconditioning term given by the inverse of the policy Jacobian. 
See \Cref{sec:app:mirror-descent-rmq} for more details.
\end{remark}
Next, we bound the bias and variance of the gradient estimator (a proof is provided in \Cref{sec:app:bound-var-bias}).
\begin{lemma}
\label{lem:bound_bias_variance_main}
Assume that, for some $\underline{\refpolicy}>0$, $f$ satisfy \assumptionfref. For any $\param \in \logitspace$, we have
 \begin{align*}
 \!\norm{ \smash{\egrb(\algparam) - \tfrac{\partial \regvaluefunc[\param](\initdist)}{\partial \param}} }[2]\!
 \leq\!
 \bias, \eqsp 
 \PE_{\randState} \big[\norm{ \smash{\egrb(\algparam) - \grb{\randState}(\algparam) }}[2]^2 \big] \leq \tfrac{\variance^2}{\sizebatch} \eqsp,
\end{align*}
where $\randState \sim [\sampledist{\algparam}]^{\otimes \sizebatch}$, and where
\begin{align*}
\textstyle
\variance^2 & \textstyle \eqdef  \! \tfrac{12}{(1-\discount)^4} \left[\sfirstregconstant^3 \! +\! \temp^2 \discount^2 \sfirstregconstant^3 \divergencemax^2 + \temp^2  (1\!-\!\discount)^2 \sfirstregconstant^2 \secsummax^2\right] \,,
~~ %
\textstyle
\bias
\textstyle \eqdef  \tfrac{2\discount^{\lentrunc}(\lentrunc+1)}{(1-\discount)^2}  \sfirstregconstant \left[ 2 \!+\! 2\temp \divergencemax \!+\! \temp(1\!-\!\discount)  \secsummax \right]
\,.
\end{align*}
\end{lemma}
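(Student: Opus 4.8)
The plan is to prove the two bounds separately, both resting on the same three uniform per-step estimates for the quantities in \eqref{eq:expression_of_stochastic_gradient_main}. I first write $\grb{\randState}(\param) = \sizebatch^{-1}\sum_{b=0}^{\sizebatch-1} G_b(\param)$ as an average of $\sizebatch$ i.i.d.\ single-trajectory estimators $G_b(\param)$, each a function of the $b$-th trajectory, with common mean $\egrb(\param) = \PE_{z\sim\sampledist{\param}}[G_0(\param)]$. The structural fact driving the bias is that this mean is exactly the gradient of the $\lentrunc$-\emph{truncated} regularized value
\[
\valuefuncgen_{\param}^{f,(\lentrunc)}(\initdist) \eqdef \PE_{\initdist}^{\cpolicy{\param}}\Big[\sum\nolimits_{h=0}^{\lentrunc-1}\discount^{h}\big(\rewardMDP(\varstate[h],\varaction[h]) - \temp\,\divergence[{\cpolicy{\param}(\cdot|\varstate[h])}][{\refpolicy(\cdot|\varstate[h])}]\big)\Big].
\]
Indeed, the score-function sum in \eqref{eq:expression_of_stochastic_gradient_main} is the usual REINFORCE term, and the correction $-\temp\discount^{h}\matrixreinforce(\varstate[h])$ captures the explicit dependence of the per-step cost on $\param$; a direct computation from \eqref{def:matrix_definition_gradient} together with the weights \eqref{eq:weight-and-sum} shows $\matrixreinforce(\state) = \nabla_\param\divergence[{\cpolicy{\param}(\cdot|\state)}][{\refpolicy(\cdot|\state)}]$. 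Hence $\egrb(\param) = \nabla_\param\valuefuncgen_{\param}^{f,(\lentrunc)}(\initdist)$, and the bias equals the discounted tail $\nabla_\param(\valuefuncgen_{\param}^{f,(\lentrunc)} - \regvaluefunc[\param])(\initdist)$, i.e.\ minus the expected sum over horizons $h \ge \lentrunc$ of the infinite-horizon estimator.

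I would then establish three uniform bounds. (i) A score bound: differentiating the first-order optimality condition $f'(\cpolicy{\param}(\action|\state)/\refpolicy(\action|\state)) = \param(\state,\action) - \lambda(\state)$ characterizing the $\softargmax$ map, with $\lambda(\state)$ the simplex multiplier, yields $\partial_{\param(\state,\fdiff)}\log\cpolicy{\param}(\action|\state) = (u_\action f''(u_\action))^{-1}(\Ind_{\fdiff}(\action) - \weights[\param](\fdiff|\state))$, where $u_\action = \cpolicy{\param}(\action|\state)/\refpolicy(\action|\state)$ and, crucially, $\partial_{\param(\state,\fdiff)}\lambda(\state) = \weights[\param](\fdiff|\state)$. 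Since $1/(uf''(u)) \le \sfirstregconstant$ by \assumptionfref(iii) and $\sum_{\fdiff}(\Ind_{\fdiff}(\action) - \weights[\param](\fdiff|\state))^2 \le 2$, this gives $\norm{\nabla_\param\log\cpolicy{\param}(\action|\state)}[2] \le \sqrt{2}\,\sfirstregconstant$ uniformly in $\param$. (ii) A reward bound: as $\rewardMDP \in [0,1]$ and the divergence is at most $\divergencemax$, we have $\babs{\rewardMDP(\state,\action) - \temp\,\divergence[{\cpolicy{\param}(\cdot|\state)}][{\refpolicy(\cdot|\state)}]} \le 1 + \temp\divergencemax$. (iii) A correction bound: from $[\matrixreinforce(\state)]_{(\state,\fdiff)} = \firstsum[\param](\state)\weights[\param](\fdiff|\state)\,\delta(\cpolicy{\param},\state,\fdiff)$ in \eqref{def:matrix_definition_gradient} and the definition of $\secsummax$ in \eqref{def:bound_functionnal_f}, a short computation gives $\norm{\matrixreinforce(\state)}[2] \le 2\secsummax$.

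With these, the bias follows by summing the tail: bounding the cumulative score $\norm{\sum_{\ell=0}^{h}\nabla_\param\log\cpolicy{\param}(\action[\ell]|\state[\ell])}[2] \le (h+1)\sqrt{2}\,\sfirstregconstant$ and combining with (ii)--(iii), the horizon-$h$ term is at most $\discount^{h}\big[(h+1)\sqrt{2}\,\sfirstregconstant(1+\temp\divergencemax) + 2\temp\secsummax\big]$, and $\sum_{h\ge\lentrunc}(h+1)\discount^{h} \le \discount^{\lentrunc}(\lentrunc+1)/(1-\discount)^2$ produces the stated $\bias$. For the variance, independence gives $\PE_{\randState}\norm{\grb{\randState}(\param) - \egrb(\param)}[2]^2 = \sizebatch^{-1}\PE\norm{G_0(\param) - \egrb(\param)}[2]^2 \le \sizebatch^{-1}\PE\norm{G_0(\param)}[2]^2$; splitting $G_0$ into its reward, divergence, and correction parts and using $\norm{a+b+c}[2]^2 \le 3(\norm{a}[2]^2 + \norm{b}[2]^2 + \norm{c}[2]^2)$, each part is bounded exactly as for the bias but with the full sum $\sum_{h\ge0}(h+1)\discount^{h} = (1-\discount)^{-2}$, which squares to the $(1-\discount)^{-4}$ prefactor; consolidating the resulting powers through $\sfirstregconstant \ge 1$ yields the stated $\variance^2$.

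The main obstacle is the score bound (i): it requires implicitly differentiating the $\softargmax$ map through its optimality condition and identifying the Jacobian $\partial_\param\lambda(\state) = \weights[\param](\cdot|\state)$ of the normalizer, which is precisely where \assumptionfref(iii) enters to keep $1/(uf''(u))$ bounded. The identity $\matrixreinforce(\state) = \nabla_\param\divergence[\cdot][\cdot]$ in (iii) and the exact constants (the factor $12$, the extra $\discount^2$, and the cubic power $\sfirstregconstant^3$) are then routine bookkeeping, using the freedom to inflate powers through $\sfirstregconstant \ge 1$; the $\sizebatch^{-1}$ variance scaling needs no concentration argument, following directly from the i.i.d.\ structure.
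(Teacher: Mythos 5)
You are correct, and your architecture coincides with the paper's: the bias is bounded as the tail (horizons $h \ge \lentrunc$) of the REINFORCE series for the exact gradient — your structural fact, including the identity $\matrixreinforce(\state) = \nabla_\param\,\divergence[{\cpolicy{\param}(\cdot|\state)}][{\refpolicy(\cdot|\state)}]$, is precisely what the proof of \Cref{lem:reinforce_formula_gradient} establishes through the finite-horizon objective — while the variance reduces by the i.i.d.\ structure to a single-trajectory second moment, split three ways via $\norm{a+b+c}[2]^2 \le 3(\norm{a}[2]^2+\norm{b}[2]^2+\norm{c}[2]^2)$, and the same tail sums $\sum_{h\ge\lentrunc}(h+1)\discount^h \lesssim \discount^{\lentrunc}(\lentrunc+1)/(1-\discount)^2$ close both bounds. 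The one place you genuinely deviate is the score estimate: you observe that $\firstsum[\param](\state)\weights[\param](\action|\state)/\cpolicy{\param}(\action|\state) = 1/(u f''(u))$ with $u = \cpolicy{\param}(\action|\state)/\refpolicy(\action|\state) \le 1/\underline{\refpolicy}$, so \assumptionfref(iii) gives the deterministic bound $\norm{\nabla_\param \log \cpolicy{\param}(\action|\state)}[2] \le \sqrt{2}\,\sfirstregconstant$, whereas the paper carries the unbounded-looking ratio of \Cref{lem:derivative_log_prob} and removes the $1/\cpolicy{\param}$ factor stochastically — by the tower property in \Cref{lem:bound_bias}, and by Cauchy--Schwarz with a $\discount^{h/2}\cdot\discount^{h/2}$ split plus conditioning and the bound $\weights[\param](\action|\state)/\cpolicy{\param}(\action|\state)\le\sfirstregconstant$ of \Cref{lem:bound_important_quantities} in \Cref{lem:bound_variance}. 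Your pointwise route is a mild but real simplification (no conditioning, no Cauchy--Schwarz), and the slightly sharper constants it produces are absorbed into the stated $\bias$ and $\variance^2$ via $\sfirstregconstant \ge 1$, exactly as you note. One caveat, shared by the paper's own statement of the lemma: the bound $u \le 1/\underline{\refpolicy}$ uses the reference-policy lower bound, so the argument implicitly requires \assumptionpref\ alongside \assumptionfref.
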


\textbf{Convergence analysis.} 
We now present our main result for this section, which gives a convergence rate for $\PG$ \emph{with explicit constants} for the regularized problem.
This result is based on the regularity properties of the regularized value function, which we developed in \Cref{sec:prop-rvf}.
\begin{theorem}
\label{thm:convergence_rates_regularised_problem_main}
Assume that, for some $\underline{\refpolicy}>0$, $f$ and $\refpolicy$ satisfy \assumptionfref\ and \assumptionpref, and that $\initdist$ satisfies \assumptionmdp.
Fix $\step \leq 1/(2\smooth)$, and $\temp$ and $\sthreshold$ as in \Cref{lem:simplified_expression_min_pl_main}.
Then, for any $t\geq 0$, the iterates of $\PG$ satisfy 
\begin{align*}
\PE\left[\Delta_{t}\right]  \textstyle \leq \left(1-  \frac{\step\plcoeffmin}{4}\right)^{t} \Delta_{0} + \frac{6\step \variance^2}{\sizebatch\plcoeffmin}  + \frac{6\bias^2}{\plcoeffmin}\eqsp,
\end{align*}
with $\Delta_{t} = \regvaluefunc[\star](\initdist) -\regvaluefunc[\expandafter{\algparam[t]}](\initdist)$, and $\variance^2$ and $\bias^2$ from \Cref{lem:simplified_expression_min_pl_main} and \Cref{lem:bound_bias_variance_main} respectively.
\end{theorem}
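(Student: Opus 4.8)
The plan is to fuse four ingredients into a single one-step contraction and then unroll it: the global smoothness of $\regvaluefunc[\param](\initdist)$ from \Cref{thm:simplified_smoothness}, the value-improving property of the projection from \Cref{lem:choice-tau}, the bias/variance control of \Cref{lem:bound_bias_variance_main}, and the uniform Polyak--Łojasiewicz bound of \Cref{lem:simplified_expression_min_pl_main}. Write $\Delta_t = \regvaluefunc[\star](\initdist) - \regvaluefunc[{\algparam[t]}](\initdist)$, let $\bar\param_{t+1} \eqdef \algparam[t] + \step\,\grb{\randState[t]}(\algparam[t])$ denote the iterate before projection so that $\algparam[t+1] = \projop_{\sthreshold}\bar\param_{t+1}$, and set $\nabla_t \eqdef \nabla_\param\regvaluefunc[{\algparam[t]}](\initdist)$. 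First I would apply the descent lemma attached to $\smooth$-smoothness along the segment $[\algparam[t],\bar\param_{t+1}]$:
\[
\regvaluefunc[{\bar\param_{t+1}}](\initdist) \geq \regvaluefunc[{\algparam[t]}](\initdist) + \step\,\pscal{\nabla_t}{\grb{\randState[t]}(\algparam[t])} - \tfrac{\smooth\step^2}{2}\norm{\grb{\randState[t]}(\algparam[t])}[2]^2 .
\]
The key structural point is that reprojecting afterwards cannot hurt: lifting \Cref{lem:choice-tau} through $\cpolicy{\projop_{\sthreshold}\param} = \projoppol_{\sthreshold}\cpolicy{\param}$ gives $\regvaluefunc[{\algparam[t+1]}](\initdist) = \regvaluefunc[{\projop_{\sthreshold}\bar\param_{t+1}}](\initdist) \geq \regvaluefunc[{\bar\param_{t+1}}](\initdist)$, so the displayed lower bound also controls $\regvaluefunc[{\algparam[t+1]}](\initdist)$. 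This simultaneously lets me treat the update as a plain stochastic gradient step and guarantees that every iterate with $t\geq 1$ lies in $\mathrm{Im}(\projop_{\sthreshold})$, the region where the uniform PL bound is valid; I would assume $\algparam[0]\in\mathrm{Im}(\projop_{\sthreshold})$ (e.g.\ via an initial projection) so that the bound is already available at $t=0$.

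Next I would take the conditional expectation $\PE_{\randState[t]}[\cdot\mid\algparam[t]]$ and insert \Cref{lem:bound_bias_variance_main}. With $\egrb(\algparam[t])$ the mean estimator, the linear term becomes $\step\pscal{\nabla_t}{\egrb(\algparam[t])} = \step\norm{\nabla_t}[2]^2 + \step\pscal{\nabla_t}{\egrb(\algparam[t]) - \nabla_t}$, whose bias cross-term I would control by Young's inequality using $\norm{\egrb(\algparam[t]) - \nabla_t}[2]\leq\bias$. For the quadratic term I would use the bias--variance split $\PE_{\randState[t]}\norm{\grb{\randState[t]}(\algparam[t])}[2]^2 = \norm{\egrb(\algparam[t])}[2]^2 + \PE_{\randState[t]}\norm{\egrb(\algparam[t]) - \grb{\randState[t]}(\algparam[t])}[2]^2 \leq \norm{\egrb(\algparam[t])}[2]^2 + \variance^2/\sizebatch$, followed by $\norm{\egrb(\algparam[t])}[2]^2 \leq 2\norm{\nabla_t}[2]^2 + 2\bias^2$. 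The step-size constraint $\step\leq 1/(2\smooth)$ then absorbs the stray $\smooth\step^2\norm{\nabla_t}[2]^2$ into $\tfrac{\step}{2}\norm{\nabla_t}[2]^2$, leaving a positive half-gradient term $\tfrac{\step}{4}\norm{\nabla_t}[2]^2$.

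Finally I would invoke the uniform PL inequality: \Cref{thm:pl_objective_main} gives $\norm{\nabla_t}[2]^2 \geq \plcoeff(\algparam[t])\Delta_t$, and since $\algparam[t]\in\mathrm{Im}(\projop_{\sthreshold})$, \Cref{lem:simplified_expression_min_pl_main} yields $\plcoeff(\algparam[t])\geq\plcoeffmin$, hence $\norm{\nabla_t}[2]^2 \geq \plcoeffmin\Delta_t$. Substituting and subtracting from $\regvaluefunc[\star](\initdist)$ produces a one-step contraction $\PE[\Delta_{t+1}\mid\algparam[t]] \leq (1 - \tfrac{\step\plcoeffmin}{4})\Delta_t + E_t$, where $E_t$ gathers a bias term of order $\step\bias^2$ (including an $\smooth\step^2\bias^2$ piece) and a variance term of order $\smooth\step^2\variance^2/\sizebatch$. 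Taking total expectations, unrolling, and summing $\sum_{k\geq 0}(1 - \tfrac{\step\plcoeffmin}{4})^k \leq 4/(\step\plcoeffmin)$ — together with $\smooth\step\leq\tfrac12$, which turns $4\bias^2(1+\smooth\step)/\plcoeffmin$ into exactly $6\bias^2/\plcoeffmin$ — collapses the accumulated error into an $\step$-independent bias floor $\tfrac{6\bias^2}{\plcoeffmin}$ and an $\step$-proportional variance floor $\tfrac{6\step\variance^2}{\sizebatch\plcoeffmin}$, reproducing the claimed constants after the explicit accounting.

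I expect the crux to be not the biased-SGD bookkeeping, which is routine, but the dual exploitation of $\projop_{\sthreshold}$: the same operator must be value-non-decreasing — so that reprojecting after each gradient step never erases the progress certified by the descent lemma — and must land every iterate in the region where the non-uniform Łojasiewicz coefficient $\plcoeff(\cdot)$ is uniformly bounded below by $\plcoeffmin$, which is precisely what upgrades \Cref{thm:pl_objective_main} into a genuine geometric contraction. A secondary subtlety is the interaction between the truncation bias $\bias$ and the contraction: because the estimator is biased, the fixed-point floor retains the $\step$-independent term $6\bias^2/\plcoeffmin$, and one must check that initializing (or projecting) $\algparam[0]$ inside $\mathrm{Im}(\projop_{\sthreshold})$ makes the PL bound available already at $t=0$.
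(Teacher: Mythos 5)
Your proposal follows essentially the same route as the paper's proof: a smoothness ascent step toward the pre-projection iterate $\algparam[t]+\step\,\grb{\randState[t]}(\algparam[t])$, the value-monotonicity of $\projop_{\sthreshold}$ both to control the projected iterate and to keep every iterate in the region where $\plcoeff(\cdot)\geq\plcoeffmin$, Young's-inequality bias/variance bookkeeping, the Polyak--{\L}ojasiewicz bound to obtain a one-step contraction, and unrolling of the geometric recursion. Your accounting is if anything slightly more careful than the paper's (exact variance decomposition instead of a three-way Jensen split, and an explicit fix of the $t=0$ issue by projecting the initialization), and both arguments share the same residual constant slack: the accumulated variance term that actually comes out of the recursion is of order $\step\smoothmax\variance^2/(\sizebatch\plcoeffmin)$, matching the stated $6\step\variance^2/(\sizebatch\plcoeffmin)$ only up to the factor $\smoothmax$ that the theorem statement suppresses.
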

We provide a proof of this result in \Cref{sec:app-sample-comp-spg}. 
A crucial feature of this theorem is that it is \emph{explicit}, as all the terms that appear can be expressed using problem-dependent constants.
\rebuttal{This allows us to derive the following sample complexity result for optimizing the regularized value.}
\begin{corollary}
\label{lem:sample_complexity_regularised_problem_main_suggestion}
Let $\epsilon > 0$ and set $\lentrunc \lesssim (1-\discount)^2 \cdot \log(1/\plcoeffmin)$, and $\step \lesssim \min( \smoothmax^{-1}, \epsilon  \sizebatch \plcoeffmin \cdot \variance^{-2})$.
Under assumptions of \Cref{thm:convergence_rates_regularised_problem_main}, the final iterate of $\PG$ satisfies $\PE[\regvaluefunc[\star](\initdist) - \regvaluefunc[\expandafter{\algparam[\niteration]}](\initdist)]  \leq \epsilon$ with 
\begin{align*}
\textstyle
\niteration \!\lesssim  \max\Big(\frac{\smoothmax}{\plcoeffmin}, \frac{\variance^2}{\epsilon \sizebatch \plcoeffmin^2}  \Big) \log\Big( \frac{\regvaluefunc[\star](\initdist) - \regvaluefunc[\expandafter{\algparam[0]}](\initdist)}{\epsilon}\Big)\,.
\end{align*}
\end{corollary}
Importantly, \Cref{lem:sample_complexity_regularised_problem_main_suggestion} shows that \(\PG\) achieves convergence rates comparable to stochastic gradient ascent in the strongly convex regime: \(O(\kappa \log (1 / \epsilon))\) in the low-variance setting, with condition number \(\kappa > 0\), and \(O((1 / \epsilon) \log (1 / \epsilon))\) in general. 
\paragraph{Convergence for unregularized objective.}
\rebuttal{A natural question is then how to compare different choices of regularizers, since each method optimizes a distinct regularized objective. By appropriately tuning the temperature $\temp$, we recover the final sample complexity bound for the \emph{unregularized problem}. %
We give a precise statement and proof of this result in \Cref{lem:sample_complexity_non_regularised_problem}.
}
\begin{corollary}\label{lem:sample_complexity_non_regularised_problem_main}
Let $\smash{0<\epsilon < (1-\discount)^{-3}\initdistmin^{-1} \min(\left|f'(\thirdregconstant)\right|^{-1}, \left|f'(\frac{1}{2})\right|^{-1}, \left|f'(\tfrac{\underline{\refpolicy}}{2})\right|^{-1})}$. Define $\constantbound = \min(1/\divergencemax, 1/\secsummax,1)$, and set $ \temp  =  {(1-\discount)\epsilon}/{4} \cdot \constantbound $.
Under assumptions of \Cref{thm:convergence_rates_regularised_problem_main}, the final iterate of  $\PG$ satisfies $\PE\left[\valuefunc[\star](\initdist) -\valuefunc[\expandafter{\algparam[\niteration]}](\initdist)\right]  \leq \epsilon$ with
\begin{align*}
\niteration \lesssim \frac{(f^{\star})''(\frac{-1}{\epsilon \constantbound (1-\discount)^3 \initdistmin})^{-4} }{\epsilon^3 \sizebatch (1-\discount)^8 \initdistmin^4   \underline{\refpolicy}^4 }\log\Big( \frac{6(\regvaluefunc[\star](\initdist) - \regvaluefunc[\expandafter{\algparam[0]}](\initdist))}{\epsilon} \Big)
\eqsp,
\end{align*}
$\lentrunc \lesssim (1-\discount)^{-2} \cdot \log(1/\plcoeffmin)$, and  $\step \lesssim a \wedge b$ with $a =(1-\discount)^3$ and $b = \epsilon^2 (f^{\star})''(\tfrac{-1}{\epsilon \constantbound (1-\discount)^6 \initdistmin})^{2}(1-\discount)^3\sizebatch \initdistmin^2 \underline{\refpolicy}^2$, where $f^\star$ is a convex conjugate of $f$.
\end{corollary}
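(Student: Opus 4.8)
The plan is to reduce this unregularized guarantee to the regularized rate of \Cref{thm:convergence_rates_regularised_problem_main} via a bias--variance decomposition in the temperature $\temp$. Writing $\param \eqdef \algparam[\niteration]$, I would split the unregularized suboptimality as
\begin{align*}
\valuefunc[\star](\initdist) - \valuefunc[\param](\initdist)
&= \underbrace{\valuefunc[\star](\initdist) - \regvaluefunc[\star](\initdist)}_{\text{reg.\ bias}}
+ \underbrace{\regvaluefunc[\star](\initdist) - \regvaluefunc[\param](\initdist)}_{\text{opt.\ error}}
+ \underbrace{\regvaluefunc[\param](\initdist) - \valuefunc[\param](\initdist)}_{\leq\, 0}\eqsp.
\end{align*}
The third term is nonpositive, since the regularizer only subtracts a nonnegative $f$-divergence from the reward, so it can be dropped. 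For the regularization bias, comparing $\regvaluefunc[\star]$ against the unregularized optimal policy $\policy^{\star}$ yields $\regvaluefunc[\star](\initdist) \geq \regvaluefunc[{\policy^{\star}}](\initdist) = \valuefunc[\star](\initdist) - \temp\,\PE^{\policy^{\star}}[\sum_t \discount^t \divergence[{\policy^{\star}(\cdot|\varstate[t])}][{\refpolicy(\cdot|\varstate[t])}]] \geq \valuefunc[\star](\initdist) - \temp\divergencemax/(1-\discount)$, using $\sum_t \discount^t = 1/(1-\discount)$ and the bound $\divergencemax$ from \eqref{def:bound_functionnal_f_two}. Here the boundedness of $\divergencemax$ under \assumptionfref\ and \assumptionpref\ is essential: even if $\policy^{\star}$ is deterministic, the ratios $\elementpa(\action)/\refpolicy(\action|\state)$ remain in $[0,1/\underline{\refpolicy}]$, where $f$ is bounded, so this bias is finite. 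This is the only genuinely nontrivial inequality in the argument.

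The second step is to tune the temperature. Choosing $\temp = (1-\discount)\epsilon\constantbound/4$ and using $\constantbound \leq 1/\divergencemax$ makes the regularization bias at most $\epsilon/4$. Before invoking the regularized rate, I must verify that this $\temp$ satisfies the smallness requirement of \Cref{lem:simplified_expression_min_pl_main}. Substituting $\temp$ and using $\constantbound \leq 1$ turns that requirement into exactly the stated upper bound $\epsilon \lesssim (1-\discount)^{-3}\initdistmin^{-1}\min(|f'(\thirdregconstant)|^{-1}, |f'(\tfrac12)|^{-1}, |f'(\tfrac{\underline{\refpolicy}}{2})|^{-1})$. Thus the admissible range of $\epsilon$ in the statement is precisely the regime in which the uniform Polyak--\L{}ojasiewicz constant $\plcoeffmin$ from \Cref{lem:simplified_expression_min_pl_main} is valid, and is not an extraneous hypothesis.

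It then remains to drive the optimization error below $3\epsilon/4$ using \Cref{lem:sample_complexity_regularised_problem_main_suggestion}, by controlling the three contributions to $\PE[\Delta_\niteration]$ separately. The truncation term $6\bias^2/\plcoeffmin$ is made $\mathcal{O}(\epsilon)$ by choosing $\lentrunc$ large enough that the geometric factor $\discount^\lentrunc$ in $\bias$ (see \Cref{lem:bound_bias_variance_main}) dominates the polynomial prefactors, giving $\lentrunc \lesssim (1-\discount)^{-2}\log(1/\plcoeffmin)$; the variance term $6\step\variance^2/(\sizebatch\plcoeffmin)$ is controlled by the stated step-size constraint; and the contraction term forces $\niteration \gtrsim (\step\plcoeffmin)^{-1}\log(\Delta_0/\epsilon)$. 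Finally I would substitute $\temp = (1-\discount)\epsilon\constantbound/4$ into the explicit $\plcoeffmin$ from \Cref{lem:simplified_expression_min_pl_main}, into $\variance^2$ from \Cref{lem:bound_bias_variance_main}, and into $\smoothmax$ from \Cref{thm:simplified_smoothness}, and simplify.

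The main obstacle is this final bookkeeping step, because $\temp$ enters $\plcoeffmin$ \emph{nonlinearly}: both through the prefactor $\temp(1-\discount)\initdistmin^2\ffirstregconstant^2\underline{\refpolicy}^2/\sfirstregconstant^2$ and, more delicately, inside the argument of $(f^{\star})''$, where the numerator $-16 - 8\discount\temp\divergencemax$ must be shown to be $\asymp -16$ (using $\temp\divergencemax \lesssim (1-\discount)\epsilon$) so that the argument collapses to $\Theta\big(-1/(\epsilon\constantbound(1-\discount)^3\initdistmin)\big)$. Propagating this through $1/(\step\plcoeffmin) \asymp \variance^2/(\epsilon\sizebatch\plcoeffmin^2)$, cancelling the resulting powers of $\epsilon$, $(1-\discount)$, $\initdistmin$, $\underline{\refpolicy}$ and $(f^{\star})''$, and absorbing the numerical constants together with $\sfirstregconstant,\ffirstregconstant,\secsummax,\constantbound$ into the $\lesssim$, is what produces the stated rate for $\niteration$ and the accompanying bounds on $\lentrunc$ and $\step$.
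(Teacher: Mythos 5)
Your proposal is correct and follows essentially the same route as the paper: the paper also reduces the unregularized guarantee to \Cref{thm:convergence_rates_regularised_problem_main} by adding a regularization-bias term of order $\temp\divergencemax/(1-\discount)$ (via \Cref{lem:application_geist}, cited from \citealp{geist2019theory}), tunes $\temp = (1-\discount)\epsilon\constantbound/4$ so this bias is $\mathcal{O}(\epsilon)$ while the $\epsilon$-range guarantees the condition of \Cref{lem:simplified_expression_min_pl_main}, and then performs the same three-term bookkeeping ($\lentrunc$ for the truncation bias, $\step$ for the variance, $\niteration$ for the contraction) with $\temp$ substituted into $\plcoeffmin$, $\variance^2$, and $\smoothmax$. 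The only cosmetic difference is that you re-derive the bias bound inline by comparing $\regvaluefunc[\star]$ against the unregularized optimal policy and drop the nonpositive term $\regvaluefunc[\param]-\valuefunc[\param]$ (giving a one-sided $\epsilon/4$ bias instead of the paper's $2\temp\divergencemax/(1-\discount)\leq\epsilon/2$), whereas the paper invokes the cited lemma.
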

This corollary shows that the convergence rate to the unregularized optimum is primarily controlled by the asymptotic behaviour of $(f^\star)''$. %
As the target precision $\varepsilon \to 0$, divergences for which $(f^\star)''$ grows faster yield better conditioning, which in turn results in faster convergence.
\paragraph{Sample complexity for specific choices of $\boldsymbol{f}$.}
We now provide a more complete interpretation of these results by stating sample complexity bounds for specific choices of $f$.
\begin{corollary}[Complexity for Softmax-Entropy]
\label{coro:entropy-app}
Let $f$ be the Kullback-Leibler divergence generator.
Let $\epsilon > 0$. 
Under the choice of $\step, \temp, \lentrunc, \sthreshold,$ and $\niteration$ of \Cref{lem:sample_complexity_non_regularised_problem_main}, the final iterate of $\PG$ achieves $\PE[\valuefunc[\star](\initdist) - \valuefunc[\expandafter{\algparam[\niteration]}](\initdist)]  \leq \epsilon$ in 
$\niteration \sizebatch \lentrunc \lesssim \frac{|\log(\underline{\refpolicy})|^3 }{\epsilon^4  (1-\discount)^{12} \initdistmin^5   \underline{\refpolicy}^4  } \exp(\frac{ |\log(\underline{\refpolicy})|}{\epsilon(1-\discount)^3 \initdistmin})$ samples.
\end{corollary}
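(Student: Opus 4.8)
The plan is to read this off as a direct specialization of the general unregularized sample-complexity bound in \Cref{lem:sample_complexity_non_regularised_problem_main} to the case where $f$ is the Kullback--Leibler generator $f(u)=u\log u-(u-1)$. Concretely, I would proceed in three stages: first evaluate all the $f$-dependent structural constants together with the convex conjugate $f^{\star}$; then bound the divergence-dependent quantities $\divergencemax$ and $\secsummax$, and hence $\constantbound$, in terms of $|\log(\underline{\refpolicy})|$; and finally substitute these into the expressions for $\niteration$, $\lentrunc$, and $\step$ from \Cref{lem:sample_complexity_non_regularised_problem_main} and simplify the product $\niteration\sizebatch\lentrunc$.

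For the KL generator one has $f'(u)=\log u$, $f''(u)=1/u$, and $f'''(u)=-1/u^{2}$. Hence $uf''(u)=1$ gives $\sfirstregconstant=1$; $|f'''(u)/f''(u)^{2}|=1$ gives $\secregconstant=1$; and $\sum_{\action}\refpolicy(\action|\state)/f''(\elementpa(\action)/\refpolicy(\action|\state))=\sum_{\action}\elementpa(\action)=1$ gives $\ffirstregconstant=1$. A short Legendre-transform computation yields the convex conjugate $f^{\star}(y)=\exp(y)-1$, so that $(f^{\star})''(y)=\exp(y)$. This last object is the one that controls conditioning, and its exponential growth is exactly what will produce the exponential sample complexity in the statement.

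Next I would bound the divergence quantities. Since the $f$-divergence is convex in its first argument, $\divergencemax$ is attained at a vertex of the simplex, giving $\divergencemax=\max_{\state}|\log\min_{\action}\refpolicy(\action|\state)|\leq|\log(\underline{\refpolicy})|$. For $\secsummax=\max_{\state,\elementpa}\sum_{\action}\elementpa(\action)\,|\log(\elementpa(\action)/\refpolicy(\action|\state))|$, splitting the logarithm and using $-\sum_{\action}\elementpa(\action)\log\elementpa(\action)\leq\log\nactions$ together with $|\log\refpolicy(\action|\state)|\leq|\log(\underline{\refpolicy})|$ and $\nactions\underline{\refpolicy}\leq1$ yields $\secsummax\lesssim|\log(\underline{\refpolicy})|$. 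Consequently $\constantbound=\min(1/\divergencemax,1/\secsummax,1)\gtrsim 1/|\log(\underline{\refpolicy})|$, so that $1/\constantbound$ is of order $|\log(\underline{\refpolicy})|$.

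Finally I would substitute. With $\ffirstregconstant=\sfirstregconstant=1$ and $1/\constantbound\approx|\log(\underline{\refpolicy})|$, the argument of $(f^{\star})''$ in \Cref{lem:sample_complexity_non_regularised_problem_main} has magnitude of order $|\log(\underline{\refpolicy})|/(\epsilon(1-\discount)^{3}\initdistmin)$, so that $(f^{\star})''(\cdot)^{-4}=\exp(\Theta(|\log(\underline{\refpolicy})|/(\epsilon(1-\discount)^{3}\initdistmin)))$, which is the exponential factor in the claim. The polynomial prefactor is then assembled from the remaining terms: the $|\log(\underline{\refpolicy})|^{3}$ arises because $\plcoeffmin$ is proportional to $\temp\propto\constantbound$, so the dominant $\variance^{2}/(\epsilon\sizebatch\plcoeffmin^{2})$ term contributes $\constantbound^{-2}\approx|\log(\underline{\refpolicy})|^{2}$, while the truncation length $\lentrunc\lesssim(1-\discount)^{-2}\log(1/\plcoeffmin)$ contributes one further factor $|\log(\underline{\refpolicy})|$ through $\log(1/\plcoeffmin)=\Theta(|\log(\underline{\refpolicy})|/(\epsilon(1-\discount)^{3}\initdistmin))$, itself large because $\plcoeffmin$ contains $(f^{\star})''(\cdot)^{2}$ and is therefore exponentially small. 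Tracking the powers of $\epsilon$, $(1-\discount)$, $\initdistmin$, and $\underline{\refpolicy}$ through the same substitution (with $\epsilon^{-3}$ from $\niteration$ and $\epsilon^{-1}$ from $\lentrunc$, and the $\log(6\Delta_{0}/\epsilon)$ factor of lower order) reproduces the stated prefactor. The main obstacle is precisely this bookkeeping: one must propagate the exponentially small constant $\plcoeffmin$ both through $\plcoeffmin^{-2}$ in the iteration count and through $\log(1/\plcoeffmin)$ in the truncation length, and collect the many polynomial and logarithmic factors consistently; the multiplicative constant inside the exponent is retained only up to the $\lesssim$ convention, so the emphasis is on the $\exp(\Theta(1/\epsilon))$ scaling rather than on its precise coefficient.
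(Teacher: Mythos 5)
Your proposal is correct and follows essentially the same route as the paper: the paper's Appendix F.1 first computes the KL-specific constants ($\sfirstregconstant=\secregconstant=\ffirstregconstant=1$, $\divergencemax\leq|\log(\underline{\refpolicy})|$, $\secsummax\lesssim|\log(\underline{\refpolicy})|$, and $(f^{\star})''$ exponential) in \Cref{lem:simplified_constant_shannon}, and then substitutes them into the general unregularized corollary \Cref{lem:sample_complexity_non_regularised_problem} in \Cref{lem:sample_complexity_non_regularised_problem_shannon}, with exactly the bookkeeping you describe ($\epsilon^{-3}$, $(1-\discount)^{-8}$, $\initdistmin^{-4}$, $|\log(\underline{\refpolicy})|^{2}$ from $\niteration$, and the remaining $\epsilon^{-1}$, $\initdistmin^{-1}$, $|\log(\underline{\refpolicy})|$ factors from $\lentrunc\propto\log(1/\plcoeffmin)$). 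Your minor variations (using the generator $u\log u-(u-1)$ instead of $u\log u$, and a vertex-of-the-simplex argument for $\divergencemax$ instead of the paper's direct bound) are immaterial to the result.
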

This corollary shows that the number of samples required by the softmax policy gradient method is exponential in $1/(1-\discount)$.
This is in line with recent work on vanilla softmax policy gradient, which demonstrated that the number of steps is at least exponential in $1/(1-\discount)$ \citep{li2023softmax}.
\begin{corollary}[Complexity for $\alpha$-Tsallis SoftArgmax with $\alpha$-Tsallis regularization]
\label{coro:tsallis-app}
Let $f$ be the $\alpha$-Csiszár–Cressie–Read divergence generator for $\alpha \in (0,1)$ (see \Cref{tab:softargmax_summary} for its expression).
Let $\epsilon > 0$. 
Under the choice of $\step, \temp, \lentrunc, \sthreshold,$ and $\niteration$ of \Cref{lem:sample_complexity_non_regularised_problem_main}, the last iterate of $\PG$ achieves $\PE[\valuefunc[\star](\initdist) - \valuefunc[\expandafter{\algparam[\niteration]}](\initdist)]  \leq \epsilon$ in a number of samples
\begin{align*}
\niteration \sizebatch \lentrunc \lesssim  \frac{|\log(\underline{\refpolicy})|^3}{\epsilon^4 \alpha^{6} (1-\discount)^{12} \initdistmin^5   \underline{\refpolicy}^{4 + 7(1-\alpha)}} \left(1 + \frac{(1-\alpha) |\log(\underline{\refpolicy})|}{\epsilon \alpha^2 (1-\discount)^3 \initdistmin}\right)^{4+\frac{4}{1-\alpha}}.
\end{align*}
\end{corollary}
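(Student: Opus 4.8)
The plan is to instantiate the general sample-complexity bound of \Cref{lem:sample_complexity_non_regularised_problem_main} for the $\alpha$-Tsallis generator $f(u)=(u^\alpha-\alpha u+\alpha-1)/(\alpha(\alpha-1))$ and then simplify. That corollary already expresses the sample count $\niteration\sizebatch\lentrunc$ purely in terms of the problem-dependent constants $\divergencemax$, $\secsummax$, $\sfirstregconstant$, $\ffirstregconstant$, $\constantbound$, and the conjugate second derivative $(f^{\star})''$, so the whole task reduces to computing these quantities in closed form for this particular $f$ and substituting them.

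First I would compute the derivatives of $f$: a direct calculation gives $f'(u)=(u^{\alpha-1}-1)/(\alpha-1)$, $f''(u)=u^{\alpha-2}$, and $f'''(u)=(\alpha-2)u^{\alpha-3}$. The regularity constants of \assumptionfref\ then follow immediately: since $1/(uf''(u))=u^{1-\alpha}$ is increasing on $[0,1/\underline{\refpolicy}]$ we obtain $\sfirstregconstant=\underline{\refpolicy}^{\alpha-1}$, and since $|f'''(u)/f''(u)^2|=(2-\alpha)u^{1-\alpha}$ we obtain $\secregconstant=(2-\alpha)\underline{\refpolicy}^{\alpha-1}$. For $\ffirstregconstant$ in \eqref{def:lower_bound_W} I would minimize $\sum_{\action\in\A}\refpolicy(\action|\state)^{\alpha-1}\elementpa(\action)^{2-\alpha}$ over the simplex, and for $\secsummax$ I would evaluate the functional $\sum_{\action\in\A}\refpolicy(\action|\state)\,|f'(\elementpa(\action)/\refpolicy(\action|\state))|/f''(\elementpa(\action)/\refpolicy(\action|\state))$ from \eqref{def:bound_functionnal_f} at its worst-case (deterministic) $\elementpa$, which yields the scaling $\secsummax\lesssim\underline{\refpolicy}^{\alpha-1}/(1-\alpha)$; the Tsallis divergence radius $\divergencemax$ stays of order $1/(\alpha(1-\alpha))$, as seen by evaluating $\divergence[\elementpa][\refpolicy(\cdot|\state)]$ at a vertex $\elementpa$.

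The central step is the convex conjugate. Inverting $f'$ gives $(f^{\star})'(y)=(f')^{-1}(y)=(1+(\alpha-1)y)^{1/(\alpha-1)}$, whence $(f^{\star})''(y)=1/f''((f^{\star})'(y))=(1+(\alpha-1)y)^{(2-\alpha)/(\alpha-1)}$. Evaluated at the negative argument $y=-1/(\epsilon\constantbound(1-\discount)^3\initdistmin)$ appearing in \Cref{lem:sample_complexity_non_regularised_problem_main}, this becomes $(1+(1-\alpha)/(\epsilon\constantbound(1-\discount)^3\initdistmin))^{-(2-\alpha)/(1-\alpha)}$, which decays only \emph{polynomially} (in sharp contrast to the exponential decay $(f^{\star})''(y)=\rme^{y}$ obtained for the KL generator in \Cref{coro:entropy-app}). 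The key algebraic identity is that the exponent $-4$ carried by $(f^{\star})''$ in the general bound turns this into a growing factor with exponent $4(2-\alpha)/(1-\alpha)=4+4/(1-\alpha)$, which is exactly the exponent in the statement. Substituting all the constants above into $\niteration$, $\step$, and $\lentrunc\lesssim(1-\discount)^{-2}\log(1/\plcoeffmin)$, and collecting the resulting powers of $\underline{\refpolicy}$, $\epsilon$, $(1-\discount)$, $\initdistmin$, and $\alpha$ produces the claimed bound, the $|\log\underline{\refpolicy}|$ factors arising through $\log(1/\plcoeffmin)$ in the truncation length.

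The main obstacle is the bookkeeping of the joint $\underline{\refpolicy}$- and $\alpha$-dependence: each of $\sfirstregconstant$, $\secregconstant$, $\ffirstregconstant$, $\secsummax$, $\divergencemax$, and $\constantbound=\min(1/\divergencemax,1/\secsummax,1)$ carries its own powers of $\underline{\refpolicy}$ and $\alpha$, and these must be propagated jointly through $\variancemax^2$, $\plcoeffmin$, and $\smoothmax$ so that they collapse into the single clean expression stated. The only genuinely non-elementary computations are the two worst-case optimizations over the simplex defining $\ffirstregconstant$ and $\secsummax$, which require verifying that the extrema are attained at deterministic distributions; once those are settled, everything else is routine substitution into the already-established \Cref{lem:sample_complexity_non_regularised_problem_main}.
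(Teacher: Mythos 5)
Your high-level route is the same as the paper's (its proof instantiates \Cref{lem:sample_complexity_non_regularised_problem_main} via \Cref{lem:simplified_constant_tsallis} and \Cref{lem:sample_complexity_non_regularised_problem_tsallis}), and your conjugate computation is exactly right: $(f^{\star})''(y)=(1+(\alpha-1)y)^{(2-\alpha)/(\alpha-1)}$ and the identity $4(2-\alpha)/(1-\alpha)=4+4/(1-\alpha)$ is indeed where the stated exponent comes from. The genuine gap is in the constants $\divergencemax$ and $\secsummax$, which cannot be treated as routine vertex evaluations: they enter the final bound through $\constantbound=\min(1/\divergencemax,1/\secsummax,1)$, hence through $\temp\propto\epsilon\constantbound$, $\plcoeffmin$, and the \emph{argument} of $(f^{\star})''$, so their precise $\alpha$- and $\underline{\refpolicy}$-dependence is what produces both the $|\log(\underline{\refpolicy})|^{3}/\alpha^{6}$ prefactor and the factor $\bigl(1+\tfrac{(1-\alpha)|\log(\underline{\refpolicy})|}{\epsilon\alpha^{2}(1-\discount)^{3}\initdistmin}\bigr)^{4+4/(1-\alpha)}$ in the statement. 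The paper proves $\divergencemax\lesssim|\log(\underline{\refpolicy})|/\alpha^{2}$ and $\secsummax\lesssim|\log(\underline{\refpolicy})|$ by a mean-value argument in the \emph{exponent} (interpolating $\beta\in[\alpha,1]$ in $y^{\beta}$, resp.\ $y^{\beta-1}$, i.e., comparing the Tsallis functionals to their KL counterparts), which yields $\constantbound\approx\alpha^{2}/(4|\log(\underline{\refpolicy})|)$. With your bounds $\divergencemax\lesssim 1/(\alpha(1-\alpha))$ and $\secsummax\lesssim\underline{\refpolicy}^{\alpha-1}/(1-\alpha)$ one gets instead $\constantbound\lesssim(1-\alpha)\underline{\refpolicy}^{1-\alpha}$, and since $1/\constantbound$ sits inside the factor raised to the power $4+4/(1-\alpha)$, this injects an extra $\underline{\refpolicy}^{-(1-\alpha)(4+4/(1-\alpha))}=\underline{\refpolicy}^{4\alpha-8}$ and destroys the $|\log(\underline{\refpolicy})|$, $\alpha^{2}$ structure: the bound your substitution produces is not the one stated. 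The missing idea is precisely these logarithmic (KL-comparison) bounds.

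A second remark, in the opposite direction: your vertex evaluation of $\secsummax$ is correct as a \emph{lower} bound. At $\elementpa=\delta_{a_{0}}$ with $\refpolicy(a_{0}|\state)=\underline{\refpolicy}$ the functional equals $(\underline{\refpolicy}^{\alpha-1}-1)/(1-\alpha)$, which for small $\alpha$ and $\underline{\refpolicy}$ exceeds the paper's claimed $4|\log(\underline{\refpolicy})|$ (e.g.\ $\alpha=0.1$, $\underline{\refpolicy}=10^{-4}$ gives $\approx 4\cdot 10^{3}$ versus $\approx 37$). The discrepancy traces to \Cref{lem:simplified_constant_tsallis}, where the two indicator cases in the bound
\begin{align*}
\Bigl|\tfrac{y^{\alpha-1}-1}{1-\alpha}\Bigr|\;\le\;|\log(y)|\,\Ind_{y\le 1}+|\log(y)|\,y^{\alpha-1}\,\Ind_{y\ge 1}
\end{align*}
are swapped relative to what the mean value theorem actually gives: $\sup_{\beta\in[\alpha,1]}|\log(y)|y^{\beta-1}$ is attained at $\beta=\alpha$ when $y\le 1$ and at $\beta=1$ when $y\ge 1$. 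So your instinct about where the worst case lies is sound and exposes a flaw in the paper's own derivation of $\secsummax$; but this does not rescue your proposal, which, taking the vertex scalings at face value, cannot recover the corollary as stated.
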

We give detailed versions and prove these corollaries in \Cref{sec:app:appli-f-div}.
These corollaries show that Tsallis SoftArgmax parameterization with coupled regularisation allows for faster learning, reducing the dependency on $(1-\discount)^{-1}$ from exponential in \Cref{coro:entropy-app} to polynomial in \Cref{coro:tsallis-app}. Next, we approximate the choice of $\alpha$ that achieves the fastest convergence (according to our bounds).
\begin{corollary}
\label{cor:optimal_alpha_main}
Assume the conditions of \Cref{coro:tsallis-app} hold. The value of  $\alpha$ that minimizes the sample complexity in \Cref{coro:tsallis-app} is given by $\alpha^{\star}(\epsilon) =  {11} / {(2\log(1/\epsilon))} \!+\!  o\left(1 / {\log(1/\epsilon)}\right)$. Moreover, for $\epsilon$ sufficiently small, choosing $\alpha = \alpha^\star(\epsilon)$ yields a sample complexity $ \epsilon^{-12}\mathrm{poly}(1/(1-\gamma), 1/\initdistmin, 1/\underline{\refpolicy})$ up to logarithmic factors.
\end{corollary}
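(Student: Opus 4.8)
The plan is to treat the right-hand side of the bound in \Cref{coro:tsallis-app} as an explicit function $N(\alpha)$ of $\alpha \in (0,1)$, with all other problem parameters held fixed, and to minimize it in the regime $\epsilon \to 0$. Since the bound is a product of powers, I would first pass to logarithms and write $\log N(\alpha) = P(\alpha)\,Q(\alpha) + R(\alpha) + c$, where $P(\alpha) = 4 + 4/(1-\alpha)$ is the exponent of the large bracket, $Q(\alpha)$ is the logarithm of the bracket $1 + (1-\alpha)|\log\underline{\refpolicy}|/(\epsilon\alpha^2(1-\discount)^3\initdistmin)$, the term $R(\alpha)$ collects the remaining $\alpha$-dependent factors $\alpha^{-6}$ and $\underline{\refpolicy}^{-7(1-\alpha)}$, and $c$ gathers the $\alpha$-independent constants. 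Writing $L = \log(1/\epsilon)$, the $-4\log\epsilon = 4L$ contribution together with $Q(\alpha)\approx L$ and $P(\alpha)\to 8$ already shows that the leading exponential cost is $\Theta(\epsilon^{-12})$ once $\alpha \to 0$, so I anticipate the minimizer to be small, of order $1/L$, and I would substitute the scaling ansatz $\alpha = \beta/L$ with $\beta$ of constant order.

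Under this ansatz I would carry out an asymptotic expansion in $L$. For small $\alpha$ one has $P(\alpha) = 8 + 4\beta/L + O(L^{-2})$, and since $1/\epsilon = \rme^{L}$ and $\alpha^{-2} = L^2/\beta^2$, the bracket is dominated by its large term, giving $Q(\alpha) = L + 2\log L - 2\log\beta + c_0 + o(1)$ with $c_0 = \log(|\log\underline{\refpolicy}|/((1-\discount)^3\initdistmin))$. Multiplying, the cross term $(4\beta/L)\cdot L = 4\beta$ survives while $(4\beta/L)\cdot(2\log L + c_0) \to 0$, so $P\,Q = 8L + 16\log L - 16\log\beta + 8c_0 + 4\beta + o(1)$. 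The explicit factors contribute $R(\alpha) = -6\log\alpha + 7(1-\alpha)|\log\underline{\refpolicy}| = 6\log L - 6\log\beta + 7|\log\underline{\refpolicy}| + o(1)$. Adding the $4L$ from $\epsilon^{-4}$ and collecting, the only $\beta$-dependent $O(1)$ contributions are $-22\log\beta + 4\beta$, so that
\[
\log N = 12L + 22\log L - 22\log\beta + 4\beta + O(1) + o(1).
\]

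Minimizing the $O(1)$ part $\phi(\beta) = -22\log\beta + 4\beta$ gives $\phi'(\beta) = -22/\beta + 4 = 0$, i.e. $\beta^\star = 11/2$, and $\phi'' > 0$ confirms an interior minimum; hence $\alpha^\star(\epsilon) = \beta^\star/L + o(1/L) = 11/(2\log(1/\epsilon)) + o(1/\log(1/\epsilon))$, as claimed. To make this rigorous I would argue that $\beta \mapsto \log N(\beta/L)$ converges locally uniformly on compact subsets of $(0,\infty)$ to the strictly convex limit $\phi$ up to an additive constant, so its minimizers converge to the unique minimizer $11/2$ of $\phi$; this also legitimizes discarding the $o(1)$ remainders and the boundary behaviour, since for small $\epsilon$ the candidate $\alpha^\star \in (0,1)$ and the small-$\alpha$ regime is self-consistent. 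Substituting $\beta = 11/2$ back yields $\log N = 12\log(1/\epsilon) + O(\log\log(1/\epsilon))$, and since at $\alpha^\star \to 0$ the exponent $P \to 8$ turns every bracket factor into a fixed power, the surviving dependence on $1/(1-\discount)$, $1/\initdistmin$, $1/\underline{\refpolicy}$ is polynomial; this gives the announced $\epsilon^{-12}\,\mathrm{poly}(1/(1-\discount),1/\initdistmin,1/\underline{\refpolicy})$ up to logarithmic factors.

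I expect the main obstacle to be the asymptotic bookkeeping: $\alpha$ enters simultaneously the exponent $P(\alpha)$, the base through $Q(\alpha)$, and the explicit factors $R(\alpha)$, and one must isolate precisely the terms that contribute at order $O(1)$ in $\beta$ (the coefficient $22 = 16 + 6$ and the coefficient $4$) while discarding the $O(L^{-1}\log L)$ cross terms. Getting these coefficients right — in particular that the $-2\log\alpha$ coming from $\alpha^{-2}$ inside the bracket is amplified by the limiting exponent $8$ — is what pins down the constant $11/2$; the convergence-of-minimizers argument needed to upgrade the pointwise asymptotics into a statement about $\alpha^\star(\epsilon)$ is the only other delicate point.
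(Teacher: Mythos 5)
Your proposal is correct and takes essentially the same route as the paper: both arguments isolate, in the logarithm of the bound, the competing contributions $22\log(1/\alpha)$ (the $-6\log\alpha$ from the prefactor plus the limiting exponent $8$ acting on $\alpha^{-2}$ inside the bracket, $16+6=22$) and $4\alpha\log(1/\epsilon)$ (from the exponent correction $4/(1-\alpha)-4$), whose balance yields $\alpha^{\star}(\epsilon)=11/(2\log(1/\epsilon))+o(1/\log(1/\epsilon))$, after which substituting back gives $\epsilon^{-12}$ up to logarithmic factors with polynomial dependence on $1/(1-\discount)$, $1/\initdistmin$, $1/\underline{\refpolicy}$. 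The only difference is presentational: you minimize the rescaled limit $\phi(\beta)=-22\log\beta+4\beta$ under the ansatz $\alpha=\beta/\log(1/\epsilon)$, whereas the paper differentiates the log-complexity in $\alpha$ directly and solves the equivalent first-order condition $-22/\alpha+4\log(1/\epsilon)+o(\cdot)=0$; both treatments of the remaining delicate point (ruling out minimizers at other scales) are at a comparable, informal level of rigor.
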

These results show that the best choice of $\alpha$ is not $\alpha = 0$ nor $\alpha = 1$, but depends on the desired precision level.
This corroborates results from the bandit literature \citep{zimmert2021tsallis}, and gives strong evidence that Tsallis-SoftArgmax with coupled regularization has the potential to accelerate RL algorithms.
It also highlights the strength of our framework: one can choose, among multiple parameterizations, the one that is best suited for the problem at hand.

\section{Experiments}
\label{sec:experiments}

\begin{figure*}[t]
    \begin{subfigure}[b]{0.24\textwidth}
        \centering
        \includegraphics[width=\textwidth]{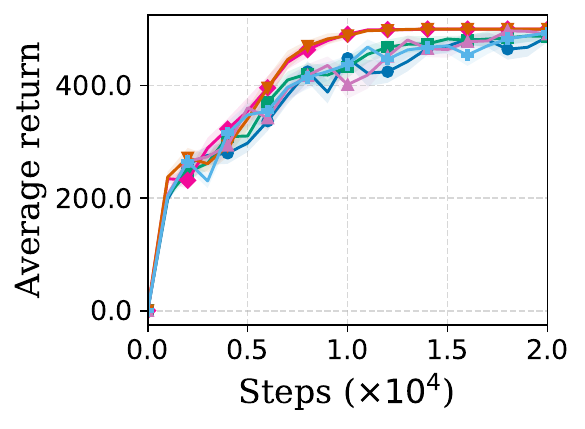}
        \caption{Standard Cartpole}
        \label{subfig:cartpole_noisy_sigma_0}
    \end{subfigure}
    \begin{subfigure}[b]{0.24\textwidth}
        \centering
        \includegraphics[width=\textwidth]{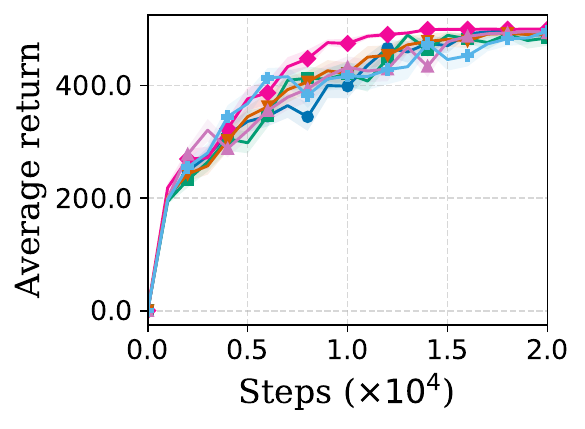}
        \caption{Noisy Cartpole, $\sigma^2 = 0.5$}
        \label{subfig:cartpole_noisy_sigma_0_5}
    \end{subfigure}
    \begin{subfigure}[b]{0.24\textwidth}
        \centering
        \includegraphics[width=\textwidth]{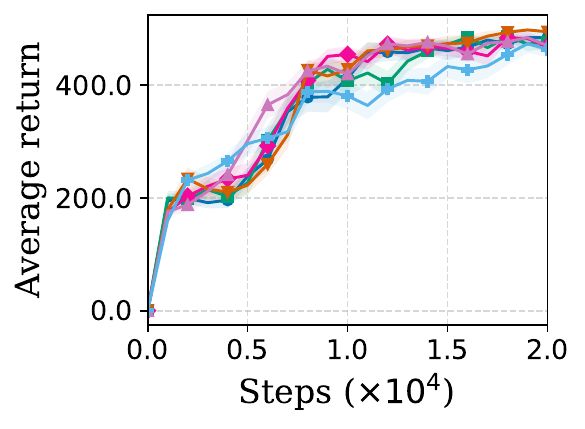}
        \caption{Noisy Cartpole, $\sigma^2 = 2.0$}
        \label{subfig:cartpole_noisy_sigma_2}
    \end{subfigure}
    \begin{subfigure}[b]{0.24\textwidth}
        \centering
        \includegraphics[width=\textwidth]{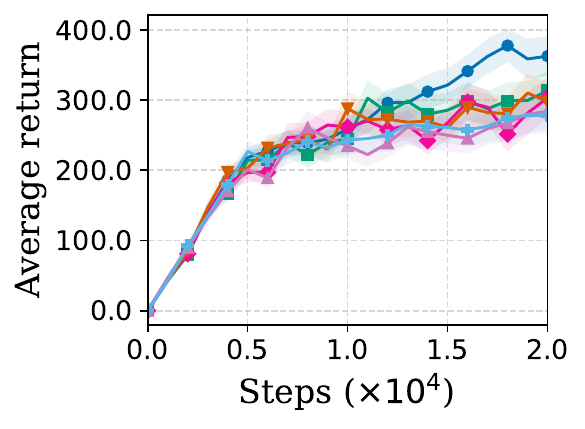}
        \caption{Noisy Cartpole, $\sigma^2 = 10.0$}        \label{subfig:cartpole_noisy_sigma_10.pdf}
    \end{subfigure}
\begin{subfigure}[t]{\textwidth}
    \centering
    \includegraphics[width=\textwidth]{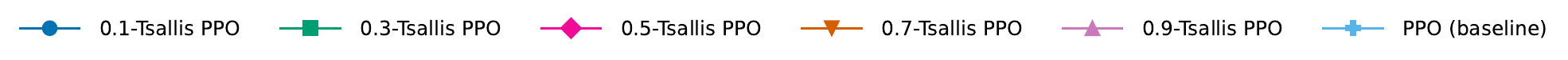}%
\end{subfigure}
    \centering
    \begin{subfigure}[b]{0.24\textwidth}
        \centering
        \includegraphics[width=\textwidth]{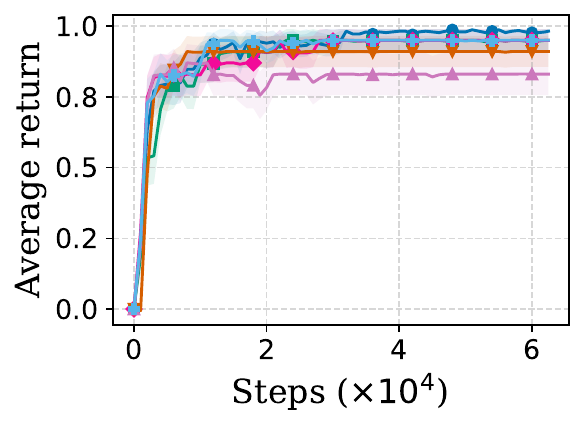}
        \caption{Deepsea, Size $= 20$}
        \label{subfig:deepsea_size_20_main}
    \end{subfigure}
    \begin{subfigure}[b]{0.24\textwidth}
        \centering
        \includegraphics[width=\textwidth]{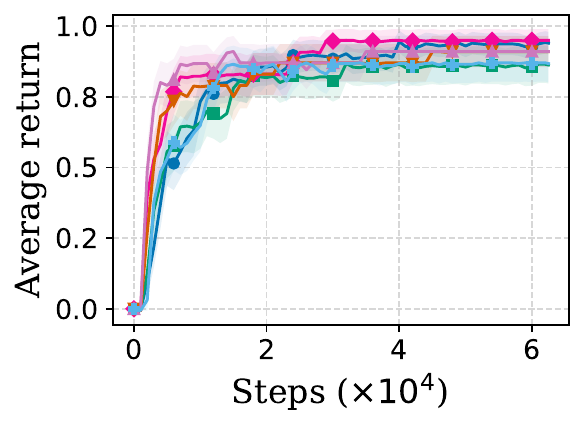}
        \caption{Deepsea, Size $= 30$}
        \label{subfig:deepsea_size_30_main}
    \end{subfigure}
    \begin{subfigure}[b]{0.24\textwidth}
        \centering
        \includegraphics[width=\textwidth]{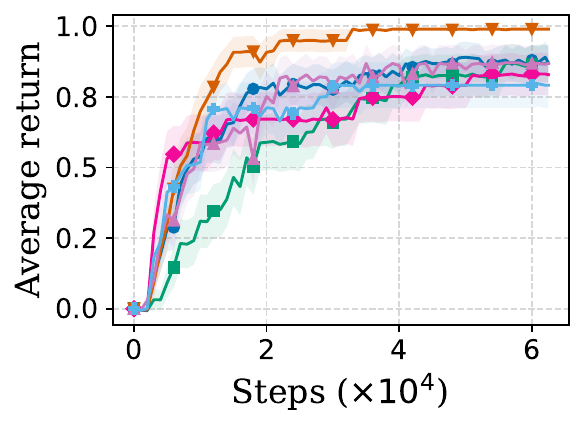}
        \caption{Deepsea, Size $= 40$}
        \label{subfig:deepsea_size_40_main}
    \end{subfigure}
    \begin{subfigure}[b]{0.24\textwidth}
        \centering
        \includegraphics[width=\textwidth]{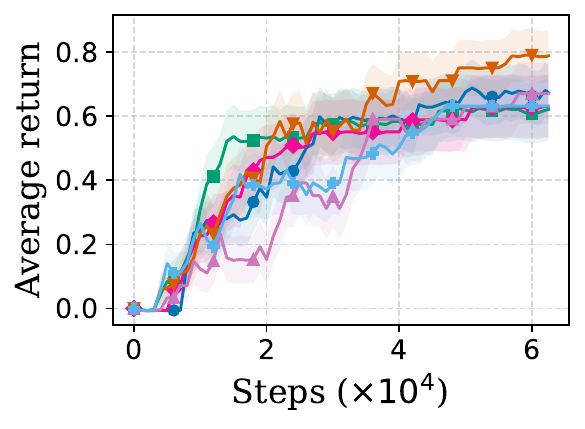}
        \caption{Deepsea, Size $= 50$}        \label{subfig:deepsea_size_50_main.pdf}
    \end{subfigure}
\caption{
Learning curves for \texttt{Noisy CartPole} (top row) and \texttt{DeepSea} (bottom row) under different choices of the Tsallis parameter $\alpha$. 
For \texttt{Noisy CartPole}, we report the standard \textit{unnoised}  \texttt{CartPole} environment (a) and reward–noisy variants with increasing noise levels (b–d). 
For \texttt{DeepSea}, we consider grid sizes $L\in\{20,30,40,50\}$ (e–h). 
Each curve corresponds to the best temperature and step-size for a given $\alpha$, and shaded regions indicate $\pm$ one standard error over $25$ seeds. 
On \texttt{Noisy CartPole}, values $\alpha<1$ consistently improve performance over the PPO baseline in the standard and low-noise settings, with the gap increasing as the reward noise grows.
On \texttt{DeepSea}, the improvement over the \texttt{PPO} baseline becomes more pronounced with increasing $L$, where $\alpha=0.7$ achieves the highest returns and the fastest learning.
}
    \vspace{-0.5em}
    \label{fig:deepsea_all_sizes_deepRL}
\end{figure*}
In this section, we demonstrate the generalizability of our framework by showing that our class of parameterizations, with its coupled regularization, can be readily integrated into modern on-policy reinforcement learning algorithms. For this purpose, we introduce and evaluate $\alpha$-\texttt{Tsallis PPO}, a simple yet principled extension of Proximal Policy Optimization (PPO; \citealt{schulman2017proximal})\footnote{Additional experiments on the exact $\PG$ algorithm are provided in \Cref{sec:app:experiments_appendix}.}.
Our approach is obtained by replacing both the policy parameterization and the entropy regularization in PPO with their Tsallis counterparts (see \Cref{sec:app:experiments_appendix} for full experimental details). %
We compare the performance of $\alpha$-\texttt{Tsallis PPO} against the standard \texttt{PPO} baseline \citep{schulman2017proximal} on two families of environments that we describe below. Our code is available on GitHub: \url{https://github.com/Labbi-Safwan/f-regularised-policy-gradient}.

\textbf{Noisy CartPole \citep{osband2020bsuite}.} This environment is a variant of the classic \texttt{CartPole} control task in which additive noise is injected into the reward signal. 
The underlying dynamics is unchanged: at each time step, the agent applies a left or right force to a cart in order to keep an inverted pendulum balanced, receiving a base reward of $+1$ for each step the pole remains upright, and the episode terminates when stability is lost or after a time limit. However, the reward returned by the environment is perturbed as $
\tilde r_t = r_t + \sigma \,\xi_t, \eqsp \xi_t \sim \mathcal N(0,1),$
where $\sigma > 0$ controls the noise level. 
This preserves the dynamics and optimal policy, but increases the variance of observed returns. %

\textbf{DeepSea \citep{osband2019deep}.}
\texttt{DeepSea} is an RL environment designed to study deep exploration under sparse rewards. 
The environment is a directed grid of size $L \times L$. The agent starts in the top-left corner $(0,0)$ and, at each step, moves downward while choosing between two actions that shift agent's position either left or right. 
Thus, each episode lasts exactly $L$ steps and corresponds to selecting a binary action sequence of length $L$, which defines a unique path through the grid. Only a single trajectory, the one that selects the hidden \textit{correct} (right) action at every depth reaches the rewarding terminal state at $(L{-}1,L{-}1)$. 
However, selecting the right action is not free: every time the agent moves right, it incurs a small movement cost $0.01/L$. %

\textbf{Problem-adaptive couplings yield better performance.}
\Cref{fig:deepsea_all_sizes_deepRL} illustrates that no single choice of $\alpha$ is uniformly optimal, and that different tasks favor different couplings of parameterization and regularization. 
On \texttt{Noisy CartPole}, the standard setting (\Cref{subfig:cartpole_noisy_sigma_0}) and the mildly noisy variant (\Cref{subfig:cartpole_noisy_sigma_0_5}) show a small but systematic advantage for $\alpha<1$ over the PPO baseline, which becomes more pronounced as the reward noise increases (\Cref{subfig:cartpole_noisy_sigma_2,subfig:cartpole_noisy_sigma_10.pdf}). 
By contrast, on \texttt{DeepSea}, where performance depends on discovering a single sparse-reward trajectory, the hardest instances (\Cref{subfig:deepsea_size_40_main,subfig:deepsea_size_50_main.pdf}) favor an intermediate value $\alpha\!=\!0.7$. %
These observations suggest that highly noisy environments and deep exploration problems may benefit from different regions of the Tsallis family, supporting the need for a tunable parameterization–regularization pair.

\section{Conclusion}
We proposed a new class of policy parameterizations based on operators induced by $f$-divergences. Equipped with a matching $f$-divergence regularizer, this framework generalizes the classical softmax–entropy pairing and allows flexible alternative parameterizations. Using Tsallis divergence instead of Shannon entropy, we showed that the resulting algorithm yields polynomial, rather than exponential, convergence guarantees for the unregularized RL problem. Empirically, this choice leads to improved performance in exploration-heavy and noisy environments. An important direction for future work is to extend these guarantees for adversarial MDPs, where Tsallis regularization has already proven effective in the bandit setting \citep{zimmert2021tsallis}.

\clearpage
\newpage
\section*{Acknowledgements}

We would like to thank Vincent Roulet for a fruitful discussion on Mirror Descent and for providing the implementation of f-softargmax operators in JAX.
The work of S. Labbi, and P. Mangold has been supported by Technology Innovation Institute (TII), project Fed2Learn. The work of D.Tiapkin has been supported by the Paris Île-de-France Région in the framework of DIM AI4IDF.
The work of E. Moulines has been partly funded by the European Union (ERC-2022-SYG-OCEAN-101071601). Views and opinions expressed are however those of the author(s) only and do not necessarily reflect those of the European Union or the European Research Council Executive Agency. Neither the European Union nor the granting authority can be held responsible for them.

\bibliography{references.bib}
\bibliographystyle{iclr2026_conference}

\appendix
\allowdisplaybreaks

\clearpage
\newpage

\part{Appendix}
\parttoc

\section{Notations}
\label{sec:notations}
\begin{table}[ht]
    \centering
    \small
    \begin{tabular}{c c c}
    \toprule
         Symbols & Meaning & Definition \\
    \midrule
        $\S$ & State space  & \Cref{sec:backgound}\\
         $\A$ & Action space &   \Cref{sec:backgound}\\
         $\discount$ & Discount factor &   \Cref{sec:backgound}\\
        $\kerMDP$ & Transition kernel  & \Cref{sec:backgound}\\
       $\rewardMDP$ & Reward function & \Cref{sec:backgound}\\
       $\policy$ & Policy & \Cref{sec:backgound}\\
       $\refpolicy$ & Reference policy used in the regularization problem & \Cref{sec:backgound}\\
       $f$ & Divergence generator & 
       \Cref{sec:backgound}\\
       $\temp$ & Temperature of the regularization & 
       \Cref{sec:backgound}\\
       $\initdist$ & Initial state distribution & \Cref{sec:backgound}\\
       $\secregconstant$ & Upper bound on $|f'''(x)/f''(x)^2|$ & \assumptionfref\\
       $\sfirstregconstant$ & Upper bound on $1/(xf''(x)^2)$ & \assumptionfref\\
       $\divergencemax$ & Upper bound on a set of divergences & \eqref{def:bound_functionnal_f}\\
       $\secsummax$ & Upper bound on a quantity that depends on $f''$ and $f'$& \eqref{def:bound_functionnal_f}\\
       $\ffirstregconstant$ & Lower bound on a quantity that depends on $f''$& \eqref{def:lower_bound_W}\\
    \midrule
        $\valuefunc[\policy]$ & Value function of a policy $\policy$ & \eqref{def:value_function}\\
        $\regvaluefunc[\policy]$ & Regularized Value function of a policy $\policy $  & \eqref{lem:bellman_equation}\\
        $\kerMDP[\policy]$ &   Transition kernel induced by policy 
        $\policy$ & \Cref{sec:backgound}\\
        $\regqfunc[\policy]$ & Regularized Q-function of a policy $\policy $  & \eqref{def:regularised_Q_function}\\
        $\visitation[\initdist][\policy]$ & discounted state visitation of a policy $\policy $  & \eqref{def:visitation_measure}\\
    \midrule
        $\param$ & Parameter of the policy (element of $\logitspace$)& \Cref{sec:prop-rvf} \\
        $\cpolicy{\param}$ & The  soft-$f$-argmax policy associated with $\param$ &  \eqref{eq:definition_soft_f_argmax} \\
        $\weights[\param]$ &  A matrix of size $\elogitspace$ such that for any $\state\in \S$, $\weights[\param](\cdot|\state) \in \pA$ & \eqref{eq:weight-and-sum} \\
        $f'_{\param}(\action| \state)$ &   shorthand notation for $f'(\cpolicy{\param}(\action|\state)/\refpolicy(\action|\state))$ & \eqref{def:compact_notations_derivative_policy} \\
        $f''_{\param}(\action |\state)$ &  shorthand notation for $f''(\cpolicy{\param}(\action|\state)/\refpolicy(\action|\state))$ & \eqref{def:compact_notations_second_derivative} \\
        $f'''_{\param}(\action| \state)$ &  short hand notation for $f'''(\cpolicy{\param}(\action|\state)/\refpolicy(\action|\state))$ & \eqref{def:compact_notations_third_derivative} \\
        $\firstsum[\param](\state)$ &  A function of $f''_{\param}(\cdot| \state)$ & \eqref{def:important_sums} \\
        $\secsum[\param](\state)$ &  A function of $f'_{\param}(\cdot| \state)$ and $f''_{\param}(\cdot| \state)$ & \eqref{def:important_sums} \\
    \midrule
        $\niteration$ & Number of iterations performed by $\PG$ & \Cref{algo:PG} \\
        $\grb{Z}(\param)$ & Stochastic estimator of the gradient at $\param$ & \eqref{eq:expression_of_stochastic_gradient_main} \\
        $\lentrunc$ & Truncation horizon in $\PG$ & \Cref{algo:PG} \\
        $\grb{Z}(\param)$ & Stochastic estimator of the gradient at $\param$ & \eqref{eq:expression_of_stochastic_gradient_main} \\
        $\bias$ & Bias of the stochastic estimator at $\param$ & \Cref{lem:bound_bias_variance_main} \\
        $\variance$ & Variance of the stochastic estimator at $\param$ & \Cref{lem:bound_bias_variance_main} \\
       $\smooth$ & Local smoothness of the objective at $\param$ & \Cref{thm:simplified_smoothness} \\
    \midrule
        $\pA$ & Set of probability measures over $\A$ & \Cref{sec:backgound} \\
        $\divergence[p][q]$ & $f$-divergence between two probability measures $p$ and $q$ & \eqref{eq:csiszar-f}  \\
    \bottomrule
    \end{tabular}
    \label{tab:summary_notations}
\end{table}
\paragraph{$f$-Divergence.} Let $f\colon(0,\infty)\to\rset$ be a strictly convex \emph{generator} with $f(1)=0$. Its \emph{adjoint} (or reverse generator) is $f_{\dagger}(u)\eqdef u\,f(1/u)$, $u>0$, which is convex, strictly convex if $f$ is, and satisfies $f_{\dagger}(1)=0$. Boundary conventions are
$f(0)\eqdef\lim_{u\downarrow0}f(u)\in(-\infty,\infty]$ and $f_{\dagger}(0)\eqdef\lim_{u\downarrow0}f_{\dagger}(u)=\lim_{t\uparrow\infty}f(t)/t\in(-\infty,\infty]$.
For $p,q\in\pA$ over finite $\A$, the $f$–divergence is
\begin{align} \label{eq:csiszar-f}
\divergence[p][q] \;\eqdef\;
& \sum_{a\in\A: q(a)>0}
q(a) f(p(a)/q(a))
+ f_{\dagger}(0) \cdot \sum_{a\in\A: q(a)=0} p(a),
\end{align}
with conventions: $q(a)f(0)$ if $q(a)>0,p(a)=0$, and $0$ if $p(a)=q(a)=0$ \citep{renyi1961measures,csiszar1967information,liese2006divergences}.  
$f$–divergences satisfy $\divergence[p][q]\in[0,\infty]$, are jointly convex \citep{csiszar1967information}, vanish iff $p=q$, and are not symmetric ($\divergence[p][q][f_\dagger]=\divergence[p][q][f]$).

\paragraph{Distribution of the state-action sequence.} The state--action sequence $(\varstate[t], \varaction[t])_{t \geq 0}$ defines a stochastic process on the canonical space $(\S \times \A)^{\nset}$. 
For any initial state $s_0 \in \S$, we denote by $\lawMDP[s_0]^{\policy}$ the law of this process. 
That is, for any $n \in \nset$ and any subset $B \subset (\S \times \A)^{n}$,
\[
\lawMDP[s_0]^{\policy}(B)
=\!\!\!\!\!\!\!\!\!
\sum_{(a_0,\dots,a_{n-1}) \in \A^n}
\;\sum_{(s_1,\dots,s_{n-1}) \in \S^{n-1}}
\!\!\!\!\indi{B}\!\bigl((s_0,a_0),\dots,(s_{n-1},a_{n-1})\bigr)
\prod_{i=0}^{n-1}\policy(a_i |
s_i)\,\kerMDP(s_{i+1}|s_i,a_i) ,
\]
with the convention $s_0$ is the given initial state. 
We denote by $\PE_{s_0}^{\policy}$ the corresponding expectation operator. 
In particular, the state sequence $(\state[t])_{t \geq 0}$ defines a Markov reward process (Section~2.1.6 in \cite{puterman94}) with transition kernel
\[
\kerMDP[\policy](s' \mid s) = \sum_{a \in \A} \kerMDP(s' \mid s,a)\,\policy(a \mid s) \eqsp.
\]

\paragraph{Norms.} For $x \in \rset^d$, we define the norms
\begin{align*}
\| x \|_\infty= \max_{i \in \{1,\dots,d\}} |x_i| \eqsp, \quad \| x \|_1= \sum_{i=1}^{d}|x_i| \eqsp, \quad  \| x \|_2=  \left( \sum_{i=1}^{d}|x_i|^2 \right)^{1/2} \eqsp.
\end{align*}
For a $d \times d$ matrix $M$, we denote by $\| M \|_\infty$, and $\| M \|_{2}$ respectively the max \emph{row} sum, and the spectral norm:
\begin{equation}
\label{eq:norm-1-matrix}
\| M \|_\infty= \sup_{x\neq 0} \{\norm{Mx}[\infty]/\norm{x}[\infty] \} = \sup_{i \in \{1,\dots,d\}} \sum_{j=1}^d |M_{i,j}| \eqsp, \quad \| M \|_{2} = \sup_{x\neq 0} \{\norm{Mx}[2]/\norm{x}[2] \} \eqsp. 
\end{equation} 
Recall that, for any $x \in \rset^d$, $\| M x \|_\infty \leq \| M \|_\infty \| x \|_{\infty}$ and $\| M x \|_{2} \leq \| M \|_{2} \| x \|_{2}$.

\paragraph{Functional and matrix forms.}For notational convenience, we also view $\kerMDP$ as a $(\nstates \cdot \nactions) \times \nstates$ matrix with entries $\kerMDP[(s,a),s'] = \kerMDP(s' \mid s,a)$. Similarly, $\regvaluefunc[\policy]$ is a vector of size $\nstates$ and $\regqfunc[\policy]$ a vector of size $\nstates \times \nactions$. Finally, we identify the parameter $\theta \in \rset^{\S \times \A}$ with its matrix representation $\theta \in \logitspace$, indexed by $(\state,\action) \in \S \times \A$. This slight abuse of notation allows us to conveniently switch between functional and matrix views.

\section{Smoothness of the objective}
\label{sec:smoothness}
In this section, we establish the smoothness of the regularized value function $\regvaluefunc[\param] \eqdef \regvaluefunc[\policy_{\param}](\initdist)$  with respect to the parameter $\param$. As a first step, we show that the policy $\cpolicy{\param}$ is smooth under suitable assumptions on the divergence generator $f$ and compute its first and second derivatives. To do so, we start by studying the properties of the soft-$f$-argmax operator and then apply the obtained results to derive properties of the policy $\cpolicy{\param}$.

\subsection{\texorpdfstring{Properties of the soft-$f$-argmax}{Properties of the soft-f-argmax}}\label{app:softfargmax_properties}
In this Section, we compute the derivative of  $\softarg{\apparam}(\cdot) \eqdef \softargmax(\apparam, \refp)$ and $\softmax[\apparam](\cdot) \eqdef \softmax(\apparam, \refp) $, where
\begin{align}
\label{def:softmax_function}
\softmax(\apparam, \refp) \eqdef \max_{\elementpa \in \pA} \left\{
\pscal{\elementpa}{\apparam} 
- \divergence[\elementpa][\refp] \right\} \eqsp.
\end{align}
The function $\softmax[\apparam](\cdot)$ is the Fenchel--Legendre transform of 
$\divergence[\cdot][\refp]$, and the results in this section are therefore 
standard results from convex analysis, statements of which can be found in various forms in
\cite{hiriart2004fundamentals,mensch2018differentiable,geist2019theory,roulet2025loss}.

In this section, we fix a reference probability distribution $\refp \in \pA$ such that for any $\action \in \A$, we have $\refp(\action) > \mincoeffsoft$ for some $ \min_{\action \in \A} \refp(\action)> \mincoeffsoft >0$. For a given $\apparam \in \rset^{|\A|}$, we define
\begin{align}
\label{def:soft_f_argmax_appendix}
\softarg{\apparam}(\cdot) \eqdef \softargmax(\apparam, \refp) =  \argmax_{\elementpa \in \pA} \left\{
\pscal{\elementpa}{\apparam} 
- \divergence[\elementpa][\refp] \right\} \eqsp.
\end{align}
The following result is a simplified version of \citep[Proposition~1]{roulet2025loss}. 
For the sake of completeness, we provide a full proof.
\begin{lemma}
\label{lem:implicit_equation_appendix}
Assume that $f$ is strictly convex on $[0,1/\mincoeffsoft]$, differentiable on $(0,1/\mincoeffsoft)$, with $\lim_{x \downarrow 0^+} f'(u)= - \infty$, for some $\mincoeffsoft > 0$. Let $\refp$ be a policy such that $\min_{\action \in \A} \refp(\action) > \mincoeffsoft$. 
For any $\apparam \in \rset^{|\A|}$ and $\action \in \A$, we have $0 < \softarg{\apparam}(\action)$.
Moreover, for all $\apparam \in \rset^{|\A|}$, there exists a unique $\mu_\apparam \in I_{\mincoeffsoft}(\apparam)$, where 
\begin{equation}
\label{eq:definition-intervalle-mu-x} 
I_{\mincoeffsoft}(\apparam) \eqdef (\max_{\action \in \A} \apparam(\action) - f'(1/\mincoeffsoft), \max_{\action \in \A} \apparam(\action) - f'(1)) , 
\end{equation}
such that 
\begin{equation}
\label{eq:def-softarg-apparam}
\softarg{\apparam}(\action) = \refp(\action) [f']^{-1}
(\apparam(\action) - \mu_{\apparam}) \,. 
\end{equation}
Moreover, $\mu_x \in \rset$ is the unique  root of the equation 
\begin{equation}
\label{eq:definition-root}
F(x,\mu) \eqdef \sum_{\action \in \A} \refp(\action) [f']^{-1}(\apparam(a) - \mu) - 1 = 0  \, \quad \text{ for $\mu \in I_{\mincoeffsoft}(\apparam)$}.
\end{equation}
\end{lemma}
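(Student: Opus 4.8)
The plan is to identify $\softarg{x}$ with the unique maximizer of the strictly concave map $g(\elementpa)\eqdef\pscal{\elementpa}{\apparam}-\divergence[\elementpa][\refp]$ over the compact simplex $\pA$, and to read off its closed form from first-order optimality. First I would record an invertibility fact: since $f$ is strictly convex and differentiable on $(0,1/\mincoeffsoft)$, its derivative $f'$ is continuous and strictly increasing there, hence a bijection from $(0,1/\mincoeffsoft)$ onto $(-\infty,f'(1/\mincoeffsoft))$ (with $f'(1/\mincoeffsoft)$ read as the left limit, using $\lim_{u\downarrow 0}f'(u)=-\infty$ for the lower end). Thus $[f']^{-1}$ is well defined, continuous, and strictly increasing on $(-\infty,f'(1/\mincoeffsoft))$. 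Existence and uniqueness of $\softarg{x}$ come from strict concavity of $g$ over the compact convex set $\pA$, as already noted in the background, so the remaining work is to characterize it.

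The key qualitative step, which I expect to be the main obstacle, is to show that the maximizer lies in the relative interior, i.e. $\softarg{x}(a)>0$ for every $a$; this is where the steepness hypothesis $\lim_{u\downarrow 0^+}f'(u)=-\infty$ is genuinely used, and it needs care because $f'$ is not defined at $0$. I would argue by contradiction: suppose $\softarg{x}(a_0)=0$ and pick $a_1$ with $\softarg{x}(a_1)>0$, then move mass along the feasible direction $\vecte{a_0}-\vecte{a_1}$, setting $\elementpa_\epsilon=\softarg{x}+\epsilon(\vecte{a_0}-\vecte{a_1})$, which lies in $\pA$ for small $\epsilon>0$. In $g(\elementpa_\epsilon)-g(\softarg{x})$ every contribution is $O(\epsilon)$ except $-\refp(a_0)[f(\epsilon/\refp(a_0))-f(0)]$, which by the mean value theorem equals $-\epsilon\, f'(\xi_\epsilon)$ for some $\xi_\epsilon\in(0,\epsilon/\refp(a_0))$; since $f'(\xi_\epsilon)\to-\infty$ as $\epsilon\downarrow 0$, the quotient $(g(\elementpa_\epsilon)-g(\softarg{x}))/\epsilon\to+\infty$. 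Hence $g$ strictly increases along this direction, contradicting optimality and forcing interiority.

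With interiority established, the inequality constraints $\elementpa\geq 0$ are inactive, so I would impose stationarity only for the equality constraint $\sum_a\elementpa(a)=1$. Introducing a multiplier $\mu$, the condition $\partial_{\elementpa(a)}\{g(\elementpa)-\mu(\sum_b\elementpa(b)-1)\}=0$ reads $x(a)-f'(\softarg{x}(a)/\refp(a))-\mu=0$; inverting the strictly increasing $f'$ gives the claimed representation $\softarg{x}(a)=\refp(a)[f']^{-1}(x(a)-\mu)$ with $\mu=\mu_x$, and feasibility $\sum_a\softarg{x}(a)=1$ becomes exactly $F(x,\mu_x)=0$.

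Finally I would establish existence, uniqueness, and location of the root. Define $G(\mu)\eqdef F(x,\mu)+1=\sum_a\refp(a)[f']^{-1}(x(a)-\mu)$ on $\mu>\max_a x(a)-f'(1/\mincoeffsoft)$, the exact range on which each $[f']^{-1}(x(a)-\mu)$ is defined; since $[f']^{-1}$ is strictly increasing, $G$ is continuous and strictly decreasing. As $\mu$ descends to the left endpoint, the term for a maximizing $a^\star$ forces $[f']^{-1}(x(a^\star)-\mu)\uparrow 1/\mincoeffsoft$, so $G(\mu)\to\refp(a^\star)/\mincoeffsoft>1$ using $\refp(a^\star)>\mincoeffsoft$, while $G(\mu)\to 0$ as $\mu\to\infty$; by the intermediate value theorem and strict monotonicity there is a unique root $\mu_x$. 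At the right endpoint $\mu=\max_a x(a)-f'(1)$ one has $x(a)-\mu\le f'(1)$ for all $a$, hence $[f']^{-1}(x(a)-\mu)\le 1$ and $G(\max_a x(a)-f'(1))\le\sum_a\refp(a)=1$, so monotonicity yields $\mu_x\le\max_a x(a)-f'(1)$, placing $\mu_x\in I_{\mincoeffsoft}(x)$ and confirming $\softarg{x}(a)=\refp(a)[f']^{-1}(x(a)-\mu_x)>0$ consistently with the interiority shown above.
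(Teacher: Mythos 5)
Your proof is correct and follows the same overall skeleton as the paper's: characterize $\softarg{\apparam}$ as the unique maximizer of a strictly concave function over the compact simplex, derive the closed form from first-order optimality, and obtain uniqueness of $\mu_\apparam$ from strict monotonicity of $\mu \mapsto F(\apparam,\mu)$. The genuine difference is in how interiority ($\softarg{\apparam}(\action) > 0$) is established. The paper writes down the full KKT system, including multipliers for the nonnegativity constraints, and argues that if some coordinate vanished then stationarity could not hold with finite multipliers because $\lim_{u\downarrow 0^+}f'(u) = -\infty$; this is slightly informal, since it invokes a stationarity identity at a point where the objective is not differentiable. Your perturbation argument --- moving mass $\epsilon$ from a positive coordinate to the vanishing one and using the mean value theorem to show the difference quotient blows up to $+\infty$ --- proves the same fact without ever differentiating at the boundary, so it is the more rigorous of the two; after that, both proofs impose stationarity only for the equality constraint, exactly as the paper's reduced KKT condition. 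Your endpoint analysis of $G(\mu)$ via the intermediate value theorem (using $\refp(a^\star) > \mincoeffsoft$ at the left end, and $G\to 0$ at $+\infty$) is also somewhat more self-contained than the paper's argument, which reads the lower bound off the domain of $[f']^{-1}$. One shared imprecision worth noting: both you and the paper only obtain the non-strict bound $\mu_\apparam \le \max_{\action} \apparam(\action) - f'(1)$, and indeed when all coordinates of $\apparam$ are equal the root sits exactly at that right endpoint, so the interval $I_{\mincoeffsoft}(\apparam)$ in the statement should really be half-open; this is a defect of the statement itself, not of your argument.
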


\begin{proof}
Under the stated assumption, the map $f'$ is strictly increasing on $(0,1/\mincoeffsoft]$, hence injective; therefore it is invertible \emph{onto its image}. Moreover,
\begin{equation}
\label{eq:definition-domain}
\dom\big([f']^{-1}\big)=f'\big((0,1/\mincoeffsoft]\big)
= \bigl(-\infty,\; f'(1/\mincoeffsoft)\bigr],
\end{equation}
which is an interval and the function $[f']^{-1}$ is strictly increasing.

Fix $\apparam \in \A$, and $(\action, \fdiff) \in \A \times \A$. 
Recall from \eqref{def:soft_f_argmax_appendix} the definition of the soft-$f$-argmax ,
\begin{align*}
\softarg{\apparam} =  \argmax_{\elementpa \in \pA} \left\{
\pscal{\elementpa}{\apparam} 
- \divergence[\elementpa][\refp] \right\} 
\end{align*}
This is a strictly concave optimization problem over the probability simplex 
$\pA$ so it admits a unique maximizer. We now characterize the maximizer via the KKT conditions.  
Introduce multipliers $\mu \in \rset$ for the equality constraint 
$\sum_{\sdiff \in \A} \elementpa(\sdiff) = 1$, and for every $\sdiff \in \A$, $\lambda(\sdiff) \in \rset^{+}$ 
for the non-negativity constraints $\elementpa(\sdiff) \geq 0$.  
The Lagrangian reads
\begin{align*}
  L(\elementpa, \mu, \{\lambda(\sdiff)\}_{\sdiff \in \A})
  &= \sum_{\sdiff \in \A} \elementpa(\sdiff)\,\apparam(\sdiff)
    - \divergence[\elementpa][\refp]   + \mu \Bigl(1 - \sum_{\sdiff \in \A} \elementpa(\sdiff)\Bigr)
    - \sum_{\sdiff \in \A} \lambda(\sdiff)\,\elementpa(\sdiff)\eqsp.
\end{align*}
By the differentiability of $f$, differentiating the Lagrangian with respect to $\elementpa(\action)$ gives
\begin{align}
\label{eq:stationarity}
    \frac{\partial L}{\partial \elementpa(\action)}
  = \apparam(\action)
    - f'\left(\frac{\elementpa(\action)}{\refp(\action)}\right)
    - \mu - \lambda(\action) \eqsp.
\end{align}
At the optimum $(\softarg{\apparam}, \mu_{x}, \{\lambda_{x}(\sdiff)\}_{\sdiff \in \A})$, the KKT conditions yield:
\begin{align}
\softarg{\apparam}(\sdiff) \,\lambda_{x}(\sdiff) &= 0, 
\qquad \forall \sdiff \in \A, 
\label{eq:comp_slackness_0}
\\
\apparam(\sdiff)
    - f'\left(\frac{\softarg{\apparam}(\sdiff)}{\refp(\sdiff)}\right)
    - \mu_{x} - \lambda_{x}(\sdiff) &= 0, 
\qquad \forall \sdiff \in \A. 
\label{eq:kkt_stat_0}
\end{align}

Under the stated assumptions, $\lim_{x \to 0^+} f'(x) = -\infty$. 
Hence, if $\softarg{\apparam}(\sdiff) = 0$ for some $\sdiff$, 
the stationarity condition \eqref{eq:kkt_stat_0} cannot hold with finite multipliers.  
Therefore $\softarg{\apparam}(\sdiff) > 0$ for all $\sdiff \in \A$, 
which by \eqref{eq:comp_slackness_0} implies $\lambda_{x}(\sdiff) = 0$. Thus, for each $\sdiff \in \A$ the stationarity condition reduces to
\begin{equation}
\label{eq:KKT-simplified}
\apparam(\sdiff)
- f'\left(\frac{\softarg{\apparam}(\sdiff)}{\refp(\sdiff)}\right)
= \mu_{x} \eqsp.
\end{equation}
Note also that \eqref{eq:KKT-simplified} also implies that for all $a \in \A$, $\apparam(\action) - \mu_\apparam \in \dom(f')$, which implies, using \eqref{eq:definition-domain} that $\max_{\action \in \A} \apparam(\action) - f'(1/\mincoeffsoft) \leq \mu_x$. Together with \eqref{eq:KKT-simplified}, this shows \eqref{eq:def-softarg-apparam}. Note that $\mu_x$ is a root of \eqref{eq:definition-root}, since using \eqref{eq:def-softarg-apparam}, 
\[
\sum_{\action \in \A} \refp(\action) [f']^{-1}(\apparam(a) - \mu_\apparam)= \sum_{\action \in \A}  \softarg{\apparam}(\action) = 1.
\]
Because $[f']^{-1}$ is strictly increasing on $(-\infty,f'(1/\mincoeffsoft)]$, for each $\apparam \in \rset^{|\A|}$, the function $\mu \mapsto F(\apparam,\mu)$ (see \eqref{eq:definition-root}) is strictly decreasing on $(-\infty,f'(1/\mincoeffsoft)]$. Strict monotonicity gives the uniqueness of $\mu_{\apparam}$.
Note finally that if $\mu > \max_{\action \in \A} x(\action) - f'(1)$, then $\apparam(\action) - \mu < f'(1)$, and since $[f']^{-1}$ is strictly increasing, $[f']^{-1}(\apparam(\action)-\mu) < 1$, showing that $F(\apparam,\mu) < 0$, which concludes the proof. 
\end{proof}
\begin{remark}
Let $f:(0,\tau)\to\rset$ be a strictly convex and differentiable on $(0,\tau)$, $\dom(f')=(0,\tau)$, and $f'$ is strictly increasing and continuous on $(0,\tau)$. Let $\alpha:=\lim_{x\rightarrow 0} f'(x)\in[-\infty,+\infty)$ and 
$\beta:=\lim_{x\rightarrow \tau} f'(x)\in(-\infty,+\infty]$. Then
$f'\bigl((0,\tau)\bigr)=(\alpha,\beta)$, i.e., $f':(0,\tau)\to(\alpha,\beta)$ is a strictly increasing bijection (hence admits a continuous inverse $[f']^{-1}:(\alpha,\beta)\to(0,\tau)$). Define the convex conjugate of $f$, 
\begin{align}
\label{def:convex_conjugate}
f^*(y):=\sup_{x\in(0,\tau)}\{ xy-f(x)\} \eqsp.
\end{align}
The two following properties hold,
\begin{enumerate}[label=(\roman*)]
\item For every $y\in(\alpha,\beta)$, the supremum is attained at a unique point 
$x=(f')^{-1}(y)$.
\item $\dom(f^*) \subseteq [\alpha,\beta]$ (with the convention that an infinite 
endpoint is excluded). Specifically, $\dom(f^*)\cap(\alpha,\beta)=(\alpha,\beta)$, 
and $f^*(y)=+\infty$ for $y\notin[\alpha,\beta]$. At an endpoint $y=\alpha$ 
(resp. $y=\beta$), if finite, $f^*(y)=\lim_{x\downarrow 0}(yx-f(x))$ 
(resp. $\lim_{x\uparrow\tau}(yx-f(x))$), so $f^*(\alpha)$ (resp. $f^*(\beta)$) 
is finite iff the corresponding one–sided limit is finite.
\end{enumerate}
On the open interval $(\alpha,\beta)$, the function $f^*$ is differentiable and
$$
(f^*)'(y)=[f']^{-1}(y),\qquad y\in(\alpha,\beta),
$$
We retrieve the statement in Proposition 1 of \cite{roulet2025loss} by replacing $[f']^{-1}$ by $(f^*)'$. In most examples, there is no need to resort to the convex conjugate to compute the inverse. 
\end{remark}
\begin{lemma}
\label{lem:differentiability-mu-x}
Assume, in addition to  \Cref{lem:implicit_equation_appendix}, that $f$ is two-times continuously differentiable on $(0,\mincoeffsoft$. Then, the function $x \mapsto \mu_x$ is continuously differentiable on $\rset^{\nactions}$, and for all $\action \in \A$, 
\begin{equation}
\label{eq:differential-mu}
\frac{\partial}{\partial \apparam(\action)} \mu_\apparam= - \frac{\partial}{\partial \apparam(\action)} F(\apparam, \mu_{\apparam}) / \frac{\partial}{\partial \mu} F(\apparam, \mu_{\apparam})
\end{equation}
\end{lemma}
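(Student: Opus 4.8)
The plan is to read this off as a textbook application of the implicit function theorem to the defining equation $F(\apparam,\mu)=0$ from \eqref{eq:definition-root}, exploiting the fact that \Cref{lem:implicit_equation_appendix} has already supplied, for each $\apparam$, a \emph{unique} root $\mu_\apparam$ lying in the open interval $I_{\mincoeffsoft}(\apparam)$. The only real work is to check the two hypotheses of the theorem: that $F$ is $C^1$ near each point $(\apparam,\mu_\apparam)$, and that the partial derivative $\partial_\mu F(\apparam,\mu_\apparam)$ is nonzero. The derivative formula \eqref{eq:differential-mu} is then exactly the implicit-differentiation identity that the theorem delivers.

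First I would establish the regularity of the building block $[f']^{-1}$. Since $f$ is assumed twice continuously differentiable and strictly convex on $(0,1/\mincoeffsoft)$, the derivative $f'$ is $C^1$ with $f''>0$ there, so by the inverse function theorem $[f']^{-1}$ is $C^1$ on the interior of its domain $\dom([f']^{-1})=(-\infty, f'(1/\mincoeffsoft)]$ (recall \eqref{eq:definition-domain}), with $([f']^{-1})'(y)=1/f''([f']^{-1}(y))$. Consequently $F(\apparam,\mu)=\sum_{\action\in\A}\refp(\action)[f']^{-1}(\apparam(\action)-\mu)-1$ is jointly $C^1$ in $(\apparam,\mu)$ at every point where each argument $\apparam(\action)-\mu$ lies strictly inside $\dom([f']^{-1})$.

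Next I would confirm this interiority at the root, which is the one place a little care is needed. By \Cref{lem:implicit_equation_appendix} we have $\softarg{\apparam}(\action)=\refp(\action)[f']^{-1}(\apparam(\action)-\mu_\apparam)>0$ with $\sum_\action \softarg{\apparam}(\action)=1$, so $\softarg{\apparam}(\action)\le 1$ and hence $\softarg{\apparam}(\action)/\refp(\action)\le 1/\refp(\action)<1/\mincoeffsoft$ using $\refp(\action)>\mincoeffsoft$. Therefore $[f']^{-1}(\apparam(\action)-\mu_\apparam)<1/\mincoeffsoft$, equivalently $\apparam(\action)-\mu_\apparam<f'(1/\mincoeffsoft)$, so all arguments sit in the open interior where $f''$ is positive and finite. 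Differentiating $F$ in $\mu$ at the root then gives
\[
\frac{\partial F}{\partial \mu}(\apparam,\mu_\apparam)
= -\sum_{\action\in\A}\frac{\refp(\action)}{f''\big([f']^{-1}(\apparam(\action)-\mu_\apparam)\big)}
= -\sum_{\action\in\A}\frac{\refp(\action)}{f''(\softarg{\apparam}(\action)/\refp(\action))}<0,
\]
the strict negativity following from $\refp(\action)>0$ and $f''>0$. This is precisely the nondegeneracy condition, and it is never merely formal: it is what the strict convexity of $f$ buys us.

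Finally, since $F(\apparam,\mu_\apparam)=0$, $F$ is $C^1$ on a neighborhood of $(\apparam,\mu_\apparam)$, and $\partial_\mu F(\apparam,\mu_\apparam)\neq 0$, the implicit function theorem produces a $C^1$ map defined locally around $\apparam$ that solves $F=0$ and whose derivative is $-\partial_{\apparam(\action)}F/\partial_\mu F$. By the global uniqueness of the root in \Cref{lem:implicit_equation_appendix}, this local solution must agree with $\apparam\mapsto\mu_\apparam$ on its domain, which upgrades the conclusion to: $\apparam\mapsto\mu_\apparam$ is $C^1$ on all of $\rset^{\nactions}$, and \eqref{eq:differential-mu} holds. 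The main (and only mildly delicate) obstacle is the boundary bookkeeping of the previous paragraph—ensuring the implicit solution never touches the endpoint $f'(1/\mincoeffsoft)$ of $\dom([f']^{-1})$, where $f''$ could degenerate—which the strict inequality $\softarg{\apparam}(\action)/\refp(\action)<1/\mincoeffsoft$ rules out.
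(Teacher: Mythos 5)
Your proof is correct and takes essentially the same route as the paper's: both apply the implicit function theorem to the root equation $F(\apparam,\mu_\apparam)=0$ supplied by \Cref{lem:implicit_equation_appendix}, verify nondegeneracy of $\partial_\mu F$ at the root, and use uniqueness of the root to upgrade the local implicit solution to global $C^1$ differentiability of $\apparam\mapsto\mu_\apparam$ together with \eqref{eq:differential-mu}. If anything, your version is slightly more careful than the paper's: you explicitly check that the arguments $\apparam(\action)-\mu_\apparam$ stay strictly inside $\dom([f']^{-1})$, and you compute the correct sign $\partial_\mu F(\apparam,\mu_\apparam)<0$, whereas the paper writes this derivative as positive (a harmless sign slip, since only nonvanishing is needed, and it is inconsistent with the strict decrease of $\mu\mapsto F(\apparam,\mu)$ established in \Cref{lem:implicit_equation_appendix}).
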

\begin{proof}
The function $(\apparam,\mu) \mapsto F(\apparam,\mu)$ is two-times continuously differentiable on the open set
$U \eqdef \{ (\apparam,\mu): x \in \rset^{|\A|}, \mu \in I_{\mincoeffsoft}(\apparam)\} \subset \rset^{|\A|} \times \rset$. Let $\apparam_0 \in \rset^{|\A|}$. Since $[f']^{-1}$ is strictly increasing, 
\[
\frac{\partial}{\partial \mu} F(\apparam,\mu_\apparam)= \sum_{\action \in \A} \refp(\action) \frac{1}{f''([f']^{-1}(\apparam_0(\action) - \mu_{\apparam_0}))} > 0 .
\]
Hence, we may apply the implicit function theorem, which shows that there exists an open neighborhood $V_{\apparam_0}$ and a unique function $\apparam \mapsto \mu_\apparam$ on $V_{\apparam_0}$, such that for all $\apparam  \in V_{\apparam_0}$, $F(\apparam,\mu_{\apparam})= 0$ and \eqref{eq:differential-mu} holds.
\end{proof}

We now introduce some compact notations that will be used throughout the sequel. 
For any $\action \in \A$, define the first three derivatives of $f$ evaluated at the probability ratio $\softarg{\apparam}(\action)/\refp(\action)$:
\begin{align}
\label{def:compact_notations_derivative_softarg_appendix}
f'_{\apparam}(\action) 
   \eqdef  f'\left(\frac{\softarg{\apparam}(\action)}{\refp(\action)}\right) \eqsp,  \quad
f''_{\apparam}(\action) 
   \eqdef f''\left(\frac{\softarg{\apparam}(\action)}{\refp(\action)}\right) \eqsp, \quad
f'''_{\apparam}(\action) 
   \eqdef f'''\left(\frac{\softarg{\apparam}(\action)}{\refp(\action)}\right) \eqsp. 
\end{align}
In addition, we introduce the quantities
\begin{align}
\label{def:important_sums_softarg_appendix}
\firstsumarg{\apparam} 
  \;\eqdef\; \sum_{\action \in \A} 
      \frac{\refp(\action)}{f''_{\apparam}(\action)} \eqsp.
\end{align}
Importantly, as $f$ is strictly convex, its second derivative is strictly positive, making the preceding quantity well-defined. For any $\action \in \A$, we also define the following normalized weights
\begin{align}
\label{def:normalised_weights_appendix}
\weightsarg{\apparam}(\action) = \frac{1}{\firstsumarg{\apparam}}\frac{\refp(\action)}{f''_{\apparam}(\action)} \eqsp.
\end{align}

\begin{lemma}
\label{lem:derivative_soft_argmax}
Assume \assumptionfsoft. Then the soft-$f$-argmax $\softarg{\apparam}$ is twice continuously differentiable with respect to $\apparam$. Moreover, for any $\apparam \in \rset^{|\A|}$, and $(\action, \fdiff) \in \A^2$, we have
\begin{align*}
\frac{1}{\firstsumarg{\apparam}} \cdot \frac{\partial \softarg{\apparam}(\action)}{\partial \apparam(\fdiff)} 
= \Ind_{\fdiff}(\action)  \weightsarg{\apparam}(\action)
-
\weightsarg{\apparam}(\action)\weightsarg{\apparam}(\fdiff)\eqsp,
\end{align*}
In addition, for any $(\action, \fdiff, \sdiff) \in \A^3$, the second derivative satisfies
\begin{align*}
\frac{1}{\firstsumarg{\apparam}} \cdot\frac{\partial \softarg{\apparam}(\action)}{\partial \apparam(\fdiff)\partial \apparam(\sdiff)}
&=
-\Ind_{\fdiff}(\action) \Ind_{\sdiff}(\action)\frac{f_{\apparam}'''\left( \action \right)}{f_{\apparam}''(\action)^2}\cdot\weightsarg{\apparam}(\action) 
+
\Ind_{\sdiff}(\fdiff) \weightsarg{\apparam}(\action) \weightsarg{\apparam}(\fdiff)
\frac{f_{\apparam}'''\left( \fdiff \right)}{ f_{\apparam}''\left( \fdiff \right)^2 }
\\
&\quad +\Ind_{\fdiff}(\action)
\weightsarg{\apparam}(\action) 
\weightsarg{\apparam}(\sdiff)
\frac{f_{\apparam}'''\left( \action \right) }{f_{\apparam}''\left( \action \right)^2}
+ \Ind_{\sdiff}(\action)
\weightsarg{\apparam}(\action)
\weightsarg{\apparam}(\fdiff)
\frac{f_{\apparam}'''\left( \action \right)}{ f_{\apparam}''\left( \action \right)^2 }
\\
&\quad - 
\weightsarg{\apparam}(\action) \cdot
\weightsarg{\apparam}(\fdiff)\cdot
\weightsarg{\apparam}(\sdiff) \cdot\left[ \frac{f_{\apparam}'''(\action)}{f_{\apparam}''(\action)^2}+ \frac{f_{\apparam}'''(\fdiff)}{f_{\apparam}''(\fdiff)^2} +  \frac{f_{\apparam}'''(\sdiff)}{f_{\apparam}''(\sdiff)^2} \right] 
\\
&\quad + 
\weightsarg{\apparam}(\action) \cdot
\weightsarg{\apparam}(\fdiff) \cdot
\weightsarg{\apparam}(\sdiff) \cdot \sum_{d \in \A} \weightsarg{\apparam}(d) \cdot \frac{f_{\apparam}'''\left( d \right)}{f_{\apparam}''\left( d \right)^2}  \eqsp.
\end{align*}
\end{lemma}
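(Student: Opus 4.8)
The plan is to differentiate the closed form $\softarg{\apparam}(\action)=\refp(\action)\,[f']^{-1}(\apparam(\action)-\mu_{\apparam})$ established in \Cref{lem:implicit_equation_appendix} twice with respect to $\apparam$, treating $\mu_{\apparam}$ as the implicit function of $\apparam$ supplied by \Cref{lem:differentiability-mu-x}. The twice-continuous-differentiability claim is a direct upgrade of that lemma: under \assumptionfsoft the generator $f$ is thrice differentiable with $f''>0$ on $[0,1/\mincoeffsoft]$, so $[f']^{-1}$ is twice continuously differentiable on its domain and the map $F$ of \eqref{eq:definition-root} is $C^2$; since $\partial_{\mu}F=-\firstsumarg{\apparam}\neq 0$, the implicit function theorem yields $\mu_{\apparam}\in C^2$, and hence $\softarg{\apparam}\in C^2$.

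The first ingredient is the sensitivity of the normalizer. Differentiating $F$ and using $([f']^{-1})'(y)=1/f''([f']^{-1}(y))$ gives $\partial_{\apparam(\fdiff)}F=\refp(\fdiff)/f''_{\apparam}(\fdiff)$ and $\partial_{\mu}F=-\firstsumarg{\apparam}$, so \eqref{eq:differential-mu} collapses to the clean identity $\partial\mu_{\apparam}/\partial\apparam(\fdiff)=\weightsarg{\apparam}(\fdiff)$: the weights are exactly the derivatives of $\mu_{\apparam}$. Applying the chain rule to $\softarg{\apparam}(\action)=\refp(\action)\,[f']^{-1}(\apparam(\action)-\mu_{\apparam})$ then produces $\partial\softarg{\apparam}(\action)/\partial\apparam(\fdiff)=(\refp(\action)/f''_{\apparam}(\action))(\Ind_{\fdiff}(\action)-\weightsarg{\apparam}(\fdiff))$, and dividing by $\firstsumarg{\apparam}$ while recognizing $\refp(\action)/(\firstsumarg{\apparam}f''_{\apparam}(\action))=\weightsarg{\apparam}(\action)$ yields the stated first-derivative formula. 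I would record for later use that, writing $u_{\action}=\softarg{\apparam}(\action)/\refp(\action)$, the same chain rule gives $\partial u_{\action}/\partial\apparam(\sdiff)=(1/f''_{\apparam}(\action))(\Ind_{\sdiff}(\action)-\weightsarg{\apparam}(\sdiff))$.

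For the second derivative I would differentiate the first-derivative expression once more in $\apparam(\sdiff)$. Two $\apparam$-dependent pieces appear: the prefactor $\refp(\action)/f''_{\apparam}(\action)$ and the weight $\weightsarg{\apparam}(\fdiff)$. Differentiating $1/f''_{\apparam}(\action)$ brings down $-f'''_{\apparam}(\action)/f''_{\apparam}(\action)^2$ times $\partial u_{\action}/\partial\apparam(\sdiff)$, which is the source of the $f'''_{\apparam}/(f''_{\apparam})^2$ factors pervading the statement. The delicate piece is $\partial\weightsarg{\apparam}(\fdiff)/\partial\apparam(\sdiff)$: writing $\weightsarg{\apparam}(\fdiff)=(\refp(\fdiff)/f''_{\apparam}(\fdiff))/\firstsumarg{\apparam}$ and applying the quotient rule, the numerator derivative again produces an $f'''_{\apparam}(\fdiff)/f''_{\apparam}(\fdiff)^2$ factor weighted by $\Ind_{\sdiff}(\fdiff)-\weightsarg{\apparam}(\sdiff)$, while the derivative of the normalizer collapses, via the first-derivative identity, to $\partial\firstsumarg{\apparam}/\partial\apparam(\sdiff)=\firstsumarg{\apparam}\weightsarg{\apparam}(\sdiff)\big(\sum_{d}\weightsarg{\apparam}(d)f'''_{\apparam}(d)/f''_{\apparam}(d)^2-f'''_{\apparam}(\sdiff)/f''_{\apparam}(\sdiff)^2\big)$.

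Finally I would assemble the pieces, multiply by $\refp(\action)$, divide by $\firstsumarg{\apparam}$, and repeatedly substitute $\refp(\cdot)/(\firstsumarg{\apparam}f''_{\apparam}(\cdot))=\weightsarg{\apparam}(\cdot)$ together with the identity for $\partial u_{\action}/\partial\apparam(\sdiff)$, so that every contribution is expressed purely through the weights $\weightsarg{\apparam}$ and the ratios $f'''_{\apparam}/(f''_{\apparam})^2$. Grouping the resulting terms into the Kronecker-delta contributions (those in $\Ind_{\fdiff}(\action)\Ind_{\sdiff}(\action)$, $\Ind_{\sdiff}(\fdiff)$, $\Ind_{\fdiff}(\action)$ and $\Ind_{\sdiff}(\action)$), the triple weight-products carrying the bracketed sum $f'''_{\apparam}(\action)/f''_{\apparam}(\action)^2+f'''_{\apparam}(\fdiff)/f''_{\apparam}(\fdiff)^2+f'''_{\apparam}(\sdiff)/f''_{\apparam}(\sdiff)^2$, and the global averaged term $\sum_{d}\weightsarg{\apparam}(d)f'''_{\apparam}(d)/f''_{\apparam}(d)^2$, reproduces the six groups in the claim. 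The main obstacle is purely organizational rather than analytic: the quotient-rule expansion generates many terms with overlapping index patterns, so the work lies in the careful bookkeeping---tracking the symmetric delta contributions and verifying that the normalizer's derivative recombines into the final averaged sum---rather than in any single estimate.
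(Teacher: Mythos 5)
Your proposal is correct and follows essentially the same route as the paper's proof: implicit differentiation of the normalization constraint $F(\apparam,\mu_{\apparam})=0$ to get $\partial \mu_{\apparam}/\partial \apparam(\fdiff)=\weightsarg{\apparam}(\fdiff)$, the chain rule on the closed form $\softarg{\apparam}(\action)=\refp(\action)\,[f']^{-1}(\apparam(\action)-\mu_{\apparam})$ for the first derivative, and then a second differentiation whose ingredients (your formulas for $\partial f''_{\apparam}(\action)/\partial\apparam(\sdiff)$ and $\partial \firstsumarg{\apparam}/\partial\apparam(\sdiff)$) coincide exactly with the paper's intermediate identities. The only differences are cosmetic—you express the bookkeeping through the weights $\weightsarg{\apparam}$ from the start and you explicitly justify the $C^2$ claim (and the correct sign of $\partial_{\mu}F$) via the implicit function theorem, points the paper glosses over—so no gap to report.
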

\begin{proof}
Fix $ \apparam \in \rset^{|\A|}$ and $(\action, \fdiff, \sdiff) \in \A^3$. Define
\begin{align*}
F(\apparam; \mu) = \sum_{\action \in \A} \refp(\action) [f']^{-1}(\apparam(\action) - \mu_{x}) -1 \eqsp. 
\end{align*}
\paragraph{First derivative.} Importantly, using \Cref{lem:implicit_equation_appendix} we have that
\begin{align*}
F(\apparam; \mu_{\apparam}) = 0 \eqsp.
\end{align*}
Differentiating the previous identity with respect to $\apparam(\fdiff)$, yields
\begin{align*}
\frac{\partial F(\apparam; \mu_{\apparam})}{\partial \apparam(\fdiff)} = \sum_{\action \in \A} \refp(\action) \frac{1}{f''([f']^{-1}(\apparam(\action) - \mu_{\apparam}))}\left(\Ind_{\fdiff}(\action) - \frac{\partial \mu_{\apparam}}{\partial \apparam(\fdiff)}\right) \eqsp, 
\end{align*}
where we used that the derivative of $[f']^{-1}$ is $1/f''([f']^{-1})$. Next, using from \Cref{lem:implicit_equation_appendix} that $\softarg{\apparam}(\action) = \refp(\action) [f']^{-1}
(\apparam(\action) - \mu_{x})$ yields
\begin{align*}
\frac{\partial F(\apparam; \mu_{\apparam})}{\partial \apparam(\fdiff)} = \sum_{\action \in \A} \refp(\action) \frac{1}{f_{\apparam}''(\action)}\left(\Ind_{\fdiff}(\action) - \frac{\partial \mu_{\apparam}}{\partial \apparam(\fdiff)}\right)  = 0 \eqsp, 
\end{align*}
where $f_{\apparam}''(\action)$ is defined in \eqref{def:compact_notations_derivative_softarg_appendix}. This implies
\begin{align}
\label{eq:expression_derivative_constraints}
\frac{\partial \mu_{\apparam}}{\partial \apparam(\fdiff)} = \weightsarg{\apparam}(\fdiff) \eqsp,
\end{align}
where $\weightsarg{\apparam}(\fdiff)$ is defined in \eqref{def:normalised_weights_appendix}. Now that we have computed the derivative of the normalization factor $\mu_{x}$, we can compute the derivative of the policy. Starting from \Cref{lem:implicit_equation_appendix}, we have that
\begin{equation*}
\softarg{\apparam}(\action) = \refp(\action) [f']^{-1}
(\apparam(\action) - \mu_{x}) \eqsp.
\end{equation*}
Differentiating the previous identity with respect to $\apparam(\fdiff)$, yields
\begin{align*}
\frac{\partial\softarg{\apparam}(\action)}{\partial \apparam(\fdiff)}  =  \frac{\refp(\action)}{f_{\apparam}''(\action)}\left[ \Ind_{\fdiff}(\action) - \frac{\partial \mu_{\apparam}}{\partial \apparam  (\fdiff)}\right] =  \frac{\refp(\action)}{f_{\apparam}''(\action)}\left[ \Ind_{\fdiff}(\action) - \weightsarg{\apparam}(\fdiff) \right]\eqsp,
\end{align*}
where in the last identity, we used the expression of the derivative of $\mu_{x}$ given in \eqref{eq:expression_derivative_constraints}. Finally, using the definition of $\firstsumarg{\apparam}$ given in \eqref{def:important_sums_softarg_appendix} gives
\begin{align}
\label{eq:first_derivative_policy_appendix}
\boxed{
\frac{\partial \softarg{\apparam}(\action)}{\partial \apparam(\fdiff)} 
=  \Ind_{\fdiff}(\action)  \frac{\refp(\action)}{f_{\apparam}''(\action)}
-
\frac{\refp(\action)}{f_{\apparam}''(\action)} \frac{\refp(\fdiff)}{f_{\apparam}''(\fdiff)} \cdot \frac{1}{\firstsumarg{\apparam}}\eqsp,
}
\end{align}

\paragraph{Second derivative} 
From \eqref{eq:first_derivative_policy_appendix}, we aim to differentiate once more with respect to $\apparam(\sdiff)$. First, note that it holds that
\begin{align}
\label{eq:partial_f_appendix}
\frac{\partial f_{\apparam}''(\action)}{\partial \apparam(\sdiff)}  
&=
\frac{f_{\apparam}'''\left( \action\right) }{\refp(\action)} \cdot \frac{\partial \softarg{\apparam}(\action)}{\partial \apparam(\sdiff)} 
=
\Ind_{\sdiff}(\action)\frac{f_{\apparam}'''\left( \action \right)}{ f_{\apparam}''\left( \action\right) }
-
\frac{ \refp(\sdiff) f_{\apparam}'''\left( \action \right) }{f_{\apparam}''\left( \action\right)  f_{\apparam}''\left( \sdiff \right) }
\cdot
\frac{1}{\firstsumarg{\apparam}} \eqsp,
\\ \label{eq:partial_f_normalisation_appendix}
\frac{\partial \firstsumarg{\apparam}}{\partial \apparam( \sdiff)}
&=
-\sum_{d\in \A} \frac{\refp(d)}{f_{\apparam}''(d)^2} \frac{\partial f_{\apparam}''(d)}{\partial \apparam(\sdiff)}
= - \frac{\refp(\sdiff) f_{\apparam}'''(\sdiff)}{ f_{\apparam}''(\sdiff)^3} 
+
\frac{1}{\firstsumarg{\apparam}} \sum_{d\in \A} \frac{ \refp(\sdiff) \refp(d) f_{\apparam}'''\left( d \right) }{f_{\apparam}''\left( d \right)^3  f_{\apparam}''\left( \sdiff \right) }
\eqsp,
\end{align}
Now computing the second derivative of $\softarg{\apparam}$ gives
\begin{align*}
&\frac{\partial \softarg{\apparam}(\action)}{\partial \apparam(\fdiff)\partial \apparam(\sdiff)}  
= -\Ind_{\fdiff}(\action)\frac{\refp(\action)}{ f_{\apparam}''(\action)^2} \frac{\partial f_{\apparam}''(\action)}{\partial \apparam(\sdiff)} 
+
\frac{\refp(\action)\refp(\fdiff)}{f_{\apparam}''(\action)^2 f_{\apparam}''(\fdiff)}
\cdot
\frac{1}{\firstsumarg{\apparam}} \frac{\partial f_{\apparam}''(\action)}{\partial \apparam(\sdiff)} 
\\
& \quad +
\frac{\refp(\action)\refp(\fdiff)}{f_{\apparam}''(\action) f_{\apparam}''(\fdiff)^2}
\cdot
\frac{1}{\firstsumarg{\apparam}} \frac{\partial f_{\apparam}''(\fdiff)}{\partial \apparam(\sdiff)} 
+
\frac{\refp(\action)\refp(\fdiff)}{f_{\apparam}''(\action) f_{\apparam}''(\fdiff)}
\cdot
\frac{1}{(\firstsumarg{\apparam})^2} \frac{\partial \firstsumarg{\apparam}}{\partial \apparam(\sdiff)}
\end{align*}
Plugging in \eqref{eq:partial_f_appendix}, and \eqref{eq:partial_f_normalisation_appendix} in the preceding inequality yields
\begin{align*}
&\frac{\partial \softarg[\param](\action)}{\partial \apparam(\fdiff)\partial \apparam(\sdiff)}
=
-\Ind_{\fdiff}(\action) \frac{\refp(\action)}{f_{\apparam}(\action)^2} \left[ \Ind_{\sdiff}(\action)\frac{f_{\apparam}'''\left( \action \right)}{ f_{\apparam}''\left( \action\right) }
-
\frac{ \refp(\sdiff) f_{\apparam}'''\left( \action \right) }{f_{\apparam}''\left( \action \right)  f_{\apparam}''\left( \sdiff \right) } \cdot
\frac{1}{\firstsumarg{\apparam}}\right]
\\
& \quad +
\frac{\refp(\action)\refpolicy(\fdiff)}{f_{\apparam}''(\action)^2 f_{\apparam}''(\fdiff)}
\cdot
\frac{1}{\firstsumarg{\apparam}} \left[ \Ind_{\sdiff}(\action)\frac{f_{\apparam}'''\left( \action \right)}{ f_{\apparam}''\left( \action\right) }
-
\frac{ \refp(\sdiff) f_{\apparam}'''\left( \action \right) }{f_{\apparam}''\left( \action \right)  f_{\apparam}''\left( \sdiff \right) } \cdot
\frac{1}{\firstsumarg{\apparam}}\right]
\\
& \quad +
\frac{\refp(\action)\refpolicy(\fdiff)}{f_{\apparam}''(\action) f_{\apparam}''(\fdiff)^2}
\cdot
\frac{1}{\firstsumarg{\apparam}} \left[ \Ind_{\sdiff}(\fdiff)\frac{f_{\apparam}'''\left( \fdiff\right)}{ f_{\apparam}''\left( \fdiff \right) }
-
\frac{ \refp(\sdiff) f_{\apparam}'''\left( \fdiff \right) }{f_{\apparam}''\left( \fdiff \right)  f_{\apparam}''\left( \sdiff \right) } \cdot
\frac{1}{\firstsumarg{\apparam}}\right]
 \\
&  \quad + 
\frac{\refp(\action)\refp(\fdiff)}{f_{\apparam}''(\action) f_{\apparam}''(\fdiff)}
\cdot
\frac{1}{(\firstsumarg{\apparam})^2} \left[ - \frac{\refp(\sdiff) f_{\apparam}'''(\sdiff)}{ f_{\apparam}''(\sdiff)^3} 
+
\frac{1}{\firstsumarg{\apparam}} \sum_{d\in \A} \frac{ \refp(\sdiff)\refp(d) f_{\apparam}'''\left( d \right) }{f_{\apparam}''\left( d \right)^3  f_{\apparam}''\left( \sdiff\right) }\right] \eqsp,
\end{align*}
which concludes the proof.
\end{proof}
The following lemma links the gradients $\softmax$ and $\softargmax$ operators. 
\begin{lemma}
\label{lem:properties_softmax}
Assume \assumptionfsoft. For any $ \apparam \in \rset^{|\A|}$, it holds that
\begin{align*}
\frac{\partial \softmax(\apparam, \refp )}{\partial \apparam} = \softargmax(\apparam, \refp) \quad \text{and} \quad \left\|\frac{\partial^2 \softmax(\apparam, \refp)]}{\partial \apparam^2} \right\|_{2} \leq 2 \firstsumarg{\apparam} \eqsp.
\end{align*}
\end{lemma}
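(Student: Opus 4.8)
The plan is to prove the two identities separately, drawing on the variational description of $\softarg{\apparam}$ supplied by \Cref{lem:implicit_equation_appendix} and on the Jacobian formula \eqref{eq:first_derivative_policy_appendix} from \Cref{lem:derivative_soft_argmax}. The first identity is an instance of the envelope (Danskin) theorem: since $\elementpa \mapsto \pscal{\elementpa}{\apparam} - \divergence[\elementpa][\refp]$ is strictly concave on $\pA$, the maximizer $\softarg{\apparam}$ is unique, and \Cref{lem:implicit_equation_appendix} guarantees that it is interior and obeys the stationarity relation $\apparam(\action) - f'(\softarg{\apparam}(\action)/\refp(\action)) = \mu_\apparam$ for all $\action\in\A$, while \Cref{lem:differentiability-mu-x} ensures $\apparam \mapsto \softarg{\apparam}$ is continuously differentiable, legitimizing the term-by-term differentiation below.

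Concretely, I would write $\softmax(\apparam,\refp) = \pscal{\softarg{\apparam}}{\apparam} - \divergence[\softarg{\apparam}][\refp]$ and differentiate with respect to $\apparam(\fdiff)$, using $\divergence[\softarg{\apparam}][\refp] = \sum_{\action} \refp(\action) f(\softarg{\apparam}(\action)/\refp(\action))$. Collecting the explicit and implicit dependencies gives
\[
\frac{\partial \softmax(\apparam, \refp)}{\partial \apparam(\fdiff)} = \softarg{\apparam}(\fdiff) + \sum_{\action \in \A}\frac{\partial \softarg{\apparam}(\action)}{\partial \apparam(\fdiff)}\left[\apparam(\action) - f'\!\left(\tfrac{\softarg{\apparam}(\action)}{\refp(\action)}\right)\right].
\]
The stationarity relation turns the bracket into the constant $\mu_\apparam$, so the sum reduces to $\mu_\apparam \sum_{\action} \partial_{\apparam(\fdiff)}\softarg{\apparam}(\action)$; differentiating the simplex constraint $\sum_{\action}\softarg{\apparam}(\action)=1$ shows this sum is zero. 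Hence $\partial \softmax/\partial\apparam(\fdiff) = \softarg{\apparam}(\fdiff)$, which is the first claim.

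For the Hessian bound, the first identity gives $\nabla^2_\apparam \softmax(\apparam,\refp) = \partial \softarg{\apparam}/\partial \apparam$, so its entries are exactly \eqref{eq:first_derivative_policy_appendix}. Setting $w_\action \eqdef \refp(\action)/f''_{\apparam}(\action) > 0$ (so that $\firstsumarg{\apparam} = \sum_{\action} w_\action$), this reads in matrix form
\[
H \eqdef \nabla^2_\apparam \softmax(\apparam,\refp) = \diag(w) - \frac{1}{\firstsumarg{\apparam}}\, w w^\top, \qquad w = (w_\action)_{\action\in\A}.
\]
By the triangle inequality, $\norm{H}[2] \leq \norm{\diag(w)}[2] + \norm{w w^\top}[2]/\firstsumarg{\apparam} = \max_{\action} w_\action + \norm{w}[2]^2/\firstsumarg{\apparam}$. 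The concluding step is elementary: positivity of the $w_\action$ gives $\max_{\action} w_\action \leq \sum_{\action} w_\action = \firstsumarg{\apparam}$ and $\norm{w}[2]^2 = \sum_{\action} w_\action^2 \leq (\sum_{\action} w_\action)^2 = (\firstsumarg{\apparam})^2$, so both terms are at most $\firstsumarg{\apparam}$ and therefore $\norm{H}[2] \leq 2\firstsumarg{\apparam}$. The only point requiring care is the smoothness justification for the envelope step — ensuring the maximizer stays interior and depends smoothly on $\apparam$ so that term-by-term differentiation is valid — but this regularity is already furnished by \Cref{lem:implicit_equation_appendix} and \Cref{lem:differentiability-mu-x}; the spectral-norm estimate itself is a routine symmetric-matrix computation.
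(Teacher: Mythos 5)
Your proof is correct, and it departs from the paper's in both halves, though only mildly. For the gradient identity, the paper simply invokes Danskin's theorem (\Cref{lem:danskin_lemma}): continuity of $(\apparam,\elementpa)\mapsto\pscal{\elementpa}{\apparam}-\divergence[\elementpa][\refp]$, compactness of $\pA$, and uniqueness of the maximizer immediately give $\partial_{\apparam}\softmax(\apparam,\refp)=\softargmax(\apparam,\refp)$, with no differentiability of $\apparam\mapsto\softarg{\apparam}$ needed for this step. You instead verify the envelope identity by hand: substitute the maximizer, differentiate term by term, and kill the residual sum using the stationarity relation $\apparam(\action)-f'(\softarg{\apparam}(\action)/\refp(\action))=\mu_{\apparam}$ from \Cref{lem:implicit_equation_appendix} together with $\sum_{\action}\partial_{\apparam(\fdiff)}\softarg{\apparam}(\action)=0$ from the simplex constraint. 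This is sound, but it consumes the smoothness of $\softarg{\apparam}$ furnished by \Cref{lem:derivative_soft_argmax} (as you note), which Danskin's route avoids entirely; the cost is invisible here only because that smoothness is required anyway for the Hessian claim. For the Hessian bound, the paper uses the crude entrywise estimate $\norm{A}[2]\le\sum_{i,j}|a_{ij}|$ and evaluates the double sum $\sum_{\action,\fdiff}\firstsumarg{\apparam}\,|\Ind_{\fdiff}(\action)\weightsarg{\apparam}(\action)-\weightsarg{\apparam}(\action)\weightsarg{\apparam}(\fdiff)|\le 2\firstsumarg{\apparam}$, whereas you exploit the structure $\nabla^2_{\apparam}\softmax(\apparam,\refp)=\diag(w)-ww^{\top}/\firstsumarg{\apparam}$ with $w_{\action}=\refp(\action)/f''_{\apparam}(\action)>0$ and bound the two pieces' spectral norms separately, each by $\firstsumarg{\apparam}$. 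Both computations are elementary and land on the same constant $2\firstsumarg{\apparam}$; your rank-one decomposition is the more structural one (indeed, since $ww^{\top}/\firstsumarg{\apparam}\succeq 0$ and the Hessian is positive semidefinite by convexity of $\softmax(\cdot,\refp)$, the ordering $0\preceq\nabla^2_{\apparam}\softmax\preceq\diag(w)$ would even yield the sharper bound $\firstsumarg{\apparam}$), but nothing downstream in the paper exploits the improved constant, so the two routes are interchangeable here.
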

\begin{proof}
For any $\apparam \in \A$ and $\elementpa\in \pA$, define
\begin{align}
h^{f}(\apparam,\elementpa) = \pscal{\elementpa}{\apparam}  - \divergence[\elementpa][\refp] \eqsp.
\end{align}
Fix $\fdiff\in \A$ and note that 
\begin{align*}
\frac{\partial h^{f}(\apparam,\elementpa) }{\partial \apparam(\fdiff)} = \elementpa(\fdiff) \eqsp.
\end{align*}
It holds that $\softmax(\apparam) = \max_{ \elementpa \in \pA} h^f(\apparam,\elementpa)$. As $h^f$ is continuous in its two variables, $\pA$ is a compact set, and for every $\apparam\in \rset^{|\A|}$, the function $h^f(\apparam, \cdot)$ admits a unique optimizer in $\pA$, then by Danskin's theorem (\Cref{lem:danskin_lemma})
\begin{align*}
\frac{\partial \softmax(\apparam, \refp)}{\partial \apparam(\fdiff)} = \frac{\partial h^{f}(\apparam, \elementpa^{\star}(\apparam))}{\partial \apparam} = \elementpa^{\star}(\apparam) \eqsp, \text{ where } \elementpa^{\star}(\apparam) = \argmax_{\elementpa \in \pA}  h^{f}(\apparam,\elementpa) \eqsp.
\end{align*}
Finally, using that 
\begin{align*}
\elementpa^{\star}(\apparam) = \softargmax(\apparam, \refp)  \eqsp,
\end{align*}
establishes the first identity of the lemma. Using the fact that for any matrix $A \in \rset^{d\times d}$, we have
$\norm{A}[2] \leq \sum_{i=1}^{d} \sum_{j=1}^{d} |a_{i,j}|$,
implies
\begin{align*}
\left\|\frac{\partial^2 \softmax(\apparam, \refp)]}{\partial \apparam^2} \right\|_{2} \leq \sum_{\action \in \A} \sum_{\fdiff \in \A} \left|\frac{\partial \softarg{\apparam}(\action)}{\partial \apparam(\fdiff)} \right| \leq \sum_{\action \in \A} \sum_{\fdiff \in \A}\firstsumarg{\apparam} \left| \Ind_{\fdiff}(\action)  \weightsarg{\apparam}(\action)
-
\weightsarg{\apparam}(\action)\weightsarg{\apparam}(\fdiff) \right|
\end{align*}
where in the last equality, we used \Cref{lem:derivative_soft_argmax}. Finally, applying the triangle inequality establishes the second claim of the lemma.
\end{proof}

\subsection{Specification to the \texorpdfstring{$\boldsymbol{f}$}{f}-divergence generators of \texorpdfstring{\Cref{tab:softargmax_summary}}{Table 2}}

We now specify the particular forms that these quantities take under three choices of the function $f$: the KL divergence generator($f(u) = u \log u$), the $\alpha$-Csiszár–Cressie–Read divergence generator for $0 < \alpha < 1$, and the Hellinger divergence generator
\paragraph{KL case ($f(u)=u\log u$).}
Since $f'(u)=\log u + 1$, $f''(u)=1/u$, and $[f']^{-1}(y)=\exp(y-1)$, the soft-$f$ operators specialize as follows (with base measure $\refp \in \pA$):
\[
\softarg[\KL]{\apparam}(\action)
=\refp(\action)[f']^{-1}\!\bigl(\apparam(\action)-\mu_\apparam\bigr)
=\frac{\refp(\action)\,\exp(\apparam(\action))}{\sum_{\fdiff\in \A}\refp(\fdiff)\,\exp(\apparam(\fdiff))}\,,\qquad \action \in \A,
\]
where the normalizer is
\[
\mu_\apparam \;=\; -1 + \log\!\Bigl(\sum_{\fdiff \in \A} \refp(\fdiff)\,\exp(\apparam(\fdiff))\Bigr).
\]
The associated softmax function is the log-partition:
\[
\softmax[\KL](\apparam,\refp)
=\log\Bigl(\sum_{\action \in \A}\refp(\action)\,\exp(\apparam(\action))\Bigr).
\]
For the curvature quantities in \eqref{def:important_sums_softarg_appendix}--\eqref{def:normalised_weights_appendix}, since
\[
f''\Bigl(\frac{\softarg[\KL]{\apparam}(\action)}{\refp(\action)}\Bigr)
= \Bigl(\frac{\softarg[\KL]{\apparam}(\action)}{\refp(\action)}\Bigr)^{-1}
= \frac{\refp(\action)}{\softarg[\KL]{\apparam}(\action)}\,,
\]
we obtain
\[
\firstsumarg[\KL]{\apparam}
=\sum_{\action \in \A} \frac{\refp(\action)}{f''_{\apparam}(\action)} 
= \sum_{\action \in \A} \softarg[\KL]{\apparam}(\action) =1
\]
and the corresponding normalized weights are
\[
\weightsarg[\KL]{\apparam}(\action)
=\frac{1}{\firstsumarg[\KL]{\apparam}}\,\frac{\refp(\action)}{f''_{\apparam}(\action)}
=\softarg[\KL]{\apparam}(\action)\,,\qquad \action \in \A.
\]
\paragraph{Tsallis-$\alpha$ case ($0<\alpha<1$).}
Let $f(u)=\dfrac{u^\alpha-\alpha u+\alpha-1}{\alpha(\alpha-1)}$, so that
\[
f'(u)=\frac{u^{\alpha-1}-1}{\alpha-1},\qquad
f''(u)=u^{\alpha-2},\qquad
[f']^{-1}(y)=\bigl[1+(\alpha-1)y\bigr]^{\tfrac{1}{\alpha-1}}.
\]
The soft-$f$ operators specialize (with base measure $\refp\in\pA$) to
\[
\softarg[\TS]{\apparam}(\action)=\refp(\action)[f']^{-1}(\apparam(\action)-\mu_\apparam)
=\refp(\action)\,\bigl[1+(\alpha-1)\bigl(\apparam(\action)-\mu_\apparam\bigr)\bigr]^{1/(\alpha-1)},
\qquad \action\in\A,
\]
where $\mu_\apparam$ is the unique normalizer satisfying the constraint
\[
\sum_{\action\in\A}\refp(\action)\,\bigl[1+(\alpha-1)\bigl(\apparam(\action)-\mu_\apparam\bigr)\bigr]^{\tfrac{1}{\alpha-1}}=1,
\]
with the domain condition $1+(\alpha-1)\bigl(\apparam(\action)-\mu_\apparam\bigr)>0$ for all $\action\in\A$.

The associated softmax is
\[
\softmax[\TS](\apparam,\refp)
=\mu_\apparam \;-\;\frac{1}{\alpha}
\;+\;\frac{1}{\alpha}\sum_{\action\in\A}\refp(\action)\,
\bigl[1+(\alpha-1)\bigl(\apparam(\action)-\mu_\apparam\bigr)\bigr]^{\tfrac{\alpha}{\alpha-1}}.
\]

For the curvature quantities in \eqref{def:important_sums_softarg_appendix}--\eqref{def:normalised_weights_appendix}, set
\[
u_\apparam(\action)\;:=\;\frac{\softarg[\TS]{\apparam}(\action)}{\refp(\action)}
=\bigl[1+(\alpha-1)\bigl(\apparam(\action)-\mu_\apparam\bigr)\bigr]^{\tfrac{1}{\alpha-1}}.
\]
Since $f''\bigl(u_\apparam(\action)\bigr)=u_\apparam(\action)^{\alpha-2}$, we have
\[
\firstsumarg[\TS]{\apparam}
=\sum_{\action\in\A}\frac{\refp(\action)}{f''_{\apparam}(\action)}
=\sum_{\action\in\A}\refp(\action)\,
\bigl[1+(\alpha-1)\bigl(\apparam(\action)-\mu_\apparam\bigr)\bigr]^{(2-\alpha)/(\alpha-1)},
\]
and the corresponding normalized weights are, for $\action\in\A$,
\[
\weightsarg[\TS]{\apparam}(\action)
=\frac{1}{\firstsumarg{\apparam}}\,\frac{\refp(\action)}{f''_{\apparam}(\action)}
=\frac{\refp(\action)\,
\bigl[1+(\alpha-1)\bigl(\apparam(\action)-\mu_\apparam\bigr)\bigr]^{(2-\alpha)/(\alpha-1)}}
{\sum_{c\in\A}\refp(c)\,
\bigl[1+(\alpha-1)\bigl(\apparam(c)-\mu_\apparam\bigr)\bigr]^{(2-\alpha)/(\alpha-1)}}\,
.
\]
\paragraph{Jensen-Shannon.} Let $f(u) = \frac{1}{2}\left(u \log(u) - (u+1)\log(\frac{u+1}{2}) \right)$. In this case, we have
\begin{align*}
f'(u) = \frac{1}{2} \log\left(\frac{2u}{u+1}\right) \eqsp, \quad f''(u) = \frac{1}{2u(u+1)} \eqsp, \quad [f']^{-1}(u) = \frac{\exp(2u)}{2 - \exp(2u)} \eqsp.
\end{align*}
The $\softargmax[JS]$ operator specialize (with base measure $\refp\in\pA$) to
\[
\softarg[JS]{\apparam}(\action)=\refp(\action)[f']^{-1}(\apparam(\action)-\mu_\apparam)
=\refp(\action)\frac{\exp(2(\apparam(\action)-\mu_\apparam))}{2 - \exp(2(\apparam(\action)-\mu_\apparam))},
\qquad \action\in\A,
\]
where $\mu_\apparam$ is the unique normalizer satisfying the constraint
\[
\sum_{\action\in\A}\refp(\action)\frac{\exp(2(\apparam(\action)-\mu_\apparam))}{2 - \exp(2(\apparam(\action)-\mu_\apparam))}=1,
\]

\subsection{Derivatives of the policy}

Next, we exploit the expression of the derivatives of the soft-$f$-argmax derived in \Cref{lem:derivative_soft_argmax} to compute the first and second derivatives of the policy. We begin by extending the notations defined in \eqref{def:compact_notations_derivative_softarg_appendix}, \eqref{def:important_sums_softarg_appendix}, and \eqref{def:normalised_weights_appendix} to encompass a dependence on the state.  
For any pair $(\state,\action) \in \S\times \A$, define the first three derivatives of $f$ evaluated at the likelihood ratio $\cpolicy{\param}(\action| \state)/\refpolicy(\action| \state)$:
\begin{align}
\label{def:compact_notations_derivative_policy}
f'_{\param}(\action| \state) 
   &\;\eqdef\; f'_{\param(\state, \cdot)}(\action) =f'\!\left(\frac{\cpolicy{\param}(\action| \state)}{\refpolicy(\action| \state)}\right) \eqsp,  \\
\label{def:compact_notations_second_derivative}
f''_{\param}(\action| \state) 
   &\;\eqdef\; f''_{\param(\state, \cdot)}(\action) = f''\!\left(\frac{\cpolicy{\param}(\action| \state)}{\refpolicy(\action| \state)}\right) \eqsp, \\
\label{def:compact_notations_third_derivative}
f'''_{\param}(\action| \state) 
   &\;\eqdef\;f'''_{\param(\state, \cdot)}(\action) = f'''\!\left(\frac{\cpolicy{\param}(\action| \state)}{\refpolicy(\action| \state)}\right) \eqsp. 
\end{align}
In addition, for every $\state \in \S$ we introduce the  quantities
\begin{align}
\label{def:important_sums}
\firstsum[\param](\state) 
  \;\eqdef\; \firstsum[\param(\state, \cdot)] = \sum_{\action \in \A} 
      \frac{\refpolicy(\action| \state)}{f''_{\param}(\action| \state)}, 
\quad \text{and} \quad
\secsum[\param](\state) 
  \;\eqdef\; \sum_{\action \in \A} 
      \frac{\refpolicy(\action| \state)}{f''_{\param}(\action| \state)} 
      \, \left|f'_{\param}(\action| \state)\right| \eqsp.
\end{align}
For every $(\state,\action) \in \S \times \A$, we define the normalized weights
\begin{equation}
\label{eq:weights-param}
\weights[\param](\action| \state) = \frac{1}{\firstsum[\param](\state)} \frac{\refpolicy(\action| \state)}{f''_{\param}(\action| \state)}\eqsp.
\end{equation}
The following lemma provides bounds on several key quantities that will appear in the appendix.
\begin{lemma}
\label{lem:bound_important_quantities}
Assume that, for some $\underline{\refpolicy} > 0$, $f$ and $\refpolicy$ satisfy \assumptionfref\ and \assumptionpref, respectively. For any parameter $\param \in \logitspace$, it holds that
\begin{align*}
\norm{\firstsum[\param]}[\infty] \leq \sfirstregconstant , \eqsp \norm{\secsum[\param]}[\infty] \leq \secsummax , \eqsp \max_{\state \in \S} \divergence[\cpolicy{\param}(\cdot|\state)][\refpolicy(\cdot|\state)] \leq \divergencemax , \eqsp \max_{(\state, \action) \in \S \times \A} \frac{\weights[\param](\action|\state)}{\cpolicy{\param}(\action|\state)} \leq \sfirstregconstant \eqsp.
\end{align*}
\end{lemma}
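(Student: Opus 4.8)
The plan is to derive all four bounds by specializing the ``$\max$/$\min$ over $\pA$'' definitions \eqref{def:bound_functionnal_f}--\eqref{def:lower_bound_W} to the particular policy $\cpolicy{\param}(\cdot|\state)$, and by invoking the pointwise control on $f$ supplied by \assumptionfref. The one preliminary observation I would record first is that the arguments at which $f''$ and $f'$ are evaluated always stay in the admissible range: for every $(\state,\action)$, \Cref{lem:implicit_equation_appendix} gives $\cpolicy{\param}(\action|\state)>0$, while $\cpolicy{\param}(\action|\state)\le 1$ and \assumptionpref\ give $\refpolicy(\action|\state)>\underline{\refpolicy}$, so that $u\eqdef \cpolicy{\param}(\action|\state)/\refpolicy(\action|\state)\in(0,1/\underline{\refpolicy}]$. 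Hence every term $f''_{\param}(\action|\state)=f''(u)$ and $f'_{\param}(\action|\state)=f'(u)$ may be controlled using items (i)--(iv) of \assumptionfref.

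The bounds on $\secsum[\param]$ and on the divergence are then immediate. Indeed $\secsum[\param](\state)=\sum_{\action}\refpolicy(\action|\state)\,|f'_{\param}(\action|\state)|/f''_{\param}(\action|\state)$ is exactly the expression inside the maximum defining $\secsummax$ in \eqref{def:bound_functionnal_f}, evaluated at the feasible point $\elementpa=\cpolicy{\param}(\cdot|\state)\in\pA$; taking the maximum over $(\state,\elementpa)$ gives $\norm{\secsum[\param]}[\infty]\le\secsummax$, finiteness being guaranteed by \assumptionfref(ii), which bounds $|f'/f''|$ near $0$. The same specialization applied to \eqref{def:bound_functionnal_f_two} yields $\max_{\state}\divergence[\cpolicy{\param}(\cdot|\state)][\refpolicy(\cdot|\state)]\le\divergencemax$. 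For the bound on $\firstsum[\param]$ I would use the identity $\refpolicy(\action|\state)/f''_{\param}(\action|\state)=\cpolicy{\param}(\action|\state)/(u\,f''(u))$, which follows from $\refpolicy(\action|\state)=\cpolicy{\param}(\action|\state)/u$. By \assumptionfref(iii), $1/(u f''(u))\le\sfirstregconstant$, so each summand is at most $\sfirstregconstant\,\cpolicy{\param}(\action|\state)$; summing over $\action$ and using $\sum_{\action}\cpolicy{\param}(\action|\state)=1$ gives $\firstsum[\param](\state)\le\sfirstregconstant$ for every $\state$, hence $\norm{\firstsum[\param]}[\infty]\le\sfirstregconstant$.

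Finally, for the last bound I would rewrite the ratio in closed form: using $\weights[\param](\action|\state)=\refpolicy(\action|\state)/\bigl(\firstsum[\param](\state)\,f''_{\param}(\action|\state)\bigr)$ together with $\refpolicy(\action|\state)/\cpolicy{\param}(\action|\state)=1/u$, one obtains $\weights[\param](\action|\state)/\cpolicy{\param}(\action|\state)=1/\bigl(\firstsum[\param](\state)\,u f''(u)\bigr)$. The factor $1/(u f''(u))$ is again at most $\sfirstregconstant$ by \assumptionfref(iii), so the entire statement reduces to the single inequality $\firstsum[\param](\state)\ge 1$.

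This reduction is the main obstacle, since $\firstsum[\param](\state)\ge 1$ does not follow from the pointwise estimate alone. I would establish it by writing $\firstsum[\param](\state)=\sum_{\action}\refpolicy(\action|\state)\,\psi(u_{\action})$ with $\psi\eqdef 1/f''$ and $u_{\action}\eqdef\cpolicy{\param}(\action|\state)/\refpolicy(\action|\state)$, recording the two constraints $\sum_{\action}\refpolicy(\action|\state)=1$ and $\sum_{\action}\refpolicy(\action|\state)u_{\action}=\sum_{\action}\cpolicy{\param}(\action|\state)=1$, and applying Jensen's inequality to the convex map $\psi$: this gives $\firstsum[\param](\state)\ge\psi(1)=1/f''(1)$. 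Equivalently, $\firstsum[\param](\state)\ge\ffirstregconstant$ because $\cpolicy{\param}(\cdot|\state)$ is feasible in the minimization \eqref{def:lower_bound_W}, so it suffices to check $\ffirstregconstant\ge 1$. The delicate points, which I would verify for the generators of \Cref{tab:softargmax_summary}, are precisely the convexity of $u\mapsto 1/f''(u)$ and the normalization $f''(1)\le 1$ (with equality for the KL and Tsallis generators); together these upgrade the generic $\sfirstregconstant/\firstsum[\param](\state)$ bound to the clean constant $\sfirstregconstant$.
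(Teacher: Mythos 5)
For the first three inequalities your argument is exactly the paper's own (its proof consists of the single sentence ``immediate from the definitions and \assumptionfref''): the bounds on $\secsum[\param]$ and on the divergence follow by evaluating the maxima in \eqref{def:bound_functionnal_f}--\eqref{def:bound_functionnal_f_two} at the feasible point $\elementpa=\cpolicy{\param}(\cdot|\state)\in\pA$, and the bound on $\firstsum[\param]$ follows from $\refpolicy(\action|\state)/f''(u_\action)=\cpolicy{\param}(\action|\state)\cdot\bigl(u_\action f''(u_\action)\bigr)^{-1}\le\sfirstregconstant\,\cpolicy{\param}(\action|\state)$ with $u_\action\eqdef\cpolicy{\param}(\action|\state)/\refpolicy(\action|\state)$, summed over $\action$. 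The real content of your write-up is the fourth inequality, and there your diagnosis is correct while the paper's ``immediate'' is not: setting $\phi(u)=1/(uf''(u))$, the definitions only give $\weights[\param](\action|\state)/\cpolicy{\param}(\action|\state)=\phi(u_\action)/\firstsum[\param](\state)\le\sfirstregconstant/\firstsum[\param](\state)$, and the missing step $\firstsum[\param](\state)\ge1$ genuinely does not follow from \assumptionfref. Concretely, take $f(u)=u\log u-(u-1)+\tfrac{1}{2}(u-1)^2$, which satisfies every item of \assumptionfref\ with $\sfirstregconstant=\secregconstant=\thirdregconstant=1$ (here $\phi(u)=1/(1+u)\le1$ and $f''(u)=1/u+1$ is decreasing); in a one-state instance with $\underline{\refpolicy}=1/4$, $\refpolicy(\cdot|\state)=(1/2,1/2)$ and $\cpolicy{\param}(\cdot|\state)=(3/4,1/4)$ (realizable by \Cref{lem:improvement_on_theta}), one computes $\firstsum[\param](\state)=7/15$ and, for the second action, $\weights[\param](\action|\state)/\cpolicy{\param}(\action|\state)=10/7>\sfirstregconstant$. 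So the fourth bound, as stated, can fail under the stated hypotheses.

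Your proposed repair is also the right one, and it is in fact the paper's own later argument in disguise: Jensen's inequality for the convex map $1/f''$ under the constraints $\sum_\action\refpolicy(\action|\state)=1$ and $\sum_\action\refpolicy(\action|\state)u_\action=1$ gives $\firstsum[\param](\state)\ge1/f''(1)$, which is $\ge1$ for the KL and Tsallis generators (where $f''(1)=1$; this is precisely how the paper proves $\ffirstregconstant=1$ in the proof of \Cref{lem:simplified_constant_tsallis}) and for Jensen--Shannon (where $1/f''(u)=u(u+1)$ is convex with $1/f''(1)=2$). The price, which you state explicitly, is that convexity of $1/f''$ and $f''(1)\le1$ are hypotheses beyond \assumptionfref; so what you have proved is a corrected lemma: either the last bound should read $\sfirstregconstant/\ffirstregconstant$, or the lemma needs the additional assumption $\ffirstregconstant\ge1$, which is harmless for every generator in \Cref{tab:softargmax_summary}. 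Since all downstream uses of this lemma instantiate $f$ as one of those generators, no final result of the paper is affected, but your treatment of the fourth bound is the complete one and the paper's one-line proof is not.
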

\begin{proof}
The proof follows from the definition of the different quantities and \assumptionfref
\end{proof}

Using \Cref{lem:derivative_soft_argmax}, we get the following expression for the derivatives of the policy.
\begin{corollary}
\label{lem:derivative_policy}
Assume that, for some $\underline{\refpolicy} > 0$, $f$ and $\refpolicy$ satisfy \assumptionfref\ and \assumptionpref, respectively. Then the policy $\cpolicy{\param}$ is twice continuously differentiable with respect to $\param$.
Additionally, for all $\param \in \logitspace$ and $\state \in \S$ , there exists a unique $\mu_\param(\state) \in \rset$ such that for any $\action \in \A$, we have
\begin{align}
\label{eq:defition_softargmaxpolicy_as_inverse}
\cpolicy{\param}(\action|\state) =  \refpolicy(\action|\state)[f']^{-1}\left(\param(\state, \action) - \mu_\param(\state)) \right).
\end{align}
For any $s \in \S$, the $\param \mapsto \mu_\param(\state)$ is continuously differentiable. For any $\state' \ne \state$, $\partial / \partial \param(\state',\cdot) \mu_{\param}(\state)= 0$ and 
\[
\frac{\partial \mu_\param(\state)}{\partial \param(\state,\cdot)} = \weights[\param](\cdot| \state)\eqsp.
\]
Moreover, for any $\param \in \logitspace$, $\state \in \S$, and $(\action, \fdiff) \in \A\times \A$, we have
\begin{align*}
\frac{1}{\firstsum[\param](\state)} \cdot \frac{\partial \cpolicy{\param}(\action | s)}{\partial \param(\state,\fdiff)} 
= \Ind_{\fdiff}(\action)  \weights[\param](\action|\state)
-
\weights[\param](\action|\state)\weights[\param](\fdiff|\state)\eqsp,
\end{align*}
In addition, for any $(\action, \fdiff, \sdiff) \in \A^3$, the second derivative satisfies
\begin{align*}
\frac{1}{\firstsum[\param](\state)} \cdot\frac{\partial \cpolicy{\param}(\action|\state)}{\partial \param(\state,\fdiff)\partial \param(\state,\sdiff)}
&=
-\Ind_{\fdiff}(\action) \Ind_{\sdiff}(\action)\frac{f_{\param}'''\left( \action|\state \right)}{f_{\param}''(\action|\state)^2}\cdot\weights[\param](\action|\state) 
+
\Ind_{\sdiff}(\fdiff) \weights[\param](\action|\state) \weights[\param](\fdiff|\state)
\frac{f_{\param}'''\left( \fdiff|\state \right)}{ f_{\param}''\left( \fdiff|\state \right)^2 }
\\
& +\Ind_{\fdiff}(\action)
\weights[\param](\action|\state) 
\weights[\param](\sdiff|\state)
\frac{f_{\param}'''\left( \action|\state \right) }{f_{\param}''\left( \action|\state \right)^2}
+ \Ind_{\sdiff}(\action)
\weights[\param](\action|\state)
\weights[\param](\fdiff|\state)
\frac{f_{\param}'''\left( \action|\state \right)}{ f_{\param}''\left( \action|\state \right)^2 }
\\
&- 
\weights[\param](\action|\state) \cdot
\weights[\param](\fdiff|\state)\cdot
\weights[\param](\sdiff|\state) \cdot\left[ \frac{f_{\param}'''(\action|\state)}{f_{\param}''(\action|\state)^2}+ \frac{f_{\param}'''(\fdiff|\state)}{f_{\param}''(\fdiff|\state)^2} +  \frac{f_{\param}'''(\sdiff|\state)}{f_{\param}''(\sdiff|\state)^2} \right] 
\\
&+ 
\weights[\param](\action|\state) \cdot
\weights[\param](\fdiff|\state) \cdot
\weights[\param](\sdiff|\state) \cdot \sum_{d \in \A} \weights[\param](d|\state) \cdot \frac{f_{\param}'''\left( d|\state \right)}{f_{\param}''\left( d|\state \right)^2}  \eqsp.
\end{align*}
\end{corollary}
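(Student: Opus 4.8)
The plan is to reduce everything to the already-established single-state (bandit) results, \Cref{lem:implicit_equation_appendix}, \Cref{lem:differentiability-mu-x}, and \Cref{lem:derivative_soft_argmax}, by exploiting that the $\softargmax$ parameterization \eqref{eq:definition_soft_f_argmax} decouples across states. Concretely, for each fixed $\state \in \S$, the block $\cpolicy{\param}(\cdot|\state) = \softargmax(\param(\state,\cdot), \refpolicy(\cdot|\state))$ is exactly the soft-$f$-argmax of the logit vector $\apparam = \param(\state,\cdot)$ against the reference $\refp = \refpolicy(\cdot|\state)$; in particular it depends on $\param$ only through the coordinate block $\param(\state,\cdot)$.

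The first step would be to verify that the hypotheses of the single-state lemmas are met with the identification $\mincoeffsoft = \underline{\refpolicy}$. Assumption \assumptionpref\ gives $\min_\action \refpolicy(\action|\state) > \underline{\refpolicy}$, so $\refp = \refpolicy(\cdot|\state)$ has full support above the required threshold, while Assumption \assumptionfref\ (i)--(ii) supplies strict convexity on $[0,1/\underline{\refpolicy}]$, thrice differentiability on the interior, and $\lim_{u\downarrow 0}f'(u)=-\infty$, which is precisely \assumptionfsoft. With these in place, \Cref{lem:implicit_equation_appendix} yields, for each state, a unique normalizer $\mu_\param(\state)$ together with the inverse representation \eqref{eq:defition_softargmaxpolicy_as_inverse}, and \Cref{lem:differentiability-mu-x} gives continuous differentiability of the map $\param(\state,\cdot)\mapsto\mu_\param(\state)$.

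Next, the cross-state statement $\partial\mu_\param(\state)/\partial\param(\state',\cdot)=0$ for $\state'\ne\state$ follows immediately from the separability observed above: $\mu_\param(\state)$ is the root of $F(\param(\state,\cdot),\mu)=0$, an equation involving only $\param(\state,\cdot)$, so it is constant in every other coordinate block. The intra-state derivative $\partial\mu_\param(\state)/\partial\param(\state,\cdot)=\weights[\param](\cdot|\state)$ is then obtained by transcribing \eqref{eq:expression_derivative_constraints} (the identity $\partial\mu_\apparam/\partial\apparam(\fdiff)=\weightsarg{\apparam}(\fdiff)$ proved inside \Cref{lem:derivative_soft_argmax}) under the dictionary $\weightsarg{\apparam}(\cdot)\leftrightarrow\weights[\param](\cdot|\state)$ and $\firstsumarg{\apparam}\leftrightarrow\firstsum[\param](\state)$ furnished by the state-indexed definitions \eqref{def:important_sums}--\eqref{eq:weights-param}.

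Finally, the first- and second-derivative formulas for $\cpolicy{\param}$ are a direct restatement of \Cref{lem:derivative_soft_argmax}: since $\param\mapsto\param(\state,\cdot)$ is a linear coordinate projection, composing it with the twice continuously differentiable map $\apparam\mapsto\softarg{\apparam}$ preserves twice continuous differentiability, and the chain rule merely attaches the state label to each of $f'_\apparam,f''_\apparam,f'''_\apparam,\weightsarg{\apparam},\firstsumarg{\apparam}$, turning them into the quantities \eqref{def:compact_notations_derivative_policy}--\eqref{eq:weights-param}. I do not expect a genuine obstacle here, since all analytic content is carried by the bandit lemmas; the only points demanding care are (a) confirming the assumption translation $\mincoeffsoft=\underline{\refpolicy}$ so that one uniform constant governs all states simultaneously, and (b) making the separability explicit (the vanishing cross-state derivatives), so that the per-state Jacobian and Hessian assemble into the full derivatives without spurious cross terms.
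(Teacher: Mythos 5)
Your proposal is correct and follows essentially the same route as the paper: the paper obtains this corollary precisely by applying the single-state results (\Cref{lem:implicit_equation_appendix}, \Cref{lem:differentiability-mu-x}, \Cref{lem:derivative_soft_argmax}) state-by-state, using that $\cpolicy{\param}(\cdot|\state)$ depends only on the block $\param(\state,\cdot)$ with reference $\refpolicy(\cdot|\state)$. Your added care about the identification $\mincoeffsoft = \underline{\refpolicy}$ and the explicit vanishing of cross-state derivatives is exactly the (implicit) content of the paper's argument, so nothing is missing.
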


\begin{lemma}
\label{lem:bound_l1_norm_policy_derivatives}
Assume that, for some $\underline{\refpolicy} > 0$, $f$ and $\refpolicy$ satisfy \assumptionfref\ and \assumptionpref, respectively. Then, it holds that
\begin{align*}
\sum_{(\action, \fdiff) \in \A^2} \left| \frac{\partial \cpolicy{\param}(\action|\state)}{\partial \param(\state, \fdiff)} \right| \leq 2 \firstsum[\param](\state) \eqsp, \quad \sum_{(\action, \fdiff, \sdiff )\in \A^3} \left| \frac{\partial^2 \cpolicy{\param}(\action|\state)}{\partial \param(\state, \fdiff) \partial \param(\state, \sdiff)} \right| \leq 8 \secregconstant \firstsum[\param](\state) \eqsp.
\end{align*}
\end{lemma}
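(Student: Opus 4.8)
The plan is to derive both bounds directly from the closed-form derivative expressions established in \Cref{lem:derivative_policy}, invoking only two elementary facts: that the weights $\weights[\param](\cdot|\state)$ form a probability distribution on $\A$, so that $\sum_{\action \in \A} \weights[\param](\action|\state) = 1$, and that \assumptionfref\,(iii) supplies the uniform bound $|f'''_{\param}(\action|\state)/f''_{\param}(\action|\state)^2| \leq \secregconstant$ for every $(\state,\action)$. With these in hand, each bound reduces to a triangle inequality followed by repeated use of the normalization identity, so the argument is elementary and self-contained.

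For the first-derivative bound, I would factor out $\firstsum[\param](\state)$ from the expression for $\partial \cpolicy{\param}(\action|\state)/\partial \param(\state,\fdiff)$ and apply the triangle inequality to the two summands $\Ind_{\fdiff}(\action)\weights[\param](\action|\state)$ and $\weights[\param](\action|\state)\weights[\param](\fdiff|\state)$. Summing the first over $(\action,\fdiff) \in \A^2$ collapses the indicator to $\sum_{\action \in \A}\weights[\param](\action|\state) = 1$, while summing the second factorizes as $\big(\sum_{\action}\weights[\param](\action|\state)\big)\big(\sum_{\fdiff}\weights[\param](\fdiff|\state)\big) = 1$. Adding these contributions produces the factor $2$ and hence the bound $2\firstsum[\param](\state)$.

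For the second-derivative bound, I would again pull out $\firstsum[\param](\state)$ and bound the triple sum of absolute values of the six terms in \Cref{lem:derivative_policy} separately. In the first four terms the Kronecker indicators collapse one or two summation indices; after summing the remaining weights to $1$, each such term contributes at most $\secregconstant$ once the ratio $|f'''_{\param}/f''^2_{\param}|$ is bounded by $\secregconstant$. The fifth term, which carries a sum of three ratios inside the absolute value, is split by the triangle inequality into three contributions, each bounded by $\secregconstant$ after summing the product of weights to $1$, giving $3\secregconstant$ in total. The sixth term factorizes cleanly: the inner average $\big|\sum_{d \in \A}\weights[\param](d|\state)\, f'''_{\param}(d|\state)/f''_{\param}(d|\state)^2\big| \leq \secregconstant$ multiplies a product of three weight-sums each equal to $1$, contributing $\secregconstant$. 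Summing $\secregconstant + \secregconstant + \secregconstant + \secregconstant + 3\secregconstant + \secregconstant = 8\secregconstant$ and restoring the factor $\firstsum[\param](\state)$ yields the claim.

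The computation is essentially bookkeeping, and the only step demanding care is the fifth term: one must apply the triangle inequality to the three distinct ratios before summing, and correctly identify which of the three weight factors is ``absorbed'' against each ratio so that the remaining two sum to $1$. Tracking exactly which index each indicator eliminates in the first four terms is the other minor source of error, but no analytic input beyond the two facts stated at the outset is required.
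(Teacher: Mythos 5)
Your proposal is correct and follows essentially the same route as the paper's proof: apply the closed-form derivative expressions from \Cref{lem:derivative_policy}, use the triangle inequality term by term, and conclude with the normalization $\sum_{\action}\weights[\param](\action|\state)=1$ together with the bound $|f'''_{\param}/(f''_{\param})^2|\leq\secregconstant$ from \assumptionfref. The only difference is bookkeeping order (the paper fixes $\action$ and groups the six terms into $4+4$ before summing over $\action$, whereas you sum all indices at once getting $1+1+1+1+3+1=8$), which yields the identical constants.
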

\begin{proof}
Using the expression of the derivative of the policy provided in \Cref{lem:derivative_policy}, we have by the triangle inequality
\begin{align*}
\sum_{\fdiff \in \A} \left| \frac{\partial \cpolicy{\param}(\action|\state)}{\partial \param(\state, \fdiff)} \right|
&=\sum_{\fdiff \in \A} \firstsum[\param](\state) \weights[\param](\action|\state) \left|  \Ind_{\fdiff}(\action)
- \weights[\param](\fdiff|\state)
\right| 
= 2 \firstsum[\param](\state) \weights[\param](\action|\state) (1 - \weights[\param](\action|\state)) \eqsp.
\end{align*}
where we used that
\begin{align}
\label{eq:normalisation_is_the_sum}
\sum_{\fdiff \in \A}  \left|  \Ind_{\fdiff}(\action)- \weights[\param](\fdiff|\state) \right|= 2 (1 - \weights[\param](\action|\state)) \eqsp.
\end{align}
Hence, we have
\begin{align*}
\sum_{(\action, \fdiff) \in \A^2} \left| \frac{\partial \cpolicy{\param}(\action|\state)}{\partial \param(\state, \fdiff)} \right| \leq 2 \firstsum[\param](\state) \eqsp.
\end{align*}
Fix $\action \in \A$. By using the expression of the second derivative of the policy provided in \Cref{lem:derivative_policy} combined with the triangle inequality and \eqref{eq:normalisation_is_the_sum}, we get
\begin{align*}
\sum_{(\fdiff, \sdiff )\in \A^2}\! \left| \frac{\partial^2 \cpolicy{\param}(\action|\state)}{\partial \param(\state, \fdiff) \partial \param(\state, \sdiff)} \right| \!\leq\! 4 \firstsum[\param](\state) \frac{f_{\param}'''(\action|\state)}{f_{\param}''(\action|\state)^2} \weights[\param](\action|\state) \!+\! 4 \firstsum[\param](\state)  \weights[\param](\action|\state)\!\sum_{\fdiff \in \A } \frac{f_{\param}'''(\fdiff|\state)}{f_{\param}''(\fdiff|\state)^2} \weights[\param](\fdiff|\state).
\end{align*}
Next combining \assumptionfref and \eqref{eq:normalisation_is_the_sum}, we obtain
\begin{align*}
\sum_{(\fdiff, \sdiff )\in \A^2} \left| \frac{\partial^2 \cpolicy{\param}(\action|\state)}{\partial \param(\state, \fdiff) \partial \param(\state, \sdiff)} \right| \leq 8 \firstsum[\param](\state) \weights[\param](\action|\state) \secregconstant
\eqsp.
\end{align*}
Finally, summing over the actions concludes the proof.
\end{proof}

\subsection{Smoothness of Objective}
\label{sec:app-smooth-objective}

Firstly, we recall that the regularized value satisfies the following fixed-point equation \cite{geist2019theory}:
\begin{align}\label{lem:bellman_equation}
\regvaluefunc[\policy](s)
&
=\sum_{a \in \A}\policy(a|s)\rewardMDP(s,a)-\temp\,\divergence[\policy(\cdot|s)][\refpolicy(\cdot|s)] +\discount\sum_{(a,s') \in \A \times \S}\policy(a|s) \kerMDP(s'|s,a)\,\regvaluefunc[\policy](s')\eqsp.
\end{align}
In order to prove the smoothness of the objective, we will prove that the all the second-order directional derivatives are bounded.
Denote $\param_{\alpha} = \param + \alpha u$ where $\alpha \in \rset$ and $u \in \logitspace$. By \Cref{lem:bellman_equation}, for any $\state \in \S$ it holds that
\begin{align}
\label{eq:principal_equation_regularised_value}
\regvaluefunc[\param_{\alpha}](\state) = \vecte{\state}^{\top} M(\alpha)  \regreward[\param_{\alpha}] \eqsp,
\end{align}
where $M(\alpha)$ is a matrix of $\rset^{\S \times \S}$ defined by
\begin{align*}
M(\alpha)  = \left( \Id - \discount \kerMDP[\param_{\alpha}]\right)^{-1} \eqsp,
\end{align*}
and where $\regreward[\param_{\alpha}]$, and $\kerMDP[\param_{\alpha}]$ are defined in \Cref{sec:backgound}.
Taking the derivative of \eqref{eq:principal_equation_regularised_value} with respect to $\alpha$ yields
\begin{align*}
\frac{\partial \regvaluefunc[\param_{\alpha}](\state)}{\partial \alpha} = \discount \vecte{\state}^{\top} M(\alpha) \frac{\partial \kerMDP[\param_{\alpha}]}{\partial \alpha} M(\alpha) \regreward[\param_{\alpha}] + \vecte{\state}^{\top} M(\alpha) \frac{\partial \regreward[\param_{\alpha}]}{\partial \alpha} \eqsp.
\end{align*}
Taking the derivative of the preceding equation with respect to $\alpha$ gives
\begin{align}
\nonumber
\frac{\partial^2 \regvaluefunc[\param_{\alpha}](\state)}{\partial \alpha^2} &= 2 \discount^2 \vecte{\state}^{\top} M(\alpha) \frac{\partial \kerMDP[\param_{\alpha}]}{\partial \alpha} M(\alpha)  \frac{\partial \kerMDP[\param_{\alpha}]}{\partial \alpha} M(\alpha) \regreward[\param_{\alpha}] 
+
\discount \vecte{\state}^{\top}  M(\alpha)  \frac{\partial^2 \kerMDP[\param_{\alpha}]}{\partial^2 \alpha} M(\alpha) \regreward[\param_{\alpha}] 
\\ \label{eq:expression_second_derivative_value_function}
&+
2\discount \vecte{\state}^{\top} M(\alpha)  \frac{\partial \kerMDP[\param_{\alpha}]}{\partial \alpha} M(\alpha) \frac{\partial \regreward[\param_{\alpha}]}{\partial \alpha} 
+
\vecte{\state}^{\top} M(\alpha) \frac{\partial^2 \regreward[\param_{\alpha}]}{\partial^2 \alpha} \eqsp.
\end{align}
In order to control the second-order directional derivative of the regularised value function, we establish first several properties of the quantities that appear in the preceding equality.
\begin{lemma}
\label{lem:derivative_kernel}
Assume that, for some $\underline{\refpolicy} > 0$, $f$ and $\refpolicy$ satisfy \assumptionfref\ and \assumptionpref, respectively. We have
\begin{align*}
\left\| \frac{\partial \kerMDP[\param_{\alpha}]}{\partial \alpha}\bigg|_{\alpha=0} \right\|_\infty  \leq  2 \max_{\state} \{\firstsum[\param](\state) \}\norm{u}[2]   \eqsp,
\end{align*}
Similarly, we have
\begin{align*}
\left\| \frac{\partial^2 \kerMDP[\param_{\alpha}]}{\partial^2 \alpha} \bigg|_{\alpha=0}   \right\|_\infty \leq   8 \secregconstant \max_{\state} \{\firstsum[\param](\state) \}
\norm{u}[2]^2 \eqsp. 
\end{align*}
\end{lemma}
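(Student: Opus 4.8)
The plan is to reduce both bounds to the $\ell_1$ estimates on the policy derivatives already established in \Cref{lem:bound_l1_norm_policy_derivatives}. Recall that the policy-induced kernel is the action-average $\kerMDP[\param_\alpha](\sstate\mid\state)=\sum_{\action\in\A}\kerMDP(\sstate\mid\state,\action)\,\cpolicy{\param_\alpha}(\action\mid\state)$, and that $\cpolicy{\param}(\action\mid\state)=\softargmax(\param(\state,\cdot),\refpolicy(\cdot\mid\state))$ depends only on the logits $\param(\state,\cdot)$ at state $\state$. Hence, writing $\param_\alpha(\state,\cdot)=\param(\state,\cdot)+\alpha\,u(\state,\cdot)$, the chain rule gives, at $\alpha=0$,
\[
\frac{\partial \cpolicy{\param_\alpha}(\action\mid\state)}{\partial\alpha}\Big|_{\alpha=0} = \sum_{\fdiff\in\A}\frac{\partial \cpolicy{\param}(\action\mid\state)}{\partial\param(\state,\fdiff)}\,u(\state,\fdiff),\quad \frac{\partial^2 \cpolicy{\param_\alpha}(\action\mid\state)}{\partial\alpha^2}\Big|_{\alpha=0} = \sum_{\fdiff,\sdiff\in\A}\frac{\partial^2 \cpolicy{\param}(\action\mid\state)}{\partial\param(\state,\fdiff)\partial\param(\state,\sdiff)}\,u(\state,\fdiff)\,u(\state,\sdiff).
\]
Differentiating $\kerMDP[\param_\alpha](\sstate\mid\state)$ under the finite action sum transfers these expressions to the kernel by linearity (the row of $\kerMDP[\param_\alpha]$ at state $\state$ involves only $\cpolicy{\param_\alpha}(\cdot\mid\state)$).

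The crucial step is the reduction from the matrix norm $\norm{\cdot}[\infty]$, which by \eqref{eq:norm-1-matrix} is the maximum row sum, to a pure sum over actions. Fixing a row $\state$, using $\kerMDP(\sstate\mid\state,\action)\ge 0$ to pull it out of the absolute value, then swapping the finite sums over $\sstate$ and $\action$ and invoking row-stochasticity $\sum_{\sstate}\kerMDP(\sstate\mid\state,\action)=1$, I obtain
\[
\sum_{\sstate}\Big|\frac{\partial \kerMDP[\param_\alpha](\sstate\mid\state)}{\partial\alpha}\Big| \le \sum_{\action\in\A}\Big|\frac{\partial \cpolicy{\param_\alpha}(\action\mid\state)}{\partial\alpha}\Big|\sum_{\sstate}\kerMDP(\sstate\mid\state,\action) = \sum_{\action\in\A}\Big|\frac{\partial \cpolicy{\param_\alpha}(\action\mid\state)}{\partial\alpha}\Big|,
\]
and identically with the second derivatives. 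This collapses both kernel bounds onto controlling $\sum_{\action}|\partial_\alpha\cpolicy{\param_\alpha}(\action\mid\state)|$ and $\sum_{\action}|\partial^2_\alpha\cpolicy{\param_\alpha}(\action\mid\state)|$, uniformly in $\state$.

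Finally I would substitute the chain-rule expressions above, push the absolute value inside the sums, and factor out the perturbation using $|u(\state,\cdot)|\le\norm{u}[\infty]\le\norm{u}[2]$, giving $\sum_{\action}|\partial_\alpha\cpolicy{\param_\alpha}(\action\mid\state)|\le\norm{u}[2]\sum_{(\action,\fdiff)\in\A^2}|\partial\cpolicy{\param}(\action\mid\state)/\partial\param(\state,\fdiff)|$ and, analogously, $\norm{u}[2]^2\sum_{(\action,\fdiff,\sdiff)\in\A^3}|\partial^2\cpolicy{\param}(\action\mid\state)/\partial\param(\state,\fdiff)\partial\param(\state,\sdiff)|$ for the second derivative. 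Applying the two $\ell_1$ bounds of \Cref{lem:bound_l1_norm_policy_derivatives}, namely $\le 2\firstsum[\param](\state)$ and $\le 8\secregconstant\firstsum[\param](\state)$ respectively, and then taking the maximum over $\state$, delivers exactly the claimed estimates $2\max_\state\{\firstsum[\param](\state)\}\norm{u}[2]$ and $8\secregconstant\max_\state\{\firstsum[\param](\state)\}\norm{u}[2]^2$.

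I expect no genuine obstacle here; the only point requiring care is the norm reduction, where one must correctly read $\norm{\cdot}[\infty]$ as the maximum row sum (rows indexed by the conditioning state $\state$, columns by $\sstate$) and use both non-negativity and row-stochasticity of $\kerMDP$ to eliminate the transition probabilities cleanly. Everything else is a mechanical combination of the chain rule in $\alpha$ with the pointwise derivative bounds already in hand.
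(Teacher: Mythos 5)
Your proposal is correct and follows essentially the same route as the paper: expand the kernel derivative via the chain rule on the state-local logits, reduce the matrix $\infty$-norm (max row sum) to $\sum_{\action}|\partial_\alpha^{k}\cpolicy{\param_\alpha}(\action|\state)|$ using non-negativity and row-stochasticity of $\kerMDP$, and then invoke the $\ell_1$ bounds of \Cref{lem:bound_l1_norm_policy_derivatives}. The only cosmetic difference is that you bound the inner products by Hölder ($\ell_1$--$\ell_\infty$) with $\norm{u}[\infty]\leq\norm{u}[2]$, whereas the paper uses Cauchy--Schwarz together with $\norm{\cdot}[2]\leq\norm{\cdot}[1]$ (and, for the second derivative, the bound of the spectral norm by the entrywise sum), both yielding identical constants.
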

\begin{proof}
\textbf{Bounding the first-order directional derivative.} 
The derivative with respect to $\alpha$ is
\begin{align*}
\left[\frac{\partial \kerMDP[\param_{\alpha}]}{\partial \alpha} \bigg|_{\alpha=0}\right]_{\state, \state'} = \sum_{ \action\in \A} \left[\frac{\partial \cpolicy{\param_{\alpha}}(\action|\state)}{\partial \alpha} \bigg|_{\alpha=0}\right] \kerMDP(\state'|\state,\action) \eqsp.
\end{align*}
Fix $\state\in \S$. Because $\cpolicy{\param_{\alpha}}(\action|\state)$ depends only on $\param(\state,\cdot)$, by the chain rule
\begin{align*}
\sum_{\action \in \A} \left| \frac{\partial \cpolicy{\param_{\alpha}}(\action|\state)}{\partial \alpha} \bigg|_{\alpha=0} \right| = \sum_{\action \in \A} \left| \pscal{\frac{\partial \cpolicy{\param}(\action|\state)}{\partial \param(\state, \cdot)}}{u(\state, \cdot)}\right| \leq \sum_{\action \in \A} \left\| \frac{\partial \cpolicy{\param}(\action|\state)}{\partial \param(\state, \cdot)} \right\|_{1} \norm{u(\state, \cdot)}[2] \eqsp,
\end{align*}
where in the last inequality we used Cauchy-Schwarz inequality and the fact that the $L_1$ norm dominates the $L_2$ norm. Now using \Cref{lem:bound_l1_norm_policy_derivatives}, we get that
\begin{align*}
\sum_{\action \in \A} \left\| \frac{\partial \cpolicy{\param}(\action|\state)}{\partial \param(\state, \cdot)} \right\|_{1} \norm{u(\state, \cdot)}[2] \leq 2 \norm{u}[2]   \firstsum[\param] (\state) \eqsp.
\end{align*}

\textbf{Bounding the second-order directional derivative.} 
Similarly, taking the second derivative with respect to $\alpha$ yields
\begin{align*}
\left[\frac{\partial^2 \kerMDP[\param_{\alpha}]}{\partial^2 \alpha} \bigg|_{\alpha=0}\right]_{\state, \state'} = \sum_{ \action\in \A} \left[\frac{\partial^2 \cpolicy{\param_{\alpha}}(\action|\state)}{\partial^2 \alpha} \bigg|_{\alpha=0}\right] \kerMDP(\state'|\state,\action) \eqsp.
\end{align*}
Fix $\state\in \S$. It holds that
\begin{align*}
\sum_{\action \in \A} \left| \frac{\partial^2 \cpolicy{\param_{\alpha}}(\action|\state)}{\partial^2 \alpha} \bigg|_{\alpha=0} \right| 
= 
 \sum_{\action \in \A} \left| \pscal{\frac{\partial^2 \cpolicy{\param}(\action|\state)}{\partial^2 \param(\state, \cdot)} u(\state, \cdot)}{u(\state, \cdot)} \right| \leq \sum_{\action \in \A} \left\| \frac{\partial^2 \cpolicy{\param}(\action|\state)}{\partial^2 \param(\state, \cdot)} \right\|_{2} \norm{u(\state, \cdot)}[2]^2  \eqsp, 
\end{align*}
where in the last inequality, we used the Cauchy-Schwarz inequality and the definition of the matrix operator norm. Additionally, using that for any matrix $A \in \rset^{d\times d}$, we have
\begin{align*}
\norm{A}[2] \leq \sum_{i=1}^{d} \sum_{j=1}^{d} |a_{i,j}|
\end{align*}
combined with \Cref{lem:bound_l1_norm_policy_derivatives}, we get that
\begin{align*}
\sum_{\action \in \A} \left| \frac{\partial^2 \cpolicy{\param_{\alpha}}(\action|\state)}{\partial^2 \alpha} \bigg|_{\alpha=0} \right| 
\leq 8  \norm{u(\state, \cdot)}[2]^2 \secregconstant \max_{\state \in \S}  \firstsum[\param](\state) \eqsp,
\end{align*}
which concludes the proof.
\end{proof}
\begin{lemma}
\label{lem:bound_reward}
Assume that, for some $\underline{\refpolicy} > 0$, $f$ and $\refpolicy$ satisfy \assumptionfref and \assumptionpref, respectively. Then, the regularized reward satisfies
\begin{align*}
\|\regreward[\param]\|_{\infty} \leq  1 + \temp \max_{\state \in \S} \divergence[\expandafter{\cpolicy{\param}}(\cdot|\state)][\refpolicy(\cdot|\state)] \eqsp.
\end{align*}
Additionally, we have that
\begin{align*}
\left\|\frac{\partial \regreward[\param_{\alpha}]}{\partial \alpha}\bigg|_{\alpha =0}\right\|_{\infty} \leq 2 \max_{\state \in \S} \left\{\firstsum[\param](\state) + \temp \secsum[\param](\state) \right\} \norm{u}[2] \eqsp,
\end{align*}
and that
\begin{align*}
\left\|\frac{\partial^2 \regreward[\param_{\alpha}] }{\partial^2 \alpha^2}\bigg|_{\alpha =0}\right\|_{\infty} \leq \max_{\state \in \S} \left\{4 (2 \secregconstant + \temp) \firstsum[\param](\state) + 8\temp \secregconstant  \secsum[\param](\state) \right\} \norm{u}[2]^2 \eqsp. 
\end{align*}
\end{lemma}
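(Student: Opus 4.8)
The plan is to differentiate the regularized reward, which by the Bellman equation \eqref{lem:bellman_equation} reads $\regreward[\param](s) = \sum_{a} \cpolicy{\param}(a|s)\rewardMDP(s,a) - \temp\divergence[\cpolicy{\param}(\cdot|s)][\refpolicy(\cdot|s)]$, and to treat the reward term and the $f$-divergence term separately in each of the three claims, bounding each piece through the policy-derivative estimates of \Cref{lem:bound_l1_norm_policy_derivatives} and \Cref{lem:bound_important_quantities}. For the first claim this is immediate: since $\rewardMDP \in [0,1]$ and $\cpolicy{\param}(\cdot|s)$ is a probability vector, the reward term lies in $[0,1]$, and since any $f$-divergence is non-negative, $|\regreward[\param](s)| \leq 1 + \temp\divergence[\cpolicy{\param}(\cdot|s)][\refpolicy(\cdot|s)]$; taking the max over $s$ gives the bound.

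For the first directional derivative along $\param_{\alpha} = \param + \alpha u$, I would write $\partial_\alpha\regreward[\param_\alpha](s) = \sum_a \rewardMDP(s,a)\,\partial_\alpha\cpolicy{\param_\alpha}(a|s) - \temp\,\partial_\alpha\divergence[\cpolicy{\param_\alpha}(\cdot|s)][\refpolicy(\cdot|s)]$. The reward piece is bounded by $\sum_a|\partial_\alpha\cpolicy{\param_\alpha}(a|s)|$, which by the Cauchy--Schwarz argument already used in the proof of \Cref{lem:derivative_kernel} combined with \Cref{lem:bound_l1_norm_policy_derivatives} is at most $2\firstsum[\param](s)\norm{u}[2]$. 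For the divergence, differentiating $\divergence[\cpolicy{\param_\alpha}(\cdot|s)][\refpolicy(\cdot|s)] = \sum_a \refpolicy(a|s)f(\cpolicy{\param_\alpha}(a|s)/\refpolicy(a|s))$ gives $\partial_\alpha\divergence = \sum_a f'_{\param_\alpha}(a|s)\,\partial_\alpha\cpolicy{\param_\alpha}(a|s)$. Substituting the first-derivative formula of \Cref{lem:derivative_policy} together with the key identity $\firstsum[\param](s)\weights[\param](a|s) = \refpolicy(a|s)/f''_{\param}(a|s)$ from \eqref{eq:weights-param} yields, at $\alpha=0$, $\partial_\alpha\divergence = \sum_a \frac{\refpolicy(a|s)f'_{\param}(a|s)}{f''_{\param}(a|s)}(u(s,a) - \bar u)$ with $\bar u \eqdef \sum_b \weights[\param](b|s)u(s,b)$. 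Since $|u(s,a) - \bar u| \leq 2\norm{u}[2]$, this is at most $2\norm{u}[2]\,\secsum[\param](s)$ in absolute value, and combining the two pieces gives the second claim.

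For the second directional derivative, the reward piece is handled exactly as above: $\sum_a|\partial_\alpha^2\cpolicy{\param_\alpha}(a|s)| \leq 8\secregconstant\firstsum[\param](s)\norm{u}[2]^2$ by \Cref{lem:bound_l1_norm_policy_derivatives}. The real work is $\partial_\alpha^2\divergence$, which by the product rule (differentiating the $f'_{\param_\alpha}$ factor produces an $f''_{\param_\alpha}$) splits as $\sum_a \frac{f''_{\param}(a|s)}{\refpolicy(a|s)}(\partial_\alpha\cpolicy{\param_\alpha}(a|s))^2 + \sum_a f'_{\param}(a|s)\,\partial_\alpha^2\cpolicy{\param_\alpha}(a|s)$. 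The first (curvature) sum is a non-negative quadratic form; substituting $\partial_\alpha\cpolicy{\param_\alpha}(a|s) = \frac{\refpolicy(a|s)}{f''_{\param}(a|s)}(u(s,a)-\bar u)$ collapses it to $\sum_a \frac{\refpolicy(a|s)}{f''_{\param}(a|s)}(u(s,a)-\bar u)^2 \leq 4\norm{u}[2]^2\firstsum[\param](s)$. For the second sum I would bound $|\partial_\alpha^2\cpolicy{\param_\alpha}(a|s)|$ by the $\ell_1$-norm of the per-action policy Hessian times $\norm{u(s,\cdot)}[2]^2$, which by the intermediate estimate in the proof of \Cref{lem:bound_l1_norm_policy_derivatives} is at most $8\secregconstant\firstsum[\param](s)\weights[\param](a|s)\norm{u}[2]^2$; multiplying by $|f'_{\param}(a|s)|$, summing over $a$, and again invoking $\firstsum[\param](s)\weights[\param](a|s) = \refpolicy(a|s)/f''_{\param}(a|s)$ produces $8\secregconstant\secsum[\param](s)\norm{u}[2]^2$. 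Assembling the reward contribution ($8\secregconstant\firstsum$), the curvature term ($4\temp\firstsum$), and the last term ($8\temp\secregconstant\secsum$) gives exactly the coefficient $4(2\secregconstant+\temp)\firstsum[\param](s) + 8\temp\secregconstant\secsum[\param](s)$, and taking the max over $s$ concludes.

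The main obstacle is the second-derivative-of-the-divergence computation: one must apply the product rule correctly, recognize that the curvature sum is a non-negative quadratic form that telescopes through the identity $\firstsum[\param](s)\weights[\param](a|s) = \refpolicy(a|s)/f''_{\param}(a|s)$, and carry out the bookkeeping that folds the \emph{per-action} Hessian bound $\sum_{b,c}|\partial^2\cpolicy{\param}(a|s)/\partial\param(s,b)\partial\param(s,c)| \leq 8\secregconstant\firstsum[\param](s)\weights[\param](a|s)$ (not the version summed over $a$) weighted by $|f'_{\param}(a|s)|$ into the quantity $\secsum[\param](s)$. The repeated reduction $|u(s,a) - \bar u| \leq 2\norm{u}[2]$ and the constant tracking are routine, but need care to reproduce the stated coefficients.
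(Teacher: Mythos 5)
Your proof is correct and reproduces exactly the stated constants, but it is organized differently from the paper's. The paper bounds the entrywise $\ell_1$ norm (over $\fdiff$, resp.\ over $(\fdiff,\sdiff)$) of the partial derivatives of $\regreward[\param](\state)$ by brute force: it substitutes the policy-derivative formulas of \Cref{lem:derivative_policy} into every term, which for the second derivative produces a half-page explicit expansion, and then applies the triangle inequality term by term. You never form that expansion: you split $\regreward[\param]$ into its reward part and its divergence part, dispatch the reward part with the $\ell_1$ estimates of \Cref{lem:bound_l1_norm_policy_derivatives} (mirroring the paper's treatment of the kernel in \Cref{lem:derivative_kernel}), and differentiate the divergence as a composition, so that its second directional derivative has only two pieces — a curvature term quadratic in $\partial_\alpha \cpolicy{\param_\alpha}(\action|\state)$ weighted by $f''_{\param}(\action|\state)$, and a linear term in $\partial^2_\alpha \cpolicy{\param_\alpha}(\action|\state)$ weighted by $f'_{\param}(\action|\state)$ — each of which collapses through the identity $\firstsum[\param](\state)\,\weights[\param](\action|\state)=\refpolicy(\action|\state)/f''_{\param}(\action|\state)$ into $\firstsum[\param](\state)$ and $\secsum[\param](\state)$ respectively. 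The two routes rest on the same ingredients and assemble the same coefficients ($8\secregconstant\firstsum[\param] + 4\temp\firstsum[\param] + 8\temp\secregconstant\secsum[\param]$, and $2\firstsum[\param]+2\temp\secsum[\param]$ for the first derivative); yours trades the paper's bookkeeping for a structural argument that makes the origin of each constant transparent. The one point to make explicit if your proof were to replace the paper's: the per-action Hessian estimate $\sum_{\fdiff,\sdiff}\bigl|\partial^2\cpolicy{\param}(\action|\state)/\partial\param(\state,\fdiff)\partial\param(\state,\sdiff)\bigr| \le 8\secregconstant\firstsum[\param](\state)\weights[\param](\action|\state)$ that you rely on is an intermediate display inside the proof of \Cref{lem:bound_l1_norm_policy_derivatives}, not part of its statement, so it would need to be quoted or restated.
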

\begin{proof}
The bound on $\|\regreward[\param]\|_{\infty}$ is immediate.

\textbf{Bounding the first derivative.} It holds that
\begin{align*}
\left\|\frac{\partial \regreward[\param_{\alpha}]}{\partial \alpha}\bigg|_{\alpha =0}\right\|_{\infty} = \max_{\state \in \S} \left|\pscal{\frac{\partial \regreward[\param](\state)}{\partial \param}}{u} \right| = \max_{\state \in \S} \left|\pscal{\frac{\partial \regreward[\param](\state)}{\partial \param(\state, \cdot)}}{u(\state, \cdot)} \right| \leq \max_{\state \in \S} \norm{\frac{\partial \regreward[\param](\state)}{\partial \param(\state, \cdot)}}[1] \norm{u}[\infty] \eqsp.
\end{align*}
Computing the derivative of $\regreward[\param](\state)$ with respect to $\param(\state, \fdiff)$ yields
\begin{align*}
\frac{\partial \regreward[\param](\state)}{\partial \param(\state, \fdiff)}  =  \sum_{\action \in \A} \frac{\partial \cpolicy{\param}(\action|\state)}{\partial \param(\state, \fdiff)}  \rewardMDP(\state,\action) - \temp \frac{\partial \cpolicy{\param}(\action|\state)}{\partial \param(\state, \fdiff)} f'\left(\frac{\cpolicy{\param}(\action|\state)}{\refpolicy(\action|\state)}\right) \eqsp. 
\end{align*}
Plugging in the expression of the derivative of the policy of \Cref{lem:derivative_policy} in the preceding identity yields
\begin{align}
\frac{\partial \Tilde{\rewardMDP}_{\param}(\state)}{\partial \param(\state, \fdiff)}  
&= \nonumber
\frac{\refpolicy(\fdiff|\state)}{ f_{\param}''(\fdiff|\state)} \rewardMDP(\state, \fdiff)
-
\temp \frac{\refpolicy(\fdiff|\state)}{ f_{\param}''(\fdiff|\state)} f_{\param}'(\fdiff|\state)
-
\sum_{\action \in \A} \frac{\refpolicy(\action|\state)\refpolicy(\fdiff|\state)}{f_{\param}''(\action|\state) f_{\param}''(\fdiff|\state)}
\cdot
\frac{\rewardMDP(\state, \action)}{\firstsum[\param](\state)}
\\ \label{eq:first_derivative_reward_tilde}
&+ \temp \frac{1}{\firstsum[\param](\state)} \sum_{\action \in \A}
\frac{\refpolicy(\action|\state)\refpolicy(\fdiff|\state)}{f_{\param}''(\action|\state) f_{\param}''(\fdiff|\state)} f_{\param}'(\action|\state) \eqsp.
\end{align}
Taking the absolute value, applying the triangle inequality, and using that the rewards are bounded by $1$ gives
\begin{align*}
\sum_{\fdiff \in \A} \left| \frac{\partial \regreward[\param](\state)}{\partial \param(\state, \fdiff)}  \right| \leq 2\firstsum[\param](\state) + 2 \temp \secsum[\param](\state) \eqsp. 
\end{align*}
\paragraph{Bounding the second derivative.} 
It holds that
\begin{align*}
\left\|\frac{\partial^2 \regreward[\param_{\alpha}] }{\partial \alpha^2}\bigg|_{\alpha =0}\right\|_{\infty}  &= \max_{\state \in \S} \left| \frac{\partial^2 \regreward[\param_{\alpha}](\state) }{\partial \alpha^2}\bigg|_{\alpha =0} \right| \leq \max_{\state \in \S} \norm{\frac{\partial^2 \regreward[\param](\state)}{\partial \param(\state, \cdot)^2}}[2] \norm{u}[2]^2  \eqsp, 
\end{align*}
where in the last inequality, we used the Cauchy-Schwarz inequality and the definition of the matrix operator norm.. We now compute the second derivative of $\regreward[\param](\state)$. Starting from \eqref{eq:first_derivative_reward_tilde}, we get
\begin{align}
&\frac{\partial^2 \regreward[\param](\state)}{\partial \param(\state, \fdiff) \partial \param(\state, \sdiff)}  
= \nonumber
-\frac{\refpolicy(\fdiff|\state)}{ f_{\param}''(\fdiff|\state)^2} \left[
\Ind_{\sdiff}(\fdiff)\frac{f_{\param}'''\left( \fdiff|\state \right)}{ f_{\param}''\left( \fdiff|\state \right) }
-
\frac{ \refpolicy(\sdiff|\state) f_{\param}'''\left( \fdiff|\state \right) }{f_{\param}''\left( \fdiff|\state \right)  f_{\param}''\left( \sdiff|\state \right) }
\cdot
\frac{1}{\firstsum[\param](\state)} \right]\rewardMDP(\state, \fdiff)
\\ \nonumber
& +\temp 
\frac{\refpolicy(\fdiff|\state)f_{\param}'(\fdiff|\state)}{ f_{\param}''(\fdiff|\state)^2} 
\left[ \Ind_{\sdiff}(\fdiff)\frac{f_{\param}'''\left( \fdiff|\state \right)}{ f_{\param}''\left( \fdiff|\state \right) }
-
\frac{ \refpolicy(\sdiff|\state) f_{\param}'''\left( \fdiff|\state \right) }{f_{\param}''\left( \fdiff|\state \right)  f_{\param}''\left( \sdiff|\state \right) }
\cdot
\frac{1}{\firstsum[\param](\state)} \right]
\\ \nonumber
&-
 \temp  \left[ \Ind_{\fdiff}(\sdiff)\frac{\refpolicy(\fdiff|\state)}{ f_{\param}''(\fdiff|\state)}
-
\frac{\refpolicy(\fdiff|\state)\refpolicy(\sdiff|\state)}{f_{\param}''(\fdiff|\state) f_{\param}''(\sdiff|\state)}
\cdot
\frac{1}{\firstsum[\param](\state)} \right]
\\ \nonumber
& +
\sum_{\action \in \A} \frac{\refpolicy(\action|\state)\refpolicy(\fdiff|\state)}{f_{\param}''(\action|\state)^2 f_{\param}''(\fdiff|\state)}
\cdot
\frac{\rewardMDP(\state, \action)}{\firstsum[\param](\state)} \left[ \Ind_{\sdiff}(\action)\frac{f_{\param}'''\left( \action|\state \right)}{ f_{\param}''\left( \action|\state \right) }
-
\frac{ \refpolicy(\sdiff|\state) f_{\param}'''\left( \action|\state \right) }{f_{\param}''\left( \action|\state \right)  f_{\param}''\left( \sdiff|\state \right) }
\cdot
\frac{1}{\firstsum[\param](\state)} \right]
\\ \nonumber
&  +
\sum_{\action \in \A} \frac{\refpolicy(\action|\state)\refpolicy(\fdiff|\state)}{f_{\param}''(\action|\state) f_{\param}''(\fdiff|\state)^2}
\cdot
\frac{\rewardMDP(\state, \action)}{\firstsum[\param](\state)} \left[ \Ind_{\sdiff}(\fdiff)\frac{f_{\param}'''\left( \fdiff|\state \right)}{ f_{\param}''\left( \fdiff|\state \right) }
-
\frac{ \refpolicy(\sdiff|\state) f_{\param}'''\left( \fdiff|\state \right) }{f_{\param}''\left( \fdiff|\state \right)  f_{\param}''\left( \sdiff|\state \right) }
\cdot
\frac{1}{\firstsum[\param](\state)} \right]
\\ \nonumber
& +
\sum_{\action \in \A} \frac{\refpolicy(\action|\state)\refpolicy(\fdiff|\state)}{f_{\param}''(\action|\state) f_{\param}''(\fdiff|\state)}
\frac{\rewardMDP(\state, \action)}{\firstsum[\param](\state)^2}\! \left[\frac{-\refpolicy(\sdiff|\state) f_{\param}'''(\sdiff|\state)}{ f_{\param}''(\sdiff|\state)^3} 
\!+\!
\sum_{d\in \A} \frac{ \refpolicy(\sdiff|\state) \refpolicy(d |\state) f_{\param}'''\left( d|\state \right) }{\firstsum[\param](\state)f_{\param}''\left( d|\state \right)^3  f_{\param}''\left( \sdiff|\state \right) } \!\right]
\\ \nonumber
& \!- \!\temp \! \sum_{\action \in \A}
\frac{\refpolicy(\action|\state)\refpolicy(\fdiff|\state)}{f_{\param}''(\action|\state) f_{\param}''(\fdiff|\state)} \frac{f_{\param}'(\fdiff|\state)}{\firstsum[\param](\state)^2} \!\left[\!  \frac{-\refpolicy(\sdiff|\state) f_{\param}'''(\sdiff|\state)}{ f_{\param}''(\sdiff|\state)^3} 
\!+\!
\sum_{d\in \A} \frac{ \refpolicy(\sdiff|\state) \refpolicy(d | \state) f_{\param}'''\left( d|\state \right) }{\firstsum[\param](\state)f_{\param}''\left(d|\state \right)^3  f_{\param}''\left( \sdiff|\state \right) }\!\right]
\\ \nonumber
& -  \temp   \sum_{\action \in \A}
\frac{\refpolicy(\action|\state)\refpolicy(\fdiff|\state)}{f_{\param}''(\action|\state)^2 f_{\param}''(\fdiff|\state)}  \frac{f_{\param}'(\fdiff|\state)}{\firstsum[\param](\state)}\left[ \Ind_{\sdiff}(\action)\frac{f_{\param}'''\left( \action|\state \right)}{ f_{\param}''\left( \action|\state \right) }
-
\frac{ \refpolicy(\sdiff|\state) f_{\param}'''\left( \action|\state \right) }{f_{\param}''\left( \action|\state \right)  f_{\param}''\left( \sdiff|\state \right) }
\cdot
\frac{1}{\firstsum[\param](\state)} \right] \eqsp.
\\ \nonumber
& - \temp  \sum_{\action \in \A}
\frac{\refpolicy(\action|\state)\refpolicy(\fdiff|\state)}{f_{\param}''(\action|\state) f_{\param}''(\fdiff|\state)^2} \frac{f_{\param}'(\fdiff|\state)}{\firstsum[\param](\state)} \left[ \Ind_{\sdiff}(\fdiff)\frac{f_{\param}'''\left( \fdiff|\state \right)}{ f_{\param}''\left( \fdiff|\state \right) }
-
\frac{ \refpolicy(\sdiff|\state) f_{\param}'''\left( \fdiff|\state \right) }{f_{\param}''\left( \fdiff|\state \right)  f_{\param}''\left( \sdiff|\state \right) }
\cdot
\frac{1}{\firstsum[\param](\state)} \right]\eqsp.
\\ \nonumber
& + \temp \sum_{\action \in \A}
\frac{\refpolicy(\action|\state)}{f_{\param}''(\action|\state)} \frac{1}{\firstsum[\param](\state)}  \left[ \Ind_{\fdiff}(\sdiff)\frac{\refpolicy(\fdiff|\state)}{ f_{\param}''(\fdiff|\state)}
-
\frac{\refpolicy(\fdiff|\state)\refpolicy(\sdiff|\state)}{f_{\param}''(\fdiff|\state) f_{\param}''(\sdiff|\state)}
\cdot
\frac{1}{\firstsum[\param](\state)} \right]\eqsp.
\end{align}
Taking the absolute value, applying the triangle inequality, using that under  \assumptionfref, for all $x\in \rset{+}$, we have $|f'''(x)/f''(x)^2| \leq \secregconstant$, and using that the rewards are bounded by $1$ gives
\begin{align*}
\sum_{\fdiff\in \A} \sum_{\sdiff \in \A} \left| \frac{\partial^2 \regreward[\param_{0}](\state)}{\partial \param(\state, \fdiff) \partial \param(\state, \sdiff)} \right| \leq 8 \secregconstant \firstsum[\param](\state) + 8 \temp \secregconstant  \secsum[\param](\state) +  4\temp \firstsum[\param](\state) \eqsp,
\end{align*}
which concludes the proof.
\end{proof}

\begin{lemma}
\label{lem:local_smoothness_objective_appendix}
Assume that, for some $\underline{\refpolicy}>0$, $f$ and $\refpolicy$ satisfy \assumptionfref\ and \assumptionpref\, respectively.
Then, for any $\param \in \logitspace$ and $u \in \logitspace$,
 \begin{align*}
\left| u^{\top} \frac{\partial^2 \regvaluefunc[\param](\state)}{\partial\param^2} u\right| \leq \left(\sum_{i=1}^{3} \frac{L^{(i)}_{\temp, f}(\theta)}{(1-\discount)^i} \right) \norm{u}[2]^2
\eqsp,
 \end{align*}
where for $i\in \{1,2,3\}, L^{(i)}_{\temp, f}(\theta)$, are defined as
\begin{align*}
L^{(1)}_{\temp, f}(\theta)&\eqdef 4 (2\secregconstant + \temp) \norm{\firstsum[\param]}[\infty] + 8\temp \secregconstant  \norm{\secsum[\param]}[\infty] \eqsp, \\
L^{(2)}_{\temp, f}(\theta)&\eqdef 
8 \discount \norm{\firstsum[\param]}[\infty]
\left( \secregconstant\{1 + \temp \max_{s \in \S}  \divergence[\expandafter{\cpolicy{\param}}(\cdot|\state)][\refpolicy(\cdot|\state)] \}+ \norm{\firstsum[\param]}[\infty] +  \temp \norm{\secsum[\param]}[\infty] \right) \eqsp, \\
L^{(3)}_{\temp, f}(\theta)&\eqdef 8 \discount^2 \norm{\firstsum[\param]}[\infty]^2 ( 1 + \temp \max_{s \in \S} \divergence[\expandafter{\cpolicy{\param}}(\cdot|\state)][\refpolicy(\cdot|\state)]  \eqsp.
 \end{align*}
\end{lemma}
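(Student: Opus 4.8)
The plan is to recognize that the quadratic form $u^{\top} \frac{\partial^2 \regvaluefunc[\param](\state)}{\partial \param^2} u$ equals the second directional derivative $\frac{\partial^2 \regvaluefunc[\param_\alpha](\state)}{\partial \alpha^2}$ evaluated at $\alpha = 0$, where $\param_\alpha = \param + \alpha u$. This reduces the problem to bounding the four summands of the expansion \eqref{eq:expression_second_derivative_value_function} at $\alpha = 0$, each of which is a product of the resolvent $M(0) = (\Id - \discount \kerMDP[\param])^{-1}$ with derivatives of the induced kernel $\kerMDP[\param_\alpha]$ and of the regularized reward $\regreward[\param_\alpha]$, contracted on the left by the coordinate selector $\vecte{\state}^{\top}$.

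First I would control the resolvent in the max-row-sum norm. Since $\kerMDP[\param]$ is row-stochastic, $\norm{\kerMDP[\param]}[\infty] = 1$, so the Neumann series yields $\norm{M(0)}[\infty] \leq \sum_{k \geq 0} \discount^k = 1/(1-\discount)$. I would also use that $\vecte{\state}^{\top} v$ extracts one coordinate of $v$, whence $|\vecte{\state}^{\top} v| \leq \norm{v}[\infty]$, together with the submultiplicativity $\norm{AB}[\infty] \leq \norm{A}[\infty]\norm{B}[\infty]$ and $\norm{Av}[\infty] \leq \norm{A}[\infty] \norm{v}[\infty]$ recalled in the Notations section.

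With these facts, each summand of \eqref{eq:expression_second_derivative_value_function} is bounded by the product of the $\norm{\cdot}[\infty]$-norms of its factors, and the number of resolvent factors it contains fixes the power of $(1-\discount)^{-1}$ it produces. Concretely, the fourth term $\vecte{\state}^{\top} M(0) \frac{\partial^2 \regreward[\param_\alpha]}{\partial \alpha^2}$ contains a single $M(0)$ and so scales as $(1-\discount)^{-1}$; the two mixed terms contain two copies of $M(0)$ and scale as $(1-\discount)^{-2}$; and the first term contains three copies and scales as $(1-\discount)^{-3}$. Substituting the kernel-derivative bounds from \Cref{lem:derivative_kernel} and the reward bounds from \Cref{lem:bound_reward} then extracts a common factor $\norm{u}[2]^2$ from every term, since each derivative of $\kerMDP[\param_\alpha]$ or $\regreward[\param_\alpha]$ of order $i$ in $\alpha$ carries $\norm{u}[2]^i$.

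Finally I would group the four bounds by their power of $(1-\discount)^{-1}$: the fourth term reproduces $L^{(1)}_{\temp, f}(\theta)/(1-\discount)$ (using $\max_{\state} \firstsum[\param](\state) = \norm{\firstsum[\param]}[\infty]$ and likewise for $\secsum[\param]$); the second and third terms combine into $L^{(2)}_{\temp, f}(\theta)/(1-\discount)^2$, one contributing the $\secregconstant(1 + \temp \max_{\state}\divergence[\cpolicy{\param}(\cdot|\state)][\refpolicy(\cdot|\state)])$ piece and the other the $\norm{\firstsum[\param]}[\infty] + \temp \norm{\secsum[\param]}[\infty]$ piece; and the first term reproduces $L^{(3)}_{\temp, f}(\theta)/(1-\discount)^3$. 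A triangle inequality over the four summands then gives the claimed bound. There is no conceptual difficulty here; the only real work is the constant-matching bookkeeping needed to confirm that the numerical prefactors $2, 8, \dots$ coming out of \Cref{lem:derivative_kernel} and \Cref{lem:bound_reward} line up exactly with the definitions of the $L^{(i)}_{\temp, f}(\theta)$.
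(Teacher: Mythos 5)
Your proposal is correct and follows essentially the same route as the paper's proof: the identification of the quadratic form with the second directional derivative at $\alpha=0$, the four-term decomposition \eqref{eq:expression_second_derivative_value_function}, the $\|\cdot\|_\infty$ resolvent bound $\|M(0)x\|_\infty \le \|x\|_\infty/(1-\discount)$, the substitution of \Cref{lem:derivative_kernel} and \Cref{lem:bound_reward}, and the grouping of terms by powers of $(1-\discount)^{-1}$ (term D giving $L^{(1)}_{\temp,f}$, terms B and C giving $L^{(2)}_{\temp,f}$, term A giving $L^{(3)}_{\temp,f}$) all match the paper exactly. The only content you defer, the constant bookkeeping, is indeed the only remaining work, and the constants do line up as you predict.
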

\begin{proof}
By construction, we get
\begin{align*}
\left| u^{\top} \frac{\partial^2 \regvaluefunc[\param](\state)}{\partial\param^2} u\right| = \left|\frac{\partial^2 \regvaluefunc[\param_{\alpha}](\state)}{\partial \alpha^2} \bigg|_{\alpha=0} \right| \eqsp.
\end{align*}
Using   \eqref{eq:expression_second_derivative_value_function}, we get that
\begin{align*}
\left|\frac{\partial^2 \regvaluefunc[\param_{\alpha}](\state)}{\partial \alpha^2} \bigg|_{\alpha=0} \right| &\leq  \underbrace{\left|2 \discount^2 \vecte{\state}^{\top} M(0) \frac{\partial \kerMDP[\param_{\alpha}]}{\partial \alpha}\bigg|_{\alpha=0} M(0)  \frac{\partial \kerMDP[\param_{\alpha}]}{\partial \alpha} \bigg|_{\alpha=0} M(0) \regreward[\param] \right|}_{\term{A}} 
\\
&+
\underbrace{\left|\discount \vecte{\state}^{\top}  M(0)  \frac{\partial^2 \kerMDP[\param_{\alpha}]}{\partial^2 \alpha}\bigg|_{\alpha=0} M(0) \regreward[\param]\right|}_{\term{B}} +
\underbrace{\left|2\discount \vecte{\state}^{\top} M(0)  \frac{\partial \kerMDP[\param_{\alpha}]}{\partial \alpha} \bigg|_{\alpha=0} M(0) \frac{\partial \regreward[\param_{\alpha}]}{\partial \alpha} \bigg|_{\alpha=0}\right|}_{\term{C}}  
\\
&+
\underbrace{\left|\vecte{\state}^{\top} M(0) \frac{\partial^2 \regreward[\param_{\alpha}]}{\partial^2 \alpha} \bigg|_{\alpha=0}\right|}_{\term{D}} \eqsp.
\end{align*}
We now bound each of these terms separately
\paragraph{Bounding $\term{A}$.} First note that, for any vector $x\in \rset^{\S}$ and $\alpha \in \rset$, we have
\begin{align}
\label{eq:bound_m_infty}
\|M(\alpha) x\|_{\infty} \leq \frac{1}{1-\discount} \|x\|_{\infty} \eqsp,
\end{align}
This yields 
\begin{align*}
\term{A} 
\leq 2 \discount^2 \left\|M(0) \frac{\partial \kerMDP[\param_{\alpha}]}{\partial \alpha}\bigg|_{\alpha=0} M(0)  \frac{\partial \kerMDP[\param_{\alpha}]}{\partial \alpha} \bigg|_{\alpha=0} M(0) \regreward[\param] \right\|_{\infty}
 \leq \frac{2 \discount^2}{(1- \discount)^3} \left\|\frac{\partial \kerMDP[\param_{\alpha}]}{\partial \alpha}\bigg|_{\alpha=0} \right\|_\infty^2  \| \regreward[\param] \|_{\infty}
\end{align*}
By using again \eqref{eq:bound_m_infty}, \Cref{lem:derivative_kernel}, and \Cref{lem:bound_reward} we get
\begin{align*}
\term{A} \leq \frac{8 \discount^2 \max_{\state} \{\firstsum[\param](\state) \}^2\norm{u}[2]^2}{(1- \discount)^3} ( 1 + \temp \max_{s \in \S} \divergence[\expandafter{\cpolicy{\param}}(\cdot|\state)][\refpolicy(\cdot|\state)])\eqsp.
\end{align*}
\paragraph{Bounding $\term{B}$.} Using \eqref{eq:bound_m_infty}, \Cref{lem:derivative_kernel}, and \Cref{lem:bound_reward} we get
\begin{align*}
\term{B} &\leq \frac{\discount}{(1- \discount)^2} \left\| \frac{\partial^2 \kerMDP[\param_{\alpha}]}{\partial^2 \alpha}\bigg|_{\alpha=0} \right\|_\infty  \| \regreward[\param] \|_{\infty} 
 \\
 &\leq \frac{8 \discount}{(1- \discount)^2} \secregconstant \max_{\state} \{\firstsum[\param](\state) \}
\norm{u}[2]^2 \{1 + \temp \max_{s \in \S}  \divergence[\expandafter{\cpolicy{\param}}(\cdot|\state)][\refpolicy(\cdot|\state)] \}\eqsp. 
\end{align*}
\paragraph{Bounding $\term{C}$.} Similarly, using \eqref{eq:bound_m_infty}, \Cref{lem:derivative_kernel}, and \Cref{lem:bound_reward} we get
\begin{align*}
\term{C}  &\leq \frac{8\discount}{(1-\discount)^2} \max_{\state \in \S} \{\firstsum[\param](\state) \} \max_{\state \in \S} \left\{\firstsum[\param](\state) +  \temp \secsum[\param](\state) \right\} \norm{u}[2]^2 \eqsp. 
\end{align*}
\paragraph{Bounding $\term{D}$.} Using \eqref{eq:bound_m_infty}, \Cref{lem:derivative_kernel}, and \Cref{lem:bound_reward} we get
\begin{align*}
\term{D} \leq \frac{1}{1- \discount}\left\| \frac{\partial^2 \regreward[\param_{\alpha}]}{\partial^2 \alpha} \bigg|_{\alpha=0} \right\|_{\infty} \leq \frac{1}{1-\discount}\max_{\state \in \S} \left\{4 (2\secregconstant + \temp) \firstsum[\param](\state) + 8\temp \secregconstant  \secsum[\param](\state) \right\} \norm{u}[2]^2 \eqsp.
\end{align*}
The proof is completed by collecting these upper bounds.
\end{proof} 
\begin{theorem}
\label{thm:global_smoothness_objective_main}
Assume that, for some $\underline{\refpolicy}>0$, $f$ and $\refpolicy$ satisfy \assumptionfref\ and \assumptionpref\, respectively. Then for any $\param, \param' \in \logitspace$, it holds that
\begin{align*}
\left|\regvaluefunc[\param'](\initdist) - \regvaluefunc[\param](\initdist) - \pscal{\frac{\partial \regvaluefunc[\param](\initdist)}{\partial \param}}{\param' - \param}  \right| \leq \frac{\smoothmax}{2} \norm{\param'-\param}[2]^2 \eqsp.
\end{align*}
where
\begin{align*}
\smoothmax\!\eqdef \!\frac{8 \sfirstregconstant \left(\discount \sfirstregconstant\!+\! (1-\discount) \secregconstant  \right)}{(1-\discount)^3}  \!+\! 4\temp \frac{ 2\discount^2 \sfirstregconstant^2 \divergencemax\!+\! 2\discount(1-\discount)\sfirstregconstant \left[ \secregconstant \divergencemax + \secsummax\right]\! +\! (1-\discount)^2 \left[\sfirstregconstant + 2 \secregconstant \secsummax\right]}{(1-\discount)^3}.
\end{align*}
\end{theorem}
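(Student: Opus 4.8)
The plan is to lift the pointwise Hessian bound of \Cref{lem:local_smoothness_objective_appendix} to the stated global quadratic upper bound (a descent-lemma inequality) via a second-order Taylor expansion with integral remainder, after replacing the parameter-dependent coefficients $L^{(i)}_{\temp,f}(\param)$ by uniform constants. As a preliminary remark, $\param \mapsto \regvaluefunc[\param](\initdist)$ is $C^2$ by \Cref{lem:derivative_policy} together with the matrix fixed-point representation \eqref{eq:principal_equation_regularised_value}, so all the objects below are well defined. Moreover, since $\regvaluefunc[\param](\initdist) = \sum_{\state \in \S} \initdist(\state)\, \regvaluefunc[\param](\state)$ is a convex combination and the bound of \Cref{lem:local_smoothness_objective_appendix} holds uniformly over $\state \in \S$, the same quadratic estimate transfers to the averaged value: for every $\param, u \in \logitspace$,
\begin{equation*}
\babs{ u^\top \tfrac{\partial^2 \regvaluefunc[\param](\initdist)}{\partial \param^2}\, u } \leq \Big( \sum_{i=1}^3 \tfrac{L^{(i)}_{\temp,f}(\param)}{(1-\discount)^i} \Big) \norm{u}[2]^2 \eqsp.
\end{equation*}

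The next step is to make this bound uniform in $\param$. By inspection of their definitions in \Cref{lem:local_smoothness_objective_appendix}, the coefficients $L^{(i)}_{\temp,f}(\param)$ depend on $\param$ only through $\norm{\firstsum[\param]}[\infty]$, $\norm{\secsum[\param]}[\infty]$, and $\max_\state \divergence[\cpolicy{\param}(\cdot|\state)][\refpolicy(\cdot|\state)]$. By \Cref{lem:bound_important_quantities}, these are bounded uniformly over $\logitspace$ by $\sfirstregconstant$, $\secsummax$, and $\divergencemax$, respectively. Substituting these three bounds into $L^{(1)}, L^{(2)}, L^{(3)}$ produces a parameter-free upper bound $\babs{ u^\top \tfrac{\partial^2 \regvaluefunc[\param](\initdist)}{\partial \param^2}\, u } \leq \smoothmax \norm{u}[2]^2$ for all $\param$, with $\smoothmax$ the constant in the statement.

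Finally, I would invoke the second-order Taylor formula with integral remainder for the $C^2$ map $\param \mapsto \regvaluefunc[\param](\initdist)$:
\begin{equation*}
\regvaluefunc[\param'](\initdist) - \regvaluefunc[\param](\initdist) - \pscal{\tfrac{\partial \regvaluefunc[\param](\initdist)}{\partial \param}}{\param'-\param} = \int_0^1 (1-s)\, (\param'-\param)^\top \tfrac{\partial^2 \regvaluefunc[\param+s(\param'-\param)](\initdist)}{\partial \param^2}\, (\param'-\param)\, \rmd s \eqsp.
\end{equation*}
Since the uniform bound holds at every base point $\param + s(\param'-\param)$, I bound the integrand by $\smoothmax \norm{\param'-\param}[2]^2$ and use $\int_0^1 (1-s)\,\rmd s = 1/2$ to obtain the claimed inequality.

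The main obstacle is purely algebraic bookkeeping in the uniformization step: after placing the three terms over the common denominator $(1-\discount)^3$, one must expand $(1-\discount)^2 L^{(1)} + (1-\discount) L^{(2)} + L^{(3)}$, separate the $\temp$-independent from the $\temp$-proportional contributions, and exploit the telescoping identities $\secregconstant(1-\discount)^2 + \discount\secregconstant(1-\discount) = \secregconstant(1-\discount)$ and $\discount\sfirstregconstant(1-\discount) + \discount^2\sfirstregconstant = \discount\sfirstregconstant$ (each of the form $(1-\discount) + \discount = 1$) so that the four separate curvature contributions $\term{A}$--$\term{D}$ collapse into the compact two-term expression defining $\smoothmax$. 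Everything else is a direct consequence of the already-established lemmas.
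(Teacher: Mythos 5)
Your proposal is correct and follows essentially the same route as the paper: invoke \Cref{lem:local_smoothness_objective_appendix} for the directional Hessian bound, uniformize the coefficients $L^{(i)}_{\temp,f}(\param)$ via \Cref{lem:bound_important_quantities}, and conclude that a uniform Hessian bound yields the quadratic inequality. The only cosmetic difference is that you inline the second-order Taylor expansion with integral remainder (and explicitly note the harmless averaging of the per-state bound over $\initdist$), whereas the paper simply cites \Cref{lem:smoothness_inequality} for that final step; your algebraic verification that the uniformized sum collapses to $\smoothmax$ is also accurate.
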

\begin{proof}
Fix any vector $u \in \logitspace$ and $\param \in \logitspace$.Using \Cref{lem:local_smoothness_objective_appendix}, it holds that
 \begin{align*}
\left| u^{\top} \frac{\partial^2 \regvaluefunc[\param](\state)}{\partial\param^2} u\right| \leq \left(\sum_{i=1}^{3} \frac{L^{(i)}_{\temp, f}(\theta)}{(1-\discount)^i} \right) \norm{u}[2]^2
\eqsp,
 \end{align*}
where for $i\in \{1,2,3\}, L^{(i)}_{\temp, f}(\theta)$, are defined as
\begin{align*}
L^{(1)}_{\temp, f}(\theta)&\eqdef 4 (2\secregconstant + \temp) \norm{\firstsum[\param]}[\infty] + 8\temp \secregconstant  \norm{\secsum[\param]}[\infty] \eqsp, \\
L^{(2)}_{\temp, f}(\theta)&\eqdef 
8 \discount \norm{\firstsum[\param]}[\infty]
\left( \secregconstant\{1 + \temp \max_{s \in \S}  \divergence[\expandafter{\cpolicy{\param}}(\cdot|\state)][\refpolicy(\cdot|\state)] \}+ \norm{\firstsum[\param]}[\infty] +  \temp \norm{\secsum[\param]}[\infty] \right) \eqsp, \\
L^{(3)}_{\temp, f}(\theta)&\eqdef 8 \discount^2 \norm{\firstsum[\param]}[\infty]^2 ( 1 + \temp \max_{s \in \S} \divergence[\expandafter{\cpolicy{\param}}(\cdot|\state)][\refpolicy(\cdot|\state)]  \eqsp.
 \end{align*}
 Using \Cref{lem:bound_important_quantities} combined with \Cref{lem:smoothness_inequality} concludes the proof.
\end{proof}

\section{Non-Uniform Łojasiewicz inequality}
\label{sec:app:non-unif-lojasiewicz}
Firstly, define respectively $\regqfunc[\param]$ and $\visitation[\initdist][\param]$ as the regularized Q-function and discounted state visitation associated with the policy $\cpolicy{\param}$, \ie 
\begin{align}
\label{def:regularised_Q_function}
\regqfunc[\param](\state, \action)  &= \rewardMDP(\state, \action) + \discount \sum_{\state'\in \S} \kerMDP(\state'|\state, \action) \regvaluefunc[\param](\state') \eqsp,
\\
\label{def:visitation_measure}
\visitation[\initdist][\param](\state) &= (1-\discount) \sum_{t=0}^{\infty} \discount^t \initdist \kerMDP[\cpolicy{\param}]^{t}(\state) \eqsp. 
\end{align}

The goal of this section is to prove that the global objective satisfies a non-uniform Łojasiewicz inequality, i.e we aim to show the following theorem
\begin{theorem}
\label{thm:pl_objective_app}
Assume that, for some $\underline{\refpolicy}>0$, $f$ and $\refpolicy$ satisfy \assumptionfref\ and \assumptionpref\, respectively. Assume in addition that the initial distribution $\initdist$ satisfy \assumptionmdp. Then, it holds that
\begin{align*}
\norm{\frac{\partial  \regvaluefunc[\param](\initdist)}{\partial \param}}[2]^2 \geq \plcoeff(\param) \left( \regvaluefunc[\star](\initdist) - \regvaluefunc[\param](\initdist) \right) \eqsp,
\end{align*}
where
\begin{align*}
\plcoeff(\param) \eqdef \frac{\temp (1-\discount) \initdistmin^2  \ffirstregconstant^2}{\sfirstregconstant^2} \min_{(\state, \action) \in \S\times \A} \weights[\param](\action|\state)^2 \eqsp.
\end{align*}
\end{theorem}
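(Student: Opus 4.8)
The plan is to lift the two-step argument sketched in the bandit case to the full MDP, using a regularized performance-difference identity to reduce the global suboptimality gap to a weighted sum of state-wise ``bandit'' gaps, each of which is controlled exactly as in the sketch. Throughout I fix, for each $\state$, the vector $\zeta_\state \eqdef \regqfunc[\param](\state,\cdot)/\temp - \param(\state,\cdot) - K_\param^f(\state)\Ind$ with the $\weights[\param]$-weighted centering $K_\param^f(\state) \eqdef \pscal{\weights[\param](\cdot|\state)}{\regqfunc[\param](\state,\cdot)/\temp - \param(\state,\cdot)}$, so that the same $\zeta_\state$ appears in both steps.

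\emph{Step 1 (upper bound on the gap).} Writing $g_\state(\nu)\eqdef \pscal{\nu}{\regqfunc[\param](\state,\cdot)} - \temp\divergence[\nu][\refpolicy(\cdot|\state)]$, the regularized Bellman consistency \eqref{lem:bellman_equation} gives $\regvaluefunc[\param](\state) = g_\state(\cpolicy{\param}(\cdot|\state))$, and a regularized performance-difference identity yields
\[
\regvaluefunc[\star](\initdist) - \regvaluefunc[\param](\initdist) = \tfrac{1}{1-\discount}\sum_{\state}\visitation[\initdist][\star](\state)\big[g_\state(\optpolicy(\cdot|\state)) - g_\state(\cpolicy{\param}(\cdot|\state))\big],
\]
where $\visitation[\initdist][\star]$ is the occupancy of the optimal regularized policy $\optpolicy$. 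Since $g_\state$ is strictly concave with maximizer $\softargmax(\regqfunc[\param](\state,\cdot)/\temp,\refpolicy(\cdot|\state))$, I relax $g_\state(\optpolicy(\cdot|\state))\le \max_\nu g_\state(\nu) = \temp\softmax(\regqfunc[\param](\state,\cdot)/\temp,\refpolicy(\cdot|\state))$, which decouples the states and turns each bracket into the bandit soft-suboptimality gap with $\regqfunc[\param](\state,\cdot)$ playing the role of the reward. Applying the Taylor computation behind \eqref{eq:sec-order-after-taylor} verbatim, together with $\norm{\nabla^2\softmax(\xi_\state)}[2]\le 2\firstsumarg{\xi_\state}\le 2\sfirstregconstant$ from \Cref{lem:properties_softmax} and \Cref{lem:bound_important_quantities}, gives
\[
\regvaluefunc[\star](\initdist) - \regvaluefunc[\param](\initdist) \le \tfrac{\temp\sfirstregconstant}{1-\discount}\sum_{\state}\visitation[\initdist][\star](\state)\norm{\zeta_\state}[2]^2 \le \tfrac{\temp\sfirstregconstant}{1-\discount}\sum_{\state}\norm{\zeta_\state}[2]^2,
\]
the last inequality using that $\visitation[\initdist][\star]$ is a probability measure.

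\emph{Step 2 (lower bound on the gradient).} Using \Cref{lem:expression_gradient_main} blockwise, $\norm{\nabla_\param\regvaluefunc[\param](\initdist)}[2]^2 = \sum_{\state}\big(\firstsum[\param](\state)\tfrac{\visitation[\initdist][\param](\state)}{1-\discount}\big)^2\norm{H(\weights[\param](\cdot|\state))[\regqfunc[\param](\state,\cdot)-\temp\param(\state,\cdot)]}[2]^2$. Because $H(\weights[\param](\cdot|\state))\Ind = 0$, the inner vector equals $\temp\, H(\weights[\param](\cdot|\state))\zeta_\state$ for the same $\zeta_\state$, and \Cref{lem:spectral_norm_H} yields $\norm{H(\weights[\param](\cdot|\state))\zeta_\state}[2]\ge \min_{\action}\weights[\param](\action|\state)\,\norm{\zeta_\state}[2]$. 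Combining this with $\firstsum[\param](\state)\ge\ffirstregconstant$ (\Cref{lem:bound_important_quantities}) and $\visitation[\initdist][\param](\state)\ge(1-\discount)\initdistmin$ (keeping only the $t=0$ term of the occupancy) gives $\norm{\nabla_\param\regvaluefunc[\param](\initdist)}[2]^2 \ge \temp^2\ffirstregconstant^2\initdistmin^2\min_{\state,\action}\weights[\param](\action|\state)^2\sum_{\state}\norm{\zeta_\state}[2]^2$.

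\emph{Combination and main obstacle.} Dividing the Step 2 lower bound by the Step 1 upper bound cancels $\sum_{\state}\norm{\zeta_\state}[2]^2$ and produces the Łojasiewicz inequality with coefficient $\temp(1-\discount)\initdistmin^2\ffirstregconstant^2\min_{\state,\action}\weights[\param](\action|\state)^2/\sfirstregconstant$, from which the stated $\plcoeff(\param)$ follows after the conservative consolidation of the constants of \Cref{lem:bound_important_quantities} (in particular $\sfirstregconstant\ge 1$). The genuinely new difficulty compared to the bandit sketch is Step 1: one must reduce the global, state-coupled gap to self-contained per-state problems. Two points need care — that $\optpolicy$ is \emph{not} the maximizer of $g_\state$ (which only involves the policy's own $\regqfunc[\param]$, not $\regqfunc[\star]$), resolved by the one-sided relaxation to $\max_\nu g_\state$; and that the occupancy appearing in the gap ($\visitation[\initdist][\star]$) differs from the one in the gradient ($\visitation[\initdist][\param]$), reconciled by bounding the former above by $1$ and the latter below by $(1-\discount)\initdistmin$. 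Verifying the regularized performance-difference identity and the differentiability/curvature bounds needed for the state-wise Taylor expansion are the remaining routine ingredients.
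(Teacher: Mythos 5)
Your proposal reproduces the paper's two-step architecture: the suboptimality gap is upper-bounded via the regularized performance-difference lemma, relaxed state-by-state to softmax suboptimalities and controlled by a second-order Taylor expansion with Hessian bound $2\sfirstregconstant$ (the paper's \Cref{lem:upper-bound-optimality-gap}); the gradient is lower-bounded blockwise from \Cref{lem:expression_gradient_main} using $\firstsum[\param](\state)\ge\ffirstregconstant$, $\visitation[\initdist][\param](\state)\ge(1-\discount)\initdistmin$, and a spectral bound on $H(\weights[\param](\cdot|\state))$ (the paper's \Cref{lem:lower-bound-optimality-gap}); and the constants are consolidated exactly as in the paper, using $\sfirstregconstant\ge 1$ to absorb the extra factor. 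In substance this is the paper's proof.

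There is, however, one step whose stated justification does not hold. You define $K_{\param}^{f}(\state)$ as the \emph{$\weights[\param]$-weighted} mean of $\regqfunc[\param](\state,\cdot)/\temp-\param(\state,\cdot)$, whereas the paper (and \Cref{lem:spectral_norm_H}, i.e.\ Lemma~23 of Mei et al.) uses the \emph{uniform} mean: that lemma is a statement about vectors orthogonal to $\Ind_{\nactions}$, and your $\zeta_{\state}$ is in general not orthogonal to $\Ind_{\nactions}$. Consequently \Cref{lem:spectral_norm_H} does not yield $\norm{H(\weights[\param](\cdot|\state))\zeta_{\state}}[2]\ge\min_{\action}\weights[\param](\action|\state)\,\norm{\zeta_{\state}}[2]$ as you claim; applying it after re-centering $\zeta_{\state}$ by its uniform mean only lower-bounds $\norm{H(\weights[\param](\cdot|\state))\zeta_{\state}}[2]$ by $\min_{\action}\weights[\param](\action|\state)$ times the norm of the re-centered (hence shorter) vector, which is strictly weaker than what your combination step needs. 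Fortunately, your choice of centering makes the inequality true for a simpler reason, so the gap is one line to repair: since $\weights[\param](\cdot|\state)$ is a probability vector, your centering gives $\pscal{\weights[\param](\cdot|\state)}{\zeta_{\state}}=0$, hence $H(\weights[\param](\cdot|\state))\zeta_{\state}=\diag(\weights[\param](\cdot|\state))\zeta_{\state}$, and the bound follows coordinatewise from $\norm{\diag(\weights[\param](\cdot|\state))\zeta_{\state}}[2]\ge\min_{\action}\weights[\param](\action|\state)\norm{\zeta_{\state}}[2]$. Alternatively, you can adopt the paper's uniform centering, for which \Cref{lem:spectral_norm_H} applies verbatim; note that your Step~1 is insensitive to the choice of the additive constant $K$, by shift-equivariance of $\softmax$, shift-invariance of its gradient, and $\pscal{\cpolicy{\param}(\cdot|\state)}{\Ind_{\nactions}}=1$. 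With either repair the argument is complete and yields the coefficient $\temp(1-\discount)\initdistmin^{2}\ffirstregconstant^{2}\min_{\state,\action}\weights[\param](\action|\state)^{2}/\sfirstregconstant$, which dominates the stated $\plcoeff(\param)$ since $\sfirstregconstant\ge1$.
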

One of the main challenges in establishing such an inequality lies in connecting global information (the suboptimality gap) to local information (the gradient norm).  Recall from \Cref{sec:backgound} that if
\[
\param(\state,\action) 
= \regqfunc[\star](\state,\action)/ \temp ,
\quad \forall\, \action \in \A,
\]
then $\cpolicy{\param} = \optpolicy$. This observation highlights that, under this parameterization and regularization, the key quantity is the closeness between $\param$ and $\regqfunc[\param]/\temp$.
Formally, we will show that both the suboptimality gap and the gradient norm can be upper and lower bounded, respectively, by a quantity proportional to $\|\zeta_{\param}(\state)\|_{2}$, where we define
\begin{align}
\label{def:zeta}
\zeta_{\param}^{f}(\state) 
&\eqdef \regqfunc[\param](\state,\cdot)/\temp
- \, \param(\state,\cdot) 
- K_{\param}^{f}(\state)\,\Ind_{|\A|}, 
\\ \label{def:kparam}
K_{\param}^{f}(\state) 
&\eqdef \frac{\pscal{\regqfunc[\param](\state,\cdot)/\temp - \param(\state,\cdot)}{\Ind_{\nactions}}}{\nactions} \eqsp.
\end{align}
Note that $\zeta^f_{\param}$ is the projection of $\regqfunc[\param](\state,\cdot)/\temp
- \param(\state,\cdot) $ onto the subspace orthogonal to $\Ind_{\nactions}$.

The proof proceeds in three steps:
\begin{enumerate}
    \item Derive an explicit expression for the gradient of the objective and establish a lower bound in terms of $\|\zeta_{\param}^{f}\|$.
    \item Upper bound the suboptimality gap by a quantity directly related to $\|\zeta_{\param}^{f}\|$.
    \item Combine these two bounds to identify the corresponding non-uniform PL coefficient.
\end{enumerate}
We now detail each step in turn.

\subsection{Lower Bounding the norm of the gradient}
Before deriving a lower bound on the norm of the gradient, we start by deriving an expression for the latter.
\begin{lemma}
\label{lem:expression_gradient}
Assume that, for some $\underline{\refpolicy}>0$, $f$ and $\refpolicy$ satisfy \assumptionfref\ and \assumptionpref\, respectively. For any $\state \in \S$ and $\fdiff \in \A$, we have
\begin{align*}
\frac{1}{\firstsum[\param](\state)} \frac{\partial \regvaluefunc[\param](\initdist)}{\partial \param(\state, \fdiff)}  = \frac{\visitation[\initdist][\param](\state)}{1- \discount}
&
 \weights[\param](\fdiff|\state) \left[ \regqfunc[\param](\state,\fdiff) 
- \temp \param(\state, \fdiff)  - \sum_{\action\in \A} \weights[\param](\action|\state) \left[ \regqfunc[\param](\state,\action)  
- 
\temp \param(\state, \action) \right] \right]\eqsp.
\end{align*}
\end{lemma}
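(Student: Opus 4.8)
The plan is to differentiate the matrix form of the regularized value already used in the smoothness section and then perform a short algebraic cleanup, so I will reuse the setup around \eqref{eq:principal_equation_regularised_value}. Writing $\regvaluefunc[\param](\initdist) = \initdist^\top M(0)\,\regreward[\param]$ with $M(\alpha) = (\Id - \discount\kerMDP[\param_\alpha])^{-1}$ and $\param_\alpha = \param + \alpha u$, I take the directional derivative along the canonical basis vector $u = \vecte{(\state,\fdiff)}$ of $\logitspace$. The first-derivative identity from the smoothness section then gives
$$\frac{\partial \regvaluefunc[\param](\initdist)}{\partial \param(\state,\fdiff)} = \discount\,\initdist^\top M(0)\,\frac{\partial \kerMDP[\param_\alpha]}{\partial \alpha}\Big|_{\alpha=0}\regvaluefunc[\param] + \initdist^\top M(0)\,\frac{\partial \regreward[\param_\alpha]}{\partial \alpha}\Big|_{\alpha=0},$$
where I used $M(0)\regreward[\param]=\regvaluefunc[\param]$. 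Two simplifications apply. First, $\initdist^\top M(0) = \visitation[\initdist][\param]/(1-\discount)$ as row vectors, by the definition \eqref{def:visitation_measure} of the discounted visitation. Second, since $\cpolicy{\param}(\cdot|\state')$ depends only on $\param(\state',\cdot)$, the chosen $u$ makes both $\partial_\alpha$-terms supported on state $\state$ alone; hence the inner-product over states collapses and the prefactor $\visitation[\initdist][\param](\state)/(1-\discount)$ emerges.

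Next I would expand the two surviving inner terms. Using $\partial_{\param(\state,\fdiff)}\kerMDP[\param](\state''|\state) = \sum_{a}\partial_{\param(\state,\fdiff)}\cpolicy{\param}(a|\state)\,\kerMDP(\state''|\state,a)$ together with the regularizer derivative $\partial_{\param(\state,\fdiff)}\divergence[\cpolicy{\param}(\cdot|\state)][\refpolicy(\cdot|\state)] = \sum_{a} f'_\param(a|\state)\,\partial_{\param(\state,\fdiff)}\cpolicy{\param}(a|\state)$ (which follows from $\divergence[p][q]=\sum_a q(a)f(p(a)/q(a))$ and the shorthand \eqref{def:compact_notations_derivative_policy}), the reward and discount terms recombine via the definition \eqref{def:regularised_Q_function} of $\regqfunc[\param]$ into
$$\frac{\partial \regvaluefunc[\param](\initdist)}{\partial \param(\state,\fdiff)} = \frac{\visitation[\initdist][\param](\state)}{1-\discount}\sum_{a\in\A}\frac{\partial \cpolicy{\param}(a|\state)}{\partial \param(\state,\fdiff)}\big[\regqfunc[\param](\state,a) - \temp\, f'_\param(a|\state)\big].$$
Substituting the explicit policy derivative from \Cref{lem:derivative_policy}, namely $\partial_{\param(\state,\fdiff)}\cpolicy{\param}(a|\state) = \firstsum[\param](\state)\,\weights[\param](a|\state)\big(\Ind_\fdiff(a) - \weights[\param](\fdiff|\state)\big)$, and performing the sum over $a$ yields
$$\frac{1}{\firstsum[\param](\state)}\frac{\partial \regvaluefunc[\param](\initdist)}{\partial \param(\state,\fdiff)} = \frac{\visitation[\initdist][\param](\state)}{1-\discount}\,\weights[\param](\fdiff|\state)\Big[\regqfunc[\param](\state,\fdiff) - \temp\,f'_\param(\fdiff|\state) - \sum_{a\in\A}\weights[\param](a|\state)\big(\regqfunc[\param](\state,a)-\temp\,f'_\param(a|\state)\big)\Big].$$

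Finally I would replace $f'_\param$ by $\param$ using the stationarity relation \eqref{eq:defition_softargmaxpolicy_as_inverse}, which gives $\cpolicy{\param}(a|\state)=\refpolicy(a|\state)[f']^{-1}(\param(\state,a)-\mu_\param(\state))$ and therefore $f'_\param(a|\state) = \param(\state,a)-\mu_\param(\state)$. The shift $\mu_\param(\state)$ is constant in $a$, so replacing $f'_\param(a|\state)$ by $\param(\state,a)$ adds the identical constant $\temp\mu_\param(\state)$ both to the $\fdiff$-entry and, through $\sum_{a}\weights[\param](a|\state)=1$, to the $\weights$-averaged term; these cancel in the centered bracket, leaving exactly the claimed expression. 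The genuinely delicate steps are the matrix-inverse differentiation combined with the locality argument collapsing the state sum (both licensed by the twice-differentiability established earlier), and the centering observation that makes the cosmetic $f'_\param\to\param$ substitution valid; the remainder is routine bookkeeping.
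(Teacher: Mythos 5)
Your proof is correct and follows essentially the same route as the paper's: both express the gradient through the resolvent $(\Id-\discount\kerMDP[\param])^{-1}$ applied to a state-local source term (you by differentiating the matrix form $\initdist^\top M(\alpha)\regreward[\param_\alpha]$ reused from the smoothness section, the paper by differentiating the Bellman recursion and solving the resulting linear system), then identify $\initdist^\top M(0)$ with $\visitation[\initdist][\param]/(1-\discount)$, recombine the reward and kernel derivatives into $\regqfunc[\param]$, substitute the policy derivative from \Cref{lem:derivative_policy}, and eliminate $f'_\param$ via the stationarity relation $f'_\param(\action|\state)=\param(\state,\action)-\mu_\param(\state)$. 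The only cosmetic difference is the order of the last two steps: the paper absorbs $\mu_\param(\state)$ before inserting the policy derivative (using $\sum_\action \partial\cpolicy{\param}(\action|\state)/\partial\param(\state,\fdiff)=0$), while you cancel it afterwards using $\sum_\action\weights[\param](\action|\state)=1$, which is equivalent.
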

\begin{proof}
Fix $\state \in \S$ and $\fdiff \in \A$. Additionally fix any $\Tilde{\state} \in \S$. Using \eqref{lem:bellman_equation}, we have
\begin{align*}
\regvaluefunc[\param](\Tilde{\state}) &=   \sum_{\action \in \A} \cpolicy{\param}(\action|\Tilde{\state})\rewardMDP(\Tilde{\state}, \action) - \temp \divergence[\expandafter{\cpolicy{\param}}(\cdot|\Tilde{\state})][\refpolicy(\cdot|\Tilde{\state})] + \discount \sum_{\action\in \A} \sum_{\Tilde{\state}'\in \S} \cpolicy{\param}(\action|\Tilde{\state}) \kerMDP(\Tilde{\state}'|\Tilde{\state}, \action)\regvaluefunc[\param](\Tilde{\state}')
\end{align*}
Deriving the preceding recursion with respect to $\param(\state, \fdiff)$ yields
\begin{align*}
\frac{\partial \regvaluefunc[\param](\Tilde{\state})}{\partial \param(\state, \fdiff)}  &= \underbrace{\sum_{\action \in \A} \frac{\partial \cpolicy{\param}(\action|\Tilde{\state})}{\partial \param(\state, \fdiff)} \left[ \rewardMDP(\Tilde{\state}, \action)  - \temp f'\left(\frac{\cpolicy{\param}(\action|\Tilde{\state})}{\refpolicy(\action|\Tilde{\state})}\right) +  \discount \sum_{\Tilde{\state}'\in \S} \kerMDP(\Tilde{\state}'|\Tilde{\state}, \action)\regvaluefunc[\param](\Tilde{\state}')\right]}_{Z(\Tilde{\state})}
\\
& \hspace{106pt}+  \discount \sum_{\action\in \A} \sum_{\Tilde{\state}'\in \S} \cpolicy{\param}(\action|\Tilde{\state}) \kerMDP(\Tilde{\state}'|\Tilde{\state}, \action) \frac{\partial \regvaluefunc[\param](\Tilde{\state}')}{\partial \param(\state, \fdiff)} \eqsp.
\end{align*}
Using the definition of the regularized Q-function and writing the preceding recursion in a vector form yields
\begin{align*}
\frac{\partial \regvaluefunc[\param](\cdot)}{\partial \param(\state, \fdiff)} = Z(\cdot) + \discount \kerMDP[\param] \frac{\partial \regvaluefunc[\param](\cdot)}{\partial \param(\state, \fdiff)} \eqsp.
\end{align*}
which implies
\begin{align*}
\initdist^{\top} \frac{\partial \regvaluefunc[\param](\cdot)}{\partial \param(\state, \fdiff)} =  \frac{\partial \regvaluefunc[\param](\initdist)}{\partial \param(\state, \fdiff)} = \initdist^{\top} (\Id - \discount \kerMDP[\param])^{-1} Z (\cdot) \eqsp.
\end{align*}
Next, using the definition of the discounted state visitation \eqref{def:visitation_measure} and the regularized Q-function \eqref{def:regularised_Q_function} implies 
\begin{align}
\label{eq:expression of the gradient}
\frac{\partial \regvaluefunc[\param](\initdist)}{\partial \param(\state, \fdiff)}  = \frac{1}{1- \discount} \sum_{\state' \in \S} \visitation[\initdist][\param](\state') \sum_{\action \in \A} \frac{\partial \cpolicy{\param} (\action|\state')}{\partial \param(\state, \fdiff)} \left[\regqfunc[\param](\state',\action)  - \temp f'\left(\frac{\cpolicy{\param}(\action|\state')}{\refpolicy(\action|\state)} \right) \right] \eqsp.
\end{align}
Using that $\sum_{\action \in \A } \frac{\partial \cpolicy{\param} (\action|\state)}{\partial \param(\state, \fdiff)} = 0$  and that for $\state \neq \state' $, we have $\frac{\partial \cpolicy{\param} (\action|\state')}{\partial \param(\state, \fdiff)} = 0$ yields
\begin{align*}
\frac{\partial \regvaluefunc[\param](\initdist)}{\partial \param(\state, \fdiff)} &= \frac{1}{1- \discount} \visitation[\initdist][\param](\state) \sum_{\action \in \A} \frac{\partial \cpolicy{\param} (\action|\state)}{\partial \param(\state, \fdiff)} \left[\regqfunc[\param](\state,\action) - \temp f'\left(\frac{\cpolicy{\param}( \action|\state)}{\refpolicy(\action|\state)} \right) - \temp \mu_{\param}(\state) \right] \eqsp.
\end{align*}
where $\mu_{\param}(\state)$ is defined in \Cref{lem:derivative_policy} and satisfies for any $\action \in \A$
\begin{align*}
\param(\state,\action) -f'\left(\frac{\cpolicy{\param}(\action|\state)}{\refpolicy(\action|\state)}\right) = \mu_{\param}(\state) \eqsp.
\end{align*}
Thus, we obtain
\begin{align*}
\frac{\partial \regvaluefunc[\param](\initdist)}{\partial \param(\state, \fdiff)} = \frac{1}{1- \discount}  \visitation[\initdist][\param](\state) \sum_{\action \in \A} \frac{\partial \cpolicy{\param} (\action|\state)}{\partial \param(\state, \fdiff)} \left[\regqfunc[\param](\state,\action) - \temp \param(\state, \action)\right] \eqsp,
\end{align*}
Finally, plugging in the expression of the derivative of the policy derived in \Cref{lem:derivative_policy} in the previous equality concludes the proof. 
\end{proof}
Using the previous lemma, we prove the following lower bound for the norm of the gradient.
\begin{lemma}
\label{lem:lower-bound-optimality-gap}
Assume that, for some $\underline{\refpolicy}>0$, $f$ and $\refpolicy$ satisfy \assumptionfref\ and \assumptionpref\, respectively. Assume in addition that the initial distribution $\initdist$ satisfy \assumptionmdp.  We have
\begin{align*}
\norm{\frac{\partial \regvaluefunc[\param](\initdist)}{\partial \param}}[2]^2 \geq \temp^2 \initdistmin^2  \min_{(\state, \action)\in \S\times\A} \{\weights[\param](\action|\state)^2 \} \min_{\state \in \S}\{\firstsum[\param](\state)^2 \} \sum_{\state \in \S} 
\left\| \zeta_{\param}(\state)
\right\|_{2}^2 \eqsp.
\end{align*}
\end{lemma}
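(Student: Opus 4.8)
The plan is to begin from the closed-form gradient of \Cref{lem:expression_gradient}, rewrite it compactly through the matrix $H(\weights[\param](\cdot|\state)) = \diag(\weights[\param](\cdot|\state)) - \weights[\param](\cdot|\state)\weights[\param](\cdot|\state)^{\top}$, and then use the kernel of $H$ to pass from the raw logit--reward discrepancy to its centered version $\zeta_{\param}^{f}(\state)$. Collecting the per-coordinate expression of \Cref{lem:expression_gradient} over $\fdiff\in\A$ and recognizing the structure $[H(w)y]_{\fdiff} = w(\fdiff)\bigl(y(\fdiff) - \pscal{w}{y}\bigr)$, the $\state$-block of the gradient reads
\begin{align*}
\frac{\partial \regvaluefunc[\param](\initdist)}{\partial \param(\state, \cdot)} = \temp\,\firstsum[\param](\state)\,\frac{\visitation[\initdist][\param](\state)}{1-\discount}\; H(\weights[\param](\cdot|\state))\bigl[\regqfunc[\param](\state,\cdot)/\temp - \param(\state,\cdot)\bigr]\eqsp,
\end{align*}
where I have factored out $\temp$ from $\regqfunc[\param](\state,\cdot) - \temp\,\param(\state,\cdot)$.

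Next I would exploit that $\weights[\param](\cdot|\state)$ is a probability vector, so that $H(\weights[\param](\cdot|\state))\Ind_{\nactions} = \weights[\param](\cdot|\state) - \weights[\param](\cdot|\state)\pscal{\weights[\param](\cdot|\state)}{\Ind_{\nactions}} = 0$. Consequently, adding any multiple of $\Ind_{\nactions}$ inside the bracket leaves the gradient unchanged, and I may replace $\regqfunc[\param](\state,\cdot)/\temp - \param(\state,\cdot)$ by $\zeta_{\param}^{f}(\state)$ from \eqref{def:zeta}, which by construction is orthogonal to $\Ind_{\nactions}$. Since the $\state$-block depends only on $\param(\state,\cdot)$, the full squared norm decomposes as $\norm{\partial\regvaluefunc[\param](\initdist)/\partial\param}[2]^2 = \sum_{\state\in\S}\norm{\partial\regvaluefunc[\param](\initdist)/\partial\param(\state,\cdot)}[2]^2$.

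The one genuinely load-bearing step is a lower bound of the form $\norm{H(w)z}[2]^2 \ge \min_{\action} w(\action)^2\,\norm{z}[2]^2$ for any $z$ orthogonal to $\Ind_{\nactions}$; this is exactly Lemma~23 of \citet{mei2020global} (\Cref{lem:spectral_norm_H}), and it follows quickly here: from $[H(w)z]_{\action} = w(\action)\bigl(z(\action)-\pscal{w}{z}\bigr)$ one gets $\norm{H(w)z}[2]^2 = \sum_{\action} w(\action)^2\bigl(z(\action)-\pscal{w}{z}\bigr)^2 \ge \min_{\action} w(\action)^2 \sum_{\action}\bigl(z(\action)-\pscal{w}{z}\bigr)^2$, and the constraint $\sum_{\action} z(\action) = 0$ gives $\sum_{\action}\bigl(z(\action)-\pscal{w}{z}\bigr)^2 = \norm{z}[2]^2 + \nactions\,\pscal{w}{z}^2 \ge \norm{z}[2]^2$. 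Applying this with $w = \weights[\param](\cdot|\state)$ and $z = \zeta_{\param}^{f}(\state)$ yields $\norm{H(\weights[\param](\cdot|\state))\zeta_{\param}^{f}(\state)}[2]^2 \ge \min_{\action}\weights[\param](\action|\state)^2\,\norm{\zeta_{\param}^{f}(\state)}[2]^2$.

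Finally I would handle the visitation prefactor via \assumptionmdp: because $\visitation[\initdist][\param](\state) \ge (1-\discount)\initdist(\state) \ge (1-\discount)\initdistmin$, we have $\visitation[\initdist][\param](\state)^2/(1-\discount)^2 \ge \initdistmin^2$. Substituting these bounds into the per-state squared norm, summing over $\state$, and pulling the state/action minima of $\firstsum[\param](\state)^2$ and $\weights[\param](\action|\state)^2$ out of the sum gives precisely the claimed inequality. The main obstacle is conceptual rather than computational: correctly identifying that $H(w)$ annihilates $\Ind_{\nactions}$ (so the uncentered discrepancy can be replaced by $\zeta_{\param}^{f}$) and that its restriction to $\Ind_{\nactions}^{\perp}$ is bounded below by $\min_{\action} w(\action)^2$; once both are in place, the rest is bookkeeping.
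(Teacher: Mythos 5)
Your proposal is correct and follows essentially the same route as the paper's proof: per-state decomposition of the gradient via \Cref{lem:expression_gradient}, using $H(\weights[\param](\cdot|\state))\Ind_{\nactions}=\vectzero$ to replace the uncentered discrepancy by $\zeta_{\param}^{f}(\state)$, applying \Cref{lem:spectral_norm_H} on $\Ind_{\nactions}^{\perp}$, and lower-bounding the visitation by $(1-\discount)\initdistmin$. The only difference is that you re-derive the spectral bound of \citet{mei2020global} inline (correctly), whereas the paper simply cites it.
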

\begin{proof}
It holds that
\begin{align*}
\norm{\frac{\partial \regvaluefunc[\param](\initdist)}{\partial \param}}[2]^2 =  \sum_{\state \in \S} \norm{\frac{\partial \regvaluefunc[\param](\initdist)}{\partial \param(\state, \cdot)}}[2]^2 \eqsp.
\end{align*}
Fix $\state \in \S$. Using \Cref{lem:expression_gradient}, we observe that
\begin{align*}
\frac{1}{\firstsum[\param](\state)} \frac{\partial \regvaluefunc[\param](\initdist)}{\partial \param(\state, \cdot)}  &= \frac{\visitation[\initdist][\param](\state)}{1- \discount}
H(\weights[\param](\cdot|\state))\left[ \regqfunc[\param](\state,\cdot)  - \temp \param(\state, \cdot) \right] \eqsp.
\end{align*}
where for any vector $u \in \rset^{|\A|}$, we define $H(u) \eqdef \diag(u) - u u^{\top}$. Thus, we get that
\begin{align*}
    &\frac{1- \discount}{\firstsum[\param](\state)} \left\| \frac{\partial \regvaluefunc[\param](\initdist)}{\partial \param(\state, \cdot)} \right\|_{2} = \visitation[\initdist][\param](\state) \norm{H(\weights[\param](\cdot|\state))\left[ \regqfunc[\param](\state,\cdot) - \temp \param(\state, \cdot) \right]}[2] 
    \\
    &  = \temp \visitation[\initdist][\param](\state) \norm{H(\weights[\param](\cdot|\state))\left[ \regqfunc[\param](\state,\cdot)/\temp - \param(\state, \cdot) - K_{\param}^{f}(\state) \Ind_{|\A|}\right]}[2] \quad \text{(using $H(u) \Ind_{|\A|} = 0$ and \eqref{def:kparam})}
    \\
    &
    \geq  \temp \visitation[\initdist][\param](\state) \min_{a\in \A} \weights[\param](\action|\state)
\left\| \zeta_{\param}(\state)
\right\|_{2}  \quad \text{(where $\zeta_{\param}(\state)$ is defined in \eqref{def:zeta} and using \Cref{lem:spectral_norm_H})}\eqsp.
\end{align*}
Finally, using $\visitation[\initdist][\param](\state)\ge (1-\discount)\initdist(s)$ and \assumptionmdp \,concludes the proof.
\end{proof}

\subsection{Bounding the Suboptimality Gap}
The first step is to connect the suboptimality gap to information localized at $\param$. This is achieved via the performance difference lemma for the regularized value function yields (see \Cref{lem:performance_difference_lemma})
\begin{align}
\label{eq:application_performance_difference_lemma}
\regvaluefunc[\star](\initdist) - \regvaluefunc[\param](\initdist) = 
 \sum_{\state \in \S} \frac{\temp \visitation[\initdist][\optpolicy](\state)}{1- \discount} \bigg[\underbrace{\sum_{\action \in \A} \optpolicy(\action|\state) \frac{\regqfunc[\param](\state, \action)}{\temp} - \divergence[\optpolicy(\cdot|\state)][\refpolicy(\cdot|\state)] - \frac{\regvaluefunc[\param](\state)}{\temp}}_{\term{A(\state)}}\bigg].
\end{align}
Fix $\state \in \S$. Using the definition of the regularized value functions and Q-functions combined with \Cref{lem:bellman_equation}, we have
\begin{align*}
\regvaluefunc[\param](\state)  = \pscal{\cpolicy{\param}(\cdot|\state)}{ \regqfunc[\param](\state, \cdot)} - \temp \divergence[\expandafter{\cpolicy{\param}}(\cdot|\state)][\refpolicy(\cdot|\state)] \eqsp.
\end{align*}
This implies $\textbf{A}(\state) = \textbf{A}_{1}(\state) - \textbf{A}_{2}(\state) -\textbf{A}_{3}(\state)$ where
\begin{align}
\nonumber
\textbf{A}_1(\state) & \nonumber= \pscal{\optpolicy(\cdot|\state)}{\regqfunc[\param](\state, \cdot) /\temp}  - \divergence[\optpolicy(\cdot|\state)][\refpolicy(\cdot|\state)] \\ \label{eq:defintion_a_s}
\textbf{A}_2(\state) &=   \pscal{\cpolicy{\param}(\cdot|\state)}{\param(\state, \cdot)}  - \divergence[\expandafter{\cpolicy{\param}}(\cdot|\state)][\refpolicy(\cdot|\state)] 
\\ \nonumber
 \textbf{A}_3(\state) & \nonumber=  
 \pscal{\cpolicy{\param}(\cdot|\state)}{ \regqfunc[\param](\state, \cdot)/\temp - \param(\state, \cdot)} \eqsp.
\end{align}
Using \eqref{eq:definition_soft_f_argmax} and \eqref{def:softmax_function}, combined with  $\textbf{A}_1(\state) \leq \softmax(\regqfunc[\param](\state, \cdot)/\temp,\refpolicy(\cdot|\state))$ and the fact that $\textbf{A}_2(\state) = \softmax(\param(\state, \cdot),\refpolicy(\cdot|\state))$, gives
\begin{equation}
\label{eq:bound-a}
\begin{split}
\textbf{A}(\state)\! &\leq \softmax(\regqfunc[\param](\state, \cdot)/\temp, \refpolicy(\cdot|\state)) - \softmax(\param(\state, \cdot), \refpolicy(\cdot|\state)) 
- \textbf{A}_3(\state)
\\
&= \softmax(\regqfunc[\param](\state, \cdot)/\temp, \refpolicy(\cdot|\state)) - \softmax(\param(\state, \cdot) + K_{\param}^f(\state)\Ind_{|\A|}) 
\\
&- \pscal{\cpolicy{\param}(\cdot|\state)}{\regqfunc[\param](\state, \cdot)/\temp - \param(\state, \cdot) - K_{\param}^f(\state)\Ind_{|\A|}} \eqsp,
\end{split}
\end{equation}
where in the last equality we used that, for any $\apparam \in \rset^{\nactions}$ and $\alpha \in \rset$, 
\[
\softmax(\apparam + \alpha \Ind_{|\A|}, \refpolicy(\cdot|\state)) = \softmax(\apparam, \refpolicy(\cdot|\state)) + \alpha \eqsp.
\]
The structure of $\term{A(\state)}$ closely resembles that of a first-order Taylor expansion as by \Cref{lem:properties_softmax}, we have that
\begin{align}
\label{eq:key_link_taylor}
\frac{\partial \softmax(\param(\state, \cdot), \refpolicy(\cdot|\state))}{\partial \param(\state, \cdot)} = \softargmax(\param(\state, \cdot), \refpolicy(\cdot|\state)) = \cpolicy{\param}(\cdot|\state) \eqsp.
\end{align}
\begin{lemma}
\label{lem:upper-bound-optimality-gap}
Assume \assumptionfref. It holds that
\begin{align*}
\regvaluefunc[\star](\initdist) - \regvaluefunc[\param](\initdist) \leq \frac{\temp}{1-\discount} \sup_{\param \in \logitspace} \{\norm{W_{\param}}[\infty]\} \sum_{\state \in \S} \norm{\zeta_{\param}^{f}(\state)}[2]^2 
\end{align*}
\end{lemma}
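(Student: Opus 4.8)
The plan is to recognize the right-hand side of the bound \eqref{eq:bound-a} on $\term{A(\state)}$ as a second-order Taylor remainder of the convex potential $\softmax(\cdot,\refpolicy(\cdot|\state))$, and then control that remainder by its Hessian via \Cref{lem:properties_softmax}. Fix a state $\state$ and write $g_\state(\cdot)\eqdef\softmax(\cdot,\refpolicy(\cdot|\state))$, which is twice continuously differentiable since its gradient is the $\softargmax$ (\eqref{eq:key_link_taylor}), itself $C^1$ by \Cref{lem:derivative_soft_argmax}. Set the base point $a_\state\eqdef\param(\state,\cdot)+K_{\param}^{f}(\state)\Ind_{\nactions}$ and the evaluation point $b_\state\eqdef\regqfunc[\param](\state,\cdot)/\temp$, so that by \eqref{def:zeta} we have $b_\state-a_\state=\zeta_{\param}^{f}(\state)$. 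The crucial algebraic observation is the shift invariance $\softargmax(x+\alpha\Ind_{\nactions},\refpolicy)=\softargmax(x,\refpolicy)$ (adding a constant to all logits shifts the concave objective by $\alpha$ since the maximizer lies on the simplex), which combined with \eqref{eq:key_link_taylor} gives $\nabla g_\state(a_\state)=\softargmax(\param(\state,\cdot),\refpolicy)=\cpolicy{\param}(\cdot|\state)$. With this identification, the bound \eqref{eq:bound-a} reads exactly $\term{A(\state)}\le g_\state(b_\state)-g_\state(a_\state)-\pscal{\nabla g_\state(a_\state)}{b_\state-a_\state}$, i.e. the Bregman divergence of $g_\state$ between $b_\state$ and $a_\state$.

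Next I would apply the Lagrange form of the second-order Taylor expansion to the scalar map $t\mapsto g_\state(a_\state+t\,\zeta_{\param}^{f}(\state))$ on $[0,1]$: there exists $\xi_\state$ on the segment joining $a_\state$ and $b_\state$ such that the Bregman term equals $\tfrac12\,\zeta_{\param}^{f}(\state)^\top\nabla^2 g_\state(\xi_\state)\,\zeta_{\param}^{f}(\state)$. By \Cref{lem:properties_softmax} the spectral norm satisfies $\norm{\nabla^2 g_\state(\xi_\state)}[2]\le 2\firstsumarg{\xi_\state}$, and since $\softargmax(\xi_\state,\refpolicy(\cdot|\state))$ is some distribution in the simplex, $\firstsumarg{\xi_\state}\le\sup_{\param\in\logitspace}\norm{\firstsum[\param]}[\infty]$. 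Consequently, each term is controlled as
\[
0\;\le\;\term{A(\state)}\;\le\;\sup_{\param\in\logitspace}\norm{\firstsum[\param]}[\infty]\,\norm{\zeta_{\param}^{f}(\state)}[2]^2\;=:\;D(\state),
\]
where nonnegativity of $D(\state)$ comes for free from convexity of $g_\state$.

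Finally I would assemble the pieces starting from \eqref{eq:application_performance_difference_lemma}. Since $\visitation[\initdist][\optpolicy]$ is a discounted occupancy measure summing to one over states, we have $\visitation[\initdist][\optpolicy](\state)\in[0,1]$; combined with $D(\state)\ge0$ this lets me replace $\visitation[\initdist][\optpolicy](\state)\,\term{A(\state)}$ by $D(\state)$ and drop the visitation weights, retaining only the $\temp/(1-\discount)$ prefactor. Summing over $\state$ then yields $\regvaluefunc[\star](\initdist)-\regvaluefunc[\param](\initdist)\le\frac{\temp}{1-\discount}\sup_{\param}\norm{\firstsum[\param]}[\infty]\sum_{\state}\norm{\zeta_{\param}^{f}(\state)}[2]^2$, which is the claim. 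I expect the only genuinely delicate step to be the first one: correctly matching the three terms of \eqref{eq:bound-a} to a single Bregman divergence, which hinges on the shift-invariance identity $\nabla g_\state(a_\state)=\cpolicy{\param}(\cdot|\state)$ (the reason the baseline $K_{\param}^{f}(\state)\Ind_{\nactions}$ may be absorbed). Everything after that is a routine Hessian bound and the observation that the visitation measure is subprobability-mass per state; this is precisely the MDP analogue of the bandit argument sketched around \eqref{eq:sec-order-after-taylor}, lifted through the regularized performance-difference lemma.
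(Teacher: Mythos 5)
Your proposal is correct and follows essentially the same route as the paper's proof: combine the regularized performance-difference bound \eqref{eq:application_performance_difference_lemma} with \eqref{eq:bound-a}, recognize the result as a Bregman divergence of $g_\state=\softmax(\cdot,\refpolicy(\cdot|\state))$ anchored at the shifted logits (using $\nabla g_\state = \softargmax$ and shift invariance), apply Taylor with Lagrange remainder, bound the Hessian by $2\firstsumarg{\cdot}$ via \Cref{lem:properties_softmax}, and bound the visitation measure by $1$. The only difference is presentational: you make explicit two points the paper leaves implicit, namely that the shift-invariance identity is what lets the baseline $K_{\param}^{f}(\state)\Ind_{\nactions}$ be absorbed into the gradient identification, and that dropping the visitation weights is licensed by nonnegativity of the Bregman term.
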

\begin{proof}
In this proof, we denote by $g_{\state}(x) = \softmax(x , \refpolicy(\cdot|\state))$. Combining \eqref{eq:application_performance_difference_lemma} and \eqref{eq:bound-a} yields
\begin{align}
\label{eq:first_bound_gap}
\regvaluefunc[\star](\initdist) - \regvaluefunc[\param](\initdist) \leq 
 \frac{\temp}{1- \discount} \sum_{\state \in \S} \visitation[\initdist][\optpolicy](\state) B(\state)  \eqsp,
\end{align}
where we have defined
\begin{align*}
B(\state) \!=\!  g_{\state}\big(\regqfunc[\param](\state, \cdot)/\temp\big) \!-\! g_{\state}\big(\param(\state, \cdot) + K_{\param}^f(\state)\Ind_{|\A|}\big) \!-\! \pscal{\cpolicy{\param}(\cdot|\state)}{ \regqfunc[\param](\state, \cdot)/\temp - \param(\state, \cdot) - K_{\param}^f(\state)\Ind_{|\A|}} \eqsp.
\end{align*}
Next, by \Cref{lem:properties_softmax}, it holds that 
\begin{align*}
\frac{\partial g_{\state}(x)}{\partial x}\bigg|_{x = \param(\state, \cdot) + K_{\param}(\state) \Ind_{|\A|}} = \cpolicy{\param}(\cdot|\state) \eqsp.
\end{align*}
Standard one-dimensional Taylor theorem with Lagrange remainder shows that there exists $y \in \rset^{|\A|}$ which belongs to the segment between $\param(\state, \cdot) + K_{\param}(\state) \Ind_{\A}$ and $\regqfunc[\param](\state, \cdot)/\temp$ such that
\begin{align*}
B(\state) = \frac12 \pscal{\frac{\partial^2 g_{\state}(x)}{\partial x^2}\bigg|_{x = y} \zeta_{\param}^{f}(\state)}{\zeta_{\param}^{f}(\state)}
\end{align*}
Using the bound on the spectral norm of the Hessian of $g_{\state}$ derived in \Cref{lem:properties_softmax}, we obtain
\begin{align*}
B(\state) = \frac12 \pscal{\frac{\partial^2 g_{\state}(x)}{\partial x^2}\bigg|_{x = y} \zeta_{\param}^{f}}{\zeta_{\param}^{f}} \leq \frac12 \left\| \frac{\partial^2 g_{\state}(x)}{\partial x^2}\bigg|_{x = y} \right\|_{2} \norm{\zeta_{\param}^{f}(\state)}[2]^2 \leq \firstsumarg{y}(\state) \norm{\zeta_{\param}^{f}(\state)}[2]^2 \eqsp.
\end{align*}
Finally, bounding the discounted state visitation measure in \eqref{eq:first_bound_gap} by $1$ and plugging in the preceding bound on $B(\state)$ concludes the proof.
\end{proof}

The proof of \Cref{thm:pl_objective_app} follows immediately from \Cref{lem:lower-bound-optimality-gap}, \Cref{lem:upper-bound-optimality-gap},  \Cref{lem:bound_important_quantities}, and \eqref{def:lower_bound_W}.

\section{Monotone Improvement Operators}
\label{sec:app:improvement-operators}
A key challenge in analyzing stochastic policy gradient methods is that the Łojasiewicz inequality depends on $\param$ and degenerates whenever the probability of an action becomes small.  The goal of this section is therefore to show the existence of an operator $\improveop$  with two crucial properties:  
(i) for any policy, applying this operator produces a new policy with higher objective value, and  
(ii) every policy generated by this operator assigns at least a fixed minimum probability to every action. The main idea is to build the improvement operator such that it slightly augments the smallest probability weights, such that for any state action pair $(\state, \action) \in \S\times \A$ the probability ratio $\policy(\action|\state)/ \refpolicy(\action|\state)$ stays above a certain threshold.  We will show below that this procedure improves the global objective while keeping the probabilities uniformly bounded away from $0$ when the threshold is properly chosen. Let $\underline{\refpolicy}>0$ be such that \assumptionfref\, and \assumptionpref\, hold.
For any policy $\policy$, state $\state \in \S$, $\tau < \underline{\refpolicy}/2$, we respectively define $\A_{\tau}^{\policy}(\state)$, and $a_{\max}^{\policy}(\state)$ as 
\begin{align*}
\A_{\tau}^{\policy}(\state) \eqdef \left\{ \action \in \A, \policy(\action|\state)/\refpolicy(\action|\state) \leq \tau/2 \right\} \eqsp, \quad a_{\max}^{\policy}(\state) = \argmax_{\action\in \A} \{ \policy(\action|\state)/\refpolicy(\action|\state)\} \eqsp,
\end{align*}
where the $\argmax$ is chosen at random in the case of ties.
Note that the definition of $\tau$ ensures that $a_{\max}^{\policy}(\state)$ does not belong to the set $\A_{\tau}^{\policy}(\state)$ as
\begin{align*}
\max_{\action \in \A} \frac{\policy(\action|\state)}{\refpolicy(\action|\state)} \geq 1 \eqsp.
\end{align*}
Finally, we define the improvement operator as follows:
\begin{equation}
\label{def:improvement_operator_bounded_case}
\begin{aligned}
\projoppol_\tau : \pA^{\S} &\;\longrightarrow\; \pA^{\S}, \\
\policy &\;\longmapsto\; \projoppol_\tau(\policy),
\end{aligned}
\end{equation}
\noindent
where for every $(\state,\action) \in \S\times\A$,
\begin{equation*}
\projoppol_\tau(\policy)(\action|\state) \;=\;
\begin{cases}
\refpolicy(\action|\state)\tau, & \text{if } \policy(\action|\state) \leq \refpolicy(\action|\state)\tau/2, \\[6pt]
\policy(\action|\state) 
   - \displaystyle\sum_{\fdiff \in \A_{\tau}^{\policy}(\state)}
     \bigl(\refpolicy(\fdiff|\state)\tau - \policy(\fdiff|\state)\bigr),
   & \text{if } \action = a_{\max}^{\policy}(\state), \\[10pt]
\policy(\action|\state), & \text{otherwise}.
\end{cases}
\end{equation*}
The operator $\projoppol_\tau$ builds $\projoppol_\tau(\policy)(\action|\state)$ by (statewise) raising each $\action \in \A_{\tau}^{\policy}(\state)$ to $\refpolicy(\action|\state)\tau$, substracting the total added mass from the single action  $ a_{\max}^{\policy}(\state)$, and leaving other actions unchanged. If $\A_{\tau}^{\policy}(\state) = \emptyset$, for all $\state \in \S$, then $\improveop(\policy)= \policy$. Note that mass conservation is immediate from the definition and the fact that $\tau < \underline{\refpolicy}/2$. Non-negativity of $\projoppol_\tau(\policy)(\action[\max]^{\policy}(\state)|\state)$ follows because the removed mass is 
\[
\sum_{\action \in \A_{\tau}^{\policy}(\state)} 
\{\refpolicy(\action|\state)\tau - \policy(\action|\state)\} \leq \tau \sum_{\action \in \A_{\tau}^{\policy}(\state)} 
\refpolicy(\action|\state) \leq \tau
\]
Since $\policy(\action[\max]^{\policy}(\state)|\state) \geq \refpolicy(\action[\max]^{\policy}(\state) | \state) \geq \underline{\refpolicy}$, and $\tau \leq \underline{\refpolicy}/2$, we get that $\projoppol_\tau(\policy)(\action[\max]^{\policy}(\state)|\state) \geq \tau/2$. This in particular shows that $\projoppol_\tau(\policy)$ is a policy. As by \assumptionfref\, we have $\lim_{x\rightarrow 0^+} f'(x) = - \infty$, we consider $[f']^{-1}: (-\infty,f'(1/\underline{\refpolicy})\bigr] \mapsto \rset_+$ the inverse of $f'$ and define
\begin{align}
\label{def:threshold_improvement}
\sthreshold \eqdef \min \left(  [f']^{-1}\left( -\frac{16 + 8 \discount\temp \divergencemax}{\temp(1- \discount)^2 \initdistmin}\right), [f']^{-1}\left( -  4\left| f'\left(\frac{1}{2}\right)\right| \right) , \frac{1}{2} \underline{\refpolicy} \right) \eqsp.
\end{align}
The following lemma establishes the crucial improvement property when $\tau = \sthreshold $. 
\begin{lemma}
\label{lem:improvement_lemma_appendix}
Assume that, for some $\underline{\refpolicy}>0$, $f$ and $\refpolicy$ satisfy \assumptionfref\ and \assumptionpref\ respectively. Assume in addition that the initial distribution $\initdist$ satisfies \assumptionmdp. For any policy $\policy$, it holds that
\begin{align*}
\regvaluefunc[\projoppol_{\sthreshold}(\policy)](\initdist) \geq \regvaluefunc[\policy](\initdist)  \eqsp.
\end{align*}
Additionally, for any policy $\policy$, we have that
\begin{align*}
\projoppol_{\sthreshold}(\policy)(\action|\state) \geq \underline{\refpolicy} \sthreshold \eqsp.
\end{align*}
\end{lemma}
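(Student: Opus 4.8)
The plan is to dispatch the two assertions separately; write $\widetilde{\policy} = \projoppol_{\sthreshold}(\policy)$. The uniform lower bound is the easy half and follows by inspecting the three branches of the definition of $\projoppol_{\sthreshold}$. An action in $\A_{\sthreshold}^{\policy}(\state)$ is reset to $\refpolicy(\action|\state)\sthreshold \geq \underline{\refpolicy}\sthreshold$; the action $a_{\max}^{\policy}(\state)$ starts with $\policy(a_{\max}^{\policy}(\state)|\state) \geq \refpolicy(a_{\max}^{\policy}(\state)|\state) \geq \underline{\refpolicy}$ (using $\max_{\action}\policy(\action|\state)/\refpolicy(\action|\state)\geq 1$) and loses at most $\sthreshold\sum_{\fdiff}\refpolicy(\fdiff|\state) \leq \sthreshold \leq \underline{\refpolicy}/2$ of its mass, hence retains at least $\underline{\refpolicy}/2$; and an untouched action keeps $\policy(\action|\state) > \refpolicy(\action|\state)\sthreshold/2$. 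Since $\sthreshold \leq \underline{\refpolicy}/2 \leq 1/2$, these three branch bounds combine to the claimed lower bound, the binding branch being the bumped actions.

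For the monotone improvement I would apply the regularized performance-difference lemma (\Cref{lem:performance_difference_lemma}) to write
\[
\regvaluefunc[\widetilde{\policy}](\initdist) - \regvaluefunc[\policy](\initdist)
= \frac{1}{1-\discount}\sum_{\state \in \S}\visitation[\initdist][\widetilde{\policy}](\state)\,\Delta(\state),
\]
where $\Delta(\state) = \sum_{\action}(\widetilde{\policy}(\action|\state)-\policy(\action|\state))\regqfunc[\policy](\state,\action) - \temp(\divergence[\widetilde{\policy}(\cdot|\state)][\refpolicy(\cdot|\state)] - \divergence[\policy(\cdot|\state)][\refpolicy(\cdot|\state)])$. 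At states with $\A_{\sthreshold}^{\policy}(\state)=\emptyset$ one has $\widetilde{\policy}(\cdot|\state)=\policy(\cdot|\state)$ and $\Delta(\state)=0$, so, since the occupancy weights are nonnegative, it suffices to certify $\Delta(\state)\geq 0$ at each acting state. Fixing such a state and letting $m(\state) = \sum_{\fdiff\in\A_{\sthreshold}^{\policy}(\state)}(\refpolicy(\fdiff|\state)\sthreshold - \policy(\fdiff|\state))$ be the mass moved off $a_{\max}^{\policy}(\state)$, the reward term rewrites as $\sum_{\fdiff\in\A_{\sthreshold}^{\policy}(\state)}(\refpolicy(\fdiff|\state)\sthreshold-\policy(\fdiff|\state))(\regqfunc[\policy](\state,\fdiff)-\regqfunc[\policy](\state,a_{\max}^{\policy}(\state)))$, so its magnitude is at most $m(\state)$ times the oscillation of $\regqfunc[\policy](\state,\cdot)$; boundedness of $f$ on $[0,1/\underline{\refpolicy}]$ in \assumptionfref\ makes $\divergencemax$ finite and yields the range bound $\mathrm{osc}(\regqfunc[\policy](\state,\cdot)) \leq (1+\discount\temp\divergencemax)/(1-\discount)$.

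The heart of the argument is then to show that the regularizer decreases by enough to dominate this bounded reward change. I would lower bound the decrease $\divergence[\policy(\cdot|\state)][\refpolicy(\cdot|\state)] - \divergence[\widetilde{\policy}(\cdot|\state)][\refpolicy(\cdot|\state)]$ coming from the bumped actions: each moves its ratio from at most $\sthreshold/2$ up to $\sthreshold$, and because $f'(u)\to -\infty$ as $u\downarrow 0$ (\assumptionfref(ii)) the convex, steeply decreasing $f$ drops by order $\sthreshold\,|f'(\sthreshold)|$ there, producing a gain that scales with $|f'(\sthreshold)|$. Against this I must upper bound the regularizer increase at $a_{\max}^{\policy}(\state)$, where removing mass $m(\state)$ may raise $f$; since the post-update ratio at $a_{\max}^{\policy}(\state)$ stays in $[\underline{\refpolicy}/2,\,1/\underline{\refpolicy}]$, this increase is controlled by $|f'|$ on that interval, essentially by $|f'(1/2)|$ — precisely the quantity in the second entry of the $\min$ defining $\sthreshold$. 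Demanding $|f'(\sthreshold)|$ to exceed both this $a_{\max}$ contribution ($4|f'(1/2)|$) and the reward-to-regularizer balance, while capping $\sthreshold\leq\underline{\refpolicy}/2$ so that $\projoppol_{\sthreshold}$ is well defined, forces $\Delta(\state)\geq 0$. The factor $(1-\discount)^{-2}\initdistmin^{-1}$ in the first entry of $\sthreshold$ is what reconciles the two sides: the reward oscillation contributes one factor $(1-\discount)^{-1}$, and converting the per-state regularizer gain into a guaranteed global improvement uses the occupancy lower bound $\visitation[\initdist][\widetilde{\policy}](\state)\geq(1-\discount)\initdistmin$.

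I expect the delicate step to be this two-sided control of the $f$-divergence change — lower bounding the drop at the bumped actions through the divergent slope of $f$ at $0$ while simultaneously upper bounding the rise at $a_{\max}^{\policy}(\state)$ — and checking that the stated $\sthreshold$ genuinely makes the net regularizer gain outweigh the worst-case discounted-reward loss; the remaining occupancy and oscillation estimates are routine.
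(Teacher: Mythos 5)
Your route to the improvement property is genuinely different from the paper's, and it works. The paper never invokes a performance-difference identity: it expands $\regvaluefunc[\projoppol_{\sthreshold}(\policy)](\initdist)-\regvaluefunc[\policy](\initdist)$ through the occupancy-measure representation and splits it into three \emph{global} terms — an occupancy-shift term, bounded via the $\ell_1$ perturbation bound on visitation measures (\Cref{lem:bound_difference_stationary_visitation_measure}), a reward-change term bounded trivially, and a regularizer-gain term handled, exactly as in your sketch, by the convexity inequality $f(u)-f(v)\ge f'(v)(u-v)$. Because the two negative terms are summed over all states while the gain lives only at the acting states, the paper must convert the per-state gain into a global one via $\visitation[\initdist][\policy](\state)\ge(1-\discount)\initdistmin$; that is where \assumptionmdp\ and the $(1-\discount)^{-2}\initdistmin^{-1}$ factor inside $\sthreshold$ get consumed. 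Your argument — performance-difference lemma plus state-by-state certification of $\Delta(\state)\ge 0$ — needs neither the occupancy-perturbation lemma nor the occupancy lower bound: nonnegativity of $\visitation[\initdist][\widetilde{\policy}]$ suffices, and your per-state inequality closes under the paper's threshold, since $\temp\bigl(|f'(\sthreshold)|-|f'(\tfrac12)|\bigr)\ge \tfrac34\temp|f'(\sthreshold)|\ge \tfrac{12+6\discount\temp\divergencemax}{(1-\discount)^2\initdistmin}\ge\tfrac{1+\discount\temp\divergencemax}{1-\discount}$, which dominates your oscillation bound on $\regqfunc[\policy](\state,\cdot)$. This is cleaner and exposes that the paper's $\sthreshold$ is conservative for this lemma. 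Two caveats: (i) \Cref{lem:performance_difference_lemma} is stated for the pair $(\optpolicy,\cpolicy{\param})$, so you must note that its proof (which only uses the Bellman fixed-point equation) extends verbatim to an arbitrary pair of policies; (ii) your closing sentence, claiming the occupancy lower bound is needed to turn the per-state gain into a global improvement, contradicts your own earlier (correct) observation that nonnegative weights suffice — in your approach those factors in $\sthreshold$ are simply unused slack, not a needed ingredient.

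On the second claim, your branch analysis has a real gap — one the paper's statement itself shares (its proof never addresses this claim at all). Untouched actions only satisfy $\widetilde{\policy}(\action|\state)=\policy(\action|\state)>\refpolicy(\action|\state)\sthreshold/2\ge\underline{\refpolicy}\sthreshold/2$, so they, not the bumped actions, are the binding branch: taking $\refpolicy(\action|\state)$ arbitrarily close to $\underline{\refpolicy}$ and a ratio just above $\sthreshold/2$ produces $\widetilde{\policy}(\action|\state)<\underline{\refpolicy}\sthreshold$. The bound that actually follows from the three branches is $\underline{\refpolicy}\sthreshold/2$; your assertion that they "combine to the claimed lower bound" is therefore incorrect as written (this only costs a constant downstream, in $\plcoeffmin$). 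Separately, the post-update ratio at $a_{\policy}^{\max}(\state)$ lies in $[\tfrac12,1/\underline{\refpolicy}]$, not $[\underline{\refpolicy}/2,1/\underline{\refpolicy}]$ as you wrote; since you subsequently use $|f'(\tfrac12)|$, which matches the correct interval, this reads as a typo rather than an error in the argument.
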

\begin{proof}
Set an arbitrary policy $\policy$. For avoiding heavy notations, we will, through this proof, denote by $A_{\tau}^{\policy} = A_{\sthreshold}^{\policy}$. We consider the case where there is $\state \in \S$ such that $\A_{\tau}^{\policy}(\state) \neq \emptyset$ (alternatively $\projoppol_{\sthreshold}(\policy) = \policy$, which makes the previous inequality immediately valid). Define $\truncpolicy = \projoppol_{\sthreshold}(\policy)$. The following applies
\begin{align*}
\regvaluefunc[\truncpolicy](\initdist) 
&- \regvaluefunc[\policy](\initdist) = \sum_{\state \in \S} \visitation[\initdist][\truncpolicy](\state)  \sum_{\action \in \A} \left[\truncpolicy(\action|\state)\rewardMDP(\state, \action) - \temp \refpolicy(\action|\state)f\left(\frac{\truncpolicy(\action|\state)}{\refpolicy(\action|\state)}\right) \right]
\\
&\hspace{72pt}- \sum_{\state \in \S} \visitation[\initdist][\policy](\state) \sum_{\action \in \A} \left[\policy(\action|\state)\rewardMDP(\state, \action) - \temp \refpolicy(\action|\state )f\left(\frac{\policy(\action|\state)}{\refpolicy(\action|\state)}\right) \right]
\\
&= \underbrace{\sum_{\state \in \S} \left(\visitation[\initdist][\truncpolicy](\state) - \visitation[\initdist][\policy](\state) \right) \sum_{\action \in \A} \left[\truncpolicy(\action|\state)\rewardMDP(\state, \action) - \temp \refpolicy(\action|\state)f\left(\frac{\truncpolicy(\action|\state)}{\refpolicy(\action|\state)}\right) \right]}_{\term{I}}
\\
&+ \underbrace{\sum_{\state \in \S} \visitation[\initdist][\policy](\state)  \sum_{\action \in \A} (\truncpolicy(\action|\state) - \policy(\action|\state) ) \rewardMDP(\state, \action)}_{\term{II}} 
\\
&+ \underbrace{\temp \sum_{\state \in \S} \visitation[\initdist][\policy](\state)  \sum_{\action \in \A} \refpolicy(\action|\state ) \left[ f\left(\frac{\policy(\action|\state)}{\refpolicy(\action|\state)}\right)  - f\left(\frac{\truncpolicy(\action|\state)}{\refpolicy(\action|\state)}\right)  \right]}_{\term{III}} \eqsp.
\end{align*}
We now lower-bound each of the three terms separately.
\paragraph{Bounding $\term{I}$.} Using \Cref{lem:bound_difference_stationary_visitation_measure}, we have
\begin{align*}
\term{I} &\geq - \norm{\visitation[\initdist][\truncpolicy] - \visitation[\initdist][\policy]}[1] \max_{\state \in \S} \left| \sum_{\action \in \A} \left[\truncpolicy(\action|\state)\rewardMDP(\state, \action) - \temp \refpolicy(\action|\state)f\left(\frac{\truncpolicy(\action|\state)}{\refpolicy(\action|\state)}\right) \right]\right|
\\
&\geq - \frac{\discount}{1-\discount} \sup_{\state \in \S} \norm{\truncpolicy(\cdot|\state) - \policy(\cdot|\state)}[1] \sup_{\state \in \S}\left[1+\temp \divergence[\truncpolicy(\cdot|\state)][\refpolicy(\cdot|\state)]\right] \\
&\geq - \frac{2\discount}{1-\discount} \sthreshold \max_{\state \in \S} \left\{ \sum_{\action \in A_{\tau}^{\policy}(\state)} \refpolicy(\action|\state)\right\} \sup_{\elementpa \in \pA}\sup_{\state \in \S}\left[1+\temp \divergence[\elementpa][\refpolicy(\cdot|\state)]\right] \eqsp,
\end{align*}
where in the last inequality we used that (because we increase the probability of the actions in  $A_{\tau}^{\policy}(\state)$ by $\sthreshold$ and remove the total added mass from the probability of $\policy(\action[\max]^{\policy}(s)$)
\begin{align*}
\sup_{\state \in \S} \norm{\truncpolicy(\cdot|\state) - \policy(\cdot|\state)}[1] \leq 2 \max_{\state \in \S} \left\{ \sum_{\action \in A_{\tau}^{\policy}(\state)} \refpolicy(\action|\state)\right\} \sthreshold \eqsp.
\end{align*}
\paragraph{Bounding $\term{II}$.} Using the triangle inequality yields
\begin{align*}
\term{II} \geq -\sup_{\state\in \S} \norm{\truncpolicy(\cdot|\state) - \policy(\cdot|\state)}[1] \geq -2 \max_{\state \in \S} \left\{ \sum_{\action \in A_{\tau}^{\policy}(\state)} \refpolicy(\action|\state)\right\} \sthreshold  \eqsp.
\end{align*}
\paragraph{Bounding $\term{III}$.} 
All the state-action pairs on which the original $\policy$ allocates the same probability then the policy $\truncpolicy$ are equal to $0$ in $\term{III}$ allowing us to simplify this term
\begin{align*}
&\term{III} = \temp \sum_{\state \in \S} \visitation[\initdist][\policy](\state)  \sum_{\action \in \A} \refpolicy(\action|\state ) \left[ f\left(\frac{\policy(\action|\state)}{\refpolicy(\action|\state)}\right)  - f\left(\frac{\truncpolicy(\action|\state)}{\refpolicy(\action|\state)}\right)  \right]
\\
&= \temp \sum_{\state \in \S} \visitation[\initdist][\policy](\state)  \sum_{\action \in \A_{\tau}^{\policy}(\state)} \refpolicy(\action|\state ) \left[ f\left(\frac{\policy(\action|\state)}{\refpolicy(\action|\state)}\right)  - f\left(\frac{\truncpolicy(\action|\state)}{\refpolicy(\action|\state)}\right)  \right]
\\
&+ \temp \sum_{\state \in \S} \Ind(\A_{\tau}^{\policy}(\state) \neq \emptyset) \visitation[\initdist][\policy](\state) \refpolicy(a_{\max}^{\policy}(\state)|\state ) \left[ f\left(\frac{\policy(a_{\max}^{\policy}(\state)|\state)}{\refpolicy(a_{\max}^{\policy}(\state)|\state)}\right)  - f\left(\frac{\truncpolicy(a_{\max}^{\policy}(\state)|\state)}{\refpolicy(a_{\max}^{\policy}(\state)|\state)}\right)  \right] \eqsp.
\end{align*}
Since $f$ is convex, for all $u,v \in [0;1/\underline{\refpolicy}]$, $f(u)- f(v) \geq f'(v) (u-v)$, we have
\begin{align*}
\term{III} &\geq \temp \sum_{\state \in \S} \visitation[\initdist][\policy](\state)  \sum_{\action \in \A_{\tau}^{\policy}(\state)} (\policy(\action|\state) - \truncpolicy(\action|\state))f'(\sthreshold) \qquad \text{(since $\truncpolicy(\action|\state)/ \refpolicy(\action|\state) = \sthreshold$)}
\\ &+  \temp \sum_{\state \in \S} \Ind(\A_{\tau}^{\policy}(\state) \neq \emptyset) \visitation[\initdist][\policy](\state)  \left[ \policy(a_{\max}^{\policy}(\state)|\state) -  \truncpolicy(a_{\max}^{\policy}(\state)|\state)  \right] f'\left( \frac{\truncpolicy(a_{\max}^{\policy}(\state)|\state)}{\refpolicy(a_{\max}^{\policy}(\state)|\state)}\right) \eqsp,
\end{align*}
Next, using that 
\begin{align*}
\frac{\truncpolicy(a_{\max}^{\policy}(\state)|\state)}{\refpolicy(a_{\max}^{\policy}(\state)|\state)}
 \geq \frac{\policy(a_{\max}^{\policy}(\state)|\state) - \sthreshold}{\refpolicy(a_{\max}^{\policy}(\state)|\state)}  \geq  \frac{\policy(a_{\max}^{\policy}(\state)|\state) - \underline{\refpolicy}/2}{\refpolicy(a_{\max}^{\policy}(\state)|\state)}
\geq  1 - \frac{1}{2}
= \frac{1}{2} \eqsp,
\end{align*}
combined with the monotonicity of $f'$ and the fact that $  \policy(a_{\max}^{\policy}(\state)|\state) -  \truncpolicy(a_{\max}^{\policy}(\state)|\state)   \geq 0$ yields
\begin{align*}
\term{III} &\geq \temp \sum_{\state \in \S} \visitation[\initdist][\policy](\state)  \sum_{\action \in \A_{\tau}^{\policy}(\state)} (\policy(\action|\state) - \truncpolicy(\action|\state))f'(\sthreshold) \qquad \text{(since $\truncpolicy(\action|\state)/ \refpolicy(\action|\state) = \sthreshold$)}
\\ &+  \temp \sum_{\state \in \S} \Ind(\A_{\tau}^{\policy}(\state) \neq \emptyset) \visitation[\initdist][\policy](\state)  \left[ \policy(a_{\max}^{\policy}(\state)|\state) -  \truncpolicy(a_{\max}^{\policy}(\state)|\state)  \right] f'\left( \frac{\truncpolicy(a_{\max}^{\policy}(\state)|\state)}{\refpolicy(a_{\max}^{\policy}(\state)|\state)}\right) \eqsp,
\end{align*}
Additionally, since
\begin{align*}
0 \leq \policy(a_{\max}^{\policy}(\state)|\state) -  \truncpolicy(a_{\max}^{\policy}(\state)|\state) \leq  \sum_{\action \in \A_{\tau}^{\policy}(\state)} (\policy(\action|\state) - \truncpolicy(\action|\state)) \leq  \sthreshold \sum_{\action \in \A_{\tau}^{\policy}(\state)} \refpolicy(\action|\state) \eqsp,
\end{align*}
implies
\begin{align*}
\term{III} &\geq - \frac{\temp}{2} \sum_{\state \in \S} \visitation[\initdist][\policy](\state)  \Ind(\A_{\tau}^{\policy}(\state) \neq \emptyset)\left(\sum_{\action \in \A_{\tau}^{\policy}(\state)} \refpolicy(\action|\state) \right)\sthreshold  f'(\sthreshold)
\\ &+  \temp \sum_{\state \in \S} \visitation[\initdist][\policy](\state)  \Ind(\A_{\tau}^{\policy}(\state) \neq \emptyset)  \left(\sum_{\action \in \A_{\tau}^{\policy}(\state)} \refpolicy(\action|\state) \right) \sthreshold f'\left(1/2\right) \eqsp,
\\ &\geq - \frac{\temp}{4} \sum_{\state \in \S} \visitation[\initdist][\policy](\state)  \Ind(\A_{\tau}^{\policy}(\state) \neq \emptyset)\left(\sum_{\action \in \A_{\tau}^{\policy}(\state)} \refpolicy(\action|\state) \right) \sthreshold  f'\left(\sthreshold\right) \eqsp,
\end{align*}
where in the last inequality, we used that $f'(\sthreshold) \leq - 4 |f'(1/2)|$.  Hence, by using \assumptionmdp, we can lower bound this term as follows
\begin{align*}
\term{III} \geq - \frac{\temp}{4} (1- \discount) \min_{\state \in \S} \{ \initdist(\state)\} \max_{\state \in \S}  \left\{ \sum_{\action \in A_{\tau}^{\policy}(\state)} \refpolicy(\action|\state)\right\}\sthreshold  f'(\sthreshold) \eqsp.
\end{align*}
Collecting these lower bounds and using that
\begin{align*}
f'(\sthreshold) \leq -\frac{16 + 8 \discount\temp \divergencemax}{\temp(1- \discount)^2 \initdistmin}
\end{align*}
concludes the proof.
\end{proof}
Finally, we define the operator that maps each policy to one corresponding parameter
\[
\policytoparam \colon \Pi \;\to\; \logitspace
\]
by
\begin{align}
\label{def:translation_policy_param}
\policytoparam(\policy)(\state,\action) 
\;\eqdef\; 
f'\!\left( \frac{\policy(\action\mid\state)}{\refpolicy(\action\mid\state)} \right) 
- f'\!\left( \frac{\policy(\action[|\A|]|\state)}{\refpolicy(\action[|\A|]|\state)} \right),
\quad \forall (\state,\action)\in\mathcal{S}\times\mathcal{A} \,.
\end{align}
Finally, we define the improvement operator on the logitspace as 
\begin{align*}
\projop_{\tau} \eqdef \policytoparam \circ \projoppol_\tau \eqsp. 
\end{align*} 
The following lemma shows that $\policytoparam$ successfully recovers a parameter that gives the policy and that $\projop_\tau$ improves the value of the objective when $\temp = \sthreshold$.
\begin{lemma}
\label{lem:improvement_on_theta}
Assume that, for some $\underline{\refpolicy}>0$, $f$ and $\refpolicy$ satisfy \assumptionfref\ and \assumptionpref\ respectively. Assume in addition that the initial distribution $\initdist$ satisfies \assumptionmdp. For any policy $\policy$, it holds that
\begin{align*}
\cpolicy{\expandafter{\policytoparam(\policy)}} = \policy \eqsp,
\end{align*}
Additionally, for any $\param \in \logitspace$ and $(\state, \action)\in \S \times \A$, we have that
\begin{align*}
\regvaluefunc[\projop_{\sthreshold}(\param)] \geq  \regvaluefunc[\param] \eqsp, \quad \cpolicy{\projop_{\sthreshold}(\param)} \geq  \underline{\refpolicy} \sthreshold \eqsp.
\end{align*}
\end{lemma}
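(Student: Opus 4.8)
The plan is to reduce everything to a single \emph{inversion identity}: for any full-support policy $\policy$, the parameter $\policytoparam(\policy)$ reproduces $\policy$ under the $\softargmax$ map, i.e. $\cpolicy{\policytoparam(\policy)} = \policy$, so that $\policytoparam$ is a right inverse of $\param \mapsto \cpolicy{\param}$. Once this is available, both conclusions follow by transporting the policy-space guarantees of \Cref{lem:improvement_lemma_appendix} to the parameter space through the composition $\projop_{\sthreshold} = \policytoparam \circ \projoppol_{\sthreshold}$. The entire content of the lemma sits in the inversion identity; the rest is bookkeeping.

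To prove the inversion identity, I would fix a full-support policy $\policy$ and a state $\state$, and set $\bar\param(\state,\action) \eqdef f'(\policy(\action|\state)/\refpolicy(\action|\state))$. By \eqref{def:translation_policy_param}, $\policytoparam(\policy)(\state,\cdot) = \bar\param(\state,\cdot) - c(\state)\Ind_{\nactions}$ for the state-dependent scalar $c(\state) = f'(\policy(\action[|\A|]|\state)/\refpolicy(\action[|\A|]|\state))$. I would first use the shift-invariance of $\softargmax$: since $\pscal{\elementpa}{x + c\Ind_{\nactions}} - \divergence[\elementpa][q] = \pscal{\elementpa}{x} - \divergence[\elementpa][q] + c$ for all $\elementpa \in \pA$, subtracting a multiple of $\Ind_{\nactions}$ leaves the maximizer unchanged, whence $\cpolicy{\policytoparam(\policy)}(\cdot|\state) = \softargmax(\bar\param(\state,\cdot), \refpolicy(\cdot|\state))$. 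It then remains to identify this $\softargmax$ with $\policy(\cdot|\state)$. Using \eqref{eq:defition_softargmaxpolicy_as_inverse}, this reduces to finding a normalizer $\mu$ with $\policy(\action|\state) = \refpolicy(\action|\state)[f']^{-1}(\bar\param(\state,\action) - \mu)$ for every $\action$; the choice $\mu = 0$ works by construction of $\bar\param$, and $\policy(\cdot|\state)$ is admissible (positive, summing to one), so by the uniqueness of the normalizer in \Cref{lem:implicit_equation_appendix} we conclude $\softargmax(\bar\param(\state,\cdot),\refpolicy(\cdot|\state)) = \policy(\cdot|\state)$, hence $\cpolicy{\policytoparam(\policy)} = \policy$.

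With the inversion identity, the two remaining claims are immediate. Since $\projoppol_{\sthreshold}(\cpolicy{\param})$ is a full-support policy with $\projoppol_{\sthreshold}(\cpolicy{\param})(\action|\state) \geq \underline{\refpolicy}\sthreshold$ by \Cref{lem:improvement_lemma_appendix}, applying the identity to $\policy = \projoppol_{\sthreshold}(\cpolicy{\param})$ yields $\cpolicy{\projop_{\sthreshold}(\param)} = \projoppol_{\sthreshold}(\cpolicy{\param})$. Recalling $\regvaluefunc[\param] = \regvaluefunc[\cpolicy{\param}](\initdist)$, the improvement part of \Cref{lem:improvement_lemma_appendix} gives $\regvaluefunc[\projop_{\sthreshold}(\param)] = \regvaluefunc[\projoppol_{\sthreshold}(\cpolicy{\param})](\initdist) \geq \regvaluefunc[\cpolicy{\param}](\initdist) = \regvaluefunc[\param]$, and the probability bound transfers verbatim to $\cpolicy{\projop_{\sthreshold}(\param)}(\action|\state) \geq \underline{\refpolicy}\sthreshold$. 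The only point requiring care is well-definedness: $\policytoparam$ is meaningful only on full-support policies (since $f'(0)=-\infty$), which is exactly why it is composed after $\projoppol_{\sthreshold}$; and one should check that $\mu=0$ lies in the admissible interval $I_{\underline{\refpolicy}}(\bar\param(\state,\cdot))$ of \Cref{lem:implicit_equation_appendix} so that its uniqueness statement applies. This holds automatically because, under \assumptionpref, each ratio $\policy(\action|\state)/\refpolicy(\action|\state)$ lies in $(0,1/\underline{\refpolicy}]$, so each $\bar\param(\state,\action)$ lies in the range of $f'$, which is precisely the domain of $[f']^{-1}$.
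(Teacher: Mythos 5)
Your proposal is correct and takes essentially the same route as the paper: the paper's (one-line) proof is precisely the combination of the inversion formula \eqref{eq:defition_softargmaxpolicy_as_inverse}, the definition \eqref{def:translation_policy_param} of $\policytoparam$, and \Cref{lem:improvement_lemma_appendix}, which you simply spell out via shift-invariance of $\softargmax$ and uniqueness of the normalizer in \Cref{lem:implicit_equation_appendix}. The additional care you take — restricting the inversion identity to full-support policies (which is exactly what composing after $\projoppol_{\sthreshold}$ guarantees) and checking that $\mu=0$ is an admissible root — only makes explicit what the paper leaves implicit.
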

\begin{proof}
The proof follows immediately from a combination of equality \eqref{eq:defition_softargmaxpolicy_as_inverse} in \Cref{lem:derivative_policy}, \eqref{def:translation_policy_param}, and  \Cref{lem:improvement_lemma_appendix}.
\end{proof}

\section{Convergence analysis of Stochastic Policy Gradient}

In this section, we aim to derive under \assumptionfref \,and \assumptionmdp \,non-asymptotic convergence rates for $\PG$. First, we establish a bound on the bias and variance of the REINFORCE estimator defined in \eqref{eq:expression_of_stochastic_gradient_main}.

\subsection{Bounding the bias and variance of the stochastic estimator}
\label{sec:app:bound-var-bias}
First, recall the expression of the stochastic estimator of the gradient
\begin{equation}
\label{eq:expression_of_stochastic_gradient_appendix}
\begin{split}
&\grb{z}(\algparam)
 = \frac{1}{\sizebatch} \sum_{b=0}^{\sizebatch-1}\sum_{h=0}^{\lentrunc-1} \sum_{\ell=0}^{h} \frac{\partial \log \cpolicy{\param}(\action[\ell]|\state[\ell])}{\partial \param} \discount^{h} \rewardMDP(\state[h], \action[h])
\\ 
& -\frac{\temp}{\sizebatch} \sum_{b=0}^{\sizebatch-1}\sum_{h=0}^{\lentrunc-1} \sum_{\ell=0}^{h-1} \frac{\partial \log \cpolicy{\param}(\action[\ell]|\state[\ell])}{\partial \param} \discount^{h} \divergence[\expandafter{\cpolicy{\param}(\cdot|\state[h])}][\expandafter{\refpolicy(\cdot|\state[h])}]
- \temp \frac{1}{\sizebatch} \sum_{b=0}^{\sizebatch-1}\sum_{h=0}^{\lentrunc-1} \discount^{h}  \matrixreinforce(\state[h]) ,
\end{split}
\end{equation}
where $z = (\state[0:\lentrunc-1]^b, \action[0:\lentrunc-1]^b)_{b=0}^{\sizebatch-1} \in (\S\cdot\A)^{\lentrunc \cdot \sizebatch}$, and we recall that for any $\state \in \S$, $\matrixreinforce(\state)$ is a vector of size $|\S| \times |\A|$ defined as
\begin{align*}
[\matrixreinforce[\param](\state)]_{(\state', \fdiff)} =  \Ind_{\state}(\state') \firstsum[\param](\state) \weights[\param](\fdiff|\state)\left[ f'\left(\frac{\cpolicy{\param}(\fdiff|\state)}{\refpolicy(\fdiff|\state)}\right) - \sum_{\action \in \A}  \weights[\param](\action|\state) f'\left(\frac{\cpolicy{\param}(\action|\state)}{\refpolicy(\action|\state)}\right) \right] \eqsp,
\end{align*}
and where $\firstsum[\param]$, $\secsum[\param]$, and $\weights[\param]$ are defined in \eqref{def:important_sums} and \eqref{eq:weights-param}.
Finally, define the expected gradient estimator as
\begin{equation}
\label{eq:expression_of_expectation_stochastic_gradient_appendix}
\egrb(\algparam)
\eqdef \PE_{\randState \sim [\sampledist{\algparam}]^{\otimes \sizebatch}} \left[\grb{\randState}(\algparam) \right] \eqsp,
\end{equation}
Before bounding the bias and the variance, we give an explicit expression of the derivative of the log probability that appears in the expression of our stochastic gradient estimator. We also provide a bound on the derivative of the log probabilities and on the matrix $\matrixreinforce(\state)$ for any state $\state \in \S$.
\begin{lemma}
\label{lem:derivative_log_prob}
Assume that, for some $\underline{\refpolicy}>0$, $f$ satisfy \assumptionfref. For any $\param \in \logitspace$, $(\state, \state', \action , \fdiff)\in \S^2 \times \A^2$, we have
\begin{align*}
\frac{\partial \log \cpolicy{\param}(\action|\state)}{\partial \param(\state',  \fdiff)} = \Ind_{\state'}(\state)\frac{\firstsum[\param](\state)}{\cpolicy{\param}(\action|\state)} \left[ \Ind_{\fdiff}(\action) \weights[\param](\action|\state) - \weights[\param](\action|\state) \weights[\param](\fdiff|\state) \right] \eqsp.
\end{align*}
Additionally, we have that
\begin{align*}
\left\| \frac{\partial \log \cpolicy{\param}(\action|\state)}{\partial \param} \right\|_{2} \leq \frac{2 \firstsum[\param](\state) \weights[\param](\action|\state)}{\cpolicy{\param}(\action|\state)} \eqsp, \quad \left\| \matrixreinforce(\state) \right\|_{2} \leq 2 \firstsum[\param](\state) \secsum[\param](\state) \eqsp. 
\end{align*}
\end{lemma}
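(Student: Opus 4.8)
The plan is to establish the three assertions in turn: the closed form by the chain rule, and the two norm bounds by reducing the Euclidean norm to an $\ell_1$ norm and invoking the identities already recorded for $\weights[\param]$ and $\firstsum[\param]$. None of the steps is deep; the only care needed is the bookkeeping in the last item.

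First I would write $\frac{\partial \log \cpolicy{\param}(\action|\state)}{\partial \param(\state', \fdiff)} = \frac{1}{\cpolicy{\param}(\action|\state)}\,\frac{\partial \cpolicy{\param}(\action|\state)}{\partial \param(\state', \fdiff)}$, which is legitimate since \Cref{lem:implicit_equation_appendix} guarantees $\cpolicy{\param}(\action|\state)>0$. Because $\cpolicy{\param}(\cdot|\state)$ depends on $\param$ only through the block $\param(\state,\cdot)$, the derivative vanishes whenever $\state'\neq\state$, which produces the indicator $\Ind_{\state'}(\state)$. For $\state'=\state$ I substitute the first-order formula of \Cref{lem:derivative_policy}, namely $\frac{\partial \cpolicy{\param}(\action|\state)}{\partial \param(\state, \fdiff)} = \firstsum[\param](\state)\,[\Ind_{\fdiff}(\action)\weights[\param](\action|\state)-\weights[\param](\action|\state)\weights[\param](\fdiff|\state)]$, and divide by $\cpolicy{\param}(\action|\state)$; this reproduces the asserted expression.

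For the bound on $\norm{\partial \log \cpolicy{\param}(\action|\state)/\partial\param}[2]$ I would observe that only the coordinates $(\state,\fdiff)$, $\fdiff\in\A$, are nonzero, factor the common prefactor $\firstsum[\param](\state)\weights[\param](\action|\state)/\cpolicy{\param}(\action|\state)$ out of the first-order formula, and bound the Euclidean norm of the vector $(\Ind_{\fdiff}(\action)-\weights[\param](\fdiff|\state))_{\fdiff}$ by its $\ell_1$ norm. By \eqref{eq:normalisation_is_the_sum}, $\sum_{\fdiff\in\A}|\Ind_{\fdiff}(\action)-\weights[\param](\fdiff|\state)| = 2(1-\weights[\param](\action|\state))\le 2$, which immediately yields $\norm{\partial \log \cpolicy{\param}(\action|\state)/\partial\param}[2] \le 2\firstsum[\param](\state)\weights[\param](\action|\state)/\cpolicy{\param}(\action|\state)$.

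Finally, for $\matrixreinforce(\state)$ I would again pass to the $\ell_1$ norm. From \eqref{def:matrix_definition_gradient}, $\matrixreinforce(\state)$ is supported on the coordinates $(\state,\fdiff)$ with entry $\firstsum[\param](\state)\weights[\param](\fdiff|\state)\,\delta(\cpolicy{\param},\state,\fdiff)$, so $\norm{\matrixreinforce(\state)}[2]\le \firstsum[\param](\state)\sum_{\fdiff\in\A}\weights[\param](\fdiff|\state)\,|\delta(\cpolicy{\param},\state,\fdiff)|$. Since $\delta(\cpolicy{\param},\state,\fdiff)$ equals $f'_{\param}(\fdiff|\state)$ minus the $\weights[\param](\cdot|\state)$-weighted mean of the $f'_{\param}(\cdot|\state)$, the triangle inequality together with the fact that $\weights[\param](\cdot|\state)$ is a probability vector gives $\sum_{\fdiff}\weights[\param](\fdiff|\state)\,|\delta(\cpolicy{\param},\state,\fdiff)| \le 2\sum_{\fdiff}\weights[\param](\fdiff|\state)\,|f'_{\param}(\fdiff|\state)|$. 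The key simplification is the identity $\firstsum[\param](\state)\weights[\param](\fdiff|\state) = \refpolicy(\fdiff|\state)/f''_{\param}(\fdiff|\state)$, immediate from \eqref{eq:weights-param}, which collapses the weighted sum into $\secsum[\param](\state)$ as defined in \eqref{def:important_sums}; bounding $\weights[\param](\fdiff|\state)\le \refpolicy(\fdiff|\state)/f''_{\param}(\fdiff|\state)$ then delivers the stated estimate $\norm{\matrixreinforce(\state)}[2]\le 2\firstsum[\param](\state)\secsum[\param](\state)$. The main obstacle here is entirely bookkeeping: correctly handling the mean-subtraction term inside $\delta$ and recognizing the collapse of the weighted sums into $\secsum[\param](\state)$; the analytic content is just the reduction of the Euclidean norm to the $\ell_1$ norm followed by the triangle inequality.
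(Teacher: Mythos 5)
Your treatment of the first two assertions is correct and follows exactly the route the paper intends (its own proof is a one-line citation of the log-derivative trick together with \Cref{lem:derivative_policy}): positivity of $\cpolicy{\param}$ from \Cref{lem:implicit_equation_appendix}, locality of the dependence on $\param(\state,\cdot)$, substitution of the first-order formula of \Cref{lem:derivative_policy}, and the reduction of the Euclidean norm to the $\ell_1$ norm via \eqref{eq:normalisation_is_the_sum}.

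The bound on $\matrixreinforce(\state)$, however, contains one unjustified step. The inequality $\weights[\param](\fdiff|\state)\le \refpolicy(\fdiff|\state)/f''_{\param}(\fdiff|\state)$ that you invoke at the end is, by \eqref{eq:weights-param}, equivalent to $\firstsum[\param](\state)\ge 1$, and \assumptionfref\ does not imply this: condition (iii) only yields the \emph{upper} bound $\firstsum[\param](\state)\le\sfirstregconstant$. For instance, $f(u)=2(u\log u-u+1)$ satisfies \assumptionfref\ (with $\sfirstregconstant=1$, $\secregconstant=1/2$), yet in that case $\firstsum[\param](\state)=\tfrac12\sum_{\action\in\A}\cpolicy{\param}(\action|\state)=1/2$ and $\weights[\param](\cdot|\state)=\cpolicy{\param}(\cdot|\state)>\refpolicy(\cdot|\state)/f''_{\param}(\cdot|\state)$, so the invoked inequality fails. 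What your identity step does validly establish is
\begin{align*}
\norm{\matrixreinforce(\state)}[2]
\leq \firstsum[\param](\state)\sum_{\fdiff\in\A}\weights[\param](\fdiff|\state)\,\babs{\delta(\cpolicy{\param},\state,\fdiff)}
\leq 2\,\firstsum[\param](\state)\sum_{\fdiff\in\A}\weights[\param](\fdiff|\state)\,\babs{f'_{\param}(\fdiff|\state)}
= 2\,\secsum[\param](\state)\eqsp,
\end{align*}
where the last equality uses $\firstsum[\param](\state)\weights[\param](\fdiff|\state)=\refpolicy(\fdiff|\state)/f''_{\param}(\fdiff|\state)$. This coincides with (and implies) the stated bound $2\firstsum[\param](\state)\secsum[\param](\state)$ precisely when $\firstsum[\param](\state)\ge 1$, which does hold for the paper's working examples (KL, where $\firstsum[\param]\equiv 1$, and $\alpha$-Tsallis, by Jensen's inequality applied to the convex map $u\mapsto u^{2-\alpha}$), but is not a consequence of the stated assumptions for general $f$. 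The gap is thus in matching the lemma's literal statement rather than in the substance of the estimate: the bound $2\,\secsum[\param](\state)$ you actually prove is sufficient for every downstream use (in \Cref{lem:bound_bias,lem:bound_variance} the lemma is combined with $\firstsum[\param](\state)\le\sfirstregconstant$ and $\sfirstregconstant\ge 1$), but to prove the inequality as written you would need to either add the hypothesis $\firstsum[\param](\state)\ge 1$ or argue it for the class of generators considered.
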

\begin{proof}
The proof follows from the log-derivative trick and the expression of the derivative of the policy provided in \Cref{lem:derivative_policy}. 
\end{proof}
Next, we establish a REINFORCE-type formula for the gradient of the objective.
\begin{lemma}
\label{lem:reinforce_formula_gradient}
Assume that, for some $\underline{\refpolicy}>0$, $f$ satisfy \assumptionfref. It holds that
\begin{align*}
\frac{\partial \regvaluefunc[\param](\initdist)}{\partial \param(\state,\fdiff)}
&
= \mathbb{E}\left[\sum_{t=0}^{\infty} \sum_{\ell=0}^{t} \frac{\partial \log \cpolicy{\param}(\varaction[\ell]|\varstate[\ell])}{\partial \param(\state, \fdiff)} \discount ^{t} \rewardMDP(\varstate[t], \varaction[t]) \right]
\\
& - \temp \mathbb{E}\left[\sum_{t=0}^{\infty} \sum_{\ell=0}^{t-1} \frac{\partial \log \cpolicy{\param}(\varaction[\ell]|\varstate[\ell])}{\partial \param(\state, \fdiff)} \discount ^{t} \divergence[\expandafter{\cpolicy{\param}(\cdot|\varstate[t])}][\expandafter{\refpolicy(\cdot|\varstate[t])}]\right]
\\
& 
-\temp \,\mathbb{E}\!\left[ \sum_{t=0}^{\infty} \discount^t  \Ind_{\state}(\varstate[t])\firstsum[\param](\state) \weights[\param](\fdiff|\state)\left[ f'\big(\frac{\cpolicy{\param}(\fdiff|\state)}{\refpolicy(\fdiff|\state)}\big) \!-\! \sum_{\action \in \A}  \weights[\param](\action|\state) f'\big(\frac{\cpolicy{\param}(\action|\state)}{\refpolicy(\action|\state)}\big) \right]  \right].
\end{align*}
\end{lemma}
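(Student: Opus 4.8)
The plan is to establish the REINFORCE formula by the standard policy gradient approach: differentiate the discounted return under the trajectory distribution, push the derivative through the expectation using the log-derivative trick, and carefully track the three sources of parameter dependence in the regularized objective (the reward collection via action sampling, the per-step divergence penalty, and the explicit dependence of the divergence on $\cpolicy{\param}$). The subtlety here, compared to the unregularized case, is that $\regvaluefunc[\param]$ depends on $\param$ both through the sampling distribution of the trajectory \emph{and} through the integrand $\divergence[{\cpolicy{\param}(\cdot|\varstate[t])}][{\refpolicy(\cdot|\varstate[t])}]$ that appears at each timestep.

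\textbf{Derivation.} First I would start from the infinite-horizon expression obtained from \Cref{lem:bellman_equation} by unrolling, writing
\begin{align*}
\regvaluefunc[\param](\initdist)
= \PE\!\left[\sum_{t=0}^\infty \discount^t\Big(\rewardMDP(\varstate[t],\varaction[t]) - \temp\,\divergence[{\cpolicy{\param}(\cdot|\varstate[t])}][{\refpolicy(\cdot|\varstate[t])}]\Big)\right],
\end{align*}
where the expectation is under $\lawMDP[\initdist]^{\cpolicy{\param}}$. Differentiating with respect to $\param(\state,\fdiff)$, the trajectory law contributes the usual score-function terms via $\nabla_\param \log \lawMDP = \sum_{\ell} \nabla_\param \log \cpolicy{\param}(\varaction[\ell]|\varstate[\ell])$, while the explicit $\param$-dependence of the divergence integrand contributes an extra term. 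The reward term yields the first line with $\sum_{\ell=0}^{t}$ (the reward at time $t$ is affected by all action choices up to and including $\ell=t$). The divergence term splits: differentiating through the trajectory law gives the second line with $\sum_{\ell=0}^{t-1}$ (since the time-$t$ divergence depends on $\varstate[t]$, which is determined by actions strictly before $t$), and differentiating the explicit dependence of $\divergence[{\cpolicy{\param}(\cdot|\varstate[t])}][{\refpolicy(\cdot|\varstate[t])}]$ on $\param$ gives the third line.

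\textbf{Identifying the third term.} The main computational step is to show that the explicit derivative of the divergence collapses to the stated form. Using $\frac{\partial}{\partial\param(\state,\fdiff)}\divergence[{\cpolicy{\param}(\cdot|\state)}][{\refpolicy(\cdot|\state)}] = \sum_{\action}\frac{\partial\cpolicy{\param}(\action|\state)}{\partial\param(\state,\fdiff)} f'_\param(\action|\state)$ and substituting the derivative of the policy from \Cref{lem:derivative_policy}, namely $\frac{\partial\cpolicy{\param}(\action|\state)}{\partial\param(\state,\fdiff)} = \firstsum[\param](\state)[\Ind_\fdiff(\action)\weights[\param](\action|\state) - \weights[\param](\action|\state)\weights[\param](\fdiff|\state)]$, a short computation gives exactly $\firstsum[\param](\state)\weights[\param](\fdiff|\state)\big[f'_\param(\fdiff|\state) - \sum_{\action}\weights[\param](\action|\state)f'_\param(\action|\state)\big]$, which is the third line (after multiplying by $-\temp\discount^t\Ind_\state(\varstate[t])$ and summing). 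The indicator $\Ind_\state(\varstate[t])$ arises because the explicit-dependence term is nonzero only when the visited state equals $\state$, since $\cpolicy{\param}(\cdot|\varstate[t])$ depends on $\param(\state,\cdot)$ only if $\varstate[t]=\state$.

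\textbf{The main obstacle} is the rigorous justification of differentiating under the infinite sum and expectation — exchanging $\nabla_\param$ with $\PE$ and with $\sum_t$. Since the discount factor $\discount<1$ provides geometric decay and, under \assumptionfref\ together with \Cref{lem:derivative_log_prob} and \Cref{lem:bound_important_quantities}, the score functions and the matrix $\matrixreinforce[\param]$ are uniformly bounded (the policy probabilities are bounded away from zero on the relevant region), dominated convergence applies and the interchange is licit. The remaining work is purely bookkeeping: matching the two score-function sums to the reward and divergence contributions with their correct summation ranges ($\sum_{\ell=0}^t$ versus $\sum_{\ell=0}^{t-1}$), which follows from the causal structure of the Markov process.
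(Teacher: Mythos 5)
Your proposal is correct and yields exactly the stated formula, but it takes a noticeably more direct route than the paper's proof. Both arguments are, at heart, the same score-function computation: split $\partial_\param$ into the part acting on the trajectory law (log-derivative trick, with acausal terms vanishing) and the part acting on the explicit $\param$-dependence of the integrand. The difference is in how the divergence penalty is represented. The paper first truncates to a finite horizon $T$ and rewrites the penalty in a per-sample, importance-weighted form $\tfrac{\refpolicy(\varaction[t]|\varstate[t])}{\cpolicy{\param}(\varaction[t]|\varstate[t])}f\bigl(\cpolicy{\param}(\varaction[t]|\varstate[t])/\refpolicy(\varaction[t]|\varstate[t])\bigr)$ inside the return; differentiating then produces an extra explicit-dependence term involving the time-$t$ score, and two applications of the tower property (conditioning on $\mcG_t=\sigma(\varstate[0],\varaction[0],\dots,\varstate[t])$) are needed to cancel the spurious $\ell=t$ score term, recover the true divergence in the second line, and convert the score-times-$f'$ term into the $\matrixreinforce$ form via \Cref{lem:derivative_log_prob}; the limit $T\to\infty$ is taken only at the very end, so differentiation is always of a finite sum. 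You instead keep the penalty as the state function $\divergence[{\cpolicy{\param}(\cdot|\varstate[t])}][{\refpolicy(\cdot|\varstate[t])}]$, so the $\ell=t$ score term drops automatically (the score has zero conditional mean against any function of $\varstate[t]$ alone, which is the precise version of your causality remark), and the explicit-dependence term collapses in one step to the third line via the chain rule and \Cref{lem:derivative_policy} --- your computation of that collapse, including the indicator $\Ind_{\state}(\varstate[t])$, is exactly right. What your route buys is the elimination of the per-sample detour and the recombination step; what the paper's route buys is that the interchange-of-limits issue is isolated into a single clean $T\to\infty$ step at the end, whereas you must differentiate under an infinite sum and expectation directly. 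Your dominated-convergence justification for this is sound, with one small correction: the uniform bound on the scores does not come from the policy probabilities being bounded away from zero, but from condition (iii) of \assumptionfref, which gives $\weights[\param](\action|\state)/\cpolicy{\param}(\action|\state)\leq\sfirstregconstant$ and $\firstsum[\param](\state)\leq\sfirstregconstant$ (\Cref{lem:bound_important_quantities}), hence $\bigl\|\partial\log\cpolicy{\param}(\action|\state)/\partial\param\bigr\|_2\leq 2\sfirstregconstant^2$ uniformly in $\param$.
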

\begin{proof}
Fix a parameter $\param \in \logitspace$, a horizon $T$, and a divergence generator $f$. For any truncated trajectory $z = (\state[t], \action[t])_{t=0}^{T-1} \in (\S \times \A)^{T}$, we define its probability as
\begin{align*}
\nu_{T}^f(\param;z) \;=\; \initdist(\state[0])\cpolicy{\param}(\action[0]| \state[0])\prod_{t=1}^{T-1} \kerMDP(\state[t]|\state[t-1],\action[t-1]) \cpolicy{\param}(\action[t]| \state[t]) \eqsp,
\end{align*}
and the regularized return
\begin{align}
\label{eq:def-R-theta-T}
R_{\param,T}^{f}(z)\;=\;\sum_{t=0}^{T-1}\discount ^t \Big(\rewardMDP(\state[t], \action[t]) - \temp \frac{\refpolicy(\action[t]|\state[t])}{\cpolicy{\param}(\action[t]|\state[t])}f\big(\frac{\cpolicy{\param}(\action[t]|\state[t])}{\refpolicy(\action[t]|\state[t])}\big)\Big).
\end{align}
The finite-horizon objective is
\begin{align*}
J_{T}^{f}(\param) = \sum_{z \in (\S \times \A)^{T}} \nu_{T}^f(\param; z) R_{\param,T}^{f}(z) \eqsp.
\end{align*}
Fix $(\state, \fdiff) \in \S \times \A$. Differentiating this finite-horizon objective gives
\begin{align}
\label{eq:expression-decomp-J-derivative-AB}
\frac{\partial J_{T}^{f}(\param)}{\partial \param(\state, \fdiff)}
= \underbrace{\sum_{z \in (\S\times \A)^{T}} \frac{\partial \nu_{T}^f(\param;z)}{\partial \param(\state, \fdiff) } R_{\param,T}^{f}(z)}_{\term{A}} + \underbrace{\sum_{z \in (\S\times \A)^{T}} \nu_{T}^f(\param;z) \frac{\partial R_{\param,T}^{f}(z)}{\partial \param(\state, \fdiff)}}_{\term{B}} \eqsp.
\end{align}
We now treat these two terms separately.  

\paragraph{Term $\term{A}$.} 
Using the log-derivative trick and the fact that the only terms that depend on $\param$ in $\nu_{T}^f(\param;z)$ are the ones that depend on the policy itself, we obtain
\begin{align}
\label{eq:derivative-nuTf-after-log-trick}
\frac{\partial \nu_{T}^f(\param;z)}{\partial \param(\state, \fdiff) } 
=  \nu_{T}^f(\param;z) \sum_{t=0}^{T-1} \frac{\partial \log \cpolicy{\param}(\action[t]|\state[t])}{\partial \param(\state, \fdiff)} \eqsp.
\end{align}
Plugging expressions \eqref{eq:def-R-theta-T} and \eqref{eq:derivative-nuTf-after-log-trick} in $\term{A}$ then gives 
\begin{align*}
\term{A}
&  =\!\!\!\!\! \sum_{z \in (\S\times \A)^{T}}\!\!\!
\nu_{T}^f(\param;z)
\sum_{t=0}^{T-1} \sum_{\ell=0}^{T-1} 
\frac{\partial \log \cpolicy{\param}(\action[\ell]|\state[\ell])}{\partial \param(\state, \fdiff)}
\discount^{t} 
\Big(\rewardMDP(\state[t], \action[t]) - \temp \frac{\refpolicy(\action[t]|\state[t])}{\cpolicy{\param}(\action[t]|\state[t])}f\bigg(\frac{\cpolicy{\param}(\action[t]|\state[t])}{\refpolicy(\action[t]|\state[t])}\bigg)\Big)
\eqsp.
\end{align*}
Now observe that for $\ell > t$, the sum of the log-gradient derivatives over $z \in (\S\times \A)^{T}$ is $0$. Therefore $\term{A}$ reduces to
\begin{align}
\label{eq:expression-term-A-derivative}
\term{A}
=\!\!\!\!\!\!\!\! \sum_{z \in (\S\times \A)^{T}} \!\!\!\!\!\nu_{T}^f(\param;z) \sum_{t=0}^{T-1} \sum_{\ell=0}^{t} \frac{\partial \log \cpolicy{\param}(\action[\ell]|\state[\ell])}{\partial \param(\state, \fdiff)} \discount ^{t} \Big(\rewardMDP(\state[t], \action[t]) - \temp \frac{\refpolicy(\action[t]|\state[t])}{\cpolicy{\param}(\action[t]|\state[t])}f\big(\frac{\cpolicy{\param}(\action[t]|\state[t])}{\refpolicy(\action[t]|\state[t])}\big)\Big)
\eqsp.
\end{align}
    
\paragraph{Term $\term{B}$.}
Taking the derivative of \eqref{eq:def-R-theta-T}, we have
\begin{align*}
\frac{\partial R_{\param,T}^{f}(z)}{\partial \param(\state, \fdiff)} \!=\!  \temp \!\sum_{t=0}^{T-1}\!\discount ^t \!\Big(\! \frac{\partial\cpolicy{\param}(\action[t]|\state[t])}{\partial \param(\state, \fdiff)} \frac{\refpolicy(\action[t]|\state[t])}{\cpolicy{\param}(\action[t]|\state[t])^2}f\big(\frac{\cpolicy{\param}(\action[t]|\state[t])}{\refpolicy(\action[t]|\state[t])}\big) \!-\!  \frac{\partial\cpolicy{\param}(\action[t]|\state[t])}{\partial \param(\state, \fdiff)} \frac{f'\big(\frac{\cpolicy{\param}(\action[t]|\state[t])}{\refpolicy(\action[t]|\state[t])}\big)}{\cpolicy{\param}(\action[t]|\state[t])} \Big).
\end{align*}
Summing over $z \in (\S \times \A)^T$ and using the log-derivative trick gives 
\begin{align}
\label{eq:expression-term-B-derivative}
\!\term{B}
\!=\!
\temp \!\!\!\!\!\! \sum_{z \in (\S\times \A)^{T}}\!\!\!\! \! \nu_{T}^f(\param;z)\! \sum_{t=0}^{T-1} \!\discount^t \frac{\partial \log \cpolicy{\param}(\action[t]|\state[t])}{\partial \param(\state, \fdiff)} \Big[ \frac{\refpolicy(\action[t]|\state[t])}{\cpolicy{\param}(\action[t]|\state[t])}f\big(\frac{\cpolicy{\param}(\action[t]|\state[t])}{\refpolicy(\action[t]|\state[t])}\big) \!-\!  f'\big(\frac{\cpolicy{\param}(\action[t]|\state[t])}{\refpolicy(\action[t]|\state[t])}\big) \Big].
\end{align}
Plugging the expressions \eqref{eq:expression-term-A-derivative} and \eqref{eq:expression-term-B-derivative} in \eqref{eq:expression-decomp-J-derivative-AB} gives
\begin{align*}
&\frac{\partial J_{T}^{f}(\param)}{\partial \param(\state, \fdiff)}
= \mathbb{E}\left[\sum_{t=0}^{T-1} \sum_{\ell=0}^{t} \frac{\partial \log \cpolicy{\param}(\varaction[\ell]|\varstate[\ell])}{\partial \param(\state, \fdiff)} \discount ^{t} \left(\rewardMDP(\varstate[t], \varaction[t]) - \temp \frac{\refpolicy(\varaction[t]|\varstate[t])}{\cpolicy{\param}(\varaction[t]|\varstate[t])}f\left(\frac{\cpolicy{\param}(\varaction[t]|\varstate[t])}{\refpolicy(\varaction[t]|\varstate[t])}\right)\right)\right]
\\
& 
+ \temp\,\mathbb{E}\!\left[ \sum_{t=0}^{T-1} \discount^t \frac{\partial \log \cpolicy{\param}(\varaction[t]|\varstate[t])}{\partial \param(\state, \fdiff)} \left( \frac{\refpolicy(\varaction[t]|\varstate[t])}{\cpolicy{\param}(\varaction[t]|\varstate[t])}f\left(\frac{\cpolicy{\param}(\varaction[t]|\varstate[t])}{\refpolicy(\varaction[t]|\varstate[t])}\right) -  f'\left(\frac{\cpolicy{\param}(\varaction[t]|\varstate[t])}{\refpolicy(\varaction[t]|\varstate[t])}\right) \right)\right].
\end{align*}
The previous term can be rewritten as
\begin{align*}
\frac{\partial J_{T}^{f}(\param)}{\partial \param(\state, \fdiff)}
&= \mathbb{E}_{\randState \sim \nu_{T}^f(\param) }\left[\sum_{t=0}^{T-1} \sum_{\ell=0}^{t} \frac{\partial \log \cpolicy{\param}(\varaction[\ell]|\varstate[\ell])}{\partial \param(\state, \fdiff)} \discount ^{t} \rewardMDP(\varstate[t], \varaction[t]) \right]
\\
&- \temp \mathbb{E}_{\randState \sim \nu_{T}^f(\param) }\left[\sum_{t=0}^{T-1} \sum_{\ell=0}^{t-1} \frac{\partial \log \cpolicy{\param}(\varaction[\ell]|\varstate[\ell])}{\partial \param(\state, \fdiff)} \discount ^{t} \frac{\refpolicy(\varaction[t]|\varstate[t])}{\cpolicy{\param}(\varaction[t]|\varstate[t])}f\left(\frac{\cpolicy{\param}(\varaction[t]|\varstate[t])}{\refpolicy(\varaction[t]|\varstate[t])}\right)\right]
\\
& \quad 
-\temp \,\mathbb{E}_{\randState \sim \nu_{T}^f(\param)}\!\left[ \sum_{t=0}^{T-1} \discount^t \frac{\partial \log \cpolicy{\param}(\varaction[t]|\varstate[t])}{\partial \param(\state, \fdiff)}  f'\left(\frac{\cpolicy{\param}(\varaction[t]|\varstate[t])}{\refpolicy(\varaction[t]|\varstate[t])}\right) \right]
\eqsp.
\end{align*}
Next, we apply the tower property by taking the conditional expectation with respect to $\mcG_{t} \eqdef \sigma(\varstate[0],\varaction[0], \dots, \varstate[t])$ on the second expectation and using \Cref{lem:derivative_log_prob} in the third expectation, gives
\begin{align*}
&\frac{\partial J_{T}^{f}(\param)}{\partial \param(\state, \fdiff)}
= \mathbb{E}\left[\sum_{t=0}^{T-1} \sum_{\ell=0}^{t} \frac{\partial \log \cpolicy{\param}(\varaction[\ell]|\varstate[\ell])}{\partial \param(\state, \fdiff)} \discount ^{t} \rewardMDP(\varstate[t], \varaction[t]) \right]
\\
&- \temp \mathbb{E}\left[\sum_{t=0}^{T-1} \sum_{\ell=0}^{t-1} \frac{\partial \log \cpolicy{\param}(\varaction[\ell]|\varstate[\ell])}{\partial \param(\state, \fdiff)} \discount ^{t} \divergence[\expandafter{\cpolicy{\param}(\cdot|\varstate[t])}][\expandafter{\refpolicy(\cdot|\varstate[t])}]\right]
\\
& 
-\temp \,\mathbb{E}\!\left[ \sum_{t=0}^{T-1} \discount^t  \Ind_{\state}(\varstate[t])\frac{\firstsum[\param](\varstate[t]) \weights[\param](\fdiff|\varstate[t])}{\cpolicy{\param}(\varaction[t]|\varstate[t])} \left[ \Ind_{\fdiff}(\varaction[t]) - \weights[\param](\varaction[t]|\varstate[t]) \right] f'\left(\frac{\cpolicy{\param}(\varaction[t]|\varstate[t])}{\refpolicy(\varaction[t]|\varstate[t])}\right) \right]
\eqsp.
\end{align*}
Applying the tower property again by taking the conditional expectation with respect to $\mcG_{t}$ in the third expectation gives
\begin{align*}
&\frac{\partial J_{T}^{f}(\param)}{\partial \param(\state, \fdiff)}
= \mathbb{E}\left[\sum_{t=0}^{T-1} \sum_{\ell=0}^{t} \frac{\partial \log \cpolicy{\param}(\varaction[\ell]|\varstate[\ell])}{\partial \param(\state, \fdiff)} \discount ^{t} \rewardMDP(\varstate[t], \varaction[t]) \right]
\\
& - \temp \mathbb{E}\left[\sum_{t=0}^{T-1} \sum_{\ell=0}^{t-1} \frac{\partial \log \cpolicy{\param}(\varaction[\ell]|\varstate[\ell])}{\partial \param(\state, \fdiff)} \discount ^{t} \divergence[\expandafter{\cpolicy{\param}(\cdot|\varstate[t])}][\expandafter{\refpolicy(\cdot|\varstate[t])}]\right]
\\
& 
-\temp \,\mathbb{E}\!\left[ \sum_{t=0}^{T-1} \discount^t  \Ind_{\state}(\varstate[t])\firstsum[\param](\varstate[t]) \weights[\param](\fdiff|\varstate[t])\left[ f'\left(\frac{\cpolicy{\param}(\fdiff|\varstate[t])}{\refpolicy(\fdiff|\varstate[t])}\right) - \sum_{\action \in \A}  \weights[\param](\action|\varstate[t]) f'\left(\frac{\cpolicy{\param}(\action|\varstate[t])}{\refpolicy(\action|\varstate[t])}\right) \right]  \right].
\end{align*}
Taking $T \rightarrow + \infty$ and applying the dominated convergence theorem concludes the proof. 
\end{proof}
The following lemma establishes a bound on the variance and bias of the stochastic estimator. 
\begin{lemma}
\label{lem:bound_bias}
Assume that, for some $\underline{\refpolicy}>0$, $f$ satisfy \assumptionfref.
There exists a constant $ \bias \ge 0$ such that, for any parameter $\param \in \logitspace$, we have
\begin{align*}
\left\| \egrb(\algparam) - \frac{\partial \regvaluefunc[\param](\initdist)}{\partial \param} \right\|_{2} \leq \bias
\eqsp,
\end{align*}
where $\bias$ is an upper bound on the bias defined as
\begin{align*}
    \bias
    & \eqdef  \frac{2\discount^{\lentrunc}(\lentrunc+1)}{(1-\discount)^2}  \sfirstregconstant \left[ 2 + 2\temp \divergencemax + \temp(1-\discount)  \secsummax \right]
    \eqsp,
\end{align*}
where $\sfirstregconstant$ is defined in \assumptionfref\, and $\divergencemax$,   and $\secsummax$ are defined in \eqref{def:bound_functionnal_f}.
\end{lemma}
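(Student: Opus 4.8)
The plan is to observe that the expected estimator $\egrb(\algparam)$ is exactly the gradient of the \emph{horizon-$\lentrunc$ truncated} regularized objective, so that the bias reduces to the tail (over time indices $t\ge\lentrunc$) of the infinite-horizon REINFORCE expansion established in \Cref{lem:reinforce_formula_gradient}. First I would note that averaging over a batch $\randState\sim[\sampledist{\algparam}]^{\otimes\sizebatch}$ of i.i.d.\ trajectories leaves the expectation unchanged, so $\egrb(\algparam)$ equals the per-trajectory expectation of \eqref{eq:expression_of_stochastic_gradient_appendix}. Comparing this expectation term by term with the finite-horizon ($T=\lentrunc$) version of the three-term formula of \Cref{lem:reinforce_formula_gradient}---the reward term with inner range $\sum_{\ell=0}^{h}$, the divergence term with range $\sum_{\ell=0}^{h-1}$, and the $\matrixreinforce[\param]$ term---shows they coincide. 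Since the true gradient $\partial\regvaluefunc[\param](\initdist)/\partial\param$ is the $T\to\infty$ limit of the same expression, the bias $\egrb(\algparam)-\partial\regvaluefunc[\param](\initdist)/\partial\param$ is precisely the negative of the three tail sums over $t\ge\lentrunc$.

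\textbf{Bounding the three tails.} Next I would bound the $\ell_2$-norm of each tail by moving the norm inside the expectation and applying the triangle inequality, using three pointwise estimates that are already available: the score bound $\norm{\partial\log\cpolicy{\param}(\action|\state)/\partial\param}[2]\le 2\sfirstregconstant$, which follows by combining \Cref{lem:derivative_log_prob} with \Cref{lem:bound_important_quantities} (concretely, $\firstsum[\param](\state)\weights[\param](\action|\state)/\cpolicy{\param}(\action|\state)=1/(uf''(u))\le\sfirstregconstant$ for $u=\cpolicy{\param}(\action|\state)/\refpolicy(\action|\state)$ by \assumptionfref); the reward bound $|\rewardMDP|\le 1$; the divergence bound $\divergence[\cpolicy{\param}(\cdot|\state)][\refpolicy(\cdot|\state)]\le\divergencemax$; and the matrix bound $\norm{\matrixreinforce[\param](\state)}[2]\le 2\firstsum[\param](\state)\secsum[\param](\state)\le 2\sfirstregconstant\secsummax$ from \Cref{lem:derivative_log_prob} and \Cref{lem:bound_important_quantities}. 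Because the reward and divergence tails each carry an inner sum of scores ($t+1$ and $t$ terms respectively), they are bounded by $2\sfirstregconstant\sum_{t\ge\lentrunc}(t+1)\discount^t$ and $2\temp\divergencemax\sfirstregconstant\sum_{t\ge\lentrunc}(t+1)\discount^t$, while the $\matrixreinforce[\param]$ tail, having no inner score sum, is bounded by $2\temp\sfirstregconstant\secsummax\sum_{t\ge\lentrunc}\discount^t$.

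\textbf{Summation and collection.} Finally I would evaluate the tails with the elementary estimates $\sum_{t\ge\lentrunc}(t+1)\discount^t\le 2(\lentrunc+1)\discount^{\lentrunc}/(1-\discount)^2$ (using $(\lentrunc+1)(1-\discount)+\discount\le 2(\lentrunc+1)$) and $\sum_{t\ge\lentrunc}\discount^t=\discount^{\lentrunc}/(1-\discount)\le(\lentrunc+1)\discount^{\lentrunc}/(1-\discount)$. Factoring out the common prefactor $2(\lentrunc+1)\discount^{\lentrunc}(1-\discount)^{-2}\sfirstregconstant$ then leaves the bracket $[\,2+2\temp\divergencemax+\temp(1-\discount)\secsummax\,]$, which is exactly $\bias$.

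\textbf{Main obstacle.} The crux is the identification step: recognizing that $\egrb(\algparam)$ is the horizon-$\lentrunc$ gradient so that the bias is a clean tail sum. This rests on the causal baseline cancellation (the $\ell>t$ score contributions vanish in expectation) already exploited in \Cref{lem:reinforce_formula_gradient}. Once this is in place, the remaining estimation is routine; the only point requiring care is tracking the accumulated scores $\sum_{\ell=0}^{t}$, whose $(t+1)$ factor is responsible for the $(\lentrunc+1)$ and the extra $(1-\discount)^{-1}$ (hence $(1-\discount)^{-2}$) in the final bound.
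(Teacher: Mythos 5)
Your proposal is correct and follows essentially the same route as the paper's proof: the bias is identified with the tail (over $t \ge \lentrunc$) of the three-term REINFORCE expansion from \Cref{lem:reinforce_formula_gradient}, each tail is bounded using the score and $\matrixreinforce[\param]$ norm estimates of \Cref{lem:derivative_log_prob} together with \Cref{lem:bound_important_quantities}, and the geometric tail sums produce exactly the stated bracket $[\,2+2\temp\divergencemax+\temp(1-\discount)\secsummax\,]$. The only cosmetic difference is that you bound the score norm pointwise by $2/(u f''(u)) \le 2\sfirstregconstant$ via condition (iii) of \assumptionfref, whereas the paper conditions on the past (tower property) and uses that $\weights[\param](\cdot|\state)$ sums to one to obtain the same $2\sfirstregconstant$ in expectation; both yield identical constants.
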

\begin{proof}
Using the expression of the gradient truncated at $\lentrunc$ from \eqref{eq:expression_of_stochastic_gradient_appendix} and \eqref{eq:expression_of_expectation_stochastic_gradient_appendix}, of the true gradient from \Cref{lem:reinforce_formula_gradient}, and the triangle inequality, we have
\begin{align*}
&\left\| \egrb(\algparam) - \frac{\partial \regvaluefunc[\param](\initdist)}{\partial \param} \right\|_{2} 
\leq \sum_{t = \lentrunc}^{\infty} \sum_{\ell=0}^t \discount^t \left\|   \PE_{\initdist}^{\cpolicy{\param}} \left[  \frac{\partial \log \cpolicy{\param}(\varaction[\ell]|\varstate[\ell])}{\partial \param}  \rewardMDP(\varstate[t], \varaction[t])  \right] \right\|_{2}
\\
& +
\sum_{t = \lentrunc}^{\infty} \sum_{\ell=0}^{t-1} \temp\discount^t \left\|   \PE_{\initdist}^{\cpolicy{\param}} \left[  \frac{\partial \log \cpolicy{\param}(\varaction[\ell]|\varstate[\ell])}{\partial \param}  \divergence[\expandafter{\cpolicy{\param}(\cdot|\varstate[t])}][\expandafter{\refpolicy(\cdot|\varstate[t])}] \right] \right\|_{2}  \!\!\!\!+ 
\temp \sum_{t=\lentrunc}^{\infty} \discount^t \left\| \PE_{\initdist}^{\cpolicy{\param}} \Big[\matrixreinforce(\varstate[t])\Big] \right\|_{2}.
\end{align*}
Next, applying \Cref{lem:derivative_log_prob} combined with the triangle inequality yields
\begin{align*}
\left\| \egrb(\algparam) - \frac{\partial \regvaluefunc[\param](\initdist)}{\partial \param} \right\|_{2}  &\leq \sum_{t = \lentrunc}^{\infty} \sum_{\ell=0}^t \discount^t    \PE_{\initdist}^{\cpolicy{\param}} \left[  \frac{2 \firstsum[\param](\varstate[\ell]) \weights[\param](\varaction[\ell]|\varstate[\ell])}{\cpolicy{\param}(\varaction[\ell]|\varstate[\ell])}  \left|\rewardMDP(\varstate[t], \varaction[t])\right|  \right] 
\\
&    + \sum_{t = \lentrunc}^{\infty} \sum_{\ell=0}^{t-1} \temp\discount^t    \PE_{\initdist}^{\cpolicy{\param}} \left[  \frac{2 \firstsum[\param](\varstate[\ell]) \weights[\param](\varaction[\ell]|\varstate[\ell])}{\cpolicy{\param}(\varaction[\ell]|\varstate[\ell])}  \divergence[\expandafter{\cpolicy{\param}(\cdot|\varstate[t])}][\expandafter{\refpolicy(\cdot|\varstate[t])}] \right]  \\
&+ \temp \sum_{t=\lentrunc}^{\infty} \discount^t \PE_{\initdist}^{\cpolicy{\param}} \Big[2 \firstsum[\param](\varstate[t]) \secsum[\param](\varstate[t])\Big]\eqsp.
\end{align*}
We define the following filtration, for $t \ge 0$,
\begin{align*}
\mcG_{t} = \sigma(\varstate[0],\varaction[0], \dots, \varstate[t]) 
\eqsp, 
\end{align*}
Next, applying the tower property of the conditional expectation by conditioning on $\mcG_{t}$, bounding the reward and the divergence respectively by 1, and $ \max_{s \in \S} \sup_{\elementpa \in \pA} \divergence[\elementpa][\refpolicy(\cdot|\state)] \}$, and using that $\weights[\param](\cdot|\state) \in \pA$ for any $\state\in \S$ yields
\begin{align*}
&\left\| \egrb(\algparam) - \frac{\partial \regvaluefunc[\param](\initdist)}{\partial \param} \right\|_{2}  \leq 2\sum_{t = \lentrunc}^{\infty} \sum_{\ell=0}^t \discount^t  \left\| \firstsum[\param] \right\|_{\infty}
\\
&  \quad  + 2\temp \sum_{t = \lentrunc}^{\infty} \sum_{\ell=0}^{t-1} \discount^t  \left\| \firstsum[\param] \right\|_{\infty}   \sup_{(\state, \elementpa) \in \S\times \pA} \divergence[\elementpa][\refpolicy(\cdot|\state)] \} + 2\temp \sum_{t=\lentrunc}^{\infty} \discount^t \left\| \firstsum[\param] \right\|_{\infty} \left\| \secsum[\param] \right\|_{\infty} \eqsp.
\end{align*}
Finally, using that 
\begin{align*}
\sum_{t=\lentrunc}^{\infty} \discount^t \leq   \frac{\discount^{\lentrunc}}{1-\discount} \eqsp, \quad  \sum_{t=\lentrunc}^{\infty} \discount^t (t-1) \leq 2 \discount^{\lentrunc} \frac{\lentrunc}{(1-\discount)^2} \eqsp, \quad \sum_{t=\lentrunc}^{\infty} \discount^t t \leq 2 \discount^{\lentrunc} \frac{\lentrunc+1}{(1-\discount)^2} \eqsp,
\end{align*}
combined with \Cref{lem:bound_important_quantities} completes the proof.
\end{proof}
\begin{lemma}
\label{lem:bound_variance}
Assume that, for some $\underline{\refpolicy}>0$, $f$ satisfy \assumptionfref. For any $\param \in \logitspace$, it holds that
\begin{align*}
&\PE_{\randState \sim [\sampledist{\algparam}]^{\otimes \sizebatch} } \left[\left\| \egrb(\algparam) - \grb{\randState}(\algparam) \right\|_{2}^2 \right] \leq \frac{\variance^2}{\sizebatch} \eqsp,
\end{align*}
where we have defined
\begin{align*}
\variance^2 &\eqdef  \frac{12}{(1-\discount)^4} \left[\sfirstregconstant^3  + \temp^2 \discount^2 \sfirstregconstant^3 \divergencemax^2 + \temp^2  (1-\discount)^2 \sfirstregconstant^2 \secsummax^2\right] \eqsp,
\end{align*}
and where $\sfirstregconstant$, $\divergencemax$, and $\secsummax$ are defined in \assumptionfref\, and \eqref{def:bound_functionnal_f}.
\end{lemma}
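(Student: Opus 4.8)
The plan is to first collapse the batch average onto a single trajectory, and then bound the second moment of the per-trajectory estimator term by term. Since $\randState\sim[\sampledist{\algparam}]^{\otimes\sizebatch}$ consists of $\sizebatch$ i.i.d.\ truncated trajectories and $\grb{\randState}(\algparam)$ is the empirical mean of the associated per-trajectory estimators $g(\cdot)$ (the bracketed summand in \eqref{eq:expression_of_stochastic_gradient_appendix}, each having mean $\egrb(\algparam)$ by \eqref{eq:expression_of_expectation_stochastic_gradient_appendix}), independence makes all cross terms vanish, giving, for a single trajectory $z\sim\sampledist{\algparam}$,
\[
\PE_{\randState}\!\left[\norm{\egrb(\algparam)-\grb{\randState}(\algparam)}[2]^2\right]=\tfrac{1}{\sizebatch}\Var{g(z)}\le\tfrac{1}{\sizebatch}\PE\!\left[\norm{g(z)}[2]^2\right].
\]
It therefore suffices to show $\PE[\norm{g(z)}[2]^2]\le\variance^2$.

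I would then split $g(z)$ into its three natural pieces following \eqref{eq:expression_of_stochastic_gradient_appendix} (the reward term $g_1$, the divergence term $g_2$, and the baseline term $g_3$ involving $\matrixreinforce$), and use $\norm{g_1+g_2+g_3}[2]^2\le 3(\norm{g_1}[2]^2+\norm{g_2}[2]^2+\norm{g_3}[2]^2)$; the factor $3$ will combine with a factor $4$ from the per-step estimates to yield the constant $12$ in $\variance^2$. The crux of the argument is a \emph{sharp} per-step bound on the score. Rather than bounding $\norm{\partial\log\cpolicy{\param}(\varaction[\ell]|\varstate[\ell])/\partial\param}[2]$ pointwise and squaring, I would take the conditional expectation over $\varaction[\ell]\sim\cpolicy{\param}(\cdot|\varstate[\ell])$ using \Cref{lem:derivative_log_prob} and \Cref{lem:bound_important_quantities}:
\[
\CPE{\norm{\tfrac{\partial\log\cpolicy{\param}(\varaction[\ell]|\varstate[\ell])}{\partial\param}}[2]^2}{\mcG_\ell}\le 4\firstsum[\param](\varstate[\ell])^2\sum_{\action\in\A}\tfrac{\weights[\param](\action|\varstate[\ell])^2}{\cpolicy{\param}(\action|\varstate[\ell])}\le 4\firstsum[\param](\varstate[\ell])^2\sfirstregconstant\le 4\sfirstregconstant^3,
\]
where the middle step uses $\weights[\param](\action|\state)/\cpolicy{\param}(\action|\state)\le\sfirstregconstant$ and $\sum_{\action}\weights[\param](\action|\state)=1$, and $\mcG_\ell=\sigma(\varstate[0],\varaction[0],\dots,\varstate[\ell])$. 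This is exactly the step that produces the $\sfirstregconstant^3$ dependence instead of the weaker $\sfirstregconstant^4$ one would get from a crude norm bound.

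For the reward term I would switch the order of summation into reward-to-go form $g_1=\sum_{\ell}\tfrac{\partial\log\cpolicy{\param}(\varaction[\ell]|\varstate[\ell])}{\partial\param}\,R_\ell$ with $R_\ell=\sum_{h=\ell}^{\lentrunc-1}\discount^h\rewardMDP(\varstate[h],\varaction[h])$ and $|R_\ell|\le\discount^\ell/(1-\discount)$, then apply the triangle inequality followed by Cauchy--Schwarz with the geometric weights $\discount^\ell$ to obtain $\norm{g_1}[2]^2\le(1-\discount)^{-3}\sum_\ell\discount^\ell\norm{\text{score}_\ell}[2]^2$ pathwise; taking expectations and using the per-step bound and tower property gives $\PE[\norm{g_1}[2]^2]\le 4\sfirstregconstant^3/(1-\discount)^4$. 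The divergence term $g_2$ is handled identically, with the extra factor $\temp\divergencemax$ (from $\divergence[\cpolicy{\param}(\cdot|\varstate[h])][\refpolicy(\cdot|\varstate[h])]\le\divergencemax$) and the inner geometric sum starting at $h=\ell+1$, producing the $\temp^2\discount^2\sfirstregconstant^3\divergencemax^2$ contribution. The baseline term is the simplest: $\norm{\matrixreinforce(\varstate[h])}[2]\le 2\firstsum[\param](\varstate[h])\secsum[\param](\varstate[h])\le 2\sfirstregconstant\secsummax$ from \Cref{lem:derivative_log_prob} and \Cref{lem:bound_important_quantities}, so a deterministic geometric-series bound gives $\norm{g_3}[2]^2\le 4\temp^2\sfirstregconstant^2\secsummax^2/(1-\discount)^2$. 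Summing the three contributions with the factor $3$ recovers $\variance^2$, and dividing by $\sizebatch$ concludes.

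The main obstacle is the reward and divergence terms: a naive triangle-inequality bound over the $\lentrunc$ score-weighted returns yields both a worse $\sfirstregconstant^4$ dependence and a potentially horizon-dependent constant. The combination of the reward-to-go reindexing, the Cauchy--Schwarz step with geometric weights (which absorbs the time summation into powers of $(1-\discount)^{-1}$ rather than into $\lentrunc$), and the conditional second-moment bound on the score is precisely what simultaneously eliminates the horizon dependence and sharpens $\sfirstregconstant^4$ to $\sfirstregconstant^3$.
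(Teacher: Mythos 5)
Your proposal is correct and follows essentially the same route as the paper's proof: the i.i.d.\ batch reduction to a single-trajectory second moment, the three-term Jensen split producing the factor $3$, the score bound from \Cref{lem:derivative_log_prob}, Cauchy--Schwarz with geometric weights to absorb the time sums into powers of $(1-\discount)^{-1}$, and the tower-property step $\sum_{\action}\weights[\param](\action|\state)^2/\cpolicy{\param}(\action|\state)\le\sfirstregconstant$ that sharpens the dependence to $\sfirstregconstant^3$. The only difference is bookkeeping: you reindex into reward-to-go form and apply Cauchy--Schwarz over the single index $\ell$, whereas the paper splits $\discount^{h}=\discount^{h/2}\cdot\discount^{h/2}$ and applies Cauchy--Schwarz directly on the double sum; both yield identical constants.
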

\begin{proof}
Firstly, define for $\xi = (\state[h], \action[h])_{h=0}^{\lentrunc-1} \in (\S \times\A)^{\lentrunc}$
\begin{align*}
u_{\xi}(\algparam)
& \eqdef \sum_{h=0}^{\lentrunc-1} \sum_{\ell=0}^{h} \frac{\partial \log \cpolicy{\param}(\action[\ell]|\state[\ell])}{\partial \param} \discount^{h} \rewardMDP(\state[h], \action[h])
\\ 
& -\temp \sum_{h=0}^{\lentrunc-1} \sum_{\ell=0}^{h-1} \frac{\partial \log \cpolicy{\param}(\action[\ell]|\state[\ell])}{\partial \param} \discount^{h} \divergence[\expandafter{\cpolicy{\param}(\cdot|\state[h])}][\expandafter{\refpolicy(\cdot|\state[h])}]
- \temp \sum_{h=0}^{\lentrunc-1} \discount^{h}  \matrixreinforce(\state[h]) \eqsp,
\end{align*}
Importantly, for a given $\randState \sim [\sampledist{\algparam}]^{\otimes \sizebatch}$, denoting by $\randState = (\randState[0], \dots, \randState[\sizebatch-1])$, it holds that
\begin{align*}
\grb{\randState}(\algparam) = \frac{1}{\sizebatch} \sum_{b=0}^{\sizebatch-1} u_{\randState[b]}(\algparam) \eqsp, \quad \text{ and } \egrb(\algparam) = \PE_{\vartraj \sim \sampledist{\algparam} } \left[  u_{\vartraj}(\algparam)\right] \eqsp.
\end{align*}
Using that the variables $(\randState[0], \dots, \randState[\sizebatch-1])$ are independent and identically distributed, we get
\begin{align}
\nonumber
\PE_{\randState \sim [\sampledist{\algparam}]^{\otimes \sizebatch} } \left[\left\| \egrb(\algparam) - \grb{\randState}(\algparam) \right\|_{2}^2 \right] &=  \PE_{\randState \sim [\sampledist{\algparam}]^{\otimes \sizebatch} } \left[\left\| \frac{1}{\sizebatch} \sum_{b=0}^{\sizebatch-1} u_{\randState[b]}(\algparam) - \egrb(\algparam) \right\|_{2}^2 \right] \\ \nonumber
&= \frac{1}{\sizebatch} \PE_{\vartraj \sim \sampledist{\algparam} } \left[\left\| u_{\vartraj}(\algparam) -  \egrb(\algparam) \right\|_{2}^2 \right] \\\label{eq:bounding_variance_second_moment}
&\leq \frac{1}{\sizebatch} \PE_{\vartraj \sim \sampledist{\algparam} } \left[\left\| u_{\vartraj}(\algparam) \right\|_{2}^2 \right] \eqsp,
\end{align}
where in the last inequality, we used that the second moment of a random variable dominates its variance. Next, using Jensen's inequality combined with the convexity of the square function, we have
\begin{align*}
&\PE_{\vartraj \sim \sampledist{\algparam}} \left[\left\| u_{\vartraj}(\algparam) \right\|_{2}^2 \right]
 \leq  3 \PE_{\vartraj} \left[\left\|\sum_{h=0}^{\lentrunc-1} \sum_{\ell=0}^{h} \frac{\partial \log \cpolicy{\param}(\varaction[\ell]|\varstate[\ell])}{\partial \param} \discount^{h} \rewardMDP(\varstate[h], \varaction[h]) \right\|_{2}^2 \right]
\\ 
& + 3\temp^2 \PE_{\vartraj \sim \sampledist{\algparam}} \left[ \left\| \sum_{h=0}^{\lentrunc-1} \sum_{\ell=0}^{h-1} \frac{\partial \log \cpolicy{\param}(\varaction[\ell]|\varstate[\ell])}{\partial \param} \discount^{h} \divergence[\expandafter{\cpolicy{\param}(\cdot|\varstate[h])}][\expandafter{\refpolicy(\cdot|\varstate[h])}]\right\|_{2}^2  \right]
\\
&+3 \temp^2 \PE_{\vartraj \sim \sampledist{\algparam}} \left[ \left\|\sum_{h=0}^{\lentrunc-1} \discount^{h} \matrixreinforce(\varstate[h]) \right\|_{2}^2 \right] \eqsp,
\end{align*}
Applying the triangle inequality and the fact that the reward and the divergence are positive yields
\begin{align*}
&\PE_{\vartraj \sim \sampledist{\algparam}} \left[\left\| u_{\vartraj}(\algparam) \right\|_{2}^2 \right]
 \leq  3 \PE_{\vartraj \sim \sampledist{\algparam}} \left[\left(\sum_{h=0}^{\lentrunc-1} \sum_{\ell=0}^{h} \discount^{h} \left\| \frac{\partial \log \cpolicy{\param}(\varaction[\ell]|\varstate[\ell])}{\partial \param} \right\|_{2} \rewardMDP(\varstate[h], \varaction[h])\right)^2 \right]
\\ 
& + 3\temp^2 \PE_{\vartraj \sim \sampledist{\algparam}} \left[\left(\sum_{h=0}^{\lentrunc-1} \sum_{\ell=0}^{h-1} \discount^{h} \left\|  \frac{\partial \log \cpolicy{\param}(\varaction[\ell]|\varstate[\ell])}{\partial \param} \right\|_{2} \divergence[\expandafter{\cpolicy{\param}(\cdot|\varstate[h])}][\expandafter{\refpolicy(\cdot|\varstate[h])}]\right)^2  \right]
\\
&+3 \temp^2 \PE_{\vartraj \sim \sampledist{\algparam}} \left[\left( \sum_{h=0}^{\lentrunc-1} \discount^{h} \left\| \matrixreinforce(\varstate[h]) \right\|_{2}\right)^2 \right] \eqsp,
\end{align*}
Combining \Cref{lem:derivative_log_prob} and the fact that the reward is bounded between $0$ and $1$ gives
\begin{align*}
&\PE_{\vartraj} \left[\left\| u_{\vartraj}(\algparam) \right\|_{2}^2 \right]
 \leq  3 \PE_{\vartraj \sim \sampledist{\algparam}} \left[\left(\sum_{h=0}^{\lentrunc-1} \sum_{\ell=0}^{h} 2\discount^{h/2} \cdot \discount^{h/2} \frac{ \firstsum[\param](\varstate[\ell]) \weights[\param](\varaction[\ell]|\varstate[\ell])}{\cpolicy{\param}(\varaction[\ell]|\varstate[\ell])} \right)^2 \right]
\\ 
&\quad + 3\temp^2 \PE_{\vartraj \sim \sampledist{\algparam}} \left[\left(\sum_{h=0}^{\lentrunc-1} \sum_{\ell=0}^{h-1} 2\discount^{h/2} \cdot \discount^{h/2} \frac{ \firstsum[\param](\varstate[\ell]) \weights[\param](\varaction[\ell]|\varstate[\ell])}{\cpolicy{\param}(\varaction[\ell]|\varstate[\ell])}  \sup_{(\state, \elementpa) \in \S \times \pA} \divergence[\elementpa][\expandafter{\refpolicy(\cdot|\state)}]\right)^2  \right]
\\
& \quad +3 \temp^2 \PE_{\vartraj \sim \sampledist{\algparam}} \left[\left( \sum_{h=0}^{\lentrunc-1} 2\discount^{h/2} \cdot \discount^{h/2}\norm{\firstsum[\param]}[\infty] \norm{\secsum[\param]}[\infty]\right)^2 \right] \eqsp,
\end{align*}
Next, applying the Cauchy-Schwarz inequality gives
\begin{align*}
&\PE_{\vartraj \sim \sampledist{\algparam}} \left[\left\| u_{\vartraj}(\algparam) \right\|_{2}^2 \right]
 \leq  3 \PE_{\vartraj \sim \sampledist{\algparam}} \left[\left(\sum_{h=0}^{\lentrunc-1} \sum_{\ell=0}^{h} 4\discount^{h} \right) \left(\sum_{h=0}^{\lentrunc-1} \sum_{\ell=0}^{h} \discount^{h} \frac{ \firstsum[\param](\varstate[\ell])^2 \weights[\param](\varaction[\ell]|\varstate[\ell])^2}{\cpolicy{\param}(\varaction[\ell]|\varstate[\ell])^2} \right) \right]
\\ 
& + 3\temp^2 \PE_{\vartraj \sim \sampledist{\algparam}} \left[\sum_{h=0}^{\lentrunc-1} \sum_{\ell=0}^{h-1} 4\discount^{h}
\left(\sum_{h=0}^{\lentrunc-1} \sum_{\ell=0}^{h-1} \discount^{h} \frac{ \firstsum[\param](\varstate[\ell])^2 \weights[\param](\varaction[\ell]|\varstate[\ell])^2}{\cpolicy{\param}(\varaction[\ell]|\varstate[\ell])^2} \!\!\!\!\! \sup_{(\state, \elementpa) \in \S \times \pA} \!\!\!\!\! \divergence[\elementpa][\expandafter{\refpolicy(\cdot|\state)}]^2\right) 
\right]
\\
& +3 \temp^2 \PE_{\vartraj \sim \sampledist{\algparam}} \left[ \sum_{h=0}^{\lentrunc-1} 4\discount^{h}
\left( \sum_{h=0}^{\lentrunc-1}  \discount^{h}\norm{\firstsum[\param]}[\infty]^2 \norm{\secsum[\param]}[\infty]^2\right) \right] \eqsp,
\end{align*}
We define the following filtration, for $t \ge 0$,
\begin{align*}
\mcG_{t} = \sigma(\varstate[0],\varaction[0], \dots, \varstate[t]) 
\eqsp, 
\end{align*}
Next, applying the tower property of the conditional expectation by conditioning on $\mcG_{t}$, and using that $\weights[\param](\cdot|\state) \in \pA$ for any $\state\in \S$ yields
\begin{align*}
&\PE_{\vartraj \sim \sampledist{\algparam}} \left[\left\| u_{\vartraj}(\algparam) \right\|_{2}^2 \right]
 \leq  12 \norm{\firstsum[\param]}[\infty]^2 \max_{(\state, \action) \in \S\times \A}\left\{ \frac{\weights[\param](\action|\state)}{\cpolicy{\param}(\action|\state)} \right\} \left(\sum_{h=0}^{\lentrunc-1} \discount^{h} (h+1)\right)^2 
\\ 
& \quad + 12 \temp^2 \norm{\firstsum[\param]}[\infty]^2 \max_{(\state, \action) \in \S\times \A}\left\{ \frac{\weights[\param](\action|\state)}{\cpolicy{\param}(\action|\state)} \right\} \sup_{(\state, \elementpa) \in \S \times \pA}\left\{ \divergence[\elementpa][\expandafter{\refpolicy(\cdot|\state)}]^2  \right\} \left(\sum_{h=0}^{\lentrunc-1} \discount^{h}h \right)^2
\\
& \quad +12 \temp^2\norm{\firstsum[\param]}[\infty]^2 \norm{\secsum[\param]}[\infty]^2 \left( \sum_{h=0}^{\lentrunc-1}  \discount^{h} \right)^2 \eqsp.
\end{align*}
Next using
\begin{align*}
\sum_{t=0}^{\lentrunc-1} \discount^t \leq \frac{1}{1-\discount} \eqsp, \quad \sum_{t=0}^{\lentrunc-1} \discount^t t \leq \frac{\discount}{(1-\discount)^2} \eqsp, \sum_{t=0}^{\lentrunc-1} \discount^t (t+1) \leq \frac{1}{(1-\discount)^2} \eqsp,
\end{align*}
and plugging in the obtained bound in \eqref{eq:bounding_variance_second_moment}, combined with \Cref{lem:bound_important_quantities} concludes the proof.
\end{proof}
\subsection{Sample complexity of Stochastic \texorpdfstring{$\PG$}{f-PG}}
\label{sec:app-sample-comp-spg}

We now derive convergence rates for $\PG$. 
First, we define the following quantity, which will be the Polyak-Łojasiewicz constant of our function over the optimization space, where policies are guaranteed not to be too ill-conditioned, \ie, all their entries are larger than the $\sthreshold$ defined in \eqref{def:threshold_improvement},
\begin{align}
\label{def:pl_min}
\plcoeffmin & \eqdef \frac{\temp (1-\discount) \initdistmin^2  \ffirstregconstant^2}{\sfirstregconstant^2} \underline{\refpolicy}^2 \min_{x \in [\sthreshold, \frac{1}{\underline{\refpolicy}}]}f''(x)^{-2} \eqsp.
\end{align}
As we will prove in this subsection, this quantity represents a lower bound of the non-uniform Łojasiewicz coefficient along the trajectory. In the following lemma, we give a simpler lower bound of $\plcoeffmin$ provided that $\temp$ is not too large.
\begin{lemma}
\label{lem:simplified_expression_min_pl}
Assume that, for some $\underline{\refpolicy}>0$, $f$ and $\refpolicy$ satisfy \assumptionfref\ and \assumptionpref\, respectively. Assume in addition that the initial distribution $\initdist$ satisfies \assumptionmdp\, and that $\temp$ satisfies
\begin{align*}
\temp \leq \frac{4}{(1-\discount)^2 \initdistmin} \min\left(\frac{4}{\left|f'(\thirdregconstant)\right|}, \frac{1}{\left|f'(\frac{1}{2})\right|}, \frac{4}{ \left|f'(\frac{1}{2}\underline{\refpolicy})\right|}\right) \eqsp.
\end{align*}
In this case, it holds that 
\begin{align*}
\plcoeffmin = \frac{\temp (1-\discount) \initdistmin^2  \ffirstregconstant^2}{\sfirstregconstant^2} \underline{\refpolicy}^2 (f^{\star})''\left(-\frac{16+8\discount \temp \divergencemax}{\temp(1-\discount)^2 \initdistmin}\right)^2 \eqsp.
\end{align*}
Additionally if $\temp \leq 1/\divergencemax $, then it holds that 
\begin{align*}
\plcoeffmin \geq \frac{\temp (1-\discount) \initdistmin^2  \ffirstregconstant^2}{\sfirstregconstant^2} \underline{\refpolicy}^2 (f^{\star})''\left(-\frac{24}{\temp(1-\discount)^2 \initdistmin}\right)^2 \eqsp.
\end{align*}
\end{lemma}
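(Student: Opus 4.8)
The plan is to reduce the quantity $\min_{x \in [\sthreshold, 1/\underline{\refpolicy}]} f''(x)^{-2}$ appearing in the definition \eqref{def:pl_min} of $\plcoeffmin$ to the value of $f''$ at the single left endpoint $\sthreshold$, and then to re-express that value through the second derivative of the convex conjugate $f^{\star}$. Since $f'' > 0$ by strict convexity, minimizing $f''(x)^{-2}$ over the interval is equivalent to \emph{maximizing} $f''(x)$. I would invoke condition (iv) of \assumptionfref: $f''$ is decreasing on $[0, \thirdregconstant]$ and satisfies $f''(\thirdregconstant) \geq f''(u)$ for $u \in [\thirdregconstant, 1/\underline{\refpolicy}]$. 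Hence, provided $\sthreshold \leq \thirdregconstant$, the function $f''$ is decreasing on $[\sthreshold, \thirdregconstant]$ and dominated by $f''(\thirdregconstant) \leq f''(\sthreshold)$ on the remainder of the interval, so that $\max_{x \in [\sthreshold, 1/\underline{\refpolicy}]} f''(x) = f''(\sthreshold)$, and therefore $\min_{x} f''(x)^{-2} = f''(\sthreshold)^{-2}$.

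The first key step is to pin down $\sthreshold$ explicitly. Recalling the three-term minimum in \eqref{def:threshold_improvement} and that $[f']^{-1}$ is strictly increasing, I would show that the stated upper bound on $\temp$ forces the first term to realize the minimum. Since $16/(\temp(1-\discount)^2\initdistmin)$ lower-bounds $(16 + 8\discount\temp\divergencemax)/(\temp(1-\discount)^2\initdistmin)$, and since $f'(1/2), f'(\tfrac{1}{2}\underline{\refpolicy}), f'(\thirdregconstant) \leq 0$ (as $f'$ is increasing with $f'(1)=0$), the three sub-bounds $\temp \leq \tfrac{4}{|f'(1/2)|(1-\discount)^2\initdistmin}$, $\temp \leq \tfrac{16}{|f'(\tfrac12\underline{\refpolicy})|(1-\discount)^2\initdistmin}$, and $\temp \leq \tfrac{16}{|f'(\thirdregconstant)|(1-\discount)^2\initdistmin}$ (all implied by the temperature hypothesis of the lemma) guarantee respectively that the first term is $\leq$ the second, $\leq$ the third, and $\leq \thirdregconstant$. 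Monotonicity of $[f']^{-1}$ converts each inequality between arguments into the corresponding inequality between values, yielding simultaneously $\sthreshold = [f']^{-1}\bigl(-\tfrac{16+8\discount\temp\divergencemax}{\temp(1-\discount)^2\initdistmin}\bigr)$ and $\sthreshold \leq \thirdregconstant$.

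Next, I would translate $f''(\sthreshold)^{-2}$ into the conjugate. By the inverse function theorem applied to the identity $(f^{\star})'(y) = [f']^{-1}(y)$ (the remark following \Cref{lem:implicit_equation_appendix}), one has $(f^{\star})''(y) = 1/f''([f']^{-1}(y))$. Evaluating at $y = -\tfrac{16+8\discount\temp\divergencemax}{\temp(1-\discount)^2\initdistmin}$ gives $(f^{\star})''(y) = 1/f''(\sthreshold)$, whence $\min_x f''(x)^{-2} = f''(\sthreshold)^{-2} = (f^{\star})''(y)^2$. Substituting into \eqref{def:pl_min} proves the first identity. For the refinement under $\temp \leq 1/\divergencemax$, I would bound $16 + 8\discount\temp\divergencemax \leq 16 + 8\discount \leq 24$, so that $-\tfrac{16+8\discount\temp\divergencemax}{\temp(1-\discount)^2\initdistmin} \geq -\tfrac{24}{\temp(1-\discount)^2\initdistmin}$. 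Because $[f']^{-1}$ maps the half-line $(-\infty, f'(\thirdregconstant)]$ into $(0, \thirdregconstant]$, where $f''$ is decreasing, the composite $y \mapsto (f^{\star})''(y) = 1/f''([f']^{-1}(y))$ is \emph{increasing} there; as both arguments lie in this half-line (ensured by $\sthreshold \leq \thirdregconstant$), monotonicity yields the stated lower bound.

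I expect the main obstacle to be purely the bookkeeping: carefully checking that the single temperature hypothesis of the lemma simultaneously selects the first term as the minimizer in \eqref{def:threshold_improvement} and forces $\sthreshold \leq \thirdregconstant$, so that condition (iv) of \assumptionfref\ applies. Once these inequalities are verified, the remainder is a direct invocation of the convex-conjugate identity $(f^{\star})'' = 1/f''\circ[f']^{-1}$ together with the monotonicity of $f''$ on $[0,\thirdregconstant]$.
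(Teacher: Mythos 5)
Your proposal is correct and follows essentially the same route as the paper's proof: use the temperature condition to force the first term in \eqref{def:threshold_improvement} to realize the minimum and to ensure $\sthreshold \leq \thirdregconstant$ (so that, by condition (iv) of \assumptionfref, $\min_{x \in [\sthreshold, 1/\underline{\refpolicy}]} f''(x)^{-2} = f''(\sthreshold)^{-2}$), then apply the conjugate identity $(f^{\star})''(y) = 1/f''([f']^{-1}(y))$; you additionally spell out the monotonicity argument for the $\temp \leq 1/\divergencemax$ refinement, which the paper's proof leaves implicit. One small inaccuracy: you justify $f'(\tfrac12),\, f'(\tfrac12\underline{\refpolicy}),\, f'(\thirdregconstant) \leq 0$ by claiming $f'(1)=0$, but \assumptionfref\ only imposes $f(1)=0$, not $f'(1)=0$ (the paper's own KL instance $f(u)=u\log u$ has $f'(\tfrac12)=1-\log 2>0$); this is harmless, because whenever one of these derivatives is positive the corresponding argument-level inequality $-\tfrac{16+8\discount\temp\divergencemax}{\temp(1-\discount)^2\initdistmin} \leq f'(\cdot)$ holds trivially, and when it is negative your stated temperature bounds do exactly the work you describe.
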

\begin{proof}
First note that the first condition on $\temp$ implies that $\sthreshold< \thirdregconstant$, 
with $\thirdregconstant$ defined in \assumptionfref, and thus 
\begin{align*}
\min_{x \in [\sthreshold, \frac{1}{\underline{\refpolicy}}]}f''(x)^{-2}  = f''(\sthreshold)^{-2}.
\end{align*}
Additionally, the second and third conditions on $\temp$ guarantee that the minimum of $\sthreshold$ in \eqref{def:threshold_improvement} is attained in the first term, that is
\begin{align*}
\sthreshold = [f']^{-1}\left( -\frac{16 + 8 \discount\temp \divergencemax}{\temp(1- \discount)^2 \initdistmin}\right) \eqsp.
\end{align*}
Finally, we recall that the convex conjugate of the divergence generator $f$ defined in \eqref{def:convex_conjugate} satisfies, for any $y \in (-\infty, f'(\frac{1}{\underline{\refpolicy}}))$,
\begin{align*}
(f^{\star})''(y) = \frac{1}{f''([f']^{-1}(y))} \eqsp. 
\end{align*}
Thus, we obtain that $f''(\sthreshold)^{-2} = (f^{\star})''(\frac{-16+8\discount \temp \divergencemax}{\temp(1-\discount)^2 \initdistmin})^2$ which concludes the proof.
\end{proof}

In the following, we define the filtration adapted to the iterates of $\PG$ as
\begin{align*}
\filtr{t} 
\eqdef 
\sigma\Big(\randState[t] : t \in \{0, \dots, \niteration-1\} \Big)
\eqsp.
\end{align*}
The following theorem gives convergence rates of $\PG$.
\begin{theorem}
\label{thm:convergence_rates_regularised_problem}
Assume that, for some $\underline{\refpolicy}>0$, $f$ and $\refpolicy$ satisfy \assumptionfref\ and \assumptionpref\, respectively. Assume in addition that the initial distribution $\initdist$ satisfies \assumptionmdp. Fix $\step \leq 1/2\smooth$, a given temperature $\temp$, and consider the iterates $(\algparam[t])_{t=0}^{\infty}$ of the algorithm $\PG$. It holds almost surely that
\begin{align}
\label{eq:bound_on_inf_pl}
\inf_{t \geq 0} \plcoeff(\algparam[t]) \geq \plcoeffmin  \eqsp.
\end{align}
Additionally, for any $t\geq 0$ we have that
\begin{align*}
\regvaluefunc[\star](\initdist) - \PE\left[\regvaluefunc[\expandafter{\algparam[t]}](\initdist)\right]  \leq (1-  \plcoeffmin \step/4)^{t} (\regvaluefunc[\star](\initdist) - \regvaluefunc[\expandafter{\algparam[0]}](\initdist) ) + \frac{6\step \variance^2}{\sizebatch\plcoeffmin}   + \frac{6\bias^2}{\plcoeffmin}\eqsp.
\end{align*}
\end{theorem}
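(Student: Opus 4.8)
The plan is to fold the three regularity results proved earlier --- global smoothness (\Cref{thm:global_smoothness_objective_main}), the non-uniform \L{}ojasiewicz inequality (\Cref{thm:pl_objective_app}), and the monotone improvement property of the projection (\Cref{lem:improvement_on_theta}) --- into a standard ``smoothness $+$ PL'' stochastic-ascent recursion, absorbing the bias and variance of the REINFORCE estimator (\Cref{lem:bound_bias,lem:bound_variance}) through Young's inequality. Throughout, write $\Delta_t = \regvaluefunc[\star](\initdist) - \regvaluefunc[\algparam[t]](\initdist)$, let $\bar\param_{t+1} = \algparam[t] + \step\,\grb{\randState[t]}(\algparam[t])$ be the unprojected gradient step (so $\algparam[t+1] = \projop_{\sthreshold}(\bar\param_{t+1})$), and let $\filtr{t}$ be the $\sigma$-field generated by $\randState[0],\dots,\randState[t-1]$, so that $\algparam[t]$ is $\filtr{t}$-measurable and $\randState[t]\sim[\sampledist{\algparam[t]}]^{\otimes\sizebatch}$ conditionally on $\filtr{t}$.

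For the uniform PL bound \eqref{eq:bound_on_inf_pl}, the key point is that for every $t\geq 1$ the iterate lies in the image of $\projop_{\sthreshold}=\policytoparam\circ\projoppol_{\sthreshold}$, so by \Cref{lem:improvement_on_theta} the induced policy obeys $\cpolicy{\algparam[t]}(\action|\state)\geq\underline{\refpolicy}\sthreshold$ (the case $t=0$ being covered by initializing $\algparam[0]$ in this set). Hence the likelihood ratios $\cpolicy{\algparam[t]}(\action|\state)/\refpolicy(\action|\state)$ stay bounded away from the degenerate value $0$, and by the monotonicity of $f''$ near the origin (item (iv) of \assumptionfref) the curvature along the iterates is dominated by its values on $[\sthreshold,1/\underline{\refpolicy}]$. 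Combining $\refpolicy(\action|\state)\geq\underline{\refpolicy}$, the bound $\firstsum[\algparam[t]](\state)\leq\sfirstregconstant$ from \Cref{lem:bound_important_quantities}, and this control of $f''$ yields a uniform lower bound on the weights $\weights[\algparam[t]](\action|\state)$, which through the expression of $\plcoeff(\param)$ in \Cref{thm:pl_objective_app} and the definition \eqref{def:pl_min} gives $\plcoeff(\algparam[t])\geq\plcoeffmin$ for all $t$.

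For the descent recursion, since the projection never decreases the objective (\Cref{lem:improvement_on_theta}) we have $\regvaluefunc[\algparam[t+1]](\initdist)\geq\regvaluefunc[\bar\param_{t+1}](\initdist)$, so it suffices to analyze the unprojected step. Applying the global descent bound of \Cref{thm:global_smoothness_objective_main} to $\bar\param_{t+1}$ and taking $\CPE{\cdot}{\filtr{t}}$, I use $\CPE{\grb{\randState[t]}(\algparam[t])}{\filtr{t}}=\egrb(\algparam[t])$ together with $\|\egrb(\algparam[t])-\nabla\regvaluefunc[\algparam[t]](\initdist)\|_2\leq\bias$ (\Cref{lem:bound_bias}) and $\CPE{\|\egrb(\algparam[t])-\grb{\randState[t]}(\algparam[t])\|_2^2}{\filtr{t}}\leq\variance^2/\sizebatch$ (\Cref{lem:bound_variance}). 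Young's inequality lower bounds the cross term by $\tfrac34\|\nabla\regvaluefunc[\algparam[t]](\initdist)\|_2^2-\bias^2$, while a bias--variance decomposition bounds $\CPE{\|\grb{\randState[t]}(\algparam[t])\|_2^2}{\filtr{t}}\leq 2\|\nabla\regvaluefunc[\algparam[t]](\initdist)\|_2^2+2\bias^2+\variance^2/\sizebatch$. Using $\step\leq 1/(2\smoothmax)$ to dominate the $\smoothmax\step^2$ terms leaves a net coefficient $\step/4$ on the true gradient norm; invoking \Cref{thm:pl_objective_app} with the uniform constant just established replaces it by $\tfrac{\step\plcoeffmin}{4}\Delta_t$, yielding the one-step contraction
\begin{align*}
\CPE{\Delta_{t+1}}{\filtr{t}}\leq\Bigl(1-\tfrac{\step\plcoeffmin}{4}\Bigr)\Delta_t + c_1\,\step\bias^2 + c_2\,\tfrac{\step^2\variance^2}{\sizebatch}\eqsp,
\end{align*}
for suitable absolute constants $c_1,c_2$. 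Taking total expectation and unrolling, the transient term decays as $(1-\step\plcoeffmin/4)^t\Delta_0$, and summing the stationary terms with $\sum_{k\geq0}(1-\step\plcoeffmin/4)^k = 4/(\step\plcoeffmin)$ produces the floor $\tfrac{6\step\variance^2}{\sizebatch\plcoeffmin}+\tfrac{6\bias^2}{\plcoeffmin}$ of the statement.

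The main obstacle is the coupling between the projection $\projop_{\sthreshold}$ and the smoothness/PL machinery: a plain SGD analysis does not accommodate a projection, and here the projection plays two roles that pull in opposite directions. It keeps each iterate in the region where the \L{}ojasiewicz coefficient is uniformly bounded below, yet it displaces the iterate and could a priori break the per-step descent. The resolution is precisely the monotone improvement property of \Cref{lem:improvement_on_theta}, which lets one prove the descent inequality on the unprojected point $\bar\param_{t+1}$ and transfer it for free to $\algparam[t+1]$. A secondary subtlety is that the estimator is biased and the bias does \emph{not} vanish with $\sizebatch$, so it must be carried through as a non-contracting additive floor $6\bias^2/\plcoeffmin$ --- controlled elsewhere by taking the truncation horizon $\lentrunc$ large --- rather than being averaged away like the variance.
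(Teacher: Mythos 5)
Your proposal is correct and follows essentially the same route as the paper's own proof: the uniform PL bound \eqref{eq:bound_on_inf_pl} via the fact that every iterate lies in the image of $\projop_{\sthreshold}$ together with \Cref{lem:improvement_on_theta}, then a smoothness-based ascent step on the unprojected iterate (transferred to the projected one by the monotone improvement property, a step the paper folds implicitly into its application of \Cref{thm:global_smoothness_objective_main}), with the bias cross-term handled by Young's inequality, the second moment by a bias--variance decomposition, and the recursion closed by invoking \Cref{thm:pl_objective_app} with the uniform constant and geometric unrolling. The only differences are bookkeeping choices (your $\tfrac34$ Young split and tighter orthogonal second-moment bound versus the paper's), which land on the same final bound with the same mild slack in the variance-floor constant that the paper's own derivation exhibits.
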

\begin{proof}
Recall that for any $t \in [\niteration]$, we have that
\begin{align*}
\algparam[t] = \projop_{\sthreshold}(\intparam[t]) \eqsp.
\end{align*}
Hence, by \Cref{lem:improvement_on_theta}, for any $t \in [\niteration]$ it holds that
\begin{align*}
\cpolicy{\algparam[t]} \geq \sthreshold \underline{\refpolicy} \eqsp.
\end{align*}
Combining the previous inequality with the expression of the coefficient $\plcoeff$ provided in \Cref{thm:pl_objective_app} proves the first statement of the lemma. Next, using \Cref{thm:global_smoothness_objective_main} gives
\begin{align*}
\regvaluefunc[\expandafter{\algparam[t+1]}](\initdist) \geq \regvaluefunc[\expandafter{\algparam[t]}](\initdist) + 2\step \pscal{\nabla \regvaluefunc[\expandafter{\algparam[t]}](\initdist)}{\grb{\randState[t]}(\algparam[t])} - \frac{ \step^2 \smoothmax}{2} \norm{\grb{\randState[t]}(\algparam[t])}[2]^2 \eqsp.
\end{align*}
Next, taking the conditional expectation with respect to $\filtr{t}$ and adding and subtracting $\nabla \regvaluefunc[\expandafter{\algparam[t]}](\initdist)$ in the dot product gives
\begin{equation}
\label{eq:ascent_lemma_one}
\begin{split}
\PE\left[\regvaluefunc[\expandafter{\algparam[t+1]}](\initdist) \bigg|\filtr{t} \right] &\geq \regvaluefunc[\expandafter{\algparam[t]}](\initdist) +  2\step \norm{\nabla \regvaluefunc[\expandafter{\algparam[t]}](\initdist)}^2+ \underbrace{2\step \pscal{\nabla \regvaluefunc[\expandafter{\algparam[t]}](\initdist)}{\egrb(\algparam[t]) - \nabla \regvaluefunc[\expandafter{\algparam[t]}](\initdist)}}_{\term{K_1}} \\
&- \frac{ \step^2 \smoothmax}{2} \underbrace{\PE\left[\norm{\grb{\randState[t]}(\algparam[t])}[2]^2\bigg|\filtr{t}\right]}_{\term{K_2}} \eqsp.
\end{split}
\end{equation}

We now bound each of these terms separately.
\paragraph{Bounding $\textbf{K}_1$.} Using the Cauchy-Schwarz inequality, yields
\begin{align*}
2\step \pscal{\nabla \regvaluefunc[\expandafter{\algparam[t]}](\initdist)}{\egrb(\algparam[t]) - \nabla \regvaluefunc[\expandafter{\algparam[t]}](\initdist)} &\geq -2\step \norm{\nabla \regvaluefunc[\expandafter{\algparam[t]}](\initdist)}[2] \norm{\egrb(\algparam[t]) - \nabla \regvaluefunc[\expandafter{\algparam[t]}](\initdist)}[2]
\\
& = -2 \cdot \step^{1/2} \norm{\nabla \regvaluefunc[\expandafter{\algparam[t]}](\initdist)}[2] \cdot \step^{1/2} \bias \eqsp,
\end{align*}
where in the last inequality we used \Cref{lem:bound_bias}. Next, using Young's inequality gives 
\begin{align}
\label{eq:ascent_lemma_one_boundK1}
2\step \pscal{\nabla \regvaluefunc[\expandafter{\algparam[t]}](\initdist)}{\egrb(\algparam[t]) - \nabla  \regvaluefunc[\expandafter{\algparam[t]}](\initdist)} \geq -\step \norm{\nabla \regvaluefunc[\expandafter{\algparam[t]}](\initdist)}[2]^2 - \step \bias^2 \eqsp.
\end{align}
\paragraph{Bounding $\textbf{K}_2$.} Using the convexity of the square function with Jensen's inequality gives
\begin{align}
\nonumber
\PE\left[\norm{\grb{\randState[t]}(\algparam[t])}[2]^2\bigg|\filtr{t}\right] 
& =
\PE\left[\norm{
\grb{\randState[t]}(\algparam[t])
- 
\egrb{\randState[t]}(\algparam[t])
+ 
\egrb{\randState[t]}(\algparam[t])
- 
\nabla \regvaluefunc[\expandafter{\algparam[t]}](\initdist)
+
\nabla \regvaluefunc[\expandafter{\algparam[t]}](\initdist)
}[2]^2\bigg|\filtr{t}\right] 
\\
\label{eq:ascent_lemma_one_boundK2}
& \leq
3 \bias^2
+
3 \norm{\nabla \regvaluefunc[\expandafter{\algparam[t]}](\initdist)}[2]^2 
+ 
\frac{3 \variance^2}{\sizebatch} \eqsp,
\end{align}
where we used \Cref{lem:bound_bias} and \Cref{lem:bound_variance}.
Plugging in the bounds \eqref{eq:ascent_lemma_one_boundK1} on $\textbf{K}_1$ and \eqref{eq:ascent_lemma_one_boundK2} on $\textbf{K}_2$ in \eqref{eq:ascent_lemma_one} gives
\begin{align*}
\PE\left[\regvaluefunc[\expandafter{\algparam[t+1]}](\initdist) \bigg|\filtr{t} \right] 
\!\geq\! \regvaluefunc[\expandafter{\algparam[t]}](\initdist)
\!+\! \step \norm{\nabla \regvaluefunc[\expandafter{\algparam[t]}](\initdist)}^2
\!\!-\! \Big( \step + \frac{3 \step^2 \smoothmax}{2} \Big) \bias^2 - \frac{3\step^2 \smoothmax}{2} \norm{\nabla \regvaluefunc[\expandafter{\algparam[t]}](\initdist)}[2]^2 
\!-\! \frac{3 \step^2 \smoothmax \variance^2}{2 \sizebatch} \eqsp.
\end{align*}
Taking the expectation with respect to all the stochasticity, multiplying both sides by $-1$, and adding $\regvaluefunc[\star](\initdist)$ gives
\begin{align*}
\regvaluefunc[\star](\initdist) - \PE\left[\regvaluefunc[\expandafter{\algparam[t+1]}](\initdist) \right] &\leq \regvaluefunc[\star](\initdist) - \PE\left[\regvaluefunc[\expandafter{\algparam[t]}](\initdist)\right] - \step \Big(1 - \frac{3\step \smoothmax}{2} \Big) \norm{\nabla \regvaluefunc[\expandafter{\algparam[t]}](\initdist)}^2 
\\
& \hspace{100pt}+ \Big( \step + \frac{3 \step^2 \smoothmax}{2} \Big) \bias^2 + \frac{3 \step^2 \smoothmax\variance^2}{2\sizebatch} \eqsp.
\end{align*}
Next using \Cref{thm:pl_objective_app} combined with \eqref{eq:bound_on_inf_pl} yields
\begin{align*}
\regvaluefunc[\star](\initdist) - \PE\left[\regvaluefunc[\expandafter{\algparam[t+1]}](\initdist) \right] &\leq \left(1-\step\plcoeffmin\Big(1 - \frac{3\step \smoothmax}{2}\Big) \right)\left(\regvaluefunc[\star](\initdist) - \PE\left[\regvaluefunc[\expandafter{\algparam[t]}](\initdist)\right]\right) 
\\
& \hspace{100pt}+2 \step \bias^2 + \frac{3 \step^2 \smoothmax\variance^2}{2 \sizebatch} \eqsp,
\end{align*}
where we used $3 \step \smoothmax / 2 \le 1$.
Finally, using that $\step \leq 1/2\smoothmax$ to bound $1 - 3\step \smoothmax/2$ and unrolling the recursion concludes the proof.
\end{proof}
Next, we provide the sample complexity of $\PG$ for solving the $f$-regularized objective.

\begin{corollary}
\label{lem:sample_complexity_regularised_problem}
Assume that, for some $\underline{\refpolicy}>0$, $f$ and $\refpolicy$ satisfy \assumptionfref\ and \assumptionpref\, respectively. Assume in addition that the initial distribution $\initdist$ satisfies \assumptionmdp.  Fix  any $\epsilon>0$ and $\temp>0$. Setting 
\begin{align}
\label{coro:rate-cond-lentrunc}
\lentrunc \geq  \frac{4}{(1-\discount)^2} + \frac{1}{1-\discount} \log\left(\frac{216 \sfirstregconstant^2}{\epsilon \plcoeffmin (1-\discount)^4} \left[ 4 + 4 \temp^2 \divergencemax^2 + \temp^2(1-\discount)^2\secsummax^2 \right]\right),
\end{align}
and 
\begin{align}
\label{coro:rate-cond-stepsize}
\step \leq \min\left( \frac{1}{2\smoothmax}, \frac{\epsilon  \sizebatch \plcoeffmin}{18\variance^2} \right) \eqsp,
\end{align}
and
\begin{align}
\label{coro:rate-cond-T}
\niteration \geq  \frac{4}{\plcoeffmin}\max\left(2\smoothmax, \frac{18\variance^2}{\epsilon \sizebatch \plcoeffmin}  \right) \cdot \log\left( \frac{3(\regvaluefunc[\star](\initdist) - \regvaluefunc[\expandafter{\algparam[0]}](\initdist))}{\epsilon} \right) \eqsp,
\end{align}
guarantees that
\begin{align*}
\regvaluefunc[\star](\initdist) - \PE\left[\regvaluefunc[\expandafter{\algparam[t]}](\initdist)\right]  \leq \epsilon\eqsp.
\end{align*}
\end{corollary}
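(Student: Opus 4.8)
The plan is to convert the one-shot convergence guarantee of \Cref{thm:convergence_rates_regularised_problem} into the stated sample-complexity statement by a standard error-budgeting argument: split the target accuracy $\epsilon$ into three equal shares and drive each of the three error terms below $\epsilon/3$ by choosing $\lentrunc$, $\step$, and $\niteration$ separately. Writing $\Delta_0 = \regvaluefunc[\star](\initdist) - \regvaluefunc[\expandafter{\algparam[0]}](\initdist)$, \Cref{thm:convergence_rates_regularised_problem} gives, for any $\step \le 1/(2\smoothmax)$,
\[
\regvaluefunc[\star](\initdist) - \PE\big[\regvaluefunc[\expandafter{\algparam[\niteration]}](\initdist)\big]
\le \Big(1 - \tfrac{\plcoeffmin\step}{4}\Big)^{\niteration}\Delta_0
+ \frac{6\step\variance^2}{\sizebatch\plcoeffmin}
+ \frac{6\bias^2}{\plcoeffmin},
\]
so it suffices to force each summand below $\epsilon/3$.

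First I would dispatch the two easy terms. The variance term satisfies $\tfrac{6\step\variance^2}{\sizebatch\plcoeffmin}\le\epsilon/3$ iff $\step\le \epsilon\sizebatch\plcoeffmin/(18\variance^2)$; intersecting with the smoothness requirement $\step\le 1/(2\smoothmax)$ reproduces exactly \eqref{coro:rate-cond-stepsize}. For the contraction term I would use $(1-x)^{\niteration}\le\rme^{-x\niteration}$ with $x=\plcoeffmin\step/4$, so that $(1-\plcoeffmin\step/4)^{\niteration}\Delta_0\le\epsilon/3$ holds as soon as $\niteration\ge \tfrac{4}{\plcoeffmin\step}\log(3\Delta_0/\epsilon)$; substituting $1/\step=\max(2\smoothmax,18\variance^2/(\epsilon\sizebatch\plcoeffmin))$ induced by the chosen step size yields \eqref{coro:rate-cond-T}.

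The delicate part is the bias term, $\tfrac{6\bias^2}{\plcoeffmin}\le\epsilon/3$, i.e. $\bias^2\le\epsilon\plcoeffmin/18$. Here I would insert the explicit value $\bias=\tfrac{2\discount^{\lentrunc}(\lentrunc+1)}{(1-\discount)^2}\sfirstregconstant[\,2+2\temp\divergencemax+\temp(1-\discount)\secsummax\,]$ from \Cref{lem:bound_bias_variance_main}, square it, and apply $(a+b+c)^2\le3(a^2+b^2+c^2)$ to the bracketed factor; the prefactors $4\cdot3\cdot18=216$ then line up precisely with the constant appearing inside the logarithm of \eqref{coro:rate-cond-lentrunc}. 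The only genuine obstacle is the polynomial prefactor $(\lentrunc+1)$, which makes the decay $(\lentrunc+1)^2\discount^{2\lentrunc}$ rather than a clean geometric one. I would split $\discount^{2\lentrunc}=\discount^{\lentrunc}\cdot\discount^{\lentrunc}$ and use the elementary bound $(\lentrunc+1)\discount^{\lentrunc/2}\le C/(1-\discount)$, which follows from $\log(1/\discount)\ge 1-\discount$ together with maximizing $x\mapsto(x+1)\discount^{x/2}$; this absorbs the polynomial factor and the $\log(1/(1-\discount))$ corrections (bounded via $\tfrac{1}{1-\discount}\log(1/(1-\discount))\le(1-\discount)^{-2}$) into the additive $\tfrac{4}{(1-\discount)^2}$ term, while the residual pure exponential $\discount^{\lentrunc}\le\rme^{-\lentrunc(1-\discount)}$ is driven below the threshold by the $\tfrac{1}{1-\discount}\log(\cdot)$ term. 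Collecting the three resulting conditions and a final check of the numerical constants completes the argument.
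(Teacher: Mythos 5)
Your proposal is correct and takes essentially the same route as the paper's own proof: both invoke \Cref{thm:convergence_rates_regularised_problem}, budget $\epsilon/3$ to each of the three error terms, obtain the step-size condition \eqref{coro:rate-cond-stepsize} from the variance term, obtain \eqref{coro:rate-cond-T} from the contraction term via an exponential bound on $(1-\plcoeffmin\step/4)^{\niteration}$ with the maximal admissible step size, and obtain \eqref{coro:rate-cond-lentrunc} from the bias term by inserting the explicit $\bias$, applying Jensen's inequality to recover the constant $216$, and absorbing the polynomial factor $(\lentrunc+1)^2$ into the additive $4/(1-\discount)^2$ part of the horizon condition. The only (immaterial) difference is the particular elementary inequality used to control the polynomial-times-geometric decay $(\lentrunc+1)^2\discount^{2\lentrunc}$.
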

\begin{proof}
As $\step \leq 1/2\smooth$, then by using \Cref{thm:convergence_rates_regularised_problem}, it holds that
\begin{align*}
\regvaluefunc[\star](\initdist) - \PE\left[\regvaluefunc[\expandafter{\algparam[\niteration]}](\initdist)\right]  \leq \underbrace{(1-  \plcoeffmin \step/4)^{\niteration} (\regvaluefunc[\star](\initdist) - \regvaluefunc[\expandafter{\algparam[0]}](\initdist) )}_{\term{U}} + \underbrace{\frac{6\step \variance^2}{\sizebatch \plcoeffmin}}_{\term{V}}   + \underbrace{\frac{6\bias^2}{\plcoeffmin}}_{\term{W}}\eqsp.
\end{align*}
Next, we aim to show that under our conditions on $\niteration$, $\lentrunc$, and $\step$, each of these terms is smaller than $\epsilon/3$.

\paragraph{Bounding $\textbf{V}$.}
We start with the term $\textbf{V}$, which gives a condition on the step-size. In particular, setting
\begin{align*}
\step \leq \frac{\epsilon  \sizebatch \plcoeffmin}{18\variance^2} \eqsp,
\end{align*}
guarantees that $\textbf{V} \leq \epsilon/3$, which, together with $\step \le 1/(2\smoothmax)$, gives the condition \eqref{coro:rate-cond-stepsize}.

\paragraph{Bounding $\textbf{U}$.} 
In order to ensure that $\textbf{U}$ is smaller then $\epsilon/3$, we need $\niteration$ to satisfy
\begin{align*}
\niteration \geq \frac{4}{\step \plcoeffmin} \log \left(\frac{\epsilon}{3\left(\regvaluefunc[\star](\initdist) - \regvaluefunc[\expandafter{\algparam[0]}](\initdist) \right)} \right) \eqsp,
\end{align*}
which, combined with the inequality $\log(1+x) \leq x$ for $x>-1$, ensures that $\textbf{U}$ is smaller than $\epsilon/3$, and the condition \eqref{coro:rate-cond-T} follows from \eqref{coro:rate-cond-stepsize}.

\paragraph{Bounding $\textbf{W}$.} 
Using \Cref{lem:bound_bias} and Jensen's inequality, it holds that
\begin{align*}
\textbf{W} \leq \frac{6}{\plcoeffmin} 
     \cdot\frac{4\discount^{2\lentrunc}(\lentrunc+1)^2}{(1-\discount)^4}  \sfirstregconstant^2 \left[ 12 + 12\temp^2 \divergencemax^2 + 3\temp^2(1-\discount)^2 \secsummax^2 \right]
    \eqsp,
\end{align*}
Next, we remark that for any $a>0$, we have $a\discount^{2\lentrunc} (\lentrunc+1)^2 \leq \epsilon/3$ for
\begin{align*}
\lentrunc \geq  \frac{1}{1-\discount} \max\left( \frac{4}{1-\discount}, \log\left(\frac{3a}{\epsilon}\right)\right) \eqsp.
\end{align*}
Taking $a =\frac{6}{\plcoeffmin}\cdot\frac{4}{(1-\discount)^4}  \sfirstregconstant^2 \left[ 12 + 12\temp^2 \divergencemax^2 + 3\temp^2(1-\discount)^2 \secsummax^2 \right]$ gives $\textbf{W} < \epsilon/3$, provided that \eqref{coro:rate-cond-lentrunc} holds.
\end{proof}

\subsection{Guarantees on the non-regularized problem} 

A key criterion for evaluating the quality of a reinforcement learning algorithm is its sample efficiency in solving the original, unregularized objective. To this end, we recall a result from \cite{geist2019theory}, which establishes a connection between regularized and unregularized value functions, and further characterizes the performance of the optimal $f$-regularized policy when evaluated in the original unregularized MDP.
\begin{lemma}[Proposition 3 and Theorem 2 of \cite{geist2019theory}]
\label{lem:application_geist}
For any policy $\policy$, and state $\state \in \S$ it holds that
\begin{align*}
\left| \regvaluefunc[\policy](\state) - \valuefunc[\policy]( \state)  \right| \leq \frac{\temp \divergencemax}{1-\discount} \eqsp.
\end{align*}
Additionally, denote by $\cpolicy{\star}$ the optimal regularized policy. For any state $\state \in \S$, It holds that
\begin{align*}
\regvaluefunc[\star](\state) - \valuefunc[\cpolicy{\star}] \leq \frac{\temp \divergencemax}{1-\discount}\eqsp.
\end{align*}
\end{lemma}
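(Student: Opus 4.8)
The plan is to prove both inequalities by identifying the gap between the regularized and unregularized value functions with the \emph{discounted accumulation of the divergence penalty}, and then controlling that penalty termwise through the uniform bound $\divergencemax$. First I would work under the common trajectory law $\lawMDP[\state]^{\policy}$ and subtract the two value functions directly from their series definitions. Because the reward contributions are identical and only the penalty appears in $\regvaluefunc[\policy]$, linearity of the expectation (valid since $\rewardMDP\in[0,1]$ and $0\le \divergence[{\policy(\cdot|\varstate[t])}][{\refpolicy(\cdot|\varstate[t])}]\le\divergencemax$ make both series absolutely convergent) yields the exact identity
\begin{align*}
\regvaluefunc[\policy](\state) - \valuefunc[\policy](\state)
= -\,\temp\,\PE_{\state}^{\policy}\!\left[\sum_{t=0}^{\infty}\discount^t\,\divergence[{\policy(\cdot|\varstate[t])}][{\refpolicy(\cdot|\varstate[t])}]\right].
\end{align*}

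From here the first inequality is immediate: every $f$-divergence is nonnegative, and by the definition of $\divergencemax$ in \eqref{def:bound_functionnal_f_two} each term is bounded above by $\divergencemax$ uniformly over states and distributions, so the bracketed quantity lies in $[0,\divergencemax/(1-\discount)]$ via $\sum_{t\ge 0}\discount^t = 1/(1-\discount)$. This gives $-\temp\divergencemax/(1-\discount)\le \regvaluefunc[\policy](\state)-\valuefunc[\policy](\state)\le 0$, hence the stated absolute bound. An equivalent route, should a fixed-point argument be preferred, is to subtract the regularized Bellman equation \eqref{lem:bellman_equation} from its unregularized analogue to obtain $\Delta=-\temp(\Id-\discount\kerMDP[\policy])^{-1}\mathbf{g}$ with $\mathbf{g}(\state)=\divergence[{\policy(\cdot|\state)}][{\refpolicy(\cdot|\state)}]\in[0,\divergencemax]$, and then use that the resolvent $(\Id-\discount\kerMDP[\policy])^{-1}=\sum_{t\ge 0}\discount^t\kerMDP[\policy]^t$ has nonnegative entries with row sums equal to $1/(1-\discount)$.

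For the second inequality I would exploit that the optimal regularized policy $\cpolicy{\star}$ from the Bellman characterization \eqref{eq:optimal_policy} is simultaneously optimal from every state, so that $\regvaluefunc[\star](\state)=\regvaluefunc[\cpolicy{\star}](\state)$. Applying the first inequality with $\policy=\cpolicy{\star}$ then gives $\regvaluefunc[\star](\state)-\valuefunc[\cpolicy{\star}](\state)=\regvaluefunc[\cpolicy{\star}](\state)-\valuefunc[\cpolicy{\star}](\state)\le \temp\divergencemax/(1-\discount)$, which is exactly the claim. There is no substantial obstacle: the statement is essentially a bookkeeping identity combined with a uniform divergence bound. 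The only points requiring care are (i) checking that $\divergencemax$ is finite and genuinely dominates every divergence encountered along any trajectory, which follows from the boundedness of $f$ on $[0,1/\underline{\refpolicy}]$ in \assumptionfref\ together with the full-support lower bound of \assumptionpref, and (ii) justifying the splitting of the infinite discounted sum, which is legitimate by dominated convergence using the same uniform bound.
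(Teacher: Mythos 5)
Your proof is correct, but it differs from the paper in the most basic sense: the paper gives no argument at all for this lemma, importing both inequalities wholesale from Proposition~3 and Theorem~2 of \citet{geist2019theory}, whereas you give a short self-contained derivation. Your key identity
$\regvaluefunc[\policy](\state)-\valuefunc[\policy](\state)=-\temp\,\PE_{\state}^{\policy}\big[\sum_{t=0}^{\infty}\discount^{t}\divergence[{\policy(\cdot|\varstate[t])}][{\refpolicy(\cdot|\varstate[t])}]\big]$
is legitimate (both series converge absolutely because $\rewardMDP\in[0,1]$ and the penalty lies in $[0,\divergencemax]$, which is finite under \assumptionfref\ and \assumptionpref), the first inequality then follows from nonnegativity of $f$-divergences and $\sum_{t\geq 0}\discount^{t}=1/(1-\discount)$, and your use of $\regvaluefunc[\star](\state)=\regvaluefunc[\cpolicy{\star}](\state)$ for the second inequality is covered by the Bellman characterization \eqref{eq:optimal_value}--\eqref{eq:optimal_policy} that the paper already cites. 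Beyond self-containedness, your derivation buys something the citation hides: the gap is one-sided, $\regvaluefunc[\policy]\leq\valuefunc[\policy]$ always, so the lemma's second display in fact holds with $0$ on the right-hand side. This also makes visible that the second statement, as literally written, is weaker than Theorem~2 of \citet{geist2019theory}, which controls the unregularized suboptimality $\valuefunc[\star](\state)-\valuefunc[\cpolicy{\star}](\state)$; if that stronger form is what is actually needed downstream (the constant $2\temp\divergencemax/(1-\discount)$ in \Cref{thm:convergence_rate_unregularised} suggests it is), your identity still delivers it in one extra line: taking $\policy^{u}$ an unregularized optimal policy, $\valuefunc[\star](\state)=\valuefunc[\policy^{u}](\state)\leq\regvaluefunc[\policy^{u}](\state)+\temp\divergencemax/(1-\discount)\leq\regvaluefunc[\cpolicy{\star}](\state)+\temp\divergencemax/(1-\discount)\leq\valuefunc[\cpolicy{\star}](\state)+\temp\divergencemax/(1-\discount)$.
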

The following theorem gives the convergence rate for the non-regularised problem.
\begin{theorem}
\label{thm:convergence_rate_unregularised}
Assume that, for some $\underline{\refpolicy}>0$, $f$ and $\refpolicy$ satisfy \assumptionfref\ and \assumptionpref\, respectively. Assume in addition that the initial distribution $\initdist$ satisfies \assumptionmdp. Fix $\step \leq 1/2\smooth$, a given temperature $\temp$, and consider the iterates $(\algparam)_{t=0}^{\infty}$ of the algorithm $\PG$. For any $t\geq 0$ we have that
\begin{align*}
\PE\left[\valuefunc[\star](\initdist) - \valuefunc[\expandafter{\algparam[t]}](\initdist)\right]  \leq (1-  \plcoeffmin \step/4)^{t} (\regvaluefunc[\star](\initdist) - \regvaluefunc[\expandafter{\algparam[0]}](\initdist) ) + \frac{6\step \variance^2}{\sizebatch\plcoeffmin}   + \frac{6\bias^2}{\plcoeffmin} + \frac{2 \temp \divergencemax}{1-\discount}\eqsp.
\end{align*}
\end{theorem}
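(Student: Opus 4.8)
The plan is to reduce the unregularized gap to the regularized one—already controlled by \Cref{thm:convergence_rates_regularised_problem}—plus the regularization bias supplied by \Cref{lem:application_geist}. I would insert the regularized optimum $\regvaluefunc[\star](\initdist)$ and the regularized value $\regvaluefunc[\expandafter{\algparam[t]}](\initdist)$ of the current iterate, writing the telescoping decomposition
\begin{align*}
\valuefunc[\star](\initdist) - \valuefunc[\expandafter{\algparam[t]}](\initdist)
&= \big(\valuefunc[\star](\initdist) - \regvaluefunc[\star](\initdist)\big)
+ \big(\regvaluefunc[\star](\initdist) - \regvaluefunc[\expandafter{\algparam[t]}](\initdist)\big)
+ \big(\regvaluefunc[\expandafter{\algparam[t]}](\initdist) - \valuefunc[\expandafter{\algparam[t]}](\initdist)\big)\eqsp,
\end{align*}
and then bound the three brackets separately before taking expectations.

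The middle bracket is exactly the regularized suboptimality gap, so taking expectations and invoking \Cref{thm:convergence_rates_regularised_problem} (applicable since $\step \le 1/(2\smooth)$) contributes the geometric term $(1-\plcoeffmin\step/4)^{t}(\regvaluefunc[\star](\initdist) - \regvaluefunc[\expandafter{\algparam[0]}](\initdist))$ together with the variance and bias terms $\tfrac{6\step\variance^2}{\sizebatch\plcoeffmin}$ and $\tfrac{6\bias^2}{\plcoeffmin}$. The third bracket is handled by the first assertion of \Cref{lem:application_geist}, applied to the policy $\cpolicy{\algparam[t]}$ and averaged against $\initdist$, which gives $\regvaluefunc[\expandafter{\algparam[t]}](\initdist) - \valuefunc[\expandafter{\algparam[t]}](\initdist) \le \tfrac{\temp\divergencemax}{1-\discount}$; since this bound is uniform in the (random) iterate, it holds almost surely and hence in expectation.

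The only bracket requiring a short argument is the first one, the regularization bias at the optimum. Because the $f$-divergence regularizer is nonnegative, $\regvaluefunc[\policy] \le \valuefunc[\policy]$ for every policy, so $\regvaluefunc[\star](\initdist) \le \valuefunc[\star](\initdist)$ and the bracket is nonnegative. For the matching upper bound I would let $\policy'$ be any optimal policy of the unregularized MDP, so that $\valuefunc[{\policy'}](\initdist) = \valuefunc[\star](\initdist)$; the first assertion of \Cref{lem:application_geist} applied to $\policy'$ yields $\regvaluefunc[{\policy'}](\initdist) \ge \valuefunc[\star](\initdist) - \tfrac{\temp\divergencemax}{1-\discount}$, and optimality of the regularized value gives $\regvaluefunc[\star](\initdist) \ge \regvaluefunc[{\policy'}](\initdist)$, whence $\valuefunc[\star](\initdist) - \regvaluefunc[\star](\initdist) \le \tfrac{\temp\divergencemax}{1-\discount}$. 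Summing the first and third brackets produces the two copies of $\tfrac{\temp\divergencemax}{1-\discount}$, i.e.\ the additive term $\tfrac{2\temp\divergencemax}{1-\discount}$, and combining with the expectation bound on the middle bracket gives the stated inequality. There is no genuine obstacle beyond orienting the inequalities correctly: the substantive analysis lives in \Cref{thm:convergence_rates_regularised_problem}, and this result is essentially bias--variance bookkeeping showing that, for a fixed temperature $\temp$, the regularization bias $\temp\divergencemax/(1-\discount)$ is the irreducible floor of the unregularized gap (to be eliminated later by tuning $\temp$ in \Cref{lem:sample_complexity_non_regularised_problem_main}).
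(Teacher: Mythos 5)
Your proposal is correct and follows essentially the same route as the paper: the paper's own proof is a one-line combination of \Cref{thm:convergence_rates_regularised_problem} and \Cref{lem:application_geist}, and your three-term decomposition simply spells out the details that combination requires. Your argument for the first bracket (comparing $\regvaluefunc[\star]$ against $\regvaluefunc[{\policy'}]$ for an unregularized-optimal $\policy'$ via the first assertion of \Cref{lem:application_geist}) is exactly the right way to fill in the step the paper leaves implicit, and the remaining bookkeeping matches the stated bound.
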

\begin{proof}
The proof holds from \Cref{thm:convergence_rates_regularised_problem} and \Cref{lem:application_geist}.
\end{proof}
Finally, we give the sample complexity of $\PG$ to solve the unregularised problem.
\begin{corollary}
\label{lem:sample_complexity_non_regularised_problem}
Assume that, for some $\underline{\refpolicy}>0$, $f$ and $\refpolicy$ satisfy \assumptionfref\ and \assumptionpref\, respectively. Assume in addition that the initial distribution $\initdist$ satisfies \assumptionmdp. Consider any constant $\constantbound > 0$ such that
\begin{align*}
\constantbound \leq \min\left(\frac{1}{\divergencemax}, \frac{1}{\secsummax},1\right) \eqsp.
\end{align*}
Fix  any $(1-\discount)^{-1}>\epsilon>0$ such that
\begin{align}
\label{coro:non-reg-rate:cond-epsilon-lambda}
\!\!\!\!\epsilon < \frac{16}{(1-\discount)^3 \initdistmin} \min\left(\frac{4}{\left|f'(\thirdregconstant)\right|}, \frac{1}{\left|f'(\frac{1}{2})\right|}, \frac{4}{ \left|f'(\frac{1}{2}\underline{\refpolicy})\right|}\right)\! \eqsp, \eqsp\! \text{ and set } \temp  =  \frac{(1-\discount)\epsilon}{4} \constantbound   \eqsp.
\end{align}
Define
\begin{align}
\label{def:convex_conjugate_of_epsilon}
d(\epsilon) = (f^{\star})''\left(\frac{-96}{\epsilon \constantbound (1-\discount)^3 \initdistmin}\right)^2 
\eqsp,
\quad 
\ell(\epsilon) = \log\left( \frac{6(\regvaluefunc[\star](\initdist) - \regvaluefunc[\expandafter{\algparam[0]}](\initdist))}{\epsilon} \right)
\eqsp.
\end{align}
Additionally, define the three following constants which depend only on $f$ as 
\begin{align}
\label{def:constants_depend_only_on_f}
C_{f}^{(1)} = \frac{16 \sfirstregconstant^2}{\constantbound \ffirstregconstant^2} \eqsp, \quad C_{f}^{(2)} = 48 \sfirstregconstant \left(\discount \sfirstregconstant+ (1-\discount) \secregconstant  \right) \eqsp, \quad C_{f}^{(3)} = \frac{3456 \sfirstregconstant^5}{\ffirstregconstant^2 \constantbound } \eqsp.
\end{align}
Setting
\begin{align*}
\lentrunc \geq  \frac{4}{(1-\discount)^2} + \frac{1}{1-\discount} \log\left(\frac{297 \sfirstregconstant^2 C_{1}^f}{ \epsilon d(\epsilon) (1-\discount)^6  \initdistmin^2 \underline{\refpolicy}^2} \right) \eqsp,
\end{align*}
and 
\begin{align}
\label{eq:condition_step_corr_unregularised}
\step \leq \min\left( \frac{(1-\discount)^3}{C_{f}^{(2)}}, \frac{\epsilon^2 d(\epsilon)(1-\discount)^6\sizebatch \initdistmin^2 \underline{\refpolicy}^2}{C_{f}^{(3)}} \right) \eqsp,
\end{align}
and
\begin{align}
\label{eq:condition_T_unregularised}
\niteration \geq \frac{16 C_{f}^{(1)} \ell(\epsilon)}{\epsilon d(\epsilon) (1-\discount)^2 \initdistmin^2   \underline{\refpolicy}^2 }\max\left( \frac{C_{f}^{(2)}}{(1-\discount)^3}, \frac{C_{f}^{(3)}}{\epsilon^2 d(\epsilon)(1-\discount)^6\sizebatch\initdistmin^2 \underline{\refpolicy}^2 }\right) \eqsp,
\end{align}
guarantees that
\begin{align*}
\valuefunc[\star](\initdist) - \PE\left[\valuefunc[\expandafter{\algparam[t]}](\initdist)\right]  \leq \epsilon\eqsp.
\end{align*}
\end{corollary}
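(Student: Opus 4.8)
The plan is to reduce the unregularized bound to the regularized one of \Cref{lem:sample_complexity_regularised_problem}, absorbing the regularization bias through the choice of $\temp$. The starting point is \Cref{thm:convergence_rate_unregularised}, which controls $\PE[\valuefunc[\star](\initdist)-\valuefunc[\expandafter{\algparam[t]}](\initdist)]$ by the three regularized terms appearing in \Cref{thm:convergence_rates_regularised_problem} plus the extra regularization-bias term $2\temp\divergencemax/(1-\discount)$. With the prescribed $\temp=(1-\discount)\epsilon\constantbound/4$ and $\constantbound\le 1/\divergencemax$, this last term equals $\epsilon\constantbound\divergencemax/2\le\epsilon/2$. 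It therefore suffices to drive the first three terms below $\epsilon/2$, i.e. to invoke \Cref{lem:sample_complexity_regularised_problem} with target precision $\epsilon/2$ and then translate its conditions into explicit constants.

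First I would check the prerequisites for the explicit form of $\plcoeffmin$. The constraint \eqref{coro:non-reg-rate:cond-epsilon-lambda} on $\epsilon$ together with $\constantbound\le 1$ gives $\temp\le(1-\discount)\epsilon/4$, which lies below the temperature threshold of \Cref{lem:simplified_expression_min_pl}; and $\epsilon<1/(1-\discount)$ with $\constantbound\le1/\divergencemax$ forces $\temp\le1/\divergencemax$, so the second bound of that lemma applies. Substituting $\temp=(1-\discount)\epsilon\constantbound/4$ turns the argument $-24/(\temp(1-\discount)^2\initdistmin)$ of $(f^\star)''$ into $-96/(\epsilon\constantbound(1-\discount)^3\initdistmin)$, which is exactly the argument in the definition of $d(\epsilon)$. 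This yields $\plcoeffmin\ge\epsilon\constantbound(1-\discount)^2\initdistmin^2\ffirstregconstant^2\underline{\refpolicy}^2 d(\epsilon)/(4\sfirstregconstant^2)$, equivalently $1/\plcoeffmin\le(C_f^{(1)}/4)/(\epsilon(1-\discount)^2\initdistmin^2\underline{\refpolicy}^2 d(\epsilon))$ by the definition of $C_f^{(1)}$.

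It then remains to rewrite the three conditions of \Cref{lem:sample_complexity_regularised_problem} (with $\epsilon/2$ in place of $\epsilon$) in terms of the problem constants. I would bound $\smoothmax$ through \Cref{thm:global_smoothness_objective_main} and $\variance^2$ through \Cref{lem:bound_variance}, discarding the temperature-dependent contributions: since $\temp\divergencemax$, $\temp\secsummax$ and $\temp$ are all at most $(1-\discount)\epsilon/4=O(1)$, the leading terms are $\smoothmax\lesssim\sfirstregconstant(\discount\sfirstregconstant+(1-\discount)\secregconstant)/(1-\discount)^3$ and $\variance^2\lesssim\sfirstregconstant^3/(1-\discount)^4$, so that $2\smoothmax\le C_f^{(2)}/(1-\discount)^3$. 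Combining these with the bound on $1/\plcoeffmin$ in the step-size, iteration-count and truncation requirements of \Cref{lem:sample_complexity_regularised_problem} reproduces \eqref{eq:condition_step_corr_unregularised}, \eqref{eq:condition_T_unregularised}, and the stated lower bound on $\lentrunc$ (using $\log(6(\regvaluefunc[\star](\initdist)-\regvaluefunc[\expandafter{\algparam[0]}](\initdist))/\epsilon)=\ell(\epsilon)$) after collecting the resulting numerical factors into $C_f^{(1)},C_f^{(2)},C_f^{(3)}$.

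The conceptual content is entirely in the temperature split of the first paragraph; the main obstacle is the constant bookkeeping of the last step. The two delicate points are (i) verifying that the temperature-dependent parts of $\smoothmax$ and $\variance^2$ are genuinely dominated by their $\temp$-free leading order under the $\epsilon$-constraint, so that these quantities collapse to the clean forms absorbed into $C_f^{(2)}$ and $C_f^{(3)}$; and (ii) confirming that the prescribed numerical constants are generous enough to accommodate the loose bounds. For the truncation $\lentrunc$, I would bound the bracket $4+4\temp^2\divergencemax^2+\temp^2(1-\discount)^2\secsummax^2$ by a crude constant, and note that the precise power of $\epsilon$ inside the logarithm is immaterial, since $\lentrunc$ depends on it only logarithmically.
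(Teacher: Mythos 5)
Your proposal is correct and matches the paper's proof essentially step for step: both start from \Cref{thm:convergence_rate_unregularised}, use the choice $\temp=(1-\discount)\epsilon\constantbound/4$ together with $\constantbound\le 1/\divergencemax$ to confine the regularization bias to an $\epsilon/2$ budget, invoke \Cref{lem:simplified_expression_min_pl} (whose hypotheses you verify exactly as the paper does) to get $\plcoeffmin \gtrsim \epsilon\constantbound(1-\discount)^2\initdistmin^2\underline{\refpolicy}^2\, d(\epsilon)\,\ffirstregconstant^2/\sfirstregconstant^2$, and then bound $\smoothmax$, $\variance^2$, and $\bias$ by their $\temp$-free leading terms before collecting the numerical factors into $C_f^{(1)}, C_f^{(2)}, C_f^{(3)}$. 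The only organizational difference is that you invoke \Cref{lem:sample_complexity_regularised_problem} as a black box at precision $\epsilon/2$, whereas the paper re-runs the same three-term $\epsilon/6$ split inline starting from \Cref{thm:convergence_rate_unregularised}; this is immaterial, and you correctly flag that the remaining work is the constant bookkeeping.
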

\begin{proof}
First, note that the conditions \eqref{coro:non-reg-rate:cond-epsilon-lambda} on $\epsilon$ and $\temp$ guarantee that 
\begin{align*}
\temp \leq \frac{4}{(1-\discount)^2 \initdistmin} \min\left(\frac{4}{\left|f'(\thirdregconstant)\right|}, \frac{1}{\left|f'(\frac{1}{2})\right|}, \frac{4}{ \left|f'(\frac{1}{2}\underline{\refpolicy})\right|}\right) \eqsp,
\end{align*}
Thus, using \Cref{lem:simplified_expression_min_pl}, and the fact that $\epsilon< (1-\discount)^{-1}$, we have that
\begin{align*}
\plcoeffmin \geq \frac{\temp (1-\discount) \initdistmin^2  \ffirstregconstant^2}{\sfirstregconstant^2} \underline{\refpolicy}^2 (f^{\star})''\left(\frac{-24}{\temp(1-\discount)^2 \initdistmin}\right)^2 \eqsp.
\end{align*}
Plugging in the expression of $\temp$ from \eqref{coro:non-reg-rate:cond-epsilon-lambda} yields the following simplified expression
\begin{align}
\label{eq:expression_simplified_minpl_convergence_unregularised}
\plcoeffmin \geq \frac{\epsilon\constantbound  (1-\discount)^2 \initdistmin^2  \ffirstregconstant^2}{4\sfirstregconstant^2 } \underline{\refpolicy}^2 (f^{\star})''\left(\frac{-96}{\epsilon \constantbound (1-\discount)^3 \initdistmin}\right)^2 = \frac{4\epsilon (1-\discount)^2 \initdistmin^2  \underline{\refpolicy}^2}{C_{f}^{(1)}} d(\epsilon) \eqsp,
\end{align}
where $d(\epsilon)$ and $C_{f}^{(1)}$ are defined respectively in \eqref{def:convex_conjugate_of_epsilon} and \eqref{def:constants_depend_only_on_f}.
Additionally, combining \Cref{lem:bound_bias}, \Cref{lem:bound_variance} and \Cref{thm:global_smoothness_objective_main} and the expression of $\temp$ yields
\begin{align}
\label{eq:simplified_expression_variance_bias_smooth}
\variance^2 \leq  \frac{24}{(1-\discount)^4} \sfirstregconstant^3 \eqsp, \quad \bias \leq \frac{6 \discount^{\lentrunc} \lentrunc}{(1-\discount)^2} \sfirstregconstant \eqsp, \quad  \smooth =  \frac{13 \sfirstregconstant \left( \sfirstregconstant+ (1-\discount) \secregconstant  \right)}{(1-\discount)^3} \eqsp,
\end{align}
where we used that under \assumptionfref, we have $\sfirstregconstant\geq1$.
Using \Cref{thm:convergence_rate_unregularised} and the fact that $\temp \leq (1-\discount)\epsilon/4\divergencemax$, we have that
\begin{align*}
\valuefunc[\star](\initdist) - \PE\left[\valuefunc[\expandafter{\algparam[t]}](\initdist)\right]  \leq \underbrace{(1-  \plcoeffmin \step/4)^{t} (\regvaluefunc[\star](\initdist) - \regvaluefunc[\expandafter{\algparam[0]}](\initdist) )}_{\term{U'}} + \underbrace{\frac{6\step \variance^2}{\sizebatch\plcoeffmin}}_{\term{V'}}   + \underbrace{\frac{6\bias^2}{\plcoeffmin}}_{\term{W'}} + \frac{\epsilon}{2}\eqsp.
\end{align*}
Next, we aim to show that under our conditions on $\niteration$, $\lentrunc$, and $\step$, each of these terms is smaller than $\epsilon/6$.
\paragraph{Bounding $\textbf{V'}$.} We start with the term $\textbf{V'}$ which gives a condition on the step-size. Using \eqref{eq:expression_simplified_minpl_convergence_unregularised} and \eqref{eq:simplified_expression_variance_bias_smooth}, it holds that 
\begin{align*}
\!\textbf{V'}\! \leq\! \frac{144\step \sfirstregconstant^3}{(1-\discount)^4\sizebatch\plcoeffmin} &\!\leq\! \frac{576\step \sfirstregconstant^5}{(1-\discount)^6\sizebatch} \frac{1}{\epsilon \constantbound \initdistmin^2  \ffirstregconstant^2 \underline{\refpolicy}^2 (f^{\star})''\left(\frac{-96}{\epsilon \constantbound (1-\discount)^3 \initdistmin}\right)^2 } \\
&=  \frac{C_{f}^{(3)} \step}{6\epsilon d(\epsilon)(1-\discount)^6\sizebatch \initdistmin^2 \underline{\refpolicy}^2 },
\end{align*}
where $C_{f}^{(3)}$ is defined in \eqref{def:constants_depend_only_on_f}.
In particular, setting
\begin{align*}
\step \leq \frac{\epsilon^2 d(\epsilon)(1-\discount)^6\sizebatch \initdistmin^2 \underline{\refpolicy}^2}{C_{f}^{(3)}} \eqsp,
\end{align*}
guarantees that $\textbf{V'}'\leq \epsilon/6$, which together with the condition $\step \leq 1/(2\smooth)$, gives the condition \eqref{eq:condition_step_corr_unregularised}.
\paragraph{Bounding $\textbf{U'}$.} In order to ensure that $\textbf{U'}$ is smaller then $\epsilon/6$, we need $\niteration$ to satisfy
\begin{align*}
\niteration \geq \frac{1}{\log(1- \step \plcoeffmin/4)} \log \left(\frac{\epsilon}{6\left(\regvaluefunc[\star](\initdist) - \regvaluefunc[\expandafter{\algparam[0]}](\initdist) \right)} \right) \eqsp,
\end{align*}
which combined with the inequality $\log(1+x) \leq x$ for $x>-1$, ensures that $\textbf{V'}'<\epsilon/6$ and the condition \eqref{eq:condition_T_unregularised} follows from \eqref{eq:condition_step_corr_unregularised}.
\paragraph{Bounding $\textbf{W'}$.} Using \eqref{eq:simplified_expression_variance_bias_smooth}, it holds that
\begin{align*}
\textbf{W'} \leq \frac{6}{\plcoeffmin} 
     \cdot\frac{36\discount^{2\lentrunc}(\lentrunc+1)^2}{(1-\discount)^4}  \sfirstregconstant^2 \eqsp.
    \eqsp,
\end{align*}
Next, using that for any $a>0$, we have $a\discount^{2\lentrunc} (\lentrunc+1)^2 \leq \epsilon/6$ for
\begin{align*}
\lentrunc \geq  \frac{1}{1-\discount} \max\left( \frac{4}{1-\discount}, \log\left(\frac{6a}{\epsilon}\right)\right) \eqsp,
\end{align*}
shows that under our condition on $\lentrunc$, we have $\textbf{W'} < \epsilon/6$.
\end{proof}

\section{Application to common \texorpdfstring{$f$}{f}-divergences}
\label{sec:app:appli-f-div}
In this section, we apply the results of the preceding section to two commonly used $ f$-divergences, which are Kullback-Leibler and $\alpha$-Tsallis.

\subsection{Kullback-Leibler}

\begin{lemma}
\label{lem:simplified_constant_shannon}
Assume that, for some $\underline{\refpolicy}>0$, $\refpolicy$ satisfy  \assumptionpref. The function $f$ defined by $f(u) = u\log(u)$ satisfies \assumptionfref, with
\begin{align*}
\sfirstregconstant = 1 \eqsp, \quad \secregconstant = 1 \eqsp, \quad \thirdregconstant = 1 \eqsp.
\end{align*}
Additionally, under the condition that
\begin{align}
\label{eq:condition_simplification_lambda_shannon}
\temp \leq \frac{4}{(1-\discount)^2 \initdistmin (\log(2/\underline{\refpolicy})+1)}  \eqsp,
\end{align}
we have
\begin{gather*}
\ffirstregconstant = 1 , \eqsp 
\divergencemax \leq |\log(\underline{\refpolicy})|  , \eqsp 
\secsummax \leq 1+2|\log(\underline{\refpolicy})|  , \eqsp 
\smooth = \frac{8}{(1-\discount)^3} 
+ 4\temp \frac{3+ 4|\log(\underline{\refpolicy})|}{(1-\discount)^3} \eqsp,
\\
\bias = \frac{2\discount^{\lentrunc}(\lentrunc+1)}{(1-\discount)^2} 
\left[ 2+ \temp + 4\temp |\log(\underline{\refpolicy})|\right] \eqsp,
\quad
\variance^2 \leq \frac{12}{(1-\discount)^4} 
\left[1 + \temp^2 \left(2+ 5|\log(\underline{\refpolicy})|^2\right)\right] \eqsp,
\\
\plcoeffmin \geq \temp (1-\discount) \initdistmin^2 \underline{\refpolicy}^2
\exp\!\left(-\frac{32+16\discount \temp |\log(\underline{\refpolicy})|}{\temp(1-\discount)^2 \initdistmin}\right)/9 \eqsp.
\end{gather*}

\end{lemma}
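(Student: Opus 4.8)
The plan is to specialize every general quantity to the generator $f(u)=u\log u$, for which all the structural work has already been done: here the proof is essentially substitution plus elementary simplification. First I would record $f'(u)=\log u+1$, $f''(u)=1/u$, $f'''(u)=-1/u^{2}$, together with $[f']^{-1}(y)=\exp(y-1)$ and the convex conjugate $f^{\star}(y)=\exp(y-1)$, so that $(f^{\star})''(y)=\exp(y-1)$. Verifying \assumptionfref\ is then immediate: $f$ is strictly convex and thrice differentiable on $(0,1/\underline{\refpolicy}]$ with $f(1)=0$; for (ii) we have $\lim_{u\downarrow0}f'(u)=-\infty$ and $\lim_{u\downarrow0}|f'(u)/f''(u)|=\lim_{u\downarrow0}|u\log u+u|=0$; for (iii) the exact identities $1/(uf''(u))=1$ and $|f'''(u)/f''(u)^{2}|=1$ give $\sfirstregconstant=\secregconstant=1$; and since $f''(u)=1/u$ is decreasing on all of $(0,\infty)$, (iv) holds with $\thirdregconstant=1$.

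Next I would compute the three structural constants appearing in the general bounds. For $\ffirstregconstant$ the decisive simplification is $\refpolicy(\action|\state)/f''(\elementpa(\action)/\refpolicy(\action|\state))=\elementpa(\action)$, so the inner sum in \eqref{def:lower_bound_W} equals $\sum_{\action}\elementpa(\action)=1$ for every $\elementpa\in\pA$; hence $\ffirstregconstant=1$ and in fact $\firstsum[\param]\equiv1$. For $\divergencemax$, the divergence $\divergence[\elementpa][\refpolicy(\cdot|\state)]$ is the usual Kullback--Leibler divergence, and using $\elementpa(\action)\log\elementpa(\action)\le0$ together with $\refpolicy(\action|\state)>\underline{\refpolicy}$ yields $\divergence[\elementpa][\refpolicy(\cdot|\state)]\le\sum_{\action}\elementpa(\action)\,|\log\refpolicy(\action|\state)|\le|\log(\underline{\refpolicy})|$. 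For $\secsummax$, the summand in \eqref{def:bound_functionnal_f} equals $\elementpa(\action)\,|\log(\elementpa(\action)/\refpolicy(\action|\state))+1|\le\elementpa(\action)+\elementpa(\action)\,|\log(\elementpa(\action)/\refpolicy(\action|\state))|$; splitting the logarithm over the sets $\{\elementpa\ge\refpolicy\}$ and $\{\elementpa<\refpolicy\}$ and bounding the first part by $|\log(\underline{\refpolicy})|$ and the second by $\max_{t\in(0,1)}t|\log t|=1/e$ gives the stated $\secsummax\le1+2|\log(\underline{\refpolicy})|$. This maximization over the simplex, and the handling of $|f'|$, is the only genuinely analytic step; everything that follows is algebra.

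With these constants in hand, $\smooth$, $\bias$ and $\variance^{2}$ follow by substituting $\sfirstregconstant=\secregconstant=\ffirstregconstant=1$, $\divergencemax\le|\log(\underline{\refpolicy})|$ and $\secsummax\le1+2|\log(\underline{\refpolicy})|$ into \Cref{thm:global_smoothness_objective_main} and \Cref{lem:bound_bias_variance_main}. For the smoothness the one nontrivial manipulation is that the temperature-dependent numerator in \Cref{thm:global_smoothness_objective_main} collapses, after grouping its $\divergencemax$- and $\secsummax$-terms, to $2\discount\divergencemax+2(1-\discount)\secsummax+(1-\discount)^{2}$; bounding $\discount\le1$ and $(1-\discount)\le1$ then gives the numerator $3+4|\log(\underline{\refpolicy})|$. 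The bias is a direct substitution using $(1-\discount)\le1$, and the variance is obtained similarly, bounding $\discount^{2},(1-\discount)^{2}\le1$ and then merging the term linear in $|\log(\underline{\refpolicy})|$ into the quadratic one via a standard inequality of the form $(2|\log(\underline{\refpolicy})|-1)^{2}\ge0$.

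Finally, for $\plcoeffmin$ I would invoke \Cref{lem:simplified_expression_min_pl}. I would first check that the displayed hypothesis \eqref{eq:condition_simplification_lambda_shannon} implies the temperature restriction required there: for $f(u)=u\log u$ one has $f'(\thirdregconstant)=f'(1)=1$, $f'(1/2)=1-\log2$, and $f'(\underline{\refpolicy}/2)=\log(\underline{\refpolicy}/2)+1$, so the minimum defining that restriction is governed by the term $4/|f'(\underline{\refpolicy}/2)|=4/(\log(2/\underline{\refpolicy})-1)$, and the choice $\temp\le 4/((1-\discount)^{2}\initdistmin(\log(2/\underline{\refpolicy})+1))$ is made precisely so that the hypothesis holds. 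I would then substitute $(f^{\star})''(y)=\exp(y-1)$ into the first expression of \Cref{lem:simplified_expression_min_pl}, giving $(f^{\star})''\bigl(-\tfrac{16+8\discount\temp\divergencemax}{\temp(1-\discount)^{2}\initdistmin}\bigr)^{2}=\exp\bigl(-2-\tfrac{32+16\discount\temp\divergencemax}{\temp(1-\discount)^{2}\initdistmin}\bigr)$, and close the argument by using $\divergencemax\le|\log(\underline{\refpolicy})|$ in the exponent and $e^{-2}\ge1/9$. The main obstacle is not conceptual but bookkeeping: tracking constants consistently between the general lemmas and their specializations, and in particular confirming that the simplified (closed-form) branch of \Cref{lem:simplified_expression_min_pl} is genuinely applicable under \eqref{eq:condition_simplification_lambda_shannon}.
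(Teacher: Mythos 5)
Your proposal follows essentially the same route as the paper's proof: verify \assumptionfref\ from the explicit derivatives of $f(u)=u\log u$, obtain $\ffirstregconstant=1$ via $\refpolicy(\action|\state)/f''(\elementpa(\action)/\refpolicy(\action|\state))=\elementpa(\action)$, bound $\divergencemax$ and $\secsummax$ directly (your splitting over $\{\elementpa\ge\refpolicy\}$ and $\{\elementpa<\refpolicy\}$ with $\max_{t\in(0,1)}t|\log t|=1/e$ replaces the paper's entropy bound $-\sum_a\elementpa(a)\log\elementpa(a)\le\log\nactions\le|\log\underline{\refpolicy}|$, and is if anything slightly tighter since $1/e\le|\log\underline{\refpolicy}|$), substitute into \Cref{thm:global_smoothness_objective_main}, \Cref{lem:bound_bias} and \Cref{lem:bound_variance}, and derive $\plcoeffmin$ from \Cref{lem:simplified_expression_min_pl} with $(f^{\star})''(y)=\exp(y-1)$ and $e^{-2}\ge 1/9$ --- exactly the paper's steps, including the check that \eqref{eq:condition_simplification_lambda_shannon} implies the temperature restriction (your claim that the minimum is always ``governed by'' the $4/|f'(\underline{\refpolicy}/2)|$ term is imprecise for moderate $\underline{\refpolicy}$, but the three comparisons needed for the implication all hold, so the conclusion stands).

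Two arithmetic points deserve care. First, for $\smooth$: your grouping identity for the temperature numerator, $2\discount\divergencemax+2(1-\discount)\secsummax+(1-\discount)^2$, is correct and is the key step, but bounding $\discount\le1$ and $(1-\discount)\le1$ \emph{independently}, as you describe, yields $3+6|\log\underline{\refpolicy}|$ rather than $3+4|\log\underline{\refpolicy}|$; to recover the stated constant you must keep the convex-combination structure, $2\discount+4(1-\discount)=4-2\discount\le 4$ and $2(1-\discount)+(1-\discount)^2\le 3$. Second, for $\variance^2$: writing $L=|\log\underline{\refpolicy}|$, your route ($\discount^2,(1-\discount)^2\le1$, then $(2L-1)^2\ge0$) gives $1+\temp^2(2+9L^2)$, which is strictly weaker than the stated $1+\temp^2(2+5L^2)$, so as written you have not established the lemma's constant. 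Note, however, that the paper's own proof merely asserts this inequality, and direct substitution of $\divergencemax\le L$, $\secsummax\le1+2L$ cannot produce the constant $5$ in general: at $\discount\to0$ and $L=1$ the substituted expression equals $(1+2L)^2=9>7=2+5L^2$. So this discrepancy reflects slack in the paper's stated constant rather than a flaw in your reasoning, and it is immaterial downstream, where only the order $O(|\log\underline{\refpolicy}|^2)$ of $\variance^2$ enters the sample-complexity corollaries.
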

\begin{proof}
Firstly, note that we have
\begin{align*}
f(u) = u\log(u) \eqsp, \quad f'(u) = \log(u) +1  \eqsp, \quad f''(u) = u^{-1} \eqsp, \quad f'''(u) = -u^{-2} \eqsp.
\end{align*}
\paragraph{Satisfying \assumptionfref.} Observe that $(i)$ and $(ii)$ are immediately valid from the expression above of the derivatives of $f$.  Moreover, we have
\begin{align*}
1/uf''(u) = 1\eqsp, \quad \frac{|f'''(u)|}{f''(u)^2} = 1
\end{align*}
showing that $(iii)$ of \assumptionfref, is satisfied with $ \sfirstregconstant = \secregconstant = 1$. Finally, as $f''$ is a strictly decreasing function on $\rset_{+}$ then $(iv)$ is valid with $\thirdregconstant = 1$. 
\paragraph{Bounding the constants.} Next, we bound sequentially each of the constants that appear in the statement of the lemma. For any $\state \in \S$ and $\elementpa \in \pA$, we have that
\begin{align*}
\sum_{\action \in \A} \frac{\refpolicy(\action|\state)}{f''(\frac{\elementpa(\action)}{\refpolicy(\action|\state)})} = \sum_{\action \in \A} \elementpa(\action) = 1 \eqsp,
\end{align*}
Thus using \eqref{def:lower_bound_W}, we have that $\ffirstregconstant = 1$.
It holds that
\begin{align*}
\divergencemax 
&= \max_{(\state, \elementpa) \in \S \times\pA} \sum_{\action \in \A} \refpolicy(\action|\state) f(\frac{\elementpa(\action)}{\refpolicy(\action|\state)}) 
\\
& = \max_{(\state, \elementpa) \in \S \times\pA} \sum_{\action \in \A} \elementpa(\action) \log(\frac{\elementpa(\action)}{\refpolicy(\action|\state)})
\\
&\leq  \max_{(\state, \elementpa) \in \S \times\pA} \sum_{\action \in \A} \elementpa(\action) \log(\elementpa(\action)) - \elementpa(\action) \log(\underline{\refpolicy}) \eqsp,
\end{align*}
which gives $\divergencemax \leq -\log(\underline{\refpolicy})$.
Next, we have
\begin{align*}
\secsummax 
&= \max_{(\state, \elementpa) \in \S \times \pA} \sum_{\action \in \A} \frac{\refpolicy(\action|\state)}{f''\left(\frac{\elementpa(\action)}{\refpolicy(\action|\state)} \right)} \left| f'\left(\frac{\elementpa(\action)}{\refpolicy(\action|\state)} \right)\right| 
\\
& =  \max_{(\state, \elementpa) \in \pA} \sum_{\action \in \A} \elementpa(\action) \left| \log\left(\frac{\elementpa(\action)}{\refpolicy(\action|\state)} \right) + 1 \right| 
\\
& = 1 -\log(\underline{\refpolicy}) + \max_{\elementpa \in \pA} -\sum_{\action \in \A} \elementpa(\action) \log(\elementpa(\action)) 
\\
& \leq 1+2|\log(\underline{\refpolicy})| \eqsp,
\end{align*}
where in the last inequality, we used that the entropy of a distribution on $\A$ is bounded by $\log(\nactions)$ and the fact that $\underline{\refpolicy} \leq 1/\nactions$. Next, using \Cref{thm:global_smoothness_objective_main} and the bounds above, we have
\begin{align*}
\smoothmax&\!= \!\frac{8 \sfirstregconstant \left(\discount \sfirstregconstant\!+\! (1-\discount) \secregconstant  \right)}{(1-\discount)^3}  \!+\! 4\temp \frac{ 2\discount^2 \sfirstregconstant^2 \divergencemax\!+\! 2\discount(1-\discount)\sfirstregconstant \left[ \secregconstant \divergencemax + \secsummax\right]\! +\! (1-\discount)^2 \left[\sfirstregconstant + 2 \secregconstant \secsummax\right]}{(1-\discount)^3}
\\
& \leq \frac{8}{(1-\discount)^3} + 4\temp \frac{3+ 4|\log(\underline{\refpolicy})|}{(1-\discount)^3} \eqsp.
\end{align*}
Using \Cref{lem:bound_bias} gives 
\begin{align*}
\bias = \frac{2\discount^{\lentrunc}(\lentrunc+1)}{(1-\discount)^2}  \sfirstregconstant \left[ 2 + 2\temp \divergencemax + \temp(1-\discount)  \secsummax \right] \leq  \frac{2\discount^{\lentrunc}(\lentrunc+1)}{(1-\discount)^2} \left[ 2 + \temp + 4\temp |\log(\underline{\refpolicy})|\right]
\eqsp.
\end{align*}
Using \Cref{lem:bound_variance} gives
\begin{align*}
\variance^2 = \frac{12}{(1-\discount)^4} \left[\sfirstregconstant^3  + \temp^2 \discount^2 \sfirstregconstant^3 \divergencemax^2 + \temp^2  (1-\discount)^2 \sfirstregconstant^2 \secsummax^2\right] \leq \frac{12}{(1-\discount)^4} \left[1  + \temp^2 (2+ 5|\log(\underline{\refpolicy})|^2))\right] \eqsp.
\end{align*}
Next, note that \eqref{eq:condition_simplification_lambda_shannon}, guarantees that we have
\begin{align*}
\temp \leq \frac{4}{(1-\discount)^2 \initdistmin} \min\left(\frac{4}{\left|f'(\thirdregconstant)\right|}, \frac{1}{\left|f'(\frac{1}{2})\right|}, \frac{4}{ \left|f'(\frac{1}{2}\underline{\refpolicy})\right|}\right)
\end{align*}
Thus, using \Cref{lem:simplified_expression_min_pl}, we have
\begin{align*}
\plcoeffmin &= \frac{\temp (1-\discount) \initdistmin^2  \ffirstregconstant^2}{\sfirstregconstant^2} \underline{\refpolicy}^2 (f^{\star})''\left(-\frac{16+8\discount \temp \divergencemax}{\temp(1-\discount)^2 \initdistmin}\right)^2 
\\
&\geq \temp (1-\discount) \initdistmin^2 \underline{\refpolicy}^2  \exp \left(-\frac{32+16\discount \temp |\log(\underline{\refpolicy})|}{\temp(1-\discount)^2 \initdistmin}\right)/9  \eqsp,
\end{align*}
where in the last equality, we used that the convex conjugate of $f(u)= u\log(u)$ is $f^{\star}(y) = \exp(y-1)$ and that $\exp(-2) \geq 1/9$.
\end{proof}
In the next two corollaries, we apply \Cref{lem:sample_complexity_regularised_problem} and \Cref{lem:sample_complexity_non_regularised_problem} to get more explicitly the sample complexity of $\PG$ with entropy regularization.
\begin{corollary}
\label{lem:sample_complexity_regularised_problem_shannon}
Assume that, for some $1/4 \geq \underline{\refpolicy}>0$, $\refpolicy$ satisfy \assumptionpref. Assume in addition that the initial distribution $\initdist$ satisfies \assumptionmdp.  Fix  any $(1-\discount)^{-1} \geq \epsilon>0$, $\temp>0$  and $\sizebatch$ such that
\begin{align*}
\temp \leq \min\big( \frac{4}{(1-\discount)^2 \initdistmin (\log(2/\underline{\refpolicy})+1)}, 1 \big) 
,
\eqsp
\sizebatch
\leq
\frac{216 \abs{ \log(\underline{\refpolicy}) }\exp\!\left(\frac{48|\log(\underline{\refpolicy})|}{\temp(1-\discount)^2 \initdistmin}\right)}{ \epsilon \lambda (1 - \discount)^2 \underline{\refpolicy}^2 \initdistmin^2} .
\end{align*}
Setting 
\begin{align}
\label{corr:lentrunc_shannon_regularised_objective}
\lentrunc \geq  \frac{1}{1-\discount} \log\left(\frac{29160  \log(\underline{\refpolicy})^2}{\epsilon\temp (1-\discount)^5 \initdistmin^2 \underline{\refpolicy}^2}\right) + \frac{52 |\log(\underline{\refpolicy})|}{\temp(1-\discount)^3 \initdistmin} ,
\end{align}
and 
\begin{align*}
\step \leq \frac{\epsilon  \sizebatch \temp (1-\discount)^5 \initdistmin^2 \underline{\refpolicy}^2}{15552 |\log(\underline{\refpolicy})|^2 } \exp\!\left(-\frac{48|\log(\underline{\refpolicy})|}{\temp(1-\discount)^2 \initdistmin}\right) \eqsp,
\end{align*}
and
\begin{align*}
\niteration \geq  \frac{559872|\log(\underline{\refpolicy})|^2}{\temp^2 \epsilon \sizebatch (1-\discount)^6 \initdistmin^4 \underline{\refpolicy}^4} \exp\!\left(\frac{96|\log(\underline{\refpolicy})|}{\temp(1-\discount)^2 \initdistmin}\right)  \eqsp,
\end{align*}
guarantees that
\begin{align*}
\regvaluefunc[\star](\initdist) - \PE\left[\regvaluefunc[\expandafter{\algparam[t]}](\initdist)\right]  \leq \epsilon\eqsp,
\end{align*}
where $f$ is the Kullback-Leibler divergence generator.
Thus, the sample complexity of $\PG$ to learn an $\epsilon$-solution of the entropy regularized problem is
\begin{align*}
\niteration \sizebatch \lentrunc \approx \frac{1}{\temp^3 \epsilon \sizebatch
} \frac{ |\log(\underline{\refpolicy})|^3}{(1-\discount)^9 \initdistmin^5 \underline{\refpolicy}^4} \exp\!\left(\frac{|\log(\underline{\refpolicy})|}{\temp(1-\discount)^2 \initdistmin}\right) 
\end{align*}

\end{corollary}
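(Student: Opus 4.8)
The plan is to obtain this corollary as a direct specialization of \Cref{lem:sample_complexity_regularised_problem} to the Kullback--Leibler generator $f(u)=u\log u$, substituting the closed-form constants of \Cref{lem:simplified_constant_shannon}; the only genuinely new work is bookkeeping. First I would check that the temperature hypothesis $\temp\le\min(\tfrac{4}{(1-\discount)^2\initdistmin(\log(2/\underline{\refpolicy})+1)},1)$ implies condition \eqref{eq:condition_simplification_lambda_shannon}, so that \Cref{lem:simplified_constant_shannon} applies. For the KL generator $f'(u)=\log u+1$, so $|f'(1)|=1$, $|f'(\tfrac12)|=1-\log2$ and $|f'(\tfrac{\underline{\refpolicy}}{2})|=\log(2/\underline{\refpolicy})-1$; a short comparison using $\underline{\refpolicy}\le 1/4$ (hence $|\log\underline{\refpolicy}|\ge\log4>1$) gives $\tfrac{1}{\log(2/\underline{\refpolicy})+1}\le\min(4,(1-\log2)^{-1},\tfrac{4}{\log(2/\underline{\refpolicy})-1})$, which is exactly the hypothesis of \Cref{lem:simplified_expression_min_pl}. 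This activates the explicit values $\sfirstregconstant=\secregconstant=\thirdregconstant=\ffirstregconstant=1$, $\divergencemax\le|\log\underline{\refpolicy}|$, $\secsummax\le1+2|\log\underline{\refpolicy}|$, the displayed bounds on $\smooth,\bias,\variance^2$, and the crucial lower bound $\plcoeffmin\ge\tfrac19\temp(1-\discount)\initdistmin^2\underline{\refpolicy}^2\exp(-\tfrac{32+16\discount\temp|\log\underline{\refpolicy}|}{\temp(1-\discount)^2\initdistmin})$, read off from the exact branch of \Cref{lem:simplified_expression_min_pl} with $(f^\star)''(y)=\exp(y-1)$.

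Next I would substitute these constants into the three requirements of \Cref{lem:sample_complexity_regularised_problem}. Using $\temp\le1$, $(1-\discount)\le1$ and $|\log\underline{\refpolicy}|\ge1$, I would bound the horizon bracket $4+4\temp^2\divergencemax^2+\temp^2(1-\discount)^2\secsummax^2\lesssim|\log\underline{\refpolicy}|^2$ and the smoothness $\smooth\le\tfrac{36|\log\underline{\refpolicy}|}{(1-\discount)^3}$; dividing by the $\plcoeffmin$ lower bound converts its exponential factor into the additive term $\tfrac{52|\log\underline{\refpolicy}|}{\temp(1-\discount)^3\initdistmin}$ inside the truncation bound (absorbing $32+16\discount\temp|\log\underline{\refpolicy}|\le48|\log\underline{\refpolicy}|$ and the $\log9$ and $\log$-of-polynomial terms into the residual $\tfrac{1}{1-\discount}\log(\cdots)$), reproducing \eqref{corr:lentrunc_shannon_regularised_objective}. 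Likewise, the variance branch $\tfrac{\epsilon\sizebatch\plcoeffmin}{18\variance^2}$ of the step-size constraint, after inserting $\variance^2\le\tfrac{96|\log\underline{\refpolicy}|^2}{(1-\discount)^4}$ (bounding $3+5|\log\underline{\refpolicy}|^2\le8|\log\underline{\refpolicy}|^2$), recovers the stated $\step$ bound with its factor $\exp(-\tfrac{48|\log\underline{\refpolicy}|}{\temp(1-\discount)^2\initdistmin})$.

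The upper bound imposed on $\sizebatch$ is precisely what forces $\tfrac{\epsilon\sizebatch\plcoeffmin}{18\variance^2}\le\tfrac{1}{2\smooth}$ and, equivalently, $\tfrac{18\variance^2}{\epsilon\sizebatch\plcoeffmin}\ge2\smooth$; hence the $\min$ in the step-size and the $\max$ in the iteration count both select the variance-dominated branch, so the iteration requirement collapses to $\niteration\ge\tfrac{72\variance^2}{\epsilon\sizebatch\plcoeffmin^2}\log(3(\regvaluefunc[\star](\initdist)-\regvaluefunc[\expandafter{\algparam[0]}](\initdist))/\epsilon)$. Substituting $\variance^2$ and $\plcoeffmin^{-2}$ produces the constant $6912\cdot81=559872$ and the doubled exponent $\exp(\tfrac{96|\log\underline{\refpolicy}|}{\temp(1-\discount)^2\initdistmin})$, matching the displayed lower bound on $\niteration$. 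Finally, multiplying the minimal $\niteration,\sizebatch,\lentrunc$ and keeping only dominant factors yields the stated $\niteration\sizebatch\lentrunc$ estimate, with the leading $\lentrunc$ term contributing the extra $|\log\underline{\refpolicy}|/(\temp(1-\discount)^3\initdistmin)$ factor.

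I expect the only real difficulty to be disciplined constant-tracking: every appearance of $1/\plcoeffmin$ injects the exponential $\exp(+c\,|\log\underline{\refpolicy}|/(\temp(1-\discount)^2\initdistmin))$, and these copies must be combined consistently (one in $\step$, two in $\niteration$, one inside $\lentrunc$) to land on the advertised numerical constants. A secondary subtlety is that one must use the \emph{exact} $\plcoeffmin$ expression rather than the cruder $24$-argument simplification of \Cref{lem:simplified_expression_min_pl}, since $\divergencemax\ge\log4>1$ means the hypothesis $\temp\le1/\divergencemax$ of that simplification can fail under $\temp\le1$.
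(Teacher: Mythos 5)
Your proposal is correct and follows essentially the same route as the paper's proof: it specializes \Cref{lem:sample_complexity_regularised_problem} with the KL constants from \Cref{lem:simplified_constant_shannon}, verifies that the temperature hypothesis activates that lemma, uses the batch-size upper bound to make the variance branch of the step-size and iteration conditions the binding one, and tracks the identical constants ($15552$, $559872$, and the $\exp\!\left(48\,|\log\underline{\refpolicy}|/(\temp(1-\discount)^2\initdistmin)\right)$ versus $\exp\!\left(96\,|\log\underline{\refpolicy}|/(\temp(1-\discount)^2\initdistmin)\right)$ factors). The only deviations are cosmetic bookkeeping — for instance you retain the $\log\!\left(3(\regvaluefunc[\star](\initdist)-\regvaluefunc[\expandafter{\algparam[0]}](\initdist))/\epsilon\right)$ factor in the iteration count, which the paper's statement drops — rather than any difference in the argument.
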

\begin{proof}
To prove this corollary, we show that the assumptions of \Cref{lem:sample_complexity_regularised_problem}~hold. 
First, \assumptionfref\, holds as a consequence of \Cref{lem:simplified_constant_shannon}.
Then, we show that the condition \eqref{coro:rate-cond-lentrunc} in \Cref{lem:sample_complexity_regularised_problem} holds, that is $\lentrunc \le \frac{4}{(1-\discount)^2} + \frac{1}{1-\discount} \log\left(\tfrac{216 \sfirstregconstant^2}{\epsilon \plcoeffmin (1-\discount)^4} \left[ 4 + 4 \temp^2 \divergencemax^2 + \temp^2(1-\discount)^2\secsummax^2 \right]\right)$.
To this end, we remark that%
\begin{align*}
& \frac{4}{(1-\discount)^2} + \frac{1}{1-\discount} \log\left(\frac{216 \sfirstregconstant^2}{\epsilon \plcoeffmin (1-\discount)^4} \left[ 4 + 4 \temp^2 \divergencemax^2 + \temp^2(1-\discount)^2\secsummax^2 \right]\right)
\\
& \quad 
\le 
\frac{4}{(1-\discount)^2} + \frac{1}{1-\discount} \log\left(\frac{216}{\epsilon \plcoeffmin (1-\discount)^4} \left[ 4 +  \temp^2(5+6 \log(\underline{\refpolicy})^2) \right]\right) 
\\
& \quad 
\le 
\frac{4}{(1-\discount)^2} + \frac{1}{1-\discount} \log\left(\frac{1944 \left[ 4 +  \temp^2(5+6 \log(\underline{\refpolicy})^2) \right]}{\epsilon\temp (1-\discount)^5 \initdistmin^2 \underline{\refpolicy}^2}\right) + \frac{32+16\discount \temp |\log(\underline{\refpolicy})|}{\temp(1-\discount)^3 \initdistmin}
\\
& \quad \le
\frac{1}{1-\discount} \log\left(\frac{29160  \log(\underline{\refpolicy})^2}{\epsilon\temp (1-\discount)^5 \initdistmin^2 \underline{\refpolicy}^2}\right) + \frac{52 |\log(\underline{\refpolicy})|}{\temp(1-\discount)^3 \initdistmin}
\le \lentrunc
  \eqsp,
\end{align*}
where we used the lower bound on $\plcoeffmin$ provided in \Cref{lem:simplified_constant_shannon} in the second inequality, as well as $\temp<1$ and $\underline{\refpolicy}\leq 1/4$ in the last two inequalities.
Furthermore, \Cref{lem:simplified_constant_shannon} with $\temp \leq 1$ gives
\begin{gather}
\label{eq:bound:temp_smaller_then_1_smooth_shannon}
\smooth \leq  \frac{36 |\log(\underline{\refpolicy})|}{(1-\discount)^3} 
\eqsp,
\\
\label{eq:bound:temp_smaller_then_1_variance_shannon}
\variance^2 \leq \frac{96 |\log(\underline{\refpolicy})|^2}{(1-\discount)^4} 
\eqsp,
\\
\label{eq:bound:temp_smaller_then_1_plcoeff_shannon}
\plcoeffmin \geq \frac{\temp (1-\discount) \initdistmin^2 \underline{\refpolicy}^2}{9}
\exp\!\left(-\frac{48|\log(\underline{\refpolicy})|}{\temp(1-\discount)^2 \initdistmin}\right) \eqsp.
\end{gather}
Using these three bounds on smoothness, variance and Polyak-Łojasiewicz coefficients, we obtain
\begin{align}
\nonumber
 \min\left( \frac{1}{2\smoothmax}, \frac{\epsilon  \sizebatch \plcoeffmin}{18\variance^2} \right) &\geq  \min\left( \frac{(1-\discount)^3}{72|\log(\underline{\refpolicy})|}, \frac{\epsilon  \sizebatch \temp (1-\discount)^5 \initdistmin^2 \underline{\refpolicy}^2
\exp\!\left(-\frac{48|\log(\underline{\refpolicy})|}{\temp(1-\discount)^2 \initdistmin}\right)}{15552 |\log(\underline{\refpolicy})|^2 } \right)
\\ \label{eq:bound_step_shannon}
& = \frac{\epsilon  \sizebatch \temp (1-\discount)^5 \initdistmin^2 \underline{\refpolicy}^2}{15552 |\log(\underline{\refpolicy})|^2 } \exp\!\left(-\frac{48|\log(\underline{\refpolicy})|}{\temp(1-\discount)^2 \initdistmin}\right) \eqsp,
\end{align}
where in the last identity, we used the fact that $\epsilon<(1-\discount)^{-1}$ and that
\begin{align*}
\sizebatch
\leq
\frac{216 \abs{ \log(\underline{\refpolicy}) }}{ \epsilon \lambda (1 - \discount)^2 \underline{\refpolicy}^2 \initdistmin^2} \exp\!\left(\frac{48|\log(\underline{\refpolicy})|}{\temp(1-\discount)^2 \initdistmin}\right)
\eqsp.
\end{align*}
This shows that our condition on $\step$ guarantees that the one set in \Cref{lem:sample_complexity_regularised_problem} is satisfied. Finally using \eqref{eq:bound_step_shannon} and \eqref{eq:bound:temp_smaller_then_1_plcoeff_shannon}, we have
\begin{align*}
\frac{4}{\plcoeffmin}\max\left(2\smoothmax, \frac{18\variance^2}{\epsilon \sizebatch \plcoeffmin} \right) &\leq \frac{4}{\plcoeffmin} \frac{15552 |\log(\underline{\refpolicy})|^2}{\epsilon  \sizebatch \temp (1-\discount)^5 \initdistmin^2 \underline{\refpolicy}^2} \exp\!\left(\frac{48|\log(\underline{\refpolicy})|}{\temp(1-\discount)^2 \initdistmin}\right)
\\
&\leq \frac{559872 |\log(\underline{\refpolicy})|^2}{\temp^2 \epsilon \sizebatch (1-\discount)^6 \initdistmin^4 \underline{\refpolicy}^4} \exp\!\left(\frac{96|\log(\underline{\refpolicy})|}{\temp(1-\discount)^2 \initdistmin}\right) \eqsp,
\end{align*}
which concludes the proof.
\end{proof}
\begin{corollary}
\label{lem:sample_complexity_non_regularised_problem_shannon}
Assume that, for some $\underline{\refpolicy} \in (0, 1/4]$, $\refpolicy$ satisfy \assumptionpref. Assume in addition that the initial distribution $\initdist$ satisfies \assumptionmdp\, and fix $f$ to be the Kullback-Leibler divergence generator, \ie $f(u) = u\log(u)$.  Fix  any $\epsilon \in (0,(1-\discount)^{-1}] $, such that
\begin{align}
\label{eq:cond-epsilon-lambda_shannon}
\!\!\!\!\epsilon < \frac{16}{(1-\discount)^3 \initdistmin (\log(2/\underline{\refpolicy})+1)} \eqsp, \eqsp\! \text{ and set }  \temp  =  \frac{(1-\discount)\epsilon}{12 \log(|\underline{\refpolicy}|)}  \eqsp.
\end{align}
Additionally set any $\sizebatch$ such that
\begin{align}
\label{eq:condition_batch_size_unregularised_shannon}
\sizebatch \leq \frac{1}{\epsilon^2(1-\discount)^3\initdistmin^2\underline{\refpolicy}^2} \exp\left(\frac{576 |\log(\underline{\refpolicy})|}{\epsilon(1-\discount)^3 \initdistmin}\right) 
\eqsp.
\end{align}
Setting
\begin{align*}
\lentrunc \geq  \frac{1}{1-\discount} \log\left(\frac{ |\log(\underline{\refpolicy})|}{ \epsilon (1-\discount)^6  \initdistmin^2 \underline{\refpolicy}^2} \right) + \frac{586 |\log(\underline{\refpolicy})|}{\epsilon(1-\discount)^4 \initdistmin} \eqsp,
\end{align*}
and 
\begin{align}
\label{eq:condition_step_corr_unregularised_shannon}
\step \leq \frac{\epsilon^2 (1-\discount)^6\sizebatch \initdistmin^2 \underline{\refpolicy}^2}{93312 |\log(\underline{\refpolicy})|}\exp\left(\frac{-576 |\log(\underline{\refpolicy})|}{\epsilon(1-\discount)^3 \initdistmin}\right) \eqsp,
\end{align}
and
\begin{align}
\label{eq:condition_T_unregularised_shannon}
\niteration \geq \frac{644972544 |\log(\underline{\refpolicy})|^2 }{\epsilon^3  (1-\discount)^8 \initdistmin^4   \underline{\refpolicy}^4 \sizebatch } \exp\left(\frac{1152 |\log(\underline{\refpolicy})|}{\epsilon(1-\discount)^3 \initdistmin}\right) \log\left( \frac{6(\regvaluefunc[\star](\initdist) - \regvaluefunc[\expandafter{\algparam[0]}](\initdist))}{\epsilon} \right)\eqsp,
\end{align}
guarantees that
\begin{align*}
\valuefunc[\star](\initdist) - \PE\left[\valuefunc[\expandafter{\algparam[t]}](\initdist)\right]  \leq \epsilon\eqsp.
\end{align*}
Thus, the sample complexity of $\PG$, where $f$ is the Kullback-Leibler divergence generator, to learn an $\epsilon$-solution of the non-regularized problem is
\begin{align*}
\niteration \sizebatch \lentrunc \approx \frac{|\log(\underline{\refpolicy})|^3 }{\epsilon^4  (1-\discount)^{12} \initdistmin^5   \underline{\refpolicy}^4  } \exp\left(\frac{ |\log(\underline{\refpolicy})|}{\epsilon(1-\discount)^3 \initdistmin}\right)
\end{align*}
\end{corollary}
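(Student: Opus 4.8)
The plan is to obtain this corollary as a direct specialization of the general unregularized sample-complexity bound \Cref{lem:sample_complexity_non_regularised_problem} to the Kullback--Leibler generator $f(u)=u\log u$, feeding in the explicit problem constants already computed in \Cref{lem:simplified_constant_shannon}. Since \Cref{lem:sample_complexity_non_regularised_problem} is stated for any $f$ satisfying \assumptionfref, the whole argument reduces to three tasks: (i) checking that its hypotheses hold for the KL generator, (ii) evaluating every $f$-dependent quantity appearing in its conclusion in closed form, and (iii) simplifying the resulting thresholds for $\lentrunc$, $\step$, and $\niteration$ and collecting the product $\niteration\sizebatch\lentrunc$.

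First I would verify the hypotheses. \Cref{lem:simplified_constant_shannon} establishes that $f(u)=u\log u$ satisfies \assumptionfref\ with $\sfirstregconstant=\secregconstant=\thirdregconstant=\ffirstregconstant=1$, together with the bounds $\divergencemax\le|\log(\underline{\refpolicy})|$ and $\secsummax\le 1+2|\log(\underline{\refpolicy})|$. I would then choose the free constant $\constantbound = 1/(3|\log(\underline{\refpolicy})|)$ and check that it obeys the constraint $\constantbound\le\min(1/\divergencemax,1/\secsummax,1)$ required by \Cref{lem:sample_complexity_non_regularised_problem}; this uses $\underline{\refpolicy}\le 1/4$, so that $|\log(\underline{\refpolicy})|\ge 1$ and $1+2|\log(\underline{\refpolicy})|\le 3|\log(\underline{\refpolicy})|$. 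With this choice the prescribed temperature $\temp=(1-\discount)\epsilon\,\constantbound/4$ becomes exactly $\temp=(1-\discount)\epsilon/(12\log(|\underline{\refpolicy}|))$, matching the statement. I would also confirm that hypothesis \eqref{coro:non-reg-rate:cond-epsilon-lambda} on $\epsilon$ is implied by the stated condition \eqref{eq:cond-epsilon-lambda_shannon}, by computing $|f'(\thirdregconstant)|=|f'(1)|=1$, $|f'(\tfrac12)|=1-\log 2$, and $|f'(\tfrac12\underline{\refpolicy})|=|1+\log(\underline{\refpolicy}/2)|\le \log(2/\underline{\refpolicy})+1$, so that the active minimum is controlled by $4/(\log(2/\underline{\refpolicy})+1)$.

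The crux of the specialization is the convex conjugate. For the KL generator one has $f^{\star}(y)=\exp(y-1)$, hence $(f^{\star})''(y)=\exp(y-1)$, so the quantity $d(\epsilon)=(f^{\star})''(-96/(\epsilon\constantbound(1-\discount)^3\initdistmin))^2$ of \Cref{lem:sample_complexity_non_regularised_problem} evaluates to $\exp\!\big(-576|\log(\underline{\refpolicy})|/(\epsilon(1-\discount)^3\initdistmin)-2\big)$ after substituting $\constantbound$. I would likewise reduce the $f$-only constants $C_f^{(1)},C_f^{(2)},C_f^{(3)}$ using $\sfirstregconstant=\secregconstant=\ffirstregconstant=1$ and the chosen $\constantbound$, and substitute the simplified $\smooth$, $\variance^2$, and $\plcoeffmin$ (valid for $\temp\le 1$) provided directly in \Cref{lem:simplified_constant_shannon}. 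Plugging all of these into the general conditions on $\lentrunc$, $\step$, and $\niteration$ and absorbing numerical constants yields the stated thresholds, and multiplying $\niteration\cdot\sizebatch\cdot\lentrunc$ gives the final complexity; the exponential factor arises precisely because $d(\epsilon)^{-2}$ is exponential in $1/\epsilon$. The main obstacle is bookkeeping: one must carefully propagate the exponentially small factor $d(\epsilon)$ through the batch-size constraint, the step-size bound, and the $d(\epsilon)^{-2}$ appearing in the variance-limited regime of the iteration count, making sure the claimed numerical constants in the exponents (the $576$ and $1152$) are consistent and that the $\lesssim$/$\approx$ suppression of lower-order logarithmic terms in $\lentrunc$ is legitimate.
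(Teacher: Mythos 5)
Your proposal follows essentially the same route as the paper's proof: it specializes \Cref{lem:sample_complexity_non_regularised_problem} to the KL generator with the (implicit in the paper, explicit in your write-up) choice $\constantbound = 1/(3|\log(\underline{\refpolicy})|)$, uses the constants $\sfirstregconstant=\secregconstant=\ffirstregconstant=1$, $\divergencemax\le|\log(\underline{\refpolicy})|$, $\secsummax\le 1+2|\log(\underline{\refpolicy})|$ from \Cref{lem:simplified_constant_shannon}, and evaluates $d(\epsilon)$ via $(f^{\star})''(y)=\exp(y-1)$, exactly as the paper does. The bookkeeping you flag as the main obstacle (checking the $\epsilon$-condition against \eqref{coro:non-reg-rate:cond-epsilon-lambda}, and propagating $d(\epsilon)$ through the batch-size, step-size, and iteration bounds to produce the exponents $576$ and $1152 = 2\cdot 576$) is precisely the content of the paper's proof.
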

\begin{proof}
This result follows from \Cref{lem:sample_complexity_non_regularised_problem}, whose assumptions we check now. 
First, note that \eqref{eq:cond-epsilon-lambda_shannon} implies that
\begin{align*}
\epsilon < \frac{16}{(1-\discount)^3 \initdistmin} \min\left(\frac{4}{\left|f'(\thirdregconstant)\right|}, \frac{1}{\left|f'(\frac{1}{2})\right|}, \frac{4}{ \left|f'(\frac{1}{2}\underline{\refpolicy})\right|}\right) \eqsp.
\end{align*}
Additionally, we can rewrite the constants from \Cref{lem:sample_complexity_non_regularised_problem} using \Cref{lem:simplified_constant_shannon}, which gives
\begin{align}
\label{eq:bound_all_constants_unregularised_shannon}
\begin{split}
C_{f}^{(1)} \leq 48 |\log(\underline{\refpolicy})|, \qquad
C_{f}^{(2)} \leq 48, \qquad
C_{f}^{(3)} = 10368 |\log(\underline{\refpolicy})|, \\
d(\epsilon) \geq \exp\!\left(\frac{-576 |\log(\underline{\refpolicy})|}{\epsilon(1-\discount)^3 \initdistmin}\right)/9 .
\end{split}
\end{align}
where $C_{f}^{(1)}, C_{f}^{(2)}, C_{f}^{(3)}$, and $d(\epsilon)$ are defined in \eqref{def:constants_depend_only_on_f} and \eqref{def:convex_conjugate_of_epsilon}. Next, the condition on $\lentrunc$ in \Cref{lem:sample_complexity_non_regularised_problem} holds since
\begin{align*}
&\frac{4}{(1-\discount)^2} + \frac{1}{1-\discount} \log\left(\frac{297 \sfirstregconstant^2 C_{1}^f}{ \epsilon d(\epsilon) (1-\discount)^6  \initdistmin^2 \underline{\refpolicy}^2} \right) 
\\
& \quad\leq \frac{4}{(1-\discount)^2} + \frac{1}{1-\discount} \log\left(\frac{128304 |\log(\underline{\refpolicy})|}{ \epsilon (1-\discount)^6  \initdistmin^2 \underline{\refpolicy}^2} \right) + \frac{576 |\log(\underline{\refpolicy})|}{\epsilon(1-\discount)^4 \initdistmin}
\\
& \quad\leq  \frac{1}{1-\discount} \log\left(\frac{ |\log(\underline{\refpolicy})|}{ \epsilon (1-\discount)^6  \initdistmin^2 \underline{\refpolicy}^2} \right) + \frac{586 |\log(\underline{\refpolicy})|}{\epsilon(1-\discount)^4 \initdistmin} 
\leq \lentrunc
\eqsp,
\end{align*}
where in the second to last inequality, we used that $\refpolicy<1/4$ and that $\log(128304) \leq 6$.
Using \eqref{eq:bound_all_constants_unregularised_shannon}, we have
\begin{align}
\nonumber
&\min\left( \frac{(1-\discount)^3}{C_{f}^{(2)}}, \frac{\epsilon^2 d(\epsilon)(1-\discount)^6\sizebatch \initdistmin^2 \underline{\refpolicy}^2}{C_{f}^{(3)}} \right) 
\\ \nonumber
& \quad \geq \min\left( \frac{(1-\discount)^3}{48}, \frac{\epsilon^2 (1-\discount)^6\sizebatch \initdistmin^2 \underline{\refpolicy}^2}{93312 |\log(\underline{\refpolicy})|}\exp\left(\frac{-576 |\log(\underline{\refpolicy})|}{\epsilon(1-\discount)^3 \initdistmin}\right) \right)
\\ \label{eq:bound_step_size_shannon_unregularised}
& \quad \geq \frac{\epsilon^2 (1-\discount)^6\sizebatch \initdistmin^2 \underline{\refpolicy}^2}{93312 |\log(\underline{\refpolicy})|}\exp\left(\frac{-576 |\log(\underline{\refpolicy})|}{\epsilon(1-\discount)^3 \initdistmin}\right) \eqsp,
\end{align}
where in the last inequality, we used the condition on $\sizebatch$ introduced in \eqref{eq:condition_batch_size_unregularised_shannon}. Hence our condition on the step-size ensures that the one assumed in \Cref{lem:sample_complexity_non_regularised_problem} is satisfied. Next, using \eqref{eq:bound_all_constants_unregularised_shannon} and \eqref{eq:bound_step_size_shannon_unregularised} yields
\begin{align*}
&\frac{16 C_{f}^{(1)} \ell(\epsilon)}{\epsilon d(\epsilon) (1-\discount)^2 \initdistmin^2   \underline{\refpolicy}^2 }\max\left( \frac{C_{f}^{(2)}}{(1-\discount)^3}, \frac{C_{f}^{(3)}}{\epsilon^2 d(\epsilon)(1-\discount)^6\sizebatch\initdistmin^2 \underline{\refpolicy}^2 }\right)
\\ &\quad
\leq \frac{16 C_{f}^{(1)} \ell(\epsilon)}{\epsilon d(\epsilon) (1-\discount)^2 \initdistmin^2   \underline{\refpolicy}^2 } \cdot \frac{93312 |\log(\underline{\refpolicy})|}{\epsilon^2 (1-\discount)^6\sizebatch \initdistmin^2 \underline{\refpolicy}^2}\exp\left(\frac{576 |\log(\underline{\refpolicy})|}{\epsilon(1-\discount)^3 \initdistmin}\right) 
\\ &\quad
\leq \frac{6912 |\log(\underline{\refpolicy})| \ell(\epsilon)}{\epsilon  (1-\discount)^2 \initdistmin^2   \underline{\refpolicy}^2 } \cdot \frac{93312 |\log(\underline{\refpolicy})|}{\epsilon^2 (1-\discount)^6\sizebatch \initdistmin^2 \underline{\refpolicy}^2}\exp\left(\frac{1152 |\log(\underline{\refpolicy})|}{\epsilon(1-\discount)^3 \initdistmin}\right) 
\\ &\quad
\leq \frac{644972544 |\log(\underline{\refpolicy})|^2 }{\epsilon^3  (1-\discount)^8 \initdistmin^4   \underline{\refpolicy}^4 \sizebatch } \exp\left(\frac{1152 |\log(\underline{\refpolicy})|}{\epsilon(1-\discount)^3 \initdistmin}\right) \log\left( \frac{6(\regvaluefunc[\star](\initdist) - \regvaluefunc[\expandafter{\algparam[0]}](\initdist))}{\epsilon} \right) \eqsp,
\end{align*}
which proves that under our condition on $\niteration$ the one assumed by \Cref{lem:sample_complexity_non_regularised_problem} is satisfied.
\end{proof}

\subsection{\texorpdfstring{$\alpha$}{a}-Tsallis}

\begin{lemma}
\label{lem:simplified_constant_tsallis}
Assume that, for some $\underline{\refpolicy} \in (0,1/4]$, $\refpolicy$ satisfy  \assumptionpref. For any $\alpha \in (0;1)$, the function $f_{\alpha}$ defined by 
\begin{align*}
f_\alpha(u)= \frac{u^\alpha-\alpha u+\alpha-1}{\alpha(\alpha-1)} \eqsp,
\end{align*}
satisfies \assumptionfref, with
\begin{align*}
\sfirstregconstant = \underline{\refpolicy}^{\alpha-1} \eqsp, \quad \secregconstant = 2 \underline{\refpolicy}^{\alpha-1} \eqsp, \quad \thirdregconstant = 1 \eqsp.
\end{align*}
Additionally, under the condition that
\begin{align}
\label{eq:condition_simplification_lambda_tsallis}
\temp \leq \frac{4}{(1-\discount)^2 \initdistmin} \cdot \frac{1-\alpha}{(\refpolicy/2)^{\alpha-1}-1}  \eqsp,
\end{align}
we have
\begin{gather*}
\ffirstregconstant = 1 , \eqsp
\divergencemax \leq \frac{4|\log(\underline{\refpolicy})|}{\alpha^2} ,
\eqsp
\secsummax \leq 4 |\log(\underline{\refpolicy})| ,
\eqsp
\smooth = \frac{16 \underline{\refpolicy}^{\alpha-1}}{(1-\discount)^3} + 180\temp \frac{\underline{\refpolicy}^{2\alpha-2}|\log(\underline{\refpolicy})|}{\alpha^2(1-\discount)^3} \eqsp,
\\
\bias = \frac{4\discount^{\lentrunc}(\lentrunc+1)}{(1-\discount)^2} \left[ 1 + 6\temp \frac{|\log(\underline{\refpolicy})|}{\alpha^2}\right] \eqsp,
\quad
\variance^2 \leq \frac{12 \underline{\refpolicy}^{3\alpha-3}}{(1-\discount)^4} \left[1  + \frac{16\temp^2}{\alpha^4} \log(\underline{\refpolicy}|^2)\right]\eqsp,
\\
\plcoeffmin \geq \temp (1-\discount) \underline{\refpolicy}^{2-2\alpha}\initdistmin^2 \underline{\refpolicy}^2  \exp_{\alpha} \left(-\frac{16+32\discount \temp |\log(\underline{\refpolicy})|/\alpha^2}{\temp(1-\discount)^2 \initdistmin}\right)^{4-2\alpha}  \eqsp.
\end{gather*}

\end{lemma}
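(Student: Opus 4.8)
The plan is to follow the same template as the KL case (\Cref{lem:simplified_constant_shannon}): first establish that $f_\alpha$ satisfies \assumptionfref\ and read off the three structural constants, then bound the functional quantities $\ffirstregconstant$, $\divergencemax$, $\secsummax$, and finally substitute everything into the generic smoothness, bias, variance, and Polyak--Łojasiewicz bounds. I would begin by recording the closed forms $f_\alpha'(u)=(u^{\alpha-1}-1)/(\alpha-1)$, $f_\alpha''(u)=u^{\alpha-2}$, $f_\alpha'''(u)=(\alpha-2)u^{\alpha-3}$, and $[f_\alpha']^{-1}(y)=[1+(\alpha-1)y]^{1/(\alpha-1)}$, from which $(f_\alpha^{\star})''(y)=[1+(\alpha-1)y]^{(2-\alpha)/(\alpha-1)}$. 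Verifying \assumptionfref\ is then immediate: strict convexity and thrice-differentiability hold since $f_\alpha''>0$; boundedness on $[0,1/\underline{\refpolicy}]$ follows from $f_\alpha(0)=1/\alpha<\infty$; condition (ii) holds because $u^{\alpha-1}\to+\infty$ while $f_\alpha'/f_\alpha''=(u-u^{2-\alpha})/(\alpha-1)\to 0$; the ratios $1/(uf_\alpha''(u))=u^{1-\alpha}$ and $|f_\alpha'''/(f_\alpha'')^2|=(2-\alpha)u^{1-\alpha}$ are increasing on $[0,1/\underline{\refpolicy}]$ and maximized at $u=1/\underline{\refpolicy}$, giving $\sfirstregconstant=\underline{\refpolicy}^{\alpha-1}$ and $\secregconstant=2\underline{\refpolicy}^{\alpha-1}$; and since $f_\alpha''=u^{\alpha-2}$ is globally decreasing, (iv) holds with $\thirdregconstant=1$.

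Next I would bound the three functional constants. For $\ffirstregconstant$ (see \eqref{def:lower_bound_W}) I would rewrite $\sum_a \refpolicy(a|s)/f_\alpha''(\nu(a)/\refpolicy(a|s))=\sum_a \refpolicy(a|s)(\nu(a)/\refpolicy(a|s))^{2-\alpha}=\mathbb{E}_{\refpolicy}[(\nu/\refpolicy)^{2-\alpha}]$ and apply Jensen's inequality to the convex map $x\mapsto x^{2-\alpha}$, yielding a value $\geq (\mathbb{E}_{\refpolicy}[\nu/\refpolicy])^{2-\alpha}=1$ with equality at $\nu=\refpolicy$; hence $\ffirstregconstant=1$. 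For $\divergencemax$ I would compute $\divergence[\nu][\refpolicy(\cdot|s)]=(1-\sum_a \refpolicy(a|s)^{1-\alpha}\nu(a)^{\alpha})/(\alpha(1-\alpha))$; since $\nu\mapsto\sum_a\refpolicy^{1-\alpha}\nu^{\alpha}$ is concave, its minimum over the simplex is attained at a vertex and is at least $\underline{\refpolicy}^{1-\alpha}$, so $\divergence[\nu][\refpolicy(\cdot|s)]\le(1-\underline{\refpolicy}^{1-\alpha})/(\alpha(1-\alpha))\le |\log\underline{\refpolicy}|/\alpha$ by $1-e^{-x}\le x$, which is in turn below the stated $4|\log\underline{\refpolicy}|/\alpha^2$. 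For $\secsummax$ (see \eqref{def:bound_functionnal_f}) I would use the identity $|f_\alpha'(u)|/f_\alpha''(u)=u\,|1-u^{1-\alpha}|/(1-\alpha)$, giving $\secsum=(1-\alpha)^{-1}\sum_a \nu(a)\,|1-(\nu(a)/\refpolicy(a|s))^{1-\alpha}|$, and control the power deviation against a logarithm while exploiting the normalization $\sum_a\nu(a)=1$ to keep the bound at the claimed order $|\log\underline{\refpolicy}|$.

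With these in hand the remaining constants follow by substitution. Plugging $\sfirstregconstant,\secregconstant,\divergencemax,\secsummax$ into \Cref{thm:global_smoothness_objective_main} gives $\smooth$, \Cref{lem:bound_bias} gives $\bias$, and \Cref{lem:bound_variance} gives $\variance^2$, in each case collecting the $\underline{\refpolicy}^{\alpha-1}$ and $|\log\underline{\refpolicy}|/\alpha^2$ factors. For $\plcoeffmin$ I would first check that the temperature condition \eqref{eq:condition_simplification_lambda_tsallis} implies $\temp\le\tfrac{4}{(1-\discount)^2\initdistmin}\min(4/|f_\alpha'(\thirdregconstant)|,1/|f_\alpha'(1/2)|,4/|f_\alpha'(\underline{\refpolicy}/2)|)$, so that \Cref{lem:simplified_expression_min_pl} applies with $\sthreshold<\thirdregconstant=1$ and the threshold \eqref{def:threshold_improvement} attained on its first branch; then substituting the closed form of $(f_\alpha^{\star})''$ at $-\frac{16+8\discount\temp\divergencemax}{\temp(1-\discount)^2\initdistmin}$ together with $1/\sfirstregconstant^2=\underline{\refpolicy}^{2-2\alpha}$ and $\ffirstregconstant=1$ produces the deformed-exponential form of $\plcoeffmin$.

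The main obstacle will be the $\secsummax$ bound: unlike the KL case, where the $\nu\log\nu$ weighting vanishes at the vertices of the simplex, the Tsallis summand $\nu(a)\,(\nu(a)/\refpolicy(a|s))^{1-\alpha}$ must be controlled carefully against $|\log\underline{\refpolicy}|$ rather than the naive $\underline{\refpolicy}^{\alpha-1}$ scaling, and the $\alpha^{-2}$ dependence of $\divergencemax$ must be tracked so that it propagates correctly through $\smooth$, $\bias$, $\variance^2$, and the conjugate term in $\plcoeffmin$. A secondary bookkeeping difficulty is evaluating $(f_\alpha^{\star})''$ at the large negative argument coming from \Cref{lem:simplified_expression_min_pl} and rearranging the resulting powers of $1+(1-\alpha)|\cdot|$ into the stated form with exponent $4-2\alpha$.
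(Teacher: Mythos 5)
Your proposal follows the paper's proof template step for step: the verification of \assumptionfref\ with $\sfirstregconstant=\underline{\refpolicy}^{\alpha-1}$, $\secregconstant=2\underline{\refpolicy}^{\alpha-1}$, $\thirdregconstant=1$, the Jensen argument giving $\ffirstregconstant=1$, the reduction of the temperature condition \eqref{eq:condition_simplification_lambda_tsallis} to the hypothesis of \Cref{lem:simplified_expression_min_pl} via monotonicity of $f_\alpha'$, and the substitution of $(f_\alpha^{\star})''(y)=[1+(\alpha-1)y]^{(2-\alpha)/(\alpha-1)}$ into \Cref{thm:global_smoothness_objective_main}, \Cref{lem:bound_bias}, \Cref{lem:bound_variance} are all exactly the paper's steps. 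Your treatment of $\divergencemax$ is genuinely different and better: the paper uses a mean-value estimate on $\beta\mapsto y^{\beta}-\beta y-(1-\beta)$ plus entropy bounds to reach $4|\log\underline{\refpolicy}|/\alpha^{2}$, whereas your closed form $\divergence[\elementpa][\refpolicy(\cdot|\state)]=\bigl(1-\sum_{a}\refpolicy(a|\state)^{1-\alpha}\elementpa(a)^{\alpha}\bigr)/(\alpha(1-\alpha))$, the vertex-minimum of the concave map $\elementpa\mapsto\sum_{a}\refpolicy(a|\state)^{1-\alpha}\elementpa(a)^{\alpha}$, and $1-\rme^{-x}\le x$ give the tighter bound $|\log\underline{\refpolicy}|/\alpha$ in three lines, which only strengthens the downstream constants.

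The genuine gap is the step you yourself flag as the main obstacle: $\secsummax\le 4|\log\underline{\refpolicy}|$. You derive the correct identity $\sum_{a}\refpolicy(a|\state)\,|f_\alpha'(u_a)|/f_\alpha''(u_a)=(1-\alpha)^{-1}\sum_{a}\elementpa(a)\,|1-u_a^{1-\alpha}|$ with $u_a=\elementpa(a)/\refpolicy(a|\state)$, but the promised ``control of the power deviation against a logarithm'' does not exist; the step fails rather than being merely unfinished. For $u_a\ge 1$ the mean value theorem only gives $(u_a^{1-\alpha}-1)/(1-\alpha)\le u_a^{1-\alpha}\log u_a$, and the factor $u_a^{1-\alpha}$ cannot be removed: taking $\elementpa$ to be the point mass on the action $a_0$ minimizing $\refpolicy(\cdot|\state)$, every other summand vanishes (since $u\,|1-u^{1-\alpha}|\to 0$ as $u\to 0$) and the sum equals $\bigl((1/\refpolicy(a_0|\state))^{1-\alpha}-1\bigr)/(1-\alpha)\approx\underline{\refpolicy}^{\alpha-1}/(1-\alpha)$, which exceeds $4|\log\underline{\refpolicy}|$ whenever $(1-\alpha)|\log\underline{\refpolicy}|$ is large (for $\alpha=1/2$, $\underline{\refpolicy}=\rme^{-20}$ the left side is about $2\rme^{10}$ while the right side is $80$). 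Be aware that the paper's own proof passes this point only because its mean-value display bounds $|(y^{\alpha-1}-1)/(1-\alpha)|$ by $|\log y|\,\Ind_{y\le 1}+|\log y|\,y^{\alpha-1}\Ind_{y\ge 1}$ with the two cases interchanged: the supremum of $y^{\beta-1}$ over $\beta\in[\alpha,1]$ is $y^{\alpha-1}$ when $y\le 1$ and $1$ when $y\ge 1$, not the reverse, and with the cases corrected the paper's chain of inequalities hits exactly the $u^{1-\alpha}\log u$ obstruction above. So neither your route nor the paper's establishes $\secsummax\le 4|\log\underline{\refpolicy}|$ uniformly in $\alpha$ and $\underline{\refpolicy}$; the bound, and hence the stated forms of $\smooth$, $\bias$ and $\variance^{2}$ which inherit it, can only hold in the regime where $(1-\alpha)|\log\underline{\refpolicy}|$ is bounded, or with $\secsummax$ replaced by an $\underline{\refpolicy}^{\alpha-1}$-scaled quantity propagated through the remaining constants.
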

\begin{proof}
Fix any $\alpha \in (0,1)$ and set $f = f_{\alpha}$. Firstly, note that we have
\begin{align*}
f(u) = \frac{u^\alpha-\alpha u+\alpha-1}{\alpha(\alpha-1)} , \eqsp f'(u) = \frac{u^{\alpha-1}-1}{\alpha-1}  , \eqsp f''(u) = u^{\alpha-2} \eqsp, \quad f'''(u) = (\alpha-2)u^{\alpha-3} \eqsp.
\end{align*}
\paragraph{Satisfying \assumptionfref.} Observe that $(i)$ and $(ii)$ are immediately valid from the expression above of the derivatives of $f$.  Moreover, we have
\begin{align*}
1/( uf_{\alpha}''(u) ) = u^{1-\alpha} \eqsp, \quad \frac{|f'''(u)|}{f''(u)^2} = |\alpha-2|u^{1-\alpha} \eqsp,
\end{align*}
showing that $(iii)$ of \assumptionfref, is satisfied with $ \sfirstregconstant = \underline{\refpolicy}^{\alpha-1}$, and $\secregconstant = 2\underline{\refpolicy}^{\alpha-1}$. Finally,as $f''$ is a strictly decreasing function on $\rset_{+}$ then $(iv)$ is valid with $\thirdregconstant = 1$. Next, we bound sequentially each of the constants that appear in the statement of the lemma.
\paragraph{Bounding $\ffirstregconstant$.} For any $\state \in \S$ and $\elementpa \in \pA$, using Jensen's inequality we have that
\begin{align*}
\sum_{\action \in \A} \frac{\refpolicy(\action|\state)}{f''(\frac{\elementpa(\action)}{\refpolicy(\action|\state)})} = \sum_{\action \in \A} \refpolicy(\action|\state)\left( \frac{\elementpa(\action)}{\refpolicy(\action|\state)}\right)^{2-\alpha} \geq  \left(\sum_{\action \in \A} \refpolicy(\action|\state) \frac{\elementpa(\action)}{\refpolicy(\action|\state)}\right)^{2-\alpha} =1 \eqsp,
\end{align*}
Thus using \eqref{def:lower_bound_W}, we have that $\ffirstregconstant = 1$.

\paragraph{Bounding $\divergencemax$.}For any state $\state \in \S$ and $\elementpa\in \pA$, it holds that
\begin{align*}
\divergence[\elementpa][\refpolicy(\cdot|\state)] = \frac{1}{\alpha(\alpha-1)}\sum_{\action\in \A} \refpolicy(\action|\state) \left[\left(\frac{\elementpa(\action)}{\refpolicy(\action|\state)}  \right)^{\alpha} - \alpha \frac{\elementpa(\action)}{\refpolicy(\action|\state)} - (1-\alpha) \right] \eqsp.
\end{align*}
Next, for $y \in ]0; \frac{1}{\underline{\refpolicy}}]$, define the function $p_{y}\colon [\alpha;1] \rightarrow \rset$ which satisfies
\begin{align*}
p_{y}(\beta) = y^{\beta} - \beta y -(1-\beta) \eqsp, \quad p_{y}'(\beta) = \log(y) y^{\beta} - y +1\eqsp. 
\end{align*}
It holds that
\begin{align*}
\frac{y^{\alpha} -\alpha y - (1-\alpha)}{\alpha-1}\! =\! \frac{p_{y}(\alpha) - p_{y}(1)}{\alpha-1} \!\leq\! \sup_{\beta \in [\alpha,1]} |p_{y}'(\beta)| &\!\leq\! y \!+\!1\!+\! y|\log(y)|\Ind_{y\geq 1}+ |\log(y)|y^{\alpha}\Ind_{y\leq 1}.
\end{align*}
Applying the previous inequality with $y = \elementpa(\action)/\refpolicy(\action|\state)$ yields
\begin{align*}
&\divergence[\elementpa][\refpolicy(\cdot|\state)] \leq \frac{1}{\alpha}\sum_{\action\in \A} \refpolicy(\action|\state) \bigg[ 1+ \frac{\elementpa(\action)}{\refpolicy(\action|\state)} + \frac{\elementpa(\action)}{\refpolicy(\action|\state)}|\log(\frac{\elementpa(\action)}{\refpolicy(\action|\state)})|\Ind_{\elementpa(\action)/\refpolicy(\action|\state)\geq 1} \\
& \hspace{150pt}+ |\log(\frac{\elementpa(\action)}{\refpolicy(\action|\state)})|\left( \frac{\elementpa(\action)}{\refpolicy(\action|\state)} \right)^{\alpha} \Ind_{\elementpa(\action)/\refpolicy(\action|\state)\leq 1}\bigg]
\\
& = \frac{1}{\alpha}\sum_{\action\in \A} \elementpa(\action)\log(\elementpa(\action)) \Ind_{\elementpa(\action)/\refpolicy(\action|\state)\geq 1} - \frac{1}{\alpha}\sum_{\action\in \A} \elementpa(\action)\log(\refpolicy(\action|\state)) \Ind_{\elementpa(\action)/\refpolicy(\action|\state)\geq 1} \\
&+ \frac{1}{\alpha}\max_{\action \in \A } |\log(\frac{\elementpa(\action)}{\refpolicy(\action|\state)})|\left( \frac{\elementpa(\action)}{\refpolicy(\action|\state)} \right)^{\alpha} \Ind_{\elementpa(\action)/\refpolicy(\action|\state)\leq 1} + \frac{2}{\alpha}\eqsp,
\end{align*}
where in the last equality, we used that if $x\geq 1$, then $\log(x) \geq 0$. Next using that $\log(\elementpa(\action)) \leq 0$ yields
\begin{align*}
\divergence[\elementpa][\refpolicy(\cdot|\state)] & \leq  -\frac{1}{\alpha}\sum_{\action\in \A} \elementpa(\action)\log(\refpolicy(\action|\state)) + \frac{1}{\alpha} \max_{x\in [0;1]} |\log(x)|x^{\alpha} + \frac{2}{\alpha} \eqsp,
\\
& \hspace{15pt} \leq  \frac{1}{\alpha}|\log(\underline{\refpolicy})| + \frac{1}{\alpha} \max_{x\in [0;1]} |\log(x)|x^{\alpha} + \frac{2}{\alpha}\eqsp,
\end{align*}
where in the last inequality, we used that the entropy of a probability measure on $\A$ is bounded by $|\log(\underline{\refpolicy})|$. Next using that $\max_{x\in [0;1]} |\log(x)|x^{\alpha} \leq e^{-1}/\alpha$ combined with $\max(1,\log(\nactions)) \leq |\log(\underline{\refpolicy})| $ gives
\begin{align*}
\divergence[\elementpa][\refpolicy(\cdot|\state)] \leq  \frac{3|\log(\underline{\refpolicy})|}{\alpha} + \frac{1}{\alpha^2}   \leq \frac{4|\log(\underline{\refpolicy})|}{\alpha^2} \eqsp.
\end{align*}
Thus, it holds that
\begin{align*}
\divergencemax \leq \frac{4|\log(\underline{\refpolicy})|}{\alpha^2}\eqsp.
\end{align*}
\paragraph{Bounding $\secsummax$.} For any policy $\elementpa \in \pA$ and $\state \in \S$, we have
\begin{align*}
\sum_{\action\in \A} \frac{\refpolicy(\action|\state)}{f''(\frac{\elementpa(\action)}{\refpolicy(\action|\state)})}\left| f'\left(\frac{\elementpa(\action)}{\refpolicy(\action|\state)}\right) \right| = \left| \frac{1}{1-\alpha} \right|\sum_{\action\in \A} \frac{\elementpa(\action)^{2-\alpha}}{\refpolicy(\action|\state)^{1-\alpha}}\left| \left(\frac{\elementpa(\action)}{\refpolicy(\action|\state)}\right)^{\alpha-1} -1 \right| \eqsp.
\end{align*}
Next, define the function $g_{y}$ for $y \in ]0; \frac{1}{\underline{\refpolicy}}]$ which satisfies
\begin{align*}
g_{y}(\beta) = y^{\beta-1} \eqsp, \quad g_{y}'(\beta) = \log(y) y^{\beta-1}\eqsp. 
\end{align*}
It holds that
\begin{align*}
\left| \frac{ y^{\alpha-1} -1}{1-\alpha} \right|  = \left| \frac{g_{y}(\alpha) - g_{y}(1)}{\alpha-1} \right| \leq \sup_{\beta \in [\alpha,1]} \left| g_{y}'(\beta)\right|  \leq \left| \log(y)\right| \Ind_{y\leq 1} + \left| \log(y) y^{\alpha-1}\right| \Ind_{y\geq 1} \eqsp,
\end{align*}
Hence, applying the previous inequality with $y = \elementpa(\action)/\refpolicy(\action|\state)$ gives
\begin{align*}
&\sum_{\action\in \A} \frac{\refpolicy(\action|\state)}{f''(\frac{\elementpa(\action)}{\refpolicy(\action|\state)})}\left| f'\left(\frac{\elementpa(\action)}{\refpolicy(\action|\state)}\right) \right| 
\\
& \leq
\sum_{\action\in \A} \frac{\elementpa(\action)^{2-\alpha}}{\refpolicy(\action|\state)^{1-\alpha}} \left[\left|\log\left(\frac{\elementpa(\action)}{\refpolicy(\action|\state)}\right)\right| \Ind_{\elementpa(\action)\leq \refpolicy(\action|\state)} +  \left|\log\left(\frac{\policy(\action|\state)}{\refpolicy(\action|\state)}\right)\right| \left(\frac{\refpolicy(\action|\state)}{\elementpa(\action)}\right)^{1-\alpha}  \right]
\\
&\leq
\sum_{\action\in \A} \left(\frac{\elementpa(\action)}{\refpolicy(\action)}\right)^{1-\alpha}\elementpa(\action) \left|\log\left(\frac{\elementpa(\action)}{\refpolicy(\action|\state)}\right)\right| \Ind_{\elementpa(\action)\leq \refpolicy(\action|\state)} + \sum_{\action\in \A} \elementpa(\action)  \left|\log\left(\frac{\policy(\action|\state)}{\refpolicy(\action|\state)}\right)\right| 
\\
&\leq
\sum_{\action\in \A} \elementpa(\action) \left|\log\left(\frac{\elementpa(\action)}{\refpolicy(\action|\state)}\right)\right|  + 2 |\log(\underline{\refpolicy})|\eqsp,
\end{align*}
where in the last inequality, we used for the first term that for any $u \in \rset, u\Ind_{u\leq 1} \leq 1$ that the entropy of a probability distribution on $\A$ is bounded by $\log(\nactions)$, the fact that $\underline{\refpolicy}\leq 1/\nactions$. Using the same argument again to bound the first term gives
\begin{align*}
\secsummax \leq  4 |\log(\underline{\refpolicy})| \eqsp.
\end{align*}.
\paragraph{Bounding $\smooth$.} Next, using \Cref{thm:global_smoothness_objective_main} and the bound on the constants previously computed, we have
\begin{align*}
\smoothmax&\!= \!\frac{8 \sfirstregconstant \left(\discount \sfirstregconstant\!+\! (1-\discount) \secregconstant  \right)}{(1-\discount)^3}  \!+\! 4\temp \frac{ 2\discount^2 \sfirstregconstant^2 \divergencemax\!+\! 2\discount(1-\discount)\sfirstregconstant \left[ \secregconstant \divergencemax + \secsummax\right]\! +\! (1-\discount)^2 \left[\sfirstregconstant + 2 \secregconstant \secsummax\right]}{(1-\discount)^3}
\\
& \leq \frac{16 \underline{\refpolicy}^{\alpha-1}}{(1-\discount)^3} + 180\temp \frac{\underline{\refpolicy}^{2\alpha-2}|\log(\underline{\refpolicy})|}{\alpha^2(1-\discount)^3} \eqsp,
\end{align*}
where in the last inequality, we used that $\underline{\refpolicy}<1/4$.
\paragraph{Bounding $\bias$. }Using \Cref{lem:bound_bias} gives 
\begin{align*}
\bias = \frac{2\discount^{\lentrunc}(\lentrunc+1)}{(1-\discount)^2}  \sfirstregconstant \left[ 2 + 2\temp \divergencemax + \temp(1-\discount)  \secsummax \right] \leq  \frac{4\discount^{\lentrunc}(\lentrunc+1)}{(1-\discount)^2} \left[ 1 + 6\temp \frac{|\log(\underline{\refpolicy})|}{\alpha^2}\right]
\eqsp.
\end{align*}
\paragraph{Bounding $\variance$.}Using \Cref{lem:bound_variance} gives
\begin{align*}
\variance^2 = \frac{12}{(1-\discount)^4} \left[\sfirstregconstant^3  + \temp^2 \discount^2 \sfirstregconstant^3 \divergencemax^2 + \temp^2  (1-\discount)^2 \sfirstregconstant^2 \secsummax^2\right] \leq \frac{12 \underline{\refpolicy}^{3\alpha-3}}{(1-\discount)^4} \left[1  + \frac{16\temp^2}{\alpha^4} \log(\underline{\refpolicy}|^2)\right] \eqsp.
\end{align*}
\paragraph{Bounding $\plcoeffmin$.} Next, note that as $f_{\alpha}'$ is an increasing function then $f_{\alpha}'(\underline{\refpolicy}/2) \leq f_{\alpha}'(1/2) \leq f_{\alpha}'(\thirdregconstant) = f_{\alpha}'(1) =  0$. Thus, we have $|f_{\alpha}'(\thirdregconstant)| \leq |f_{\alpha}'(1/2)| \leq |f_{\alpha}'(\underline{\refpolicy}/2)|$.  This proves that
\eqref{eq:condition_simplification_lambda_tsallis}, guarantees that
\begin{align*}
\temp \leq \frac{4}{(1-\discount)^2 \initdistmin} \min\left(\frac{4}{\left|f'(\thirdregconstant)\right|}, \frac{1}{\left|f'(\frac{1}{2})\right|}, \frac{4}{ \left|f'(\frac{1}{2}\underline{\refpolicy})\right|}\right)
\end{align*}
Thus, using \Cref{lem:simplified_expression_min_pl}, we have
\begin{align}
\label{eq:simplified_expresion_plcoeffmin_simplified_lemma}
\plcoeffmin = \frac{\temp (1-\discount) \initdistmin^2  \ffirstregconstant^2}{\sfirstregconstant^2} \underline{\refpolicy}^2 (f^{\star})''\left(-\frac{16+8\discount \temp \divergencemax}{\temp(1-\discount)^2 \initdistmin}\right)^2 \eqsp.
\end{align}
Next, recall from proposition 8 of \cite{roulet2025loss} that
\begin{align*}
f_\alpha^*(x) = \frac{\left(1 + (\alpha-1)x\right)^{\,\frac{\alpha}{\,\alpha-1\,}} -1}{\alpha} \eqsp, \quad \text{ for } x \leq \frac{1}{1-\alpha} \eqsp.
\end{align*}
Thus, we have
\begin{align*}
(f_\alpha^{\star})'(x) 
= \big(1+(\alpha-1)x\big)^{\tfrac{1}{\alpha-1}} = \exp_{\alpha}(x) \eqsp,
\end{align*}
where we have originally defined $\exp_{\alpha}$ in \Cref{sec:backgound}. Finally, it holds that
\begin{align*}
(f_\alpha^{\star})''(x) 
= \exp_{\alpha}(x)^{2-\alpha} \eqsp.
\end{align*}
Thus,
\begin{align*}
\plcoeffmin \geq \temp (1-\discount) \underline{\refpolicy}^{2-2\alpha}\initdistmin^2 \underline{\refpolicy}^2  \exp_{\alpha} \left(-\frac{16+32\discount \temp |\log(\underline{\refpolicy})|/\alpha^2}{\temp(1-\discount)^2 \initdistmin}\right)^{4-2\alpha}  \eqsp,
\end{align*}
\end{proof}
In the next two corollaries, we apply \Cref{lem:sample_complexity_regularised_problem} and \Cref{lem:sample_complexity_non_regularised_problem} to get more explicitly the sample complexity of $\PG$ with entropy regularization.
\begin{corollary}
\label{lem:sample_complexity_regularised_problem_tsallis}
Assume that, for some $1/4 \geq \underline{\refpolicy}>0$, $\refpolicy$ satisfy \assumptionpref. Assume in addition that the initial distribution $\initdist$ satisfies \assumptionmdp.  Fix  any $(1-\discount)^{-1} \geq \epsilon>0$, $\alpha \in (0,1)$, $\temp>0$,  and $\sizebatch$ such that 
\begin{align*}
\temp \leq \min\left( \frac{4}{(1-\discount)^2 \initdistmin} \cdot \frac{1-\alpha}{(\refpolicy/2)^{\alpha-1}-1}, 1 \right), \eqsp \sizebatch \leq \frac{\exp_{\alpha} \left(-\frac{48  |\log(\underline{\refpolicy})|\max(\temp, \alpha^2)}{\temp\alpha^2(1-\discount)^2 \initdistmin}\right)^{2\alpha-4}}{\underline{\refpolicy}^{2} \initdistmin^2}   \eqsp.
\end{align*}
Setting 
\begin{align}
\label{corr:lentrunc_tsallis_regularised_objective}
\lentrunc \geq \frac{1}{1-\discount} \log\left(\frac{28152 \underline{\refpolicy}^{4\alpha-4}  |\log(\underline{\refpolicy})|^2}{\epsilon\temp \alpha^4 (1-\discount)^5 \initdistmin^2 \underline{\refpolicy}^2} \max(\alpha^2, \temp)^2\right)  + \frac{196 |\log(\underline{\refpolicy})|}{\temp\alpha^2(1-\discount)^3 \initdistmin} \max(\alpha^2, \temp)
\end{align}
and 
\begin{align*}
\step \leq \frac{\epsilon  \sizebatch \temp (1-\discount)^5 \alpha^4 \underline{\refpolicy}^{5-5\alpha}\initdistmin^2 \underline{\refpolicy}^2 }{3672  \log(\underline{\refpolicy})^2 \max(\alpha^2, \temp)^2}  \exp_{\alpha} \left(-\frac{48  |\log(\underline{\refpolicy})|}{\temp\alpha^2(1-\discount)^2 \initdistmin} \max(\temp, \alpha^2)\right)^{4-2\alpha}  \eqsp,
\end{align*}
and
\begin{align*}
\niteration \geq  \frac{14688  \log(\underline{\refpolicy})^2 \max(\alpha^2, \temp)^2}{\temp^2 \epsilon  \sizebatch (1-\discount)^6 \alpha^4 \underline{\refpolicy}^{7-7\alpha}\initdistmin^4 \underline{\refpolicy}^4}  \exp_{\alpha} \left(-\frac{48  |\log(\underline{\refpolicy})|}{\temp\alpha^2(1-\discount)^2 \initdistmin} \max(\temp, \alpha^2)\right)^{4\alpha-8} \eqsp,
\end{align*}
guarantees that
\begin{align*}
\regvaluefunc[\star](\initdist) - \PE\left[\regvaluefunc[\expandafter{\algparam[t]}](\initdist)\right]  \leq \epsilon\eqsp,
\end{align*}
where $f = f_{\alpha}$ is the  $\alpha$-Csiszár–Cressie–Read divergence generator.
Thus, the sample complexity of $\PG$ to learn an $\epsilon$-solution of the $\alpha$-Tsallis regularized problem is
\begin{align*}
\niteration \sizebatch \lentrunc \approx \frac{  |\log(\underline{\refpolicy})|^3 \max(\alpha^{-6}, \temp^{-3})}{\epsilon  \sizebatch (1-\discount)^9 \underline{\refpolicy}^{7-7\alpha}\initdistmin^5 \underline{\refpolicy}^4}  \exp_{\alpha} \left(-\frac{ |\log(\underline{\refpolicy})|}{\temp\alpha^2(1-\discount)^2 \initdistmin} \max(\temp, \alpha^2)\right)^{4\alpha-8} \eqsp.
\end{align*}

\end{corollary}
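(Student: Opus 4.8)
The plan is to obtain this corollary as a direct specialization of the general regularized-objective guarantee \Cref{lem:sample_complexity_regularised_problem} to the generator $f=f_\alpha$, following step by step the Kullback--Leibler argument of \Cref{lem:sample_complexity_regularised_problem_shannon}. First I would invoke \Cref{lem:simplified_constant_tsallis} to certify that $f_\alpha$ satisfies \assumptionfref\ with $\sfirstregconstant=\underline{\refpolicy}^{\alpha-1}$, $\secregconstant=2\underline{\refpolicy}^{\alpha-1}$, $\thirdregconstant=1$, $\ffirstregconstant=1$, together with the explicit bounds $\divergencemax\le 4|\log(\underline{\refpolicy})|/\alpha^2$ and $\secsummax\le 4|\log(\underline{\refpolicy})|$. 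The hypothesis \eqref{eq:condition_simplification_lambda_tsallis} on $\temp$ is exactly what is needed so that \Cref{lem:simplified_expression_min_pl} applies and the simplified lower bound on $\plcoeffmin$ supplied by \Cref{lem:simplified_constant_tsallis} is valid, while \assumptionpref\ and \assumptionmdp\ are carried over unchanged from the statement.

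Next I would specialize all problem constants under the additional restriction $\temp\le 1$, mirroring \eqref{eq:bound:temp_smaller_then_1_smooth_shannon}--\eqref{eq:bound:temp_smaller_then_1_plcoeff_shannon}. Inserting the constants of \Cref{lem:simplified_constant_tsallis} into \Cref{thm:global_smoothness_objective_main} and \Cref{lem:bound_variance} gives bounds of the form $\smooth\lesssim \underline{\refpolicy}^{2\alpha-2}|\log(\underline{\refpolicy})|/(\alpha^2(1-\discount)^3)$ and $\variance^2\lesssim \underline{\refpolicy}^{3\alpha-3}|\log(\underline{\refpolicy})|^2/(\alpha^4(1-\discount)^4)$. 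The key manipulation is to control the argument of the $\alpha$-exponential appearing in $\plcoeffmin$: writing $\tfrac{16+32\discount\temp|\log(\underline{\refpolicy})|/\alpha^2}{\temp(1-\discount)^2\initdistmin} \le \tfrac{48|\log(\underline{\refpolicy})|\max(\temp,\alpha^2)}{\temp\alpha^2(1-\discount)^2\initdistmin}$, using $\underline{\refpolicy}\le 1/4$ (so $|\log(\underline{\refpolicy})|\ge 1$) on the first summand and $\temp\le\max(\temp,\alpha^2)$ on the second. Since $\exp_{\alpha}$ is increasing and $4-2\alpha>0$, this yields the simplified lower bound $\plcoeffmin\gtrsim \temp(1-\discount)\underline{\refpolicy}^{4-2\alpha}\initdistmin^2\,\exp_{\alpha}\!\big(-\tfrac{48|\log(\underline{\refpolicy})|\max(\temp,\alpha^2)}{\temp\alpha^2(1-\discount)^2\initdistmin}\big)^{4-2\alpha}$, which dictates the shape of every final expression.

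With these three bounds in hand I would verify the hypotheses of \Cref{lem:sample_complexity_regularised_problem} one at a time. For the truncation horizon, which controls the bias term $6\bias^2/\plcoeffmin$ via \Cref{lem:bound_bias}, I would substitute the constants into the right-hand side of \eqref{coro:rate-cond-lentrunc} and absorb the numerical factors using $\underline{\refpolicy}\le 1/4$ and $\temp\le 1$, confirming that the prescribed $\lentrunc$ in \eqref{corr:lentrunc_tsallis_regularised_objective} dominates it. For the step-size I would compute $\min\big(1/(2\smooth),\,\epsilon\sizebatch\plcoeffmin/(18\variance^2)\big)$; the assumed ceiling on $\sizebatch$ is chosen precisely so that the variance-limited term $\epsilon\sizebatch\plcoeffmin/(18\variance^2)$ is the smaller of the two, producing the stated explicit bound on $\step$ and matching \eqref{coro:rate-cond-stepsize}. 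Finally, substituting into $\tfrac{4}{\plcoeffmin}\max\big(2\smooth,\,18\variance^2/(\epsilon\sizebatch\plcoeffmin)\big)\log(\cdot)$ gives the iteration count \eqref{coro:rate-cond-T}, and multiplying $\niteration\,\sizebatch\,\lentrunc$ yields the announced sample complexity.

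The main obstacle is entirely the bookkeeping of the $\alpha$-dependence: unlike the KL case, where $f^\star$ is an ordinary exponential and the constants collapse cleanly, here the $\alpha$-exponential $\exp_{\alpha}(x)=(1+(\alpha-1)x)^{1/(\alpha-1)}$ together with the $\alpha^{-2}$ blow-up of $\divergencemax$ do not simplify, so one must consistently group $\temp$ and $\alpha^2$ through the factor $\max(\temp,\alpha^2)$ to keep the smoothness, variance, and Polyak--\L{}ojasiewicz estimates simultaneously tight, and must check that the $\sizebatch$ ceiling indeed selects the variance term in the step-size minimum. No step is conceptually difficult, but tracking the exponents $4-2\alpha$ and $\underline{\refpolicy}^{\alpha-1}$ correctly through all three bounds is where the care is required.
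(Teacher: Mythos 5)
Your proposal is correct and mirrors the paper's own proof: both specialize \Cref{lem:sample_complexity_regularised_problem} to $f_\alpha$ via the constants of \Cref{lem:simplified_constant_tsallis}, use the $\temp\le 1$ and $\underline{\refpolicy}\le 1/4$ simplifications with the $\max(\temp,\alpha^2)$ grouping to bound the argument of $\exp_\alpha$ in $\plcoeffmin$, and exploit the batch-size ceiling so that the variance term dominates the step-size minimum before reading off $\lentrunc$, $\step$, $\niteration$, and the product $\niteration\sizebatch\lentrunc$. No substantive difference from the paper's argument.
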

\begin{proof}
To prove this corollary, we will show that under the conditions of this corollary, the assumptions of \Cref{thm:convergence_rates_regularised_problem} holds. Firstly, note that by using \Cref{lem:simplified_constant_tsallis}, the assumption \assumptionfref holds.
Secondly, using \Cref{lem:simplified_constant_tsallis} note that
\begin{align*}
&\frac{4}{(1-\discount)^2} + \frac{1}{1-\discount} \log\left(\frac{216 \sfirstregconstant^2}{\epsilon \plcoeffmin (1-\discount)^4} \left[ 4 + 4 \temp^2 \divergencemax^2 + \temp^2(1-\discount)^2\secsummax^2 \right]\right)
\\
& \quad\leq \frac{4}{(1-\discount)^2} + \frac{1}{1-\discount} \log\left(\frac{216 \underline{\refpolicy}^{2\alpha-2}}{\epsilon \plcoeffmin (1-\discount)^4} \left[ 4 +  32\temp^2 \frac{|\log(\underline{\refpolicy})|^2}{\alpha^4}\right]\right) 
\\
&\quad \leq \frac{4}{(1-\discount)^2} + \frac{1}{1-\discount} \log\left(\frac{864 \underline{\refpolicy}^{4\alpha-4} \left[ 1 +  32\temp^2|\log(\underline{\refpolicy})|^2/\alpha^4 \right]}{\epsilon\temp (1-\discount)^5 \initdistmin^2 \underline{\refpolicy}^2}\right) \\
& \quad -\frac{1}{1-\discount}\log\left(\exp_{\alpha} \left(-\frac{16+32\discount \temp |\log(\underline{\refpolicy})|}{\temp \alpha^2(1-\discount)^2 \initdistmin}\right)^{4-2\alpha} \right) \eqsp,
\end{align*}
where in the last inequality, we used the lower bound on $\plcoeffmin$ provided in \Cref{lem:simplified_constant_tsallis}. Next using the definition of $\exp_\alpha$ (see \Cref{sec:backgound}) and the fact that $\temp \leq 1$, we have
\begin{align*}
&\frac{4}{(1-\discount)^2} + \frac{1}{1-\discount} \log\left(\frac{216 \sfirstregconstant^2}{\epsilon \plcoeffmin (1-\discount)^4} \left[ 4 + 4 \temp^2 \divergencemax^2 + \temp^2(1-\discount)^2\secsummax^2 \right]\right)
\\
&\quad \leq \frac{4}{(1-\discount)^2} + \frac{1}{1-\discount} \log\left(\frac{28152 \underline{\refpolicy}^{4\alpha-4}  |\log(\underline{\refpolicy})|^2}{\epsilon\temp \alpha^4 (1-\discount)^5 \initdistmin^2 \underline{\refpolicy}^2} \max(\alpha^2, \lambda)^2\right) \\
& \quad + \frac{4-2\alpha}{1-\alpha}\frac{1}{1-\discount}\log\left( 1+ (1-\alpha)\left(\frac{48 |\log(\underline{\refpolicy})|}{\temp\alpha^2(1-\discount)^2 \initdistmin} \max(\alpha^2, \temp)\right) \right) 
\\
&\quad \leq  \frac{1}{1-\discount} \log\left(\frac{28152 \underline{\refpolicy}^{4\alpha-4}  |\log(\underline{\refpolicy})|^2}{\epsilon\temp \alpha^4 (1-\discount)^5 \initdistmin^2 \underline{\refpolicy}^2} \max(\alpha^2, \temp)^2\right)  + \frac{196 |\log(\underline{\refpolicy})|}{\temp\alpha^2(1-\discount)^3 \initdistmin} \max(\alpha^2, \temp) \eqsp,
\end{align*}
where in the last inequality, we used that for $x\geq 0$, we have $\log(1+x) \leq x$. This shows that our condition on $\lentrunc$ guarantees that the one set in \Cref{thm:convergence_rate_unregularised} is satisfied. Next, using again \Cref{lem:simplified_constant_tsallis} observe that
\begin{align}
\label{eq:bound:temp_smaller_then_1_smooth_tsallis}
\smooth &=  196 \frac{\underline{\refpolicy}^{2\alpha-2}|\log(\underline{\refpolicy})|}{\alpha^2(1-\discount)^3} \max(\alpha^2, \temp) \eqsp,
\quad \variance^2 \leq \frac{204 \underline{\refpolicy}^{3\alpha-3} \log(\underline{\refpolicy})^2}{\alpha^4(1-\discount)^4} \max(\alpha^2, \temp)^2
\\
\label{eq:bound:temp_smaller_then_1_plcoeff_tsallis}
\plcoeffmin &\geq \temp (1-\discount) \underline{\refpolicy}^{2-2\alpha}\initdistmin^2 \underline{\refpolicy}^2  \exp_{\alpha} \left(-\frac{48  |\log(\underline{\refpolicy})|}{\temp\alpha^2(1-\discount)^2 \initdistmin} \max(\temp, \alpha^2)\right)^{4-2\alpha}   \eqsp.
\end{align}
Hence, we have that
\begin{align}
\nonumber
 &\min\left( \frac{1}{2\smoothmax}, \frac{\epsilon  \sizebatch \plcoeffmin}{18\variance^2} \right) 
 \\ \nonumber
 &\geq  \min\bigg( \frac{\alpha^2(1-\discount)^3 \min(\alpha^{-2}, \temp^{-1})}{392\underline{\refpolicy}^{2\alpha-2}|\log(\underline{\refpolicy})|},
 \\
 & \hspace{40pt}\frac{\epsilon  \sizebatch \temp (1-\discount)^5 \alpha^4 \underline{\refpolicy}^{5-5\alpha}\initdistmin^2 \underline{\refpolicy}^2 }{3672  \log(\underline{\refpolicy})^2 \max(\alpha^2, \temp)^2}  \exp_{\alpha} \left(-\frac{48  |\log(\underline{\refpolicy})|}{\temp\alpha^2(1-\discount)^2 \initdistmin} \max(\temp, \alpha^2)\right)^{4-2\alpha} \bigg)
\\ \label{eq:bound_step_tsallis}
& = \frac{\epsilon  \sizebatch \temp (1-\discount)^5 \alpha^4 \underline{\refpolicy}^{5-5\alpha}\initdistmin^2 \underline{\refpolicy}^2 }{3672  \log(\underline{\refpolicy})^2 \max(\alpha^2, \temp)^2}  \exp_{\alpha} \left(-\frac{48  |\log(\underline{\refpolicy})|}{\temp\alpha^2(1-\discount)^2 \initdistmin} \max(\temp, \alpha^2)\right)^{4-2\alpha} \eqsp,
\end{align}
where in the last identity, we used the fact that $\epsilon<(1-\discount)^{-1}$ and that
\begin{align*}
\sizebatch \leq \frac{1}{\underline{\refpolicy}^2 \initdistmin^2} \exp_{\alpha} \left(-\frac{48  |\log(\underline{\refpolicy})|}{\temp\alpha^2(1-\discount)^2 \initdistmin} \max(\temp, \alpha^2)\right)^{2\alpha-4} \eqsp.
\end{align*}
This shows that our condition on $\step$ guarantees that the one set in \Cref{thm:convergence_rate_unregularised} is satisfied. Finally using \eqref{eq:bound_step_tsallis} and \eqref{eq:bound:temp_smaller_then_1_plcoeff_tsallis}, we have
\begin{align*}
&\frac{4}{\plcoeffmin}\max\left(2\smoothmax, \frac{18\variance^2}{\epsilon \sizebatch \plcoeffmin} \right) 
\\
& \quad \leq \frac{4}{\plcoeffmin} \frac{3672  \log(\underline{\refpolicy})^2 \max(\alpha^2, \temp)^2}{\epsilon  \sizebatch \temp (1-\discount)^5 \alpha^4 \underline{\refpolicy}^{5-5\alpha}\initdistmin^2 \underline{\refpolicy}^2 }  \exp_{\alpha} \left(-\frac{48  |\log(\underline{\refpolicy})|}{\temp\alpha^2(1-\discount)^2 \initdistmin} \max(\temp, \alpha^2)\right)^{2\alpha-4}
\\
& \quad \leq  \frac{14688  \log(\underline{\refpolicy})^2 \max(\alpha^2, \temp)^2}{\temp^2 \epsilon  \sizebatch  (1-\discount)^6 \alpha^4 \underline{\refpolicy}^{7-7\alpha}\initdistmin^4 \underline{\refpolicy}^4}  \exp_{\alpha} \left(-\frac{48  |\log(\underline{\refpolicy})|}{\temp\alpha^2(1-\discount)^2 \initdistmin} \max(\temp, \alpha^2)\right)^{4\alpha-8}\eqsp,
\end{align*}
which concludes the proof.
\end{proof}
\begin{corollary}
\label{lem:sample_complexity_non_regularised_problem_tsallis}
Assume that, for some $\underline{\refpolicy} \in (0,1/4]$, $\refpolicy$ satisfy \assumptionpref. Assume in addition that the initial distribution $\initdist$ satisfies \assumptionmdp\, and fix $f$ to be the $\alpha$-Csiszár–Cressie–Read divergence generator, \ie 
\begin{align*}
f(u) =f_\alpha(u)= \frac{u^\alpha-\alpha u+\alpha-1}{\alpha(\alpha-1)} \eqsp.
\end{align*}
Fix  any $\epsilon \in (0, (1-\discount)^{-1}]$, such that
\begin{align}
\label{eq:cond-epsilon-lambda_tsallis}
\!\!\!\!\epsilon < \frac{16}{(1-\discount)^3 \initdistmin} \cdot\frac{1-\alpha}{(\refpolicy/2)^{\alpha-1}-1} \eqsp, \eqsp\! \text{ and set }  \temp  =  \frac{(1-\discount) \alpha^2 \epsilon}{16 |\log(\underline{\refpolicy})|}  \eqsp.
\end{align}
Additionally set any $\sizebatch$ such that
\begin{align}
\label{eq:condition_batch_size_unregularised_tsallis}
\sizebatch \leq \frac{1}{\epsilon^2 \alpha^2(1-\discount)^3\initdistmin^2\underline{\refpolicy}^2} \exp_{\alpha} \left(-\frac{384 |\log(\underline{\refpolicy})|}{\epsilon \alpha^2 (1-\discount)^3 \initdistmin}\right)^{2\alpha-4}  
\end{align}
Setting
\begin{align*}
\lentrunc \geq   \frac{1}{1-\discount} \log\left(\frac{19008 \underline{\refpolicy}^{4\alpha-2}|\log(\underline{\refpolicy})|}{ \epsilon \alpha^2(1-\discount)^6  \initdistmin^2 \underline{\refpolicy}^2} \right)  + \frac{1540 |\log(\underline{\refpolicy})|}{\epsilon \alpha^2 (1-\discount)^4 \initdistmin}\eqsp,
\end{align*}
and 
\begin{align}
\label{eq:condition_step_corr_unregularised_tsallis}
\step \leq \frac{\epsilon^2 \alpha^2 (1-\discount)^6\sizebatch \initdistmin^2 \underline{\refpolicy}^2}{13824 \underline{\refpolicy}^{5\alpha-5}|\log(\underline{\refpolicy})|}\exp_{\alpha} \left(-\frac{384 |\log(\underline{\refpolicy})|}{\epsilon \alpha^2 (1-\discount)^3 \initdistmin}\right)^{4-2\alpha} \eqsp,
\end{align}
and
\begin{align}
\label{eq:condition_T_unregularised_tsallis}
\niteration \geq \frac{12^{7} \underline{\refpolicy}^{7\alpha-7}|\log(\underline{\refpolicy})|^2 }{\epsilon^3 \alpha^4 (1-\discount)^8 \initdistmin^4   \underline{\refpolicy}^4 \sizebatch } \exp_{\alpha} \left(-\frac{384 |\log(\underline{\refpolicy})|}{\epsilon \alpha^2 (1-\discount)^3 \initdistmin}\right)^{4\alpha-8}  \log\left( \frac{6(\regvaluefunc[\star](\initdist) - \regvaluefunc[\expandafter{\algparam[0]}](\initdist))}{\epsilon} \right)\eqsp,
\end{align}
guarantees that
\begin{align*}
\valuefunc[\star](\initdist) - \PE\left[\valuefunc[\expandafter{\algparam[t]}](\initdist)\right]  \leq \epsilon\eqsp.
\end{align*}
Thus, the sample complexity of $\PG$, where $f$ is the$\alpha$-Csiszár–Cressie–Read divergence generator, to learn an $\epsilon$-solution of the non-regularized problem is
\begin{align*}
\niteration \sizebatch \lentrunc \approx \frac{\underline{\refpolicy}^{7\alpha-7}|\log(\underline{\refpolicy})|^3 }{\epsilon^4 \alpha^{6} (1-\discount)^{12} \initdistmin^5   \underline{\refpolicy}^4 } \exp_{\alpha} \left(-\frac{384 |\log(\underline{\refpolicy})|}{\epsilon \alpha^2 (1-\discount)^3 \initdistmin}\right)^{4\alpha-8}  \eqsp.
\end{align*}
\end{corollary}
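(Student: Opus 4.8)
The plan is to apply the abstract non-regularized bound \Cref{lem:sample_complexity_non_regularised_problem} with $f = f_\alpha$, in exactly the way the Shannon case \Cref{lem:sample_complexity_non_regularised_problem_shannon} instantiated it with $f(u)=u\log u$. First I would invoke \Cref{lem:simplified_constant_tsallis}, which already verifies that $f_\alpha$ satisfies \assumptionfref\ and records the specialized constants $\sfirstregconstant = \underline{\refpolicy}^{\alpha-1}$, $\secregconstant = 2\underline{\refpolicy}^{\alpha-1}$, $\ffirstregconstant = 1$, $\thirdregconstant = 1$, together with $\divergencemax \le 4|\log(\underline{\refpolicy})|/\alpha^2$ and $\secsummax \le 4|\log(\underline{\refpolicy})|$. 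Choosing $\constantbound = \alpha^2/(4|\log(\underline{\refpolicy})|)$ makes $\constantbound \le \min(1/\divergencemax, 1/\secsummax, 1)$ (using $\alpha\le 1$), and substituting into $\temp = (1-\discount)\epsilon\,\constantbound/4$ reproduces exactly the temperature $\temp = (1-\discount)\alpha^2\epsilon/(16|\log(\underline{\refpolicy})|)$ of \eqref{eq:cond-epsilon-lambda_tsallis}. Since $\epsilon < (1-\discount)^{-1}$ this also yields $\temp\le 1$, so the simplified smoothness, variance, and bias bounds of \Cref{lem:simplified_constant_tsallis} are available.

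Next I would check the admissibility condition \eqref{coro:non-reg-rate:cond-epsilon-lambda} on $\epsilon$. As $f_\alpha'$ is increasing and $\thirdregconstant=1$, the three arguments $\underline{\refpolicy}/2 < 1/2 < 1$ give $|f_\alpha'(1)| = 0 \le |f_\alpha'(1/2)| \le |f_\alpha'(\underline{\refpolicy}/2)|$, so the binding term in the minimum is the one at $\underline{\refpolicy}/2$; computing $|f_\alpha'(\underline{\refpolicy}/2)| = ((\underline{\refpolicy}/2)^{\alpha-1}-1)/(1-\alpha)$ shows that the stated bound \eqref{eq:cond-epsilon-lambda_tsallis} is precisely $\epsilon < \tfrac{16}{(1-\discount)^3\initdistmin}/|f_\alpha'(\underline{\refpolicy}/2)|$, which implies both \eqref{coro:non-reg-rate:cond-epsilon-lambda} and the precondition \eqref{eq:condition_simplification_lambda_tsallis} of the $\plcoeffmin$ simplification. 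I would then evaluate the three $f$-dependent constants of \eqref{def:constants_depend_only_on_f} in closed form, $C_{f}^{(1)} = 64\underline{\refpolicy}^{2\alpha-2}|\log(\underline{\refpolicy})|/\alpha^2$, $C_{f}^{(2)} \le 96\underline{\refpolicy}^{2\alpha-2}$, and $C_{f}^{(3)} = 13824\underline{\refpolicy}^{5\alpha-5}|\log(\underline{\refpolicy})|/\alpha^2$, which is the Tsallis analog of \eqref{eq:bound_all_constants_unregularised_shannon} and already reveals the constant $13824$ appearing in \eqref{eq:condition_step_corr_unregularised_tsallis}.

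The genuinely new ingredient relative to the KL case is the quantity $d(\epsilon)$ of \eqref{def:convex_conjugate_of_epsilon}. Using the explicit forms $(f_\alpha^\star)'(x) = \exp_\alpha(x)$ and $(f_\alpha^\star)''(x) = \exp_\alpha(x)^{2-\alpha}$ recorded in \Cref{lem:simplified_constant_tsallis} (following \cite{roulet2025loss}), and observing that the argument $-96/(\epsilon\,\constantbound(1-\discount)^3\initdistmin)$ simplifies to $-384|\log(\underline{\refpolicy})|/(\epsilon\alpha^2(1-\discount)^3\initdistmin)$, I obtain $d(\epsilon) = \exp_\alpha(-384|\log(\underline{\refpolicy})|/(\epsilon\alpha^2(1-\discount)^3\initdistmin))^{4-2\alpha}$, which accounts for the recurring exponents $4-2\alpha$ (and its reciprocal $4\alpha-8$) throughout the statement. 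With these four quantities in hand, the abstract conditions of \Cref{lem:sample_complexity_non_regularised_problem} on $\lentrunc$, $\step$, and $\niteration$ specialize to the displayed conditions of the corollary after routine substitution; in particular the batch-size restriction \eqref{eq:condition_batch_size_unregularised_tsallis} is exactly what forces the minimum in the step-size bound onto its second, variance-limited branch, mirroring the role of \eqref{eq:condition_batch_size_unregularised_shannon} in the Shannon proof, and the final estimate $\niteration\sizebatch\lentrunc$ follows by multiplying the three bounds and retaining dominant factors.

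The main obstacle is not conceptual but bookkeeping: one must propagate the powers of $\underline{\refpolicy}$ (exponents $2\alpha-2$, $5\alpha-5$, $7\alpha-7$) and of $|\log(\underline{\refpolicy})|$, $\alpha$ consistently through every bound, and—crucially—treat the $\alpha$-exponential $\exp_\alpha$ with care. Unlike the ordinary exponential of the KL case, $\exp_\alpha(-x)^{4-2\alpha}$ decays only polynomially in $x$, and it is precisely this polynomial (rather than exponential) dependence that produces the improved complexity. Verifying that the stated $\lentrunc$ dominates the abstract truncation threshold requires the elementary inequality $\log(1+(1-\alpha)y)\le (1-\alpha)y$ to convert the $\log\exp_\alpha$ terms into the explicit $(1-\discount)^{-4}$-type horizon, in direct parallel to how the logarithm of the ordinary exponential was handled in \Cref{lem:sample_complexity_non_regularised_problem_shannon}.
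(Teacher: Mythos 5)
Your proposal is correct and follows essentially the same route as the paper's own proof: it instantiates the abstract unregularized sample-complexity result (\Cref{lem:sample_complexity_non_regularised_problem}) with the Tsallis constants from \Cref{lem:simplified_constant_tsallis}, chooses $\constantbound = \alpha^2/(4|\log(\underline{\refpolicy})|)$ so that $\temp$ matches \eqref{eq:cond-epsilon-lambda_tsallis}, computes $C_f^{(1)}, C_f^{(2)}, C_f^{(3)}$ and $d(\epsilon) = \exp_\alpha(-384|\log(\underline{\refpolicy})|/(\epsilon\alpha^2(1-\discount)^3\initdistmin))^{4-2\alpha}$ exactly as in the paper's \eqref{eq:bound_all_constants_unregularised_tsallis}, and handles the truncation/step-size/iteration conditions by the same substitutions (including the $\log(1+u)\le u$ trick and the batch-size condition forcing the variance-limited branch). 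The only loose phrasing is the claim that the $\underline{\refpolicy}/2$ term is "the binding term" in the minimum defining the admissible $\epsilon$; what your own inequality chain $|f_\alpha'(1)| \le |f_\alpha'(1/2)| \le |f_\alpha'(\underline{\refpolicy}/2)|$ actually shows is that $1/|f_\alpha'(\underline{\refpolicy}/2)|$ lower-bounds that minimum, which is all that is needed and is exactly how the paper uses it.
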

\begin{proof}
To prove this corollary, we will show that under the conditions of this corollary, the assumptions of \Cref{thm:convergence_rate_unregularised} holds. Firstly, note that \eqref{eq:cond-epsilon-lambda_tsallis} implies that
\begin{align*}
\epsilon < \frac{16}{(1-\discount)^3 \initdistmin} \min\left(\frac{4}{\left|f'(\thirdregconstant)\right|}, \frac{1}{\left|f'(\frac{1}{2})\right|}, \frac{4}{ \left|f'(\frac{1}{2}\underline{\refpolicy})\right|}\right) \eqsp, \quad \temp = \frac{(1-\discount) \epsilon}{4} \constantbound\eqsp,
\end{align*}
with $\constantbound = \alpha^2/4|\log(\underline{\refpolicy})|$.
Additionally, observe using \Cref{lem:simplified_constant_tsallis} that we have
\begin{gather}
\nonumber
C_{f}^{(1)} \leq \frac{64\underline{\refpolicy}^{2\alpha-2} |\log(\underline{\refpolicy})|}{\alpha^2}, \eqsp C_{f}^{(2)} \leq 96 \underline{\refpolicy}^{2\alpha-2}, \eqsp C_{f}^{(3)} \leq \frac{13824 \underline{\refpolicy}^{5\alpha-5} |\log(\underline{\refpolicy})|}{\alpha^2}, 
\\ \label{eq:bound_all_constants_unregularised_tsallis}
\!d(\epsilon) = \exp_{\alpha} \left(-\frac{384 |\log(\underline{\refpolicy})|}{\epsilon \alpha^2 (1-\discount)^3 \initdistmin}\right)^{4-2\alpha} , 
\end{gather}
where $C_{f}^{(1)}, C_{f}^{(2)}, C_{f}^{(3)}$, and $d(\epsilon)$ are defined in \eqref{def:constants_depend_only_on_f} and \eqref{def:convex_conjugate_of_epsilon}. Next, observe that
\begin{align*}
&\frac{4}{(1-\discount)^2} + \frac{1}{1-\discount} \log\left(\frac{297 \sfirstregconstant^2 C_{1}^f}{ \epsilon d(\epsilon) (1-\discount)^6  \initdistmin^2 \underline{\refpolicy}^2} \right) 
\\
& \quad \leq \frac{4}{(1-\discount)^2} + \frac{1}{1-\discount} \log\bigg(\frac{19008 \underline{\refpolicy}^{4\alpha-2}|\log(\underline{\refpolicy})|}{ \epsilon \alpha^2(1-\discount)^6  \initdistmin^2 \underline{\refpolicy}^2} \bigg)  \\
&\quad+\frac{4-2\alpha}{1-\discount} \log\bigg(\exp_{\alpha} \bigg(-\frac{384 |\log(\underline{\refpolicy})|}{\epsilon \alpha^2 (1-\discount)^3 \initdistmin}\bigg)\bigg)
\\
& \quad \leq \frac{4}{(1-\discount)^2} + \frac{1}{1-\discount} \log\left(\frac{19008 \underline{\refpolicy}^{4\alpha-2}|\log(\underline{\refpolicy})|}{ \epsilon \alpha^2(1-\discount)^6  \initdistmin^2 \underline{\refpolicy}^2} \right)  + \frac{1536 |\log(\underline{\refpolicy})|}{\epsilon \alpha^2 (1-\discount)^4 \initdistmin} \eqsp,
\end{align*}
where in the last inequality, we used that $\log(1+u) \leq x$ for $u > -1$. This shows that our condition on $\lentrunc$ implies the one assumed in \Cref{thm:convergence_rate_unregularised}. Using \eqref{eq:bound_all_constants_unregularised_tsallis}, we have
\begin{align}
\nonumber
&\min\left( \frac{(1-\discount)^3}{C_{f}^{(2)}}, \frac{\epsilon^2 d(\epsilon)(1-\discount)^6\sizebatch \initdistmin^2 \underline{\refpolicy}^2}{C_{f}^{(3)}} \right) 
\\ \nonumber
& \quad \geq \min\left( \frac{(1-\discount)^3}{ 96 \underline{\refpolicy}^{2\alpha-2}}, \frac{\epsilon^2 \alpha^2 (1-\discount)^6\sizebatch \initdistmin^2 \underline{\refpolicy}^2}{13824 \underline{\refpolicy}^{5\alpha-5}|\log(\underline{\refpolicy})|}\exp_{\alpha} \left(-\frac{384 |\log(\underline{\refpolicy})|}{\epsilon \alpha^2 (1-\discount)^3 \initdistmin}\right)^{4-2\alpha}  \right)
\\ \label{eq:bound_step_size_tsallis_unregularised}
& \quad \geq \frac{\epsilon^2 \alpha^2 (1-\discount)^6\sizebatch \initdistmin^2 \underline{\refpolicy}^2}{13824 \underline{\refpolicy}^{5\alpha-5}|\log(\underline{\refpolicy})|}\exp_{\alpha} \left(-\frac{384 |\log(\underline{\refpolicy})|}{\epsilon \alpha^2 (1-\discount)^3 \initdistmin}\right)^{4-2\alpha}  \eqsp,
\end{align}
where in the last inequality, we used the condition on $\sizebatch$ introduced in \eqref{eq:condition_batch_size_unregularised_tsallis}. Hence our condition on the step-size ensures that the one assumed in \Cref{thm:convergence_rate_unregularised} is satisfied. Next, using \eqref{eq:bound_all_constants_unregularised_tsallis} and \eqref{eq:bound_step_size_tsallis_unregularised} yields
\begin{align*}
&\frac{16 C_{f}^{(1)} \ell(\epsilon)}{\epsilon d(\epsilon) (1-\discount)^2 \initdistmin^2   \underline{\refpolicy}^2 }\max\left( \frac{C_{f}^{(2)}}{(1-\discount)^3}, \frac{C_{f}^{(3)}}{\epsilon^2 d(\epsilon)(1-\discount)^6\sizebatch\initdistmin^2 \underline{\refpolicy}^2 }\right)
\\ &\quad
\leq \frac{16 C_{f}^{(1)} \ell(\epsilon)}{\epsilon d(\epsilon) (1-\discount)^2 \initdistmin^2   \underline{\refpolicy}^2 } \cdot \frac{13824 \underline{\refpolicy}^{5\alpha-5}|\log(\underline{\refpolicy})|}{\epsilon^2 \alpha^2 (1-\discount)^6\sizebatch \initdistmin^2 \underline{\refpolicy}^2}\exp_{\alpha} \left(-\frac{384 |\log(\underline{\refpolicy})|}{\epsilon \alpha^2 (1-\discount)^3 \initdistmin}\right)^{2\alpha-4} 
\\ &\quad
\leq \frac{1024 \underline{\refpolicy}^{2\alpha-2} |\log(\underline{\refpolicy})| \ell(\epsilon)}{\epsilon \alpha^2 (1-\discount)^2 \initdistmin^2   \underline{\refpolicy}^2 }  \cdot \frac{13824 \underline{\refpolicy}^{5\alpha-5}|\log(\underline{\refpolicy})|}{\epsilon^2 \alpha^2 (1-\discount)^6\sizebatch \initdistmin^2 \underline{\refpolicy}^2}\exp_{\alpha} \left(-\frac{384 |\log(\underline{\refpolicy})|}{\epsilon \alpha^2 (1-\discount)^3 \initdistmin}\right)^{4\alpha-8} 
\\ &\quad
\leq \frac{12^{7} \underline{\refpolicy}^{7\alpha-7}|\log(\underline{\refpolicy})|^2 }{\epsilon^3 \alpha^4 (1-\discount)^8 \initdistmin^4   \underline{\refpolicy}^4 \sizebatch } \exp_{\alpha} \left(-\frac{384 |\log(\underline{\refpolicy})|}{\epsilon \alpha^2 (1-\discount)^3 \initdistmin}\right)^{4\alpha-8}  \log\left( \frac{6(\regvaluefunc[\star](\initdist) - \regvaluefunc[\expandafter{\algparam[0]}](\initdist))}{\epsilon} \right) \eqsp,
\end{align*}
which proves that under our condition on $\niteration$ the one assumed by \Cref{thm:convergence_rate_unregularised} is satisfied.
\end{proof}
\begin{corollary}
Assume the same condition of \Cref{lem:sample_complexity_non_regularised_problem_tsallis}. For any $(1-\discount)^{-1}> \epsilon>0$ and $\alpha \in (0,1)$, denote respectively by $\niteration(\epsilon,\alpha)$, $\sizebatch(\epsilon, \alpha)$, and $\lentrunc(\epsilon,\alpha)$, the thresholds set in \Cref{lem:sample_complexity_non_regularised_problem_tsallis} on $\niteration$, $\sizebatch$, and $\lentrunc$, to learn an $\epsilon$-solution of the unregularized problem.  Addtionnaly, denote by $\alpha^{\star}(\epsilon)$ the minimizer of $\niteration(\epsilon, \alpha) \sizebatch(\epsilon, \alpha) \lentrunc(\epsilon, \alpha)$. It holds that
\begin{align*}
\alpha^{\star}(\epsilon) =  \frac{11}{2} \cdot\frac{1}{\log(1/\epsilon)} +  o\left(\frac{1}{\log(1/\epsilon)}\right) \eqsp.
\end{align*}
Additionally, for $\epsilon < e^{-11}  $, it holds that
\begin{align*}
\niteration(\epsilon, \alpha^{\star}(\epsilon)) \sizebatch(\epsilon, \alpha^{\star}(\epsilon)) \lentrunc(\epsilon, \alpha^{\star}(\epsilon)) = \widetilde{O} \left(\frac{|\log(\underline{\refpolicy})|^{11} }{\epsilon^{12}  (1-\discount)^{36} \initdistmin^{14}   \underline{\refpolicy}^{11} } \log\left( \frac{(\regvaluefunc[\star](\initdist) - \regvaluefunc[\expandafter{\algparam[0]}](\initdist))}{\epsilon} \right) \right) \eqsp.
\end{align*}
\end{corollary}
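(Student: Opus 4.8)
The plan is to treat the product $\niteration \sizebatch \lentrunc$ from \Cref{lem:sample_complexity_non_regularised_problem_tsallis} as an explicit function of the two variables $\epsilon$ and $\alpha$, with $\underline{\refpolicy},\discount,\initdistmin$ held fixed, and to minimize it over $\alpha\in(0,1)$ by an asymptotic analysis in the large parameter $L \eqdef \log(1/\epsilon)$. First I would substitute the prescribed temperature $\temp = (1-\discount)\alpha^2\epsilon/(16|\log(\underline{\refpolicy})|)$ and use the closed forms $(f_\alpha^{\star})''(x) = \exp_\alpha(x)^{2-\alpha}$ and $\exp_\alpha(x) = (1+(\alpha-1)x)^{1/(\alpha-1)}$ recalled in the proof of \Cref{lem:simplified_constant_tsallis} to rewrite the sample complexity, after setting $\beta \eqdef 1-\alpha$, as
\[
N(\epsilon,\alpha) \;\approx\; \frac{\underline{\refpolicy}^{7\alpha-11}\,|\log(\underline{\refpolicy})|^3}{\epsilon^4\,\alpha^6\,(1-\discount)^{12}\,\initdistmin^5}\,\bigl(1+\beta|y|\bigr)^{4(1+\beta)/\beta},
\]
where $|y| = 384|\log(\underline{\refpolicy})|/(\epsilon\alpha^2(1-\discount)^3\initdistmin)$, so that $\beta|y| \approx \beta c/(\epsilon\alpha^2)$ with $c$ an $\epsilon$-independent constant. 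Passing to $\log N$ is natural here, since the minimizer is unchanged while the product structure becomes additive.

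The second step is to isolate, inside $\log N(\epsilon,\alpha)$, the terms that genuinely depend on $\alpha$ and are of leading order in $L$. Using $\log(1+\beta|y|) = L - 2\log\alpha + \log(\beta c) + o(1)$ together with the expansion $4(1+\beta)/\beta = 8 + 4\alpha + O(\alpha^2)$ as $\alpha\downarrow 0$, one obtains
\[
\log N(\epsilon,\alpha) \;=\; 12L + C(\epsilon) \;-\; 22\log\alpha \;+\; 4\alpha L \;+\; r(\alpha),
\]
where $12L = 4L + 8L$ collects the $\epsilon^{-4}$ and the $\beta\to1$ limit of $(1+\beta|y|)^{4(1+\beta)/\beta}$, $C(\epsilon)$ is $\alpha$-independent, the coefficient $22 = 6 + 8\cdot 2$ combines the explicit $\alpha^{-6}$ prefactor with the $\alpha^{-16}$ coming from $|y|^{4(1+\beta)/\beta}\big|_{\beta=1}$, and $r(\alpha)$ gathers all lower-order contributions (the terms $-7\alpha|\log(\underline{\refpolicy})|$, $-8\alpha\log\alpha$, $4\alpha\log(\beta c)$, and the $O(\alpha^2 L)$ remainder). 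Minimizing the leading piece $g(\alpha) \eqdef -22\log\alpha + 4\alpha L$ gives $g'(\alpha) = -22/\alpha + 4L = 0$, i.e. $\alpha_0 = 11/(2L)$, which is exactly where the constant $11/2$ originates.

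The main obstacle is to promote this leading-order stationary point to a genuine asymptotic expansion $\alpha^\star(\epsilon) = 11/(2L) + o(1/L)$ of the true minimizer of $g+r$. Here I would exploit that $g$ is \emph{strongly} curved at the relevant scale: $g''(\alpha_0) = 22/\alpha_0^2 = 88L^2/121 \approx L^2$, whereas $r'$ evaluated near $\alpha_0$ is only $O(\log L)$, its dominant piece being $-8\log\alpha_0 \approx 8\log L$ and the remaining terms being $O(1)$. A quantitative implicit-function / stability argument then yields a shift $|\alpha^\star - \alpha_0| \lesssim |r'(\alpha_0)|/g''(\alpha_0) = O(\log L)/L^2 = o(1/L)$, valid once $\epsilon < e^{-11}$ so that $\alpha_0 < 1/2$ sits safely inside $(0,1)$ and the remainder estimates hold uniformly on a shrinking neighbourhood of $\alpha_0$. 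The delicate bookkeeping is precisely verifying that no term hidden in $r$ scales like $L$ near $\alpha_0$ (so that the curvature really dominates), which requires tracking each factor of the corollary's bound and its $\alpha$-derivative uniformly.

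Finally I would substitute $\alpha = \alpha^\star(\epsilon)$ back into $N$. Since $\alpha^\star\to 0$ we have $\beta\to 1$, so the exponent $4(1+\beta)/\beta\to 8$ and the $\epsilon$-dependence becomes $\epsilon^{-4}\cdot\epsilon^{-8} = \epsilon^{-12}$; the factor $\alpha^{-22} = (2L/11)^{22} = \mathrm{polylog}(1/\epsilon)$ is absorbed into $\widetilde{O}$, the $o(1/L)$ correction to the exponent contributes only a factor $\exp(O(\log L/L))\to 1$, and collecting the powers of $1/(1-\discount)$, $1/\initdistmin$, and $1/\underline{\refpolicy}$ from the explicit prefactor together with the eight copies produced by $|y|^8$ yields the announced $\epsilon^{-12}\,\mathrm{poly}(1/(1-\discount),1/\initdistmin,1/\underline{\refpolicy})$ bound up to logarithmic factors.
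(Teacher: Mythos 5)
Your proposal is correct and follows essentially the same route as the paper: pass to $\log(\niteration\sizebatch\lentrunc)$, expand in the regime $\alpha\to 0$ to isolate the leading terms $-22\log\alpha + 4\alpha\log(1/\epsilon)$ (the paper keeps the exact exponent $\tfrac{8-4\alpha}{1-\alpha}$ and differentiates, obtaining the same $-\tfrac{22}{\alpha}+4\log(1/\epsilon)$ stationarity condition), solve for $\alpha_0 = \tfrac{11}{2\log(1/\epsilon)}$, and then substitute back using $\tfrac{8-4\alpha}{1-\alpha}\to 8$ to get $\epsilon^{-12}$ up to polylogarithmic factors. Your explicit curvature-versus-perturbation argument ($g''(\alpha_0)\asymp L^2$ dominating $r'(\alpha_0)=O(\log L)$) is a more quantitative rendering of the step the paper handles by simply asserting that $\partial\psi/\partial\alpha = o\bigl(1/(\alpha\log\tfrac{1}{\epsilon\alpha})\bigr)$ and setting the full derivative to zero, so it strengthens rather than departs from the paper's argument.
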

\begin{proof}
We first provide an equivalent of the $\alpha$ that optimises the sample complexity provided in \Cref{lem:sample_complexity_non_regularised_problem_tsallis} and then bound the sample complexity obtained by using this $\alpha$. 
\paragraph{Finding the best $\alpha$.}Firstly, note that using \Cref{lem:sample_complexity_non_regularised_problem_tsallis}, we have
\begin{align*}
\mathrm{S}(\epsilon,\alpha) &\eqdef \log \left(\niteration(\epsilon,\alpha) \sizebatch(\epsilon, \alpha) \lentrunc(\epsilon, \alpha) \right)  \\
&= \log\left(\frac{12^{7} \underline{\refpolicy}^{7\alpha-7}|\log(\underline{\refpolicy})|^2 }{\epsilon^3 \alpha^4 (1-\discount)^8 \initdistmin^4   \underline{\refpolicy}^4} \right)
\\
& + \frac{4\alpha -8}{\alpha-1}\log\left( 1+(1-\alpha)\left(\frac{384 |\log(\underline{\refpolicy})|}{\epsilon \alpha^2 (1-\discount)^3 \initdistmin}\right) \right)
\\
& + \log \left(\log\left( \frac{6(\regvaluefunc[\star](\initdist) - \regvaluefunc[\expandafter{\algparam[0]}](\initdist))}{\epsilon} \right) \right)
\\
&+ \log\left(\frac{1}{1-\discount} \log\left(\frac{19008 \underline{\refpolicy}^{4\alpha-2}|\log(\underline{\refpolicy})|}{ \epsilon \alpha^2(1-\discount)^6  \initdistmin^2 \underline{\refpolicy}^2} \right)  + \frac{1540 |\log(\underline{\refpolicy})|}{\epsilon \alpha^2 (1-\discount)^4 \initdistmin}\right) \eqsp.
\end{align*}
Firstly, observe that for any function $k(\epsilon)$ which does converge to a different value from $0$, we have
\[
\lim_{\varepsilon \to 0} \frac{S(\varepsilon,1/\log(1/\varepsilon))}{S(\varepsilon, k(\varepsilon))} < 1 \eqsp,
\]
which establishes that $\alpha^{\star}(\epsilon) \to 0$. This allows, to rewrite $\mathrm{S}(\epsilon,\alpha)$ as
\begin{align*}
\mathrm{S}(\epsilon,\alpha) = \log\left(\frac{1}{\epsilon^4 \alpha^6}\right) +\frac{8-4\alpha}{1-\alpha}\log\left(\frac{1}{\epsilon \alpha^2} \right) + \psi(\alpha, \epsilon) \eqsp,
\end{align*}
where $\psi(\alpha, \epsilon)$ is defined as
\begin{align*}
\psi(\alpha, \epsilon)  &= \log\left(\frac{12^{7} \underline{\refpolicy}^{7\alpha-7}|\log(\underline{\refpolicy})|^2 }{(1-\discount)^8 \initdistmin^4   \underline{\refpolicy}^4} \right)
\\
& + \frac{4\alpha -8}{\alpha-1}\left[\log\left( 1+(1-\alpha)\left(\frac{384 |\log(\underline{\refpolicy})|}{\epsilon \alpha^2 (1-\discount)^3 \initdistmin}\right) \right) - \log\left(\frac{1}{\epsilon \alpha^2} \right)   \right]
\\
& + \log \left(\log\left( \frac{6(\regvaluefunc[\star](\initdist) - \regvaluefunc[\expandafter{\algparam[0]}](\initdist))}{\epsilon} \right) \right)
\\
&+ \log\left(\frac{\epsilon \alpha^2}{1-\discount} \log\left(\frac{19008 \underline{\refpolicy}^{4\alpha-2}|\log(\underline{\refpolicy})|}{ \epsilon \alpha^2(1-\discount)^6  \initdistmin^2 \underline{\refpolicy}^2} \right)  + \frac{1540 |\log(\underline{\refpolicy})|}{(1-\discount)^4 \initdistmin}\right) \eqsp.
\end{align*}
Importantly, observe that $\psi(\alpha, \epsilon)$ is dominated by $\log(1/\epsilon)$ when $(\alpha,\epsilon) \to 0$ and that
\begin{align*}
\frac{\partial \psi(\alpha, \epsilon)}{\partial \alpha} = o\left(\frac{1}{\alpha\log(\frac{1}{\epsilon\alpha})}\right) \eqsp.
\end{align*}
Computing the derivative of this function with respect to $\alpha$ yields
\begin{align}
\frac{\partial \mathrm{S}(\epsilon,\alpha)}{ \partial \alpha} &= \frac{-6}{\alpha} +  4 \frac{1}{(1-\alpha)^2}\log\left(\frac{1}{\epsilon}\right) + 4 \frac{1}{(1-\alpha)^2} \log\left(\frac{1}{\alpha^2}\right) -\frac{8}{\alpha}\left( 1 + \frac{1}{1-\alpha}\right) + \frac{\partial \psi (\epsilon,\alpha)}{ \partial \alpha}
\\
&= \frac{-22}{\alpha} + 4\log\left(\frac{1}{\epsilon}\right) + o\left(\frac{1}{\alpha\log(\frac{1}{\epsilon\alpha})}\right) \eqsp.
\end{align}
As 
\begin{align*}
\frac{\partial \mathrm{S}(\epsilon,\alpha)}{ \partial \alpha}\bigg|_{\alpha = \alpha^{\star}(\epsilon)} =0 \eqsp,
\end{align*}
Then this implies that 
\begin{align*}
\alpha^{\star}(\epsilon) =  \frac{11}{2} \cdot\frac{1}{\log(1/\epsilon)} +  o\left(\frac{1}{\log(1/\epsilon)}\right)\eqsp. 
\end{align*}
Next, we provide a bound on the sample complexity given by this $\alpha^{\star}(\epsilon)$.
\paragraph{Computing the sample complexity.} Firstly, note that for $\alpha\leq 1/2$ and $\epsilon<1$, we have
\begin{align}
\nonumber
&\niteration(\epsilon,\alpha) \sizebatch(\epsilon, \alpha) \lentrunc(\epsilon, \alpha) 
\\& \nonumber\leq \frac{12^{11} \underline{\refpolicy}^{7\alpha-7}|\log(\underline{\refpolicy})|^3 }{\epsilon^4 \alpha^6 (1-\discount)^12 \initdistmin^5   \underline{\refpolicy}^4 } \exp_{\alpha} \left(-\frac{384 |\log(\underline{\refpolicy})|}{\epsilon \alpha^2 (1-\discount)^3 \initdistmin}\right)^{4\alpha-8}  \log\left( \frac{6(\regvaluefunc[\star](\initdist) - \regvaluefunc[\expandafter{\algparam[0]}](\initdist))}{\epsilon} \right)
\\& \label{eq:bound_sample_complexity}\leq  \frac{\underline{\refpolicy}^{7\alpha-7}|\log(\underline{\refpolicy})|^3 }{\epsilon^4 \alpha^{6} (1-\discount)^{12} \initdistmin^5   \underline{\refpolicy}^4 } \left(\frac{|\log(\underline{\refpolicy})|}{\epsilon \alpha^2 (1-\discount)^3 \initdistmin}\right)^{(8-4\alpha)/(1-\alpha)} \!\!\!\!\!\!\!\!\log\left( \frac{6(\regvaluefunc[\star](\initdist) - \regvaluefunc[\expandafter{\algparam[0]}](\initdist))}{\epsilon} \right) \eqsp.
\end{align}
Next, note that for $\epsilon< e^{-11}$, we have $\alpha = \frac{11}{2 \log(1/\epsilon)} \leq 1/2$. Thus, using that $1/(1-\alpha) \leq 1+ 2\alpha$ yield
\begin{align*}
\left(\frac{1}{\epsilon \alpha^2}\right)^{(8-4\alpha)/(1-\alpha)} = \exp\left(\frac{8-4\alpha}{1-\alpha} \cdot \log\left( \frac{1}{\epsilon \alpha^2}\right)\right) \leq \exp\left((8+8\alpha)\cdot \log\left( \frac{1}{\epsilon \alpha^2}\right)\right) \lesssim \frac{1}{\epsilon^8 \alpha^{16}} \eqsp.
\end{align*}
Plugging in the previous bound in \eqref{eq:bound_sample_complexity} concludes the proof.
\end{proof}

\section{Discussion on unregularized Policy Gradient with \texorpdfstring{$f$}{f}-SoftArgmax Parameterization}
\label{sec:app:unregularised_PG}
In this section, we show that it is possible to derive Non-Uniform Łojasiewicz inequalities on the unregularized objective $\valuefunc[\param](\initdist) \eqdef \valuefunc[\expandafter{\cpolicy{\param}}](\initdist) $ under $f$-SoftArgmax Parameterization in the bandit setting. We expect the analysis to extend to the RL setting using similar arguments to those of \cite{mei2020global,mei2020escaping,liu2025linear}.
\begin{theorem}
Consider the bandit case, \ie $\nstates = 1$. Assume that, for some $\underline{\refpolicy}>0$, $f$ and $\refpolicy$ satisfy \assumptionfref\ and \assumptionpref\, respectively. We also assume that the function $1/f''$ is convex and that the initial distribution $\initdist$ satisfies \assumptionmdp. Then, it holds that
\begin{align*}
\left\| \nabla \valuefunc[\param](\initdist)\right\|_{2}^{2} \geq \frac{\weights[\param](\action^{\star})^2 }{\ffirstregconstant}\frac{1}{\valuefunc[\star](\initdist) - \valuefunc[\refpolicy](\initdist)}\cdot f''\left(\frac{\valuefunc[\star](\initdist) - \valuefunc[\param](\initdist)}{\valuefunc[\star](\initdist) - \valuefunc[\refpolicy](\initdist)}\right) \eqsp,
\end{align*}
where $\valuefunc[\star](\initdist) = \max\limits_{\policy \in \pA^{\S}} \valuefunc[\policy](\initdist)$.
\end{theorem}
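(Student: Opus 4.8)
The plan is to follow the same two-step template as the regularized bound of \Cref{thm:pl_objective_app} (lower bound the gradient norm, then match it to the suboptimality gap), except that here the nonlinearity tying the two together comes entirely from the \emph{parameterization} rather than from a regularizer. First I would specialize the gradient formula of \Cref{lem:expression_gradient_main} to the bandit: with a single state the visitation factor $\visitation[\initdist][\param]$ equals one, and setting $\temp=0$ (so that $\regvaluefunc[\param]$ reduces to $\valuefunc[\param]=\pscal{\cpolicy{\param}}{\rewardMDP}/(1-\discount)$) gives $\nabla \valuefunc[\param]=\tfrac{\firstsum[\param]}{1-\discount}H(\weights[\param])\,\regqfunc[\param]$. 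Since $H(u)\Ind_{\nactions}=0$ for every $u\in\pA$, the Q-function may be replaced by $\rewardMDP$, yielding $[\nabla\valuefunc[\param]]_{\action}=\tfrac{\firstsum[\param]}{1-\discount}\weights[\param](\action)\bigl(\rewardMDP(\action)-\pscal{\weights[\param]}{\rewardMDP}\bigr)$ and therefore $\norm{\nabla\valuefunc[\param]}[2]^2=\tfrac{\firstsum[\param]^2}{(1-\discount)^2}\sum_{\action}\weights[\param](\action)^2\bigl(\rewardMDP(\action)-\pscal{\weights[\param]}{\rewardMDP}\bigr)^2$.

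Second, I would extract a lower bound by retaining only the coordinate of the optimal arm $\action^{\star}=\argmax_\action \rewardMDP(\action)$, exactly as in Lemma~23 of \citet{mei2020global} (\Cref{lem:spectral_norm_H}): $\norm{\nabla\valuefunc[\param]}[2]\ge [\nabla\valuefunc[\param]]_{\action^{\star}}=\tfrac{\firstsum[\param]}{1-\discount}\weights[\param](\action^{\star})\bigl(\rewardMDP(\action^{\star})-\pscal{\weights[\param]}{\rewardMDP}\bigr)$, the last factor being nonnegative because $\action^{\star}$ is the maximiser. The factor $\weights[\param](\action^{\star})^2$ reproduces the weight in the target statement, while the remaining constants are controlled by $\firstsum[\param]\in[\ffirstregconstant,\sfirstregconstant]$ from \Cref{lem:bound_important_quantities} (the upper bound $\firstsum[\param]\le\sfirstregconstant$ following from $1/(uf''(u))\le\sfirstregconstant$). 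What remains is to convert the reward discrepancy $\rewardMDP(\action^{\star})-\pscal{\weights[\param]}{\rewardMDP}$ into the normalized gap $\tfrac{\valuefunc[\star]-\valuefunc[\param]}{\valuefunc[\star]-\valuefunc[\refpolicy]}$.

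For this translation I would use that, in the bandit, the gap is a \emph{linear} functional of the policy: $\valuefunc[\star]-\valuefunc[\param]=\tfrac{1}{1-\discount}\sum_{\action}\cpolicy{\param}(\action)\bigl(\rewardMDP(\action^{\star})-\rewardMDP(\action)\bigr)$ and $\valuefunc[\star]-\valuefunc[\refpolicy]=\tfrac{1}{1-\discount}\sum_\action\refpolicy(\action)\bigl(\rewardMDP(\action^{\star})-\rewardMDP(\action)\bigr)$, so the normalized gap is a convex combination of arm-gaps. The curvature of $f$ enters through the soft-argmax identity $\cpolicy{\param}(\action)=\refpolicy(\action)[f']^{-1}(\param(\action)-\mu_\param)$ of \Cref{lem:derivative_policy}, together with $\weights[\param](\action)=\refpolicy(\action)/\bigl(\firstsum[\param]\,f''(\cpolicy{\param}(\action)/\refpolicy(\action))\bigr)$ and the conjugate relation $(f^{\star})''=1/\bigl(f''\circ[f']^{-1}\bigr)$; it is this inverse-function structure that turns the $\weights[\param]$-weighted reward discrepancy into a quantity governed by the curvature of $f$ at the normalized gap. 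The assumed convexity of $1/f''$ is the handle that lets me compare the multi-arm curvature sum $\firstsum[\param]=\sum_\action\refpolicy(\action)\,(1/f'')(\cpolicy{\param}(\action)/\refpolicy(\action))$ to its value at a single effective arm (via Jensen), thereby reducing the general bandit to the two-arm comparison between $\action^{\star}$ and the aggregated suboptimal mass. The main obstacle is exactly this reconciliation: in the KL case $\weights[\param]\equiv\cpolicy{\param}$, so the gradient and the gap are weighted identically and the argument is immediate, whereas for a general generator the gradient is weighted by $\weights[\param]$ while the gap is weighted by $\cpolicy{\param}$. Extracting the correct curvature factor in the normalized gap from this mismatch, with uniform control of the constants via $\ffirstregconstant$, $\sfirstregconstant$ and the convexity of $1/f''$, is where the delicate part of the argument lies.
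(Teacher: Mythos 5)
Your first two steps are the same as the paper's proof of this theorem: specialize \Cref{lem:expression_gradient} to the bandit, drop all but the $\action^{\star}$ coordinate of the gradient, and reduce everything to lower-bounding
\[
S\eqdef\sum_{\fdiff\in\A}\weights[\param](\fdiff)\Delta(\fdiff)\eqsp,\qquad \Delta(\fdiff)\eqdef\rewardMDP(\action^{\star})-\rewardMDP(\fdiff)\ge 0\eqsp,
\]
(your version is in fact more careful than the paper's, which works with $\discount=0$ and silently drops the $\firstsum[\param]$ prefactor). The genuine gap is your third step. You correctly isolate the difficulty---the gradient is weighted by $\weights[\param]$ while the gap is weighted by $\cpolicy{\param}$---but you never supply the mechanism that resolves it, and you concede that this is ``where the delicate part of the argument lies.'' The paper closes this step with a change of measure that is absent from your proposal: using $\weights[\param](\fdiff)=\refpolicy(\fdiff)/\bigl(\firstsum[\param]\,f''(\cpolicy{\param}(\fdiff)/\refpolicy(\fdiff))\bigr)$, write
\[
S=\frac{\sum_{\sdiff\in\A}\Delta(\sdiff)\refpolicy(\sdiff)}{\firstsum[\param]}\;
\PE_{\fdiff\sim\gamma}\biggl[\frac{1}{f''\bigl(\cpolicy{\param}(\fdiff)/\refpolicy(\fdiff)\bigr)}\biggr]\eqsp,
\qquad
\gamma(\fdiff)\eqdef\frac{\Delta(\fdiff)\refpolicy(\fdiff)}{\sum_{\sdiff\in\A}\Delta(\sdiff)\refpolicy(\sdiff)}\eqsp,
\]
and apply Jensen's inequality for the convex map $1/f''$ \emph{under the gap-tilted measure} $\gamma$. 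The entire point of this particular tilting is the identity
\[
\PE_{\fdiff\sim\gamma}\biggl[\frac{\cpolicy{\param}(\fdiff)}{\refpolicy(\fdiff)}\biggr]
=\frac{\sum_{\fdiff\in\A}\Delta(\fdiff)\cpolicy{\param}(\fdiff)}{\sum_{\fdiff\in\A}\Delta(\fdiff)\refpolicy(\fdiff)}
=\frac{\valuefunc[\star](\initdist)-\valuefunc[\param](\initdist)}{\valuefunc[\star](\initdist)-\valuefunc[\refpolicy](\initdist)}\eqsp,
\]
i.e.\ the tilted mean of the likelihood ratios is exactly the normalized gap, because $\refpolicy(\fdiff)\cdot\cpolicy{\param}(\fdiff)/\refpolicy(\fdiff)=\cpolicy{\param}(\fdiff)$. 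This single identity is what converts the $\refpolicy$-weighted curvature average back into the $\cpolicy{\param}$-weighted gap; without it the proof does not close.

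The tools you propose instead cannot do this job. Jensen applied to $\firstsum[\param]=\sum_{\action\in\A}\refpolicy(\action)\,(1/f'')\bigl(\cpolicy{\param}(\action)/\refpolicy(\action)\bigr)$---a $\refpolicy$-average---only yields $\firstsum[\param]\ge 1/f''(1)$, since $\PE_{\refpolicy}[\cpolicy{\param}/\refpolicy]=1$; that is a gap-independent constant (it is precisely how $\ffirstregconstant=1$ is established for Tsallis in \Cref{lem:simplified_constant_tsallis}), so no \L{}ojasiewicz-type inequality can come out of it. The conjugate relation $(f^{\star})''=1/(f''\circ[f']^{-1})$ is likewise a red herring: no inversion of $f'$ occurs anywhere in the argument, and the curvature enters only through $f''$ evaluated at the tilted mean. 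Two further points for when you complete the argument. First, the constants are less delicate than you anticipate: the $\firstsum[\param]$ in the gradient prefactor cancels exactly against the $1/\firstsum[\param]$ inside $S$, so neither $\ffirstregconstant$ nor $\sfirstregconstant$ is actually needed. Second, what this argument delivers is
\[
\norm{\nabla\valuefunc[\param](\initdist)}[2]^2\;\ge\;
\frac{\weights[\param](\action^{\star})^2\,\bigl(\valuefunc[\star](\initdist)-\valuefunc[\refpolicy](\initdist)\bigr)^2}{f''(g)^{2}}\eqsp,
\qquad
g\eqdef\frac{\valuefunc[\star](\initdist)-\valuefunc[\param](\initdist)}{\valuefunc[\star](\initdist)-\valuefunc[\refpolicy](\initdist)}\eqsp,
\]
which for the KL generator recovers $\cpolicy{\param}(\action^{\star})^2\bigl(\valuefunc[\star](\initdist)-\valuefunc[\param](\initdist)\bigr)^2$, the bandit \L{}ojasiewicz inequality of \citet{mei2020global}. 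The inequality displayed in the statement (one power of $f''$ in the numerator, divided by $\ffirstregconstant$ and a single power of $\valuefunc[\star](\initdist)-\valuefunc[\refpolicy](\initdist)$) is not what this computation---or any correct one---produces: for KL it would assert $\norm{\nabla\valuefunc[\param](\initdist)}[2]^2\gtrsim\cpolicy{\param}(\action^{\star})^2/\bigl(\valuefunc[\star](\initdist)-\valuefunc[\param](\initdist)\bigr)$, whose right-hand side diverges as the gap vanishes while the left-hand side tends to zero. So treat the displayed form as containing typos and aim for the bound above, rather than trying to force your constants into the printed shape.
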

\begin{proof}
Subsequently, we drop the dependency on the state for more clarity. Using \Cref{lem:expression_gradient} (with $\discount=0$ and $\nstates = 1$), for any $\action \in \A$, we have
\begin{align*}
\frac{\partial \valuefunc[\param](\initdist)}{\partial\param(\action)} = \weights[\param](\action) \cdot (\rewardMDP(\action) - \sum_{\fdiff} \weights[\param](\fdiff) \rewardMDP(\fdiff)).
\end{align*}
Denote by $\action^{\star}$ any optimal action, and $\rewardMDP^{\star} = \rewardMDP(\action^{\star})$. We have
\begin{align*}
    \norm{\nabla \valuefunc[\param](\initdist)}[2]^2 = \sum_{\action} \weights[\param](\action)^2 \cdot \left(\rewardMDP(\action) - \sum_{\fdiff} \weights[\param](\fdiff) \rewardMDP(\fdiff)\right)^2 
    \geq \weights[\param](\action^{\star})^2  \left( \sum_{\fdiff} \weights[\param](\fdiff)(\rewardMDP^{\star} - \rewardMDP(\fdiff)) \right)^2\,,
\end{align*}
Using that $\Delta(\action) = \rewardMDP^\star - \rewardMDP(\action) \geq 0$, and also 
\[
    \weights[\param](\action) = \frac{1}{\firstsum[\param]} \cdot \frac{\refpolicy(a)}{f''(\cpolicy{\param}(a) / \refpolicy(a))}  \eqsp, \quad  \firstsum[\param]  \eqdef \sum_{\fdiff \in \A} \frac{\refpolicy(a)}{f''(\cpolicy{\param}(\fdiff) / \refpolicy(\fdiff))} \eqsp,
\]
yields
\begin{align*}
    \sum_{\fdiff} \weights[\param](\fdiff) \Delta(\fdiff) &= \frac{1}{\firstsum[\param]} \cdot \sum_{\fdiff \in \A} \Delta(\fdiff) \refpolicy(\fdiff) \cdot \frac{1}{f''(\cpolicy{\param}(\fdiff) / \refpolicy(\fdiff))}.
\end{align*}
Let's introduce a distribution $\gamma(\action) = \Delta(\action) \refpolicy(\action) / (\sum_{\fdiff\in \A} \Delta(\fdiff) \refpolicy(\fdiff))$, then we have
\begin{align*}
\sum_{\fdiff \in \A} \weights[\param](\fdiff) \Delta(\fdiff) = \frac{1}{\firstsum[\param] \cdot \PE_{\sdiff \sim \refpolicy}[\Delta(\sdiff)]} \PE_{\sdiff \sim \gamma}\left[ 1/(f''(\cpolicy{\param}(\sdiff) / \refpolicy(\sdiff))) \right]\,.
\end{align*}
Next, we use that a map $x \mapsto 1/f''(x)$ is convex, and thus was have
\begin{align*}
    \PE_{\sdiff \sim \gamma}\left[ 1/(f''(\cpolicy{\param}(\sdiff) / \refpolicy(\sdiff))) \right] &\geq 1/f''\left( \PE_{\sdiff \sim \gamma}\left[ \cpolicy{\param}(\sdiff) / \refpolicy(\sdiff)) \right] \right) \\
    &= 1/f''\left( \frac{\sum_{\sdiff \in \A} \Delta(\sdiff) \cpolicy{\param}(\sdiff)}{\sum_{\sdiff \in \A} \Delta(\sdiff) \refpolicy(\sdiff)} \right).
\end{align*}
Overall, we have
\[
    \sum_{\fdiff \in \A} \cpolicy{\param}(\fdiff) \Delta(\fdiff) \geq \frac{1}{\firstsum[\param]} \frac{1}{\left( \sum_{\sdiff \in \A} \Delta(\sdiff) \refpolicy(\sdiff) \right) f''\left( \frac{\sum_{\sdiff \in \A} \Delta(\sdiff) \cpolicy{\param}(\sdiff)}{\sum_{\sdiff \in  \A} \Delta(\sdiff) \refpolicy(\sdiff)}  \right)} \eqsp.
\]
Finally, using that $\firstsum[\param] \leq \ffirstregconstant$ concludes the proof.
\end{proof}
Similarly, to \cite{mei2020global,liu2024elementaryanalysispolicygradient,liu2025linear}, this Łojasiewicz inequality depends on the probability of the optimal action, which is very restrictive. Although extending the analysis of \cite{mei2020global,liu2024elementaryanalysispolicygradient,liu2025linear} in the deterministic setting is possible, addressing the stochastic setting for this type of Łojasiewicz inequality appears very challenging. This justifies adding a regularizer to the objective to ensure better PL inequalities, and on which the minimal coefficient can be lower bounded on the trajectory by leveraging a proper projection operator.

\section{Links with Mirror Descent}
\label{sec:app:mirror-descent-rmq}

We stress that our proposed method is fundamentally different from mirror descent.
Let us define a mapping $\Phi(\policy) = \sum_{s \in \S}\divergence[\policy(\cdot|s)][\refpolicy(\cdot |s)]$. 
For the functions $f$ that we consider, $\Phi$ is Legendre on the positive orthant and separable across states \citep{bubeck2015convex}. In this case, the $f$-regularized value function $\regvaluefunc[\policy](\rho)$ can be optimized directly in the policy space via the Lazy Mirror Descent algorithm (or dual averaging; see \citet{nesterov2009primal,xiao2009dual,juditsky2023unifying}) with $\Phi$ as mirror map. Denoting by $\policy_{t}$ the policy at step $t$ and $\widetilde\policy_t$ by the unnormalized policy at step $t$, the lazy MD updates reads:
\begin{equation}\label{eq:lazy_md}
    \nabla \Phi(\widetilde\policy_{t+1}) = \nabla \Phi(\widetilde\policy_{t}) + \eta \nabla_{\policy}\regvaluefunc[\policy](\rho) \vert_{\policy= \policy_t}\,,\quad \policy_{t+1} = \argmin_{\policy \in \policyspace} B_{\Phi}(\policy \Vert \widetilde\policy_{t+1})\,.
\end{equation}
where $\policyspace = \pA^{|\S|}$ is a policy space and $B_{\Phi}(\policy \Vert \policy') = \Phi(\policy) - \Phi(\policy') - \langle \nabla \Phi(\policy'), \policy - \policy' \rangle $ is the corresponding Bregman divergence. Since $\Phi$ is separable over states, the Bregman projection can be written state-wise as $\policy_{t+1}(\cdot | \state) = \softargmax(\nabla\Phi(\widetilde\policy_{t+1})(\state, \cdot), \refpolicy(\cdot | \state))$.

By denoting $\param_t = \nabla \Phi(\widetilde\policy_{t})$, one obtains updates that resemble those of \eqref{eq:def_local_step_general} (after the removal of $\projop$), with one important difference: the gradient in \eqref{eq:lazy_md} is taken with respect to the policy $\pi$ whereas in \eqref{eq:def_local_step_general} it is computed w.r.t the "dual" parameter $\theta$ (in the MD terminology). Even more important, the update \eqref{eq:lazy_md}  can be expressed as, by the chain rule %
\[
    \textstyle \algparam[t+1] = \algparam[t] + \eta \left[ \frac{\partial \cpolicy{\param}}{\partial \param} \big|_{\param=\algparam[t]} \right]^{-1} \nabla_{\param} \regobjective(\algparam[t])\,,
\]
which have an additional preconditioning term given by the inverse of the policy Jacobian. 

A crucial feature of \eqref{eq:def_local_step_general} is that it performs a gradient ascent in the "dual" space directly. This algorithm can be extended in the non-tabular setting directly, by parameterizing the function $\param(\state,\action)$, allowing extensions to deep RL. 
This is in contrast with Lazy-MD methods \citep{nesterov2009primal,xiao2009dual,juditsky2023unifying}, due to preconditioning, which cannot be expressed as direct parameter-space gradient steps. This remark has several important implications, which we leave for future work.

\section{Technical Lemmas}
\begin{lemma}[Lemma~1.2.3 in \citet{nesterov2004introductory}]
\label{lem:smoothness_inequality}
Let \(f:\mathbb{R}^d\to\mathbb{R}\) be twice continuously differentiable. 
Suppose there exists \(L \ge 0\) such that for all \(x \in \mathbb{R}^d\) and \(v \in \mathbb{R}^d\),
\[
|v^\top \nabla^2 f(x)\,v| \;\le\; L \|v\|^2.
\]
Then \(f\) has an \(L\)-Lipschitz continuous gradient (i.e., \(f\) is \(L\)-smooth); in particular,
\[
\|\nabla f(y) - \nabla f(x)\| \;\le\; L\|y - x\|,
\]
and
\[
f(y) \;\ge\; f(x) +\pscal{\nabla f(x)}{y - x} - \tfrac{L}{2}\|y - x\|^2
\]
for all \(x,y \in \mathbb{R}^d\).
\end{lemma}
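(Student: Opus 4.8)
The plan is to derive both conclusions from a single operator-norm bound on the Hessian, obtained along the segment joining $x$ and $y$ via the fundamental theorem of calculus. The first and only substantive step is to translate the quadratic-form hypothesis $|v^\top \nabla^2 f(x)\,v| \le L\norm{v}[2]^2$ into the operator-norm bound $\norm{\nabla^2 f(x)}[2] \le L$. Because $f$ is twice continuously differentiable, Schwarz's theorem guarantees that $\nabla^2 f(x)$ is symmetric, and for a symmetric matrix $M$ one has $\norm{M}[2] = \sup_{v \neq 0} |v^\top M v|/\norm{v}[2]^2$. The hypothesis therefore gives $\norm{\nabla^2 f(x)\,w}[2] \le L\norm{w}[2]$ for every $w \in \rset^d$ and every $x$.

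For the Lipschitz-gradient claim, I would use the integral representation
\[
\nabla f(y) - \nabla f(x) = \int_0^1 \nabla^2 f\big(x + t(y-x)\big)\,(y-x)\,\rmd t \eqsp,
\]
valid coordinatewise by the fundamental theorem of calculus. Bounding the norm of the integral by the integral of the norm and applying the operator-norm bound with $w = y-x$ yields $\norm{\nabla f(y) - \nabla f(x)}[2] \le \int_0^1 L\norm{y-x}[2]\,\rmd t = L\norm{y-x}[2]$.

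For the quadratic bound I would invoke the Lipschitz estimate just obtained rather than the Hessian directly. Setting $g(t) = f\big(x + t(y-x)\big)$ gives $f(y) - f(x) = \int_0^1 \pscal{\nabla f(x + t(y-x))}{y-x}\,\rmd t$, so subtracting $\pscal{\nabla f(x)}{y-x}$ produces $\int_0^1 \pscal{\nabla f(x+t(y-x)) - \nabla f(x)}{y-x}\,\rmd t$. By Cauchy--Schwarz and the Lipschitz bound the integrand is controlled in absolute value by $L t\norm{y-x}[2]^2$, and integrating over $t \in [0,1]$ gives $\big|f(y) - f(x) - \pscal{\nabla f(x)}{y-x}\big| \le \tfrac{L}{2}\norm{y-x}[2]^2$; retaining the appropriate sign yields the stated lower bound (and simultaneously the matching upper bound).

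The only genuinely nontrivial point is the passage from the Rayleigh-quotient hypothesis to the operator-norm control, which hinges on symmetry of the Hessian; everything afterwards is routine one-dimensional calculus. Since this is Nesterov's classical descent lemma, I do not anticipate any real obstacle, and I would simply take care to state the symmetry argument cleanly before invoking the two integral representations.
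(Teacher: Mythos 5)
Your proof is correct. There is no in-paper argument to compare against: the paper imports this statement verbatim as Lemma~1.2.3 of \citet{nesterov2004introductory} and uses it as a black box, giving no proof of its own. Your three steps --- passing from the Rayleigh-quotient hypothesis to the operator-norm bound $\norm{\nabla^2 f(x)}[2] \le L$ via symmetry of the Hessian, the integral representation $\nabla f(y)-\nabla f(x)=\int_0^1 \nabla^2 f(x+t(y-x))(y-x)\,\rmd t$, and the remainder estimate $\babs{f(y)-f(x)-\pscal{\nabla f(x)}{y-x}} \le \tfrac{L}{2}\norm{y-x}[2]^2$ obtained from Cauchy--Schwarz and the Lipschitz bound --- constitute the standard textbook proof, and each step is valid as written.
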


\begin{lemma}
\label{lem:bound_difference_stationary_visitation_measure}
Consider any two policies $\policy_i$, $i=1,2$. It holds that   
\begin{align*}
\norm{\visitation[\initdist][\policy_1] - \visitation[\initdist][\policy_2]}[1] \leq \frac{\discount}{1-\discount} \sup_{\state \in \S} \norm{\policy_{1}(\cdot|\state) - \policy_{2}(\cdot|\state)}[1] \eqsp.
\end{align*}
\end{lemma}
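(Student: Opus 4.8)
The plan is to pass to a matrix (resolvent) representation of the two discounted visitation measures and then exploit that the policy-induced state kernels are row-stochastic, hence $\ell_1$-contractive when acting on the right of a row vector. Writing $P_i \eqdef \kerMDP[\policy_i]$ for the state-to-state kernels induced by $\policy_i$ (as introduced in \Cref{sec:backgound}) and viewing $\visitation[\initdist][\policy_i]$ as a row vector indexed by $\S$, the series in \eqref{def:visitation_measure} admits the closed form $\visitation[\initdist][\policy_i] = (1-\discount)\,\initdist^\top (\Id - \discount P_i)^{-1}$, where the inverse exists because $\discount<1$ and each $P_i$ has spectral radius at most $1$.

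First I would apply the push-through (resolvent) identity $A^{-1}-B^{-1}=A^{-1}(B-A)B^{-1}$ with $A=\Id-\discount P_1$ and $B=\Id-\discount P_2$, so that $B-A=\discount(P_1-P_2)$. Factoring out the normalisation constant and recognising $(1-\discount)\,\initdist^\top(\Id-\discount P_1)^{-1}=\visitation[\initdist][\policy_1]$ yields the key identity
\[
\visitation[\initdist][\policy_1]-\visitation[\initdist][\policy_2]
= \discount\,\visitation[\initdist][\policy_1]\,(P_1-P_2)\,(\Id-\discount P_2)^{-1}\eqsp.
\]

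Next I would bound the $\ell_1$ norm of the right-hand side using two elementary facts about row vectors and row-stochastic matrices: (i) for any row-stochastic $M$, $\norm{vM}[1]\le\norm{v}[1]$, obtained by swapping the order of summation and using $\sum_{j}M_{ij}=1$; and (ii) consequently $\norm{v(\Id-\discount P_2)^{-1}}[1]\le \norm{v}[1]/(1-\discount)$ via the Neumann series $(\Id-\discount P_2)^{-1}=\sum_{t\ge0}\discount^t P_2^t$ together with (i). Applying (ii) with $v=\visitation[\initdist][\policy_1](P_1-P_2)$ gives $\norm{\visitation[\initdist][\policy_1]-\visitation[\initdist][\policy_2]}[1]\le \frac{\discount}{1-\discount}\,\norm{\visitation[\initdist][\policy_1](P_1-P_2)}[1]$. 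It then remains to estimate $\norm{\visitation[\initdist][\policy_1](P_1-P_2)}[1]$: since $\visitation[\initdist][\policy_1]$ is a probability distribution on $\S$, the triangle inequality bounds it by $\sum_{\state}\visitation[\initdist][\policy_1](\state)\,\norm{P_1(\cdot|\state)-P_2(\cdot|\state)}[1]$, and a second application of the same argument—now over the action kernel $\kerMDP$, using $P_i(\cdot|\state)=\sum_{\action}\kerMDP(\cdot|\state,\action)\policy_i(\action|\state)$—shows $\norm{P_1(\cdot|\state)-P_2(\cdot|\state)}[1]\le \norm{\policy_1(\cdot|\state)-\policy_2(\cdot|\state)}[1]$. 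Summing against the probability weights $\visitation[\initdist][\policy_1](\state)$ collapses the convex combination into $\sup_{\state}\norm{\policy_1(\cdot|\state)-\policy_2(\cdot|\state)}[1]$, which together with the prefactor $\discount/(1-\discount)$ is exactly the claim.

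I expect the only delicate point to be the bookkeeping of the left/right action of the kernels on row vectors and the correct orientation of the contraction inequalities; all the substantive content reduces to the push-through identity plus the observation that row-stochastic kernels do not expand the $\ell_1$ norm of distributions. A purely series-based alternative—expanding $P_1^t-P_2^t=\sum_{k=0}^{t-1}P_1^k(P_1-P_2)P_2^{t-1-k}$ and summing the resulting geometric series in $t$—reaches the same bound, but it is more cumbersome, so I would favor the resolvent route.
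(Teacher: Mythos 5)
Your proof is correct, and it reaches the paper's bound by a genuinely different route. The paper never writes the visitation measures in closed form: it starts from the flow-conservation equation $\visitation[\initdist][\policy_i](\state) = (1-\discount)\initdist(\state) + \discount\sum_{\state'}\kerMDP[\policy_i](\state|\state')\visitation[\initdist][\policy_i](\state')$, subtracts the two copies, inserts the cross term $\kerMDP(\state|\state',\action')\policy_1(\action'|\state')\visitation[\initdist][\policy_2](\state')$ so that the triangle inequality splits the difference into a policy-difference part and a visitation-difference part, and thereby obtains the self-bounding inequality $\norm{\visitation[\initdist][\policy_1]-\visitation[\initdist][\policy_2]}[1] \le \discount\sup_{\state}\norm{\policy_1(\cdot|\state)-\policy_2(\cdot|\state)}[1] + \discount\norm{\visitation[\initdist][\policy_1]-\visitation[\initdist][\policy_2]}[1]$, which it rearranges (legitimate since the state space is finite, so the norm is finite). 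Your resolvent argument makes the same inversion explicit rather than implicit: with $P_i \eqdef \kerMDP[\policy_i]$, the push-through identity yields the exact representation $\visitation[\initdist][\policy_1]-\visitation[\initdist][\policy_2]=\discount\,\visitation[\initdist][\policy_1](P_1-P_2)(\Id-\discount P_2)^{-1}$, and the bound follows from the $\ell_1$-contractivity of row-stochastic kernels together with the Neumann series. The paper's route is more elementary (no invertibility or series convergence needs to be invoked, and the rearrangement step is exactly an implicit inversion of $\Id-\discount P$), while yours buys an exact identity for the difference of occupancy measures that is reusable beyond this lemma (e.g., for simulation-lemma or perturbation-expansion arguments) and cleanly isolates where the factor $\discount$ and the factor $1/(1-\discount)$ each come from. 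The steps you flag as delicate do check out: the row-vector orientation is consistent with \eqref{def:visitation_measure}, the step $\norm{P_1(\cdot|\state)-P_2(\cdot|\state)}[1]\le\norm{\policy_1(\cdot|\state)-\policy_2(\cdot|\state)}[1]$ holds by exactly the stochasticity argument you describe, and your final collapse of the convex combination into a supremum correctly uses that $\visitation[\initdist][\policy_1]$ is normalized to be a probability distribution under \eqref{def:visitation_measure}. Both arguments produce the same constant $\discount/(1-\discount)$.
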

\begin{proof}
Let us start from the definition of  flow conservation constraints for the discounted state visitation \citep{puterman94}, for $i \in \{1,2\}$, we have
\begin{align*}
\visitation[\initdist][\policy_{i}](\state) = (1-\discount) \initdist(\state) + \discount \sum_{\state'} \kerMDP[\policy_i](\state | \state') \visitation[\initdist][\policy_{i}](\state')\eqsp.
\end{align*}
Then, we have
\begin{align*}
\sum_{\state \in \S} |\visitation[\initdist][\policy_{2}](\state) - \visitation[\initdist][\policy_{1}](\state)| &\leq \discount \sum_{(\state',\action')} \sum_{\state} \left|\kerMDP(\state | \state',\action') \policy_{2}(\action'|\state') \visitation[\initdist][\policy_2](\state') - \kerMDP(\state | \state',\action') \policy_{1}(\action'|\state') \visitation[\initdist][\policy_{1}](s') \right| \\
&\leq \discount \sum_{\state',\action'} \sum_{\state} \kerMDP(\state | \state',\action') \left|\policy_{2}(a'|s') -\policy_{1}(a'|s') \right| \visitation[\initdist][\policy_{2}](\state') \\
&+ \discount \sum_{s',a'} \sum_{s}  \kerMDP(s | s',a') \policy_{1}(a'|s') \left| \visitation[\initdist][\policy_{1}](s') - \visitation[\initdist][\policy_2](s') \right| \\
& \leq \discount \sup_{\state \in \S} \norm{\policy_{1}(\cdot|\state) - \policy_{2}(\cdot|\state)}[1] +  \discount \sum_{\state'} |\visitation[\initdist][\policy_{1}](\state') - \visitation[\initdist][\policy_{2}](\state')|\eqsp,
\end{align*} 
which concludes the proof.
\end{proof}
\begin{lemma}[Performance Difference Lemma]
\label{lem:performance_difference_lemma}
It holds that
\begin{align*}
\regvaluefunc[\star](\initdist) - \regvaluefunc[\param](\initdist)  =  \frac{1}{1- \discount} \sum_{\state \in \S} \visitation[\initdist][\optpolicy](\state) \bigg[\sum_{\action \in \A} \optpolicy(\action|\state) \regqfunc[\param](\state, \action) - \temp \divergence[\optpolicy(\cdot|\state)][\refpolicy(\cdot|\state)] - \regvaluefunc[\policy_{\param}](\state)\bigg].
\end{align*}
\end{lemma}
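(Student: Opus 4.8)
The plan is to reduce the claim to a standard discounted-occupancy inversion, mirroring the classical performance-difference lemma but carrying the regularization penalty as if it were part of the reward. Write $\Delta(\state) \eqdef \regvaluefunc[\star](\state) - \regvaluefunc[\param](\state)$ and recall $\regvaluefunc[\star] = \regvaluefunc[\optpolicy]$. The goal is to produce a fixed-point recursion for $\Delta$ whose inversion against $\initdist$ yields exactly $\frac{1}{1-\discount}\sum_{\state}\visitation[\initdist][\optpolicy](\state)\,\delta(\state)$, where $\delta(\state)$ is the bracketed term in the statement.

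First I would apply the regularized Bellman fixed-point equation \eqref{lem:bellman_equation} to the optimal policy $\optpolicy$, obtaining $\regvaluefunc[\star](\state) = \sum_\action \optpolicy(\action|\state)\rewardMDP(\state,\action) - \temp\divergence[\optpolicy(\cdot|\state)][\refpolicy(\cdot|\state)] + \discount\sum_{\action,\state'}\optpolicy(\action|\state)\kerMDP(\state'|\state,\action)\regvaluefunc[\star](\state')$. Next, I would eliminate the reward using the definition \eqref{def:regularised_Q_function} of the regularized $Q$-function, namely $\sum_\action \optpolicy(\action|\state)\rewardMDP(\state,\action) = \sum_\action \optpolicy(\action|\state)\regqfunc[\param](\state,\action) - \discount\sum_{\action,\state'}\optpolicy(\action|\state)\kerMDP(\state'|\state,\action)\regvaluefunc[\param](\state')$. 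Substituting and collecting the two discounted sums into a single term in $\regvaluefunc[\star]-\regvaluefunc[\param]$, then subtracting $\regvaluefunc[\param](\state)$ from both sides, gives the recursion
\[
\Delta(\state) = \delta(\state) + \discount\sum_{\state'}\kerMDP[\optpolicy](\state'|\state)\,\Delta(\state'),
\]
with $\kerMDP[\optpolicy](\state'|\state) = \sum_\action \optpolicy(\action|\state)\kerMDP(\state'|\state,\action)$ the state-to-state kernel induced by $\optpolicy$ (as in the Notations section) and $\delta(\state) = \sum_\action \optpolicy(\action|\state)\regqfunc[\param](\state,\action) - \temp\divergence[\optpolicy(\cdot|\state)][\refpolicy(\cdot|\state)] - \regvaluefunc[\param](\state)$.

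In vector form this reads $\Delta = \delta + \discount\kerMDP[\optpolicy]\Delta$, and since $\discount<1$ the operator $\Id - \discount\kerMDP[\optpolicy]$ is invertible with Neumann series $\sum_{t\ge0}\discount^t\kerMDP[\optpolicy]^t$, so $\Delta = (\Id-\discount\kerMDP[\optpolicy])^{-1}\delta$. Finally I would contract against $\initdist$ to get $\Delta(\initdist) = \initdist^\top(\Id-\discount\kerMDP[\optpolicy])^{-1}\delta = \sum_{t\ge0}\discount^t\,\initdist\,\kerMDP[\optpolicy]^t\,\delta$, and recognize the coefficient of $\delta(\state)$ as $\frac{1}{1-\discount}\visitation[\initdist][\optpolicy](\state)$ by the definition \eqref{def:visitation_measure} of the discounted state visitation, which is precisely the claimed identity.

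There is essentially no hard obstacle here: the only point requiring care is that the penalty $-\temp\divergence[\optpolicy(\cdot|\state)][\refpolicy(\cdot|\state)]$ is tied to $\optpolicy$ and enters the recursion additively at each state, so it behaves as a state-dependent reward and is faithfully carried through the occupancy-weighted sum; one must ensure it is attached to $\optpolicy$ (the policy generating $\visitation[\initdist][\optpolicy]$) rather than to $\cpolicy{\param}$, which is the subtle asymmetry of this version. The summability needed to identify the Neumann series with the occupancy-measure representation is routine given boundedness of the rewards and of $\divergencemax$ under \assumptionfref.
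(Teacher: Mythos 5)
Your proposal is correct and follows essentially the same route as the paper's proof: both derive the one-step recursion $\Delta(\state) = \delta(\state) + \discount\,(\kerMDP[\optpolicy]\Delta)(\state)$ from the regularized Bellman equation together with the definition of $\regqfunc[\param]$, and then unroll it against $\initdist$ to recognize the discounted occupancy $\visitation[\initdist][\optpolicy]/(1-\discount)$. The only cosmetic differences are that the paper reaches the recursion by adding and subtracting $\sum_{\action}\optpolicy(\action|\state)\regqfunc[\param](\state,\action)$ and using $\regqfunc[\star]-\regqfunc[\param]=\discount\,\kerMDP(\regvaluefunc[\star]-\regvaluefunc[\param])$, whereas you substitute the reward directly via the definition of $\regqfunc[\param]$ and phrase the unrolling as an explicit Neumann-series inversion — the same algebra in a slightly different order.
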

\begin{proof}
Fix $\param \in \logitspace$ and any state $\state \in \S$. It holds that
\begin{align*}
&\regvaluefunc[\star](\state) - \regvaluefunc[\policy_{\param}](\state)  = \sum_{\action \in \A} \optpolicy(\action|\state) \regqfunc[\star](\state, \action)  - \sum_{\action \in \A} \cpolicy{\param}(\action|\state)  \regqfunc[\param](\state, \action) 
\\
&- \temp \divergence[\optpolicy(\cdot|\state)][\refpolicy(\cdot|\state)] + \temp \divergence[\expandafter{\cpolicy{\param}}(\cdot|\state)][\refpolicy(\cdot|\state)]
\\
&= \sum_{\action \in \A} \optpolicy(\action|\state) \left( \regqfunc[\star](\state, \action) - \regqfunc[\param](\state, \action) \right) + \sum_{\action \in \A} \left( \optpolicy(\action|\state) -  \cpolicy{\param}(\action|\state) \right) \regqfunc[\param](\state, \action)
\\
&- \temp \divergence[\optpolicy(\cdot|\state)][\refpolicy(\cdot|\state)] + \temp \divergence[\expandafter{\cpolicy{\param}}(\cdot|\state)][\refpolicy(\cdot|\state)]
\\
&= \discount \sum_{\action \in \A} \optpolicy(\action|\state) \sum_{\state'\in \S} \kerMDP(\state'|\state, \action)\left( \regvaluefunc[\star](\state') - \regvaluefunc[\param](\state') \right) + \sum_{\action \in \A} \left( \optpolicy(\action|\state) -  \cpolicy{\param}(\action|\state) \right) \regqfunc[\param](\state, \action)
\\
&- \temp \divergence[\optpolicy(\cdot|\state)][\refpolicy(\cdot|\state)] + \temp \divergence[\expandafter{\cpolicy{\param}}(\cdot|\state)][\refpolicy(\cdot|\state)]\eqsp,
\end{align*}
where in the last equality, we used the definition of the regularized Q-function \eqref{eq:optimal_value}. Expanding the recursion yields
\begin{align*}
\regvaluefunc[\star](\state) - \regvaluefunc[\policy_{\param}](\state) &= \frac{1}{1-\discount} \sum_{\state'\in \S} \visitation[\state][\star](\state') \left[ \sum_{\action \in \A} \left( \optpolicy(\action|\state') -  \cpolicy{\param}(\action|\state') \right) \regqfunc[\param](\state', \action)  \right]
\\
&+ \frac{1}{1-\discount} \sum_{\state'\in \S} \visitation[\state][\star](\state') \left[\temp \divergence[\expandafter{\cpolicy{\param}}(\cdot|\state')][\refpolicy(\cdot|\state')]  - \temp \divergence[\optpolicy(\cdot|\state')][\refpolicy(\cdot|\state')] \right]
\\
&= \frac{1}{1-\discount} \sum_{\state'\in \S} \visitation[\state][\star](\state') \left[ \sum_{\action \in \A}  \optpolicy(\action|\state') \regqfunc[\param](\state', \action) - \temp \divergence[\optpolicy(\cdot|\state')][\refpolicy(\cdot|\state')]  \right]
\\
&- \frac{1}{1-\discount} \sum_{\state'\in \S} \visitation[\state][\star](\state') \left[\sum_{\action \in \A}  \cpolicy{\param}(\action|\state') \regqfunc[\param](\state', \action) - \temp \divergence[\expandafter{\cpolicy{\param}}(\cdot|\state')][\refpolicy(\cdot|\state')] \right] \eqsp,
\end{align*}
which concludes the proof.
\end{proof}

\begin{figure*}[t]
    \centering
    \begin{subfigure}[b]{0.24\textwidth}
        \centering
        \includegraphics[width=\textwidth]{ICML/plots/plots_landscape/landscape_softmax_entropy.pdf}
        \caption{\centering Softmax / \\ \centering Entropy Regularization}
        \label{subfig:landscape_softmax_entropy}
    \end{subfigure}
    \begin{subfigure}[b]{0.24\textwidth}
        \centering
        \includegraphics[width=\textwidth]{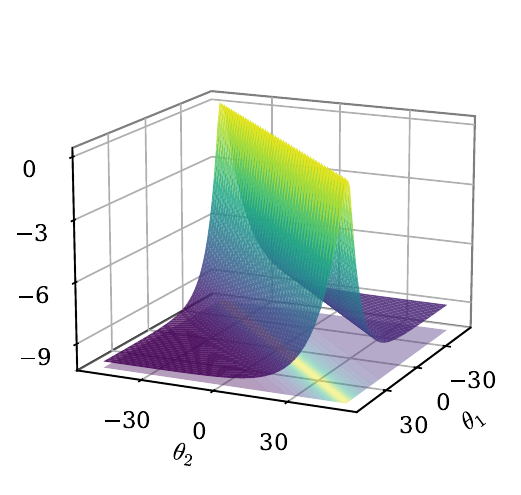}
        \caption{\centering Softmax / \\ \centering $0.1$-Tsallis Regularization}
        \label{subfig:landscape_softmax_tsallis}
    \end{subfigure}
    \begin{subfigure}[b]{0.24\textwidth}
        \centering
        \includegraphics[width=\textwidth]{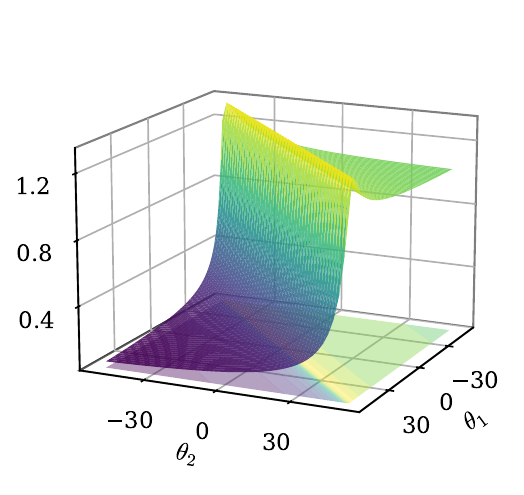}
        \caption{\centering $0.1$-Tsallis SoftArgmax / \\ \centering Entropy Regularization}    \label{subfig:landscape_tsallis_entropy}
    \end{subfigure}
    \begin{subfigure}[b]{0.24\textwidth}
        \centering
        \includegraphics[width=\textwidth]{ICML/plots/plots_landscape/landscape_tsallisparam_alpha0.1_tsallisreg_alpha0.1.pdf}
        \caption{\centering $0.1$-Tsallis SoftArgmax / \\ \centering $0.1$-alpha Tsallis Regularization}
        \label{subfig:landscape_tsallis_tsallis}
    \end{subfigure}
    \caption{Regularized value landscapes for a one-state, two-action MDP (rewards \(0,1\)) for different coupling between parameterizations and regularizations.}
    \vspace{-1em}
    \label{fig:landscape_appendix}
\end{figure*}

\begin{lemma}[Lemma 23 of \cite{mei2020global} ]
\label{lem:spectral_norm_H}
Let $\policy \in \pA$. Denote $H(\pi) = \mathrm{diag}(\policy) - \policy \policy^{\top}$. For any vector $x\in \rset^{\nactions}$
\begin{align*}
\norm{H(\policy)\left( x - \frac{\pscal{x}{\Ind_{\nactions}}}{\nactions} \Ind_{\nactions}\right)}[2] \geq \min_{\action \in \A} \policy(\action) \cdot \norm{ x - \frac{\pscal{x}{\Ind_{\nactions}}}{\nactions} \Ind_{\nactions}}[2] \eqsp.
\end{align*}
\end{lemma}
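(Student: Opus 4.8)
The plan is to use that $H(\policy)=\diag(\policy)-\policy\policy^{\top}$ is symmetric positive semidefinite with kernel spanned by $\Ind_{\nactions}$, and that the centered vector $z := x - \tfrac{\pscal{x}{\Ind_{\nactions}}}{\nactions}\Ind_{\nactions}$ lives precisely in the orthogonal complement of that kernel. First I would record the two structural facts that drive everything: (i) $z$ is the orthogonal projection of $x$ onto $\Ind_{\nactions}^{\perp}$, so $\pscal{z}{\Ind_{\nactions}}=\sum_{a} z_a = 0$; and (ii) $H(\policy)\Ind_{\nactions}=\policy-\policy\,\pscal{\policy}{\Ind_{\nactions}}=\policy-\policy=0$ because $\policy\in\pA$ sums to one. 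The statement is trivial when $z=0$, so I would assume $z\neq 0$ and reduce the claim to a lower bound on the quadratic form of $H(\policy)$ at $z$.

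The heart of the argument is the identity $\pscal{z}{H(\policy)z}=\sum_{a}\policy(a) z_a^2 - \big(\sum_{a}\policy(a)z_a\big)^2$, which is exactly the variance of the coordinates of $z$ under the distribution $\policy$. Writing $\bar z := \sum_{a}\policy(a)z_a$ and using the standard identity $\sum_{a}\policy(a)z_a^2-\bar z^{2}=\sum_{a}\policy(a)(z_a-\bar z)^2$, I would bound $\sum_{a}\policy(a)(z_a-\bar z)^2 \geq \big(\min_{\action\in\A}\policy(\action)\big)\sum_{a}(z_a-\bar z)^2$. Finally, invoking the centering constraint $\sum_a z_a = 0$, I expand $\sum_{a}(z_a-\bar z)^2 = \norm{z}[2]^2 + \nactions\,\bar z^{2}\geq \norm{z}[2]^2$, which yields $\pscal{z}{H(\policy)z}\geq \big(\min_{\action\in\A}\policy(\action)\big)\norm{z}[2]^2$.

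To conclude I would apply Cauchy--Schwarz directly to this quadratic form: $\big(\min_{\action\in\A}\policy(\action)\big)\norm{z}[2]^2 \leq \pscal{z}{H(\policy)z}\leq \norm{z}[2]\,\norm{H(\policy)z}[2]$, and divide through by $\norm{z}[2]>0$ to obtain $\norm{H(\policy)z}[2]\geq \big(\min_{\action\in\A}\policy(\action)\big)\norm{z}[2]$, which is exactly the claim. (One could instead phrase the last step spectrally, noting that the quadratic-form bound shows the smallest eigenvalue of $H(\policy)$ restricted to $\Ind_{\nactions}^{\perp}$ is at least $\min_{\action\in\A}\policy(\action)$ and that $z$ lives in that subspace; the Cauchy--Schwarz route avoids the eigenvalue bookkeeping.)

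No step here is a genuine obstacle, as the computation is elementary; the only point requiring care is the use of the centering constraint. The variance lower bound controls the deviations $z_a-\bar z$ rather than $z_a$ themselves, so one must pass from $\sum_{a}(z_a-\bar z)^2$ back to $\norm{z}[2]^2$, and it is precisely the orthogonality $\sum_a z_a=0$ that makes this pass favorable, producing the nonnegative surplus $\nactions\,\bar z^{2}$ rather than an adverse sign.
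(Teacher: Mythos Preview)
Your argument is correct. The paper does not actually prove this lemma; it merely cites it as Lemma~23 of \cite{mei2020global} and uses it as a black box, so there is no ``paper's own proof'' to compare against. Your route via the quadratic form $\pscal{z}{H(\policy)z}$, the variance identity, and Cauchy--Schwarz is clean and complete; the crucial observation that $\sum_a z_a=0$ turns $\sum_a (z_a-\bar z)^2$ into $\norm{z}[2]^2+\nactions\,\bar z^2\geq\norm{z}[2]^2$ is exactly the point where the centering is used.

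As a minor remark, one can shortcut the Cauchy--Schwarz step entirely: since $(H(\policy)z)_a=\policy(a)(z_a-\bar z)$, one has directly
\[
\norm{H(\policy)z}[2]^2=\sum_a \policy(a)^2(z_a-\bar z)^2\;\geq\;\big(\min_{\action}\policy(\action)\big)^2\sum_a(z_a-\bar z)^2\;\geq\;\big(\min_{\action}\policy(\action)\big)^2\norm{z}[2]^2,
\]
using the same centering identity in the last step. This avoids the quadratic-form detour, but your version is equally valid.
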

\begin{lemma}[\citealt{Danskin1966}]
\label{lem:danskin_lemma}
Let $Z\subset\mathbb{R}^m$ be compact and let $\phi:\mathbb{R}^n\times Z\to\mathbb{R}$ be continuous. Define
\[
f(x)\;=\;\max_{z\in Z}\,\phi(x,z),\qquad
Z_0(x)\;=\;\arg\max_{z\in Z}\,\phi(x,z).
\]
Assume that for each fixed $z\in Z$, the map $x\mapsto \phi(x,z)$ is differentiable. If $Z_0(x)=\{\bar z\}$ and $x\mapsto \phi(x,\bar z)$ is differentiable at $x$, then $f$ is differentiable at $x$ with
\[
\frac{\partial f(x)}{ \partial x}\;=\; \frac{\partial \phi(x,\bar z)}{\partial x} .
\]
\end{lemma}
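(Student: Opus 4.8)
The plan is to prove that the one-sided directional derivative of $f$ in every direction $d\in\mathbb{R}^n$ exists and equals $\langle \nabla_x\phi(x,\bar z),d\rangle$; since this expression is linear in $d$ and the associated remainder can be controlled uniformly, $f$ is differentiable at $x$ with gradient $\nabla_x\phi(x,\bar z)$. Throughout I write $g=\nabla_x\phi(x,\bar z)$ and use that $f(x)=\phi(x,\bar z)$ because $Z_0(x)=\{\bar z\}$.

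First I would establish the lower bound. Since $f(y)\ge\phi(y,\bar z)$ for every $y$, with equality at $y=x$, for $t>0$ we have $\tfrac{f(x+td)-f(x)}{t}\ge\tfrac{\phi(x+td,\bar z)-\phi(x,\bar z)}{t}$, and letting $t\downarrow0$ while using the assumed differentiability of $\phi(\cdot,\bar z)$ at $x$ yields $\liminf_{t\downarrow0}\tfrac{f(x+td)-f(x)}{t}\ge\langle g,d\rangle$.

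For the upper bound, for each small $t>0$ pick a maximizer $z_t\in Z_0(x+td)$, which exists by the Weierstrass theorem ($Z$ compact, $\phi$ continuous). Then $f(x+td)=\phi(x+td,z_t)$ while $f(x)\ge\phi(x,z_t)$, so $\tfrac{f(x+td)-f(x)}{t}\le\tfrac{\phi(x+td,z_t)-\phi(x,z_t)}{t}$. The crux---and the step I expect to be the main obstacle---is to show $z_t\to\bar z$ as $t\downarrow0$ and then to pass to the limit in this difference quotient along the \emph{moving} maximizers $z_t$. Convergence $z_t\to\bar z$ follows from compactness and upper hemicontinuity of the $\argmax$ correspondence (Berge's maximum theorem): any sequence $t_k\downarrow0$ admits a subsequence with $z_{t_k}\to z^\star$, and taking limits in $\phi(x+t_kd,z_{t_k})\ge\phi(x+t_kd,\bar z)$ gives $\phi(x,z^\star)\ge\phi(x,\bar z)=f(x)$, hence $z^\star\in Z_0(x)=\{\bar z\}$; as every subsequential limit equals $\bar z$, we conclude $z_t\to\bar z$.

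Finally, to pass to the limit I would invoke the mean value theorem on the scalar map $s\mapsto\phi(x+sd,z_t)$: writing $\phi(x+td,z_t)-\phi(x,z_t)=\langle\nabla_x\phi(\xi_t,z_t),td\rangle$ for some $\xi_t$ on the segment $[x,x+td]$, the difference quotient equals $\langle\nabla_x\phi(\xi_t,z_t),d\rangle$, which tends to $\langle g,d\rangle$ as $\xi_t\to x$ and $z_t\to\bar z$. Here the joint continuity of $(x,z)\mapsto\nabla_x\phi(x,z)$ is precisely what legitimizes the limit along the varying $z_t$; this holds in our application, where $\phi=h^f$ and $\nabla_x h^f(x,\nu)=\nu$ is manifestly jointly continuous. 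Combining with the lower bound gives $\lim_{t\downarrow0}\tfrac{f(x+td)-f(x)}{t}=\langle g,d\rangle$, and the same argument with $t\uparrow0$ (equivalently, direction $-d$) yields the two-sided directional derivative. Since $d\mapsto\langle g,d\rangle$ is linear, and the mean-value estimate also bounds the remainder $f(x+h)-f(x)-\langle g,h\rangle$ above by $\|\nabla_x\phi(\xi_h,z_h)-g\|\,\|h\|=o(\|h\|)$ and below by $\phi(x+h,\bar z)-\phi(x,\bar z)-\langle g,h\rangle=o(\|h\|)$ uniformly in the direction of $h$, $f$ is Fréchet differentiable at $x$ with $\partial f(x)/\partial x=g=\partial\phi(x,\bar z)/\partial x$, as claimed.
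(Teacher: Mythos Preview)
The paper does not actually prove this lemma: it is stated in the ``Technical Lemmas'' section with a citation to \citealt{Danskin1966} and no proof is given. So there is no ``paper's own proof'' to compare against; the authors simply invoke the classical result.

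Your argument is the standard proof of Danskin's theorem and is essentially correct. You also correctly identify the one delicate point: the upper-bound step via the mean value theorem requires joint continuity of $(x,z)\mapsto\nabla_x\phi(x,z)$, an assumption that is \emph{not} stated in the lemma as written but is part of the usual hypotheses of Danskin's theorem and is trivially satisfied in the paper's application (where $\nabla_x h^f(x,\nu)=\nu$). Without that joint continuity, the lemma as literally stated is not quite true, so your caveat is well placed; the paper's statement is slightly informal in this respect.
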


\section{Experiments}
\label{sec:app:experiments_appendix}
\subsection{Uncoupling the parameterization and the regularization}

Figure~\ref{fig:landscape_appendix} compares the regularized value landscapes induced by different couplings between policy parameterizations and regularizers. Both Softmax / Entropy Regularization (Figure~\ref{subfig:landscape_softmax_entropy}) and Softmax / $0.1$-Tsallis Regularization (Figure~\ref{subfig:landscape_softmax_tsallis}) produce highly ill--conditioned objectives, characterized by wide flat plateaus separated by extremely sharp ridges. These geometries create large regions with vanishing gradients together with nearly singular directions, which are known to slow down and destabilize policy--gradient methods. Switching to the Tsallis SoftArgmax parameterization already improves the situation: under Entropy Regularization (Figure~\ref{subfig:landscape_tsallis_entropy}), the flat directions are reduced and the basin around the optimum becomes more pronounced. However, the most favorable geometry is obtained when Tsallis SoftArgmax is coupled with Tsallis regularization (Figure~\ref{subfig:landscape_tsallis_tsallis}). In this matched Tsallis--Tsallis regime, the landscape becomes smooth, strongly curved, and well--conditioned, with a single broad basin leading to the optimum and no spurious flat regions or steep barriers. This alignment between the geometry induced by the parameterization and that of the regularizer yields an almost quadratic objective in logits, explaining why the Tsallis--Tsallis coupling provides the best convergence behavior.

\subsection{Tabular experiments}

\begin{figure*}[t]
    \centering
    \begin{subfigure}[b]{0.32\textwidth}
        \centering
        \includegraphics[width=\textwidth]{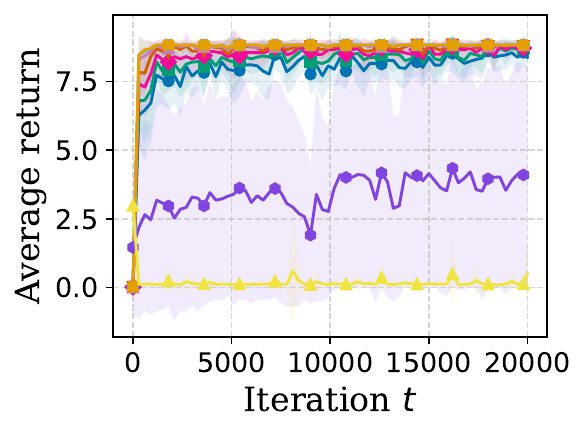}
        \caption{\texttt{NChain}, Size $= 10$ }
        \label{subfig:nchain_size_10}
    \end{subfigure}
    \begin{subfigure}[b]{0.32\textwidth}
        \centering
        \includegraphics[width=\textwidth]{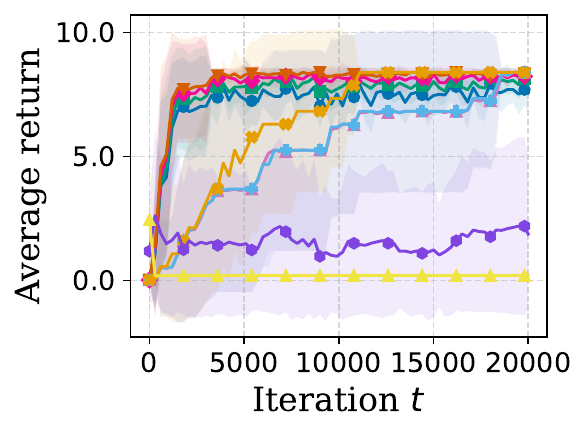}
        \caption{\texttt{NChain}, Size $= 15$}
        \label{subfig:nchain_size_15}
    \end{subfigure}
    \begin{subfigure}[b]{0.32\textwidth}
        \centering
        \includegraphics[width=\textwidth]{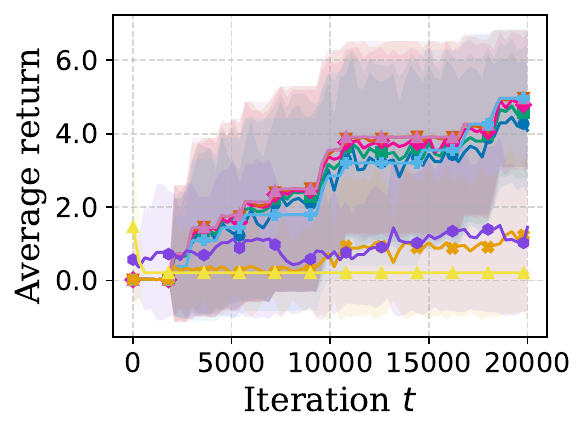}
        \caption{\texttt{NChain}, Size $= 20$}
        \label{subfig:nchain_size_20}
    \end{subfigure}
    \begin{subfigure}[b]{0.32\textwidth}
        \centering
        \includegraphics[width=\textwidth]{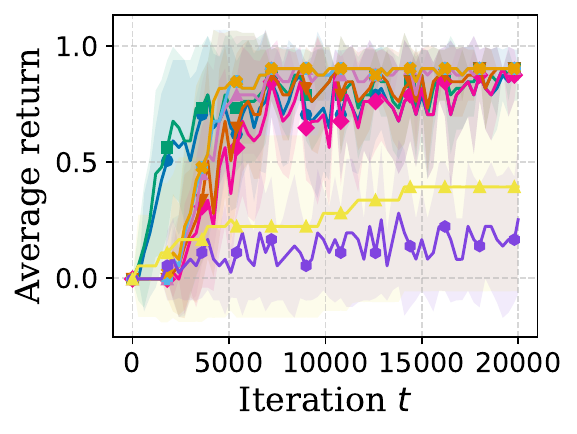}
        \caption{\texttt{Deepsea}, Size $= 10$}
        \label{subfig:deepsea_size_10_appendix}
    \end{subfigure}
    \begin{subfigure}[b]{0.32\textwidth}
        \centering
        \includegraphics[width=\textwidth]{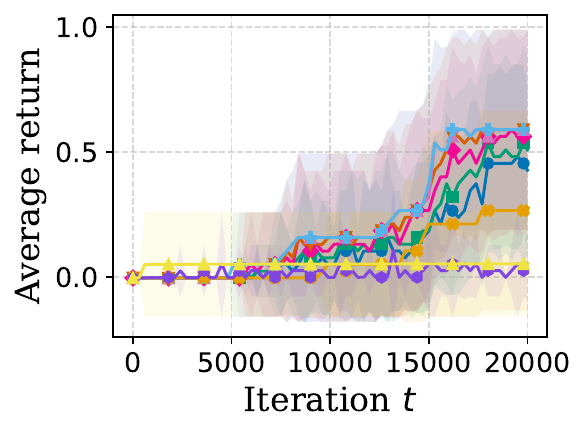}
        \caption{\texttt{Deepsea}, Size $= 15$}        \label{subfig:deepsea_size_15_appendix.pdf}
    \end{subfigure}
\begin{subfigure}[t]{0.32\textwidth}
    \centering
    \raisebox{0.23\height}{%
        \includegraphics[width=0.8\textwidth]{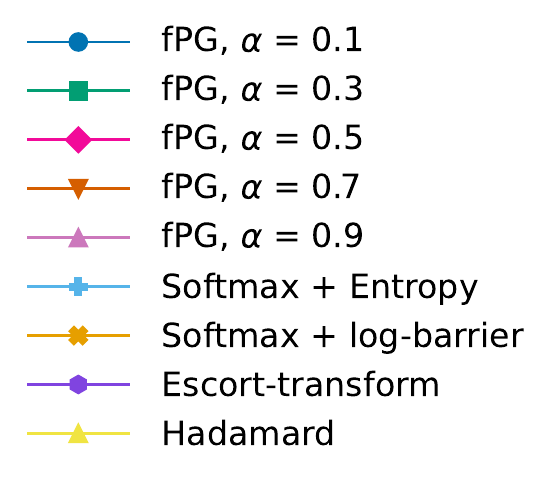}%
    }
    \caption*{}
\end{subfigure}
\vspace{-2em}
    \caption{Average return as a function of training iterations on \texttt{NChain} (top row: sizes 10, 15, 20) and \texttt{DeepSea} (bottom row: sizes 10, 15). For $\PG$, we report the best configuration for each divergence parameter $\alpha$; for all other methods, we report the best-performing configuration over their respective hyperparameters. Curves show the mean performance over $15$ independent seeds, with shaded regions indicating one standard deviation.}
    \vspace{-1em}
    \label{fig:linearspeedup}
\end{figure*}
We evaluate the empirical performance of $\PG$ equipped with $\alpha$-Tsallis regularization, with the goal of assessing how our coupled parameterization–regularization framework compares to the baselines summarized in \Cref{tab:softargmax_summary}. All methods are evaluated on the \emph{unregularized} return objective, and we report learning curves as a function of training iterations. For each value of the Tsallis parameter $\alpha$, we tune both the temperature parameter $\lambda$ and the step-size $\eta$ over the grid
\begin{align*}
\lambda \in \{10^{-3},\,10^{-2},\,10^{-1},\,1.0\},
\qquad
\eta \in \{10^{-4},\,3\times10^{-4},\,10^{-3}\}.
\end{align*}
For the baseline methods, we analogously select the best-performing configuration over their respective hyperparameter grids. All curves are averaged over $15$ independent random seeds, and shaded regions indicate one standard deviation.

\paragraph{NChain and DeepSea.}
We consider two canonical tabular exploration benchmarks. 
\texttt{NChain} \c is a long-horizon chain environment in which the agent must repeatedly move in one direction to reach a terminal state with a large $+1$ reward, while a small immediate reward equal to $+0.01$ is available for moving in the opposite direction. As the chain length increases, the probability of discovering the optimal policy decays exponentially unless sufficient structured exploration is induced.
The \texttt{DeepSea} environment, described in \Cref{sec:experiments}, is a two-dimensional sparse-reward navigation task in which the agent must follow a precise sequence of actions to reach a distant rewarding state. Both environments therefore test the ability of a policy-gradient method to propagate credit over long horizons and through sparse feedback.

\paragraph{Results.}Figure~\ref{fig:linearspeedup} reports learning curves on both environments. 
On \texttt{NChain} (top row, \Cref{subfig:nchain_size_10,subfig:nchain_size_15,subfig:nchain_size_20}), the standard softmax--entropy policy gradient baseline performs competitively for the smallest instance (Size~10), but its performance degrades markedly as the chain length increases. In particular, for Size~15 a substantial performance gap opens up, and for Size~20 softmax converges slowly and remains far from optimal. In contrast, $\PG$ with $\alpha<1$ achieves substantially higher returns and converges much faster for intermediate horizons, most notably for Size~1,5 where a clear performance gap with the softmax entropy-regularized policy gradient emerges. For the longest chain (Size~20), Tsallis regularization continues to outperform the other baselines but exhibits a similarly slow convergence trend to softmax--entropy, reflecting the difficulty of the problem at this scale. Moreover, alternative regularization schemes fail completely on the longest \texttt{NChain} of size 20. Additionally, Escort policy gradients and Hadamard parameterizations plateau at very low returns across all chain lengths, indicating the need for additional exploration.

A similar pattern is observed in \texttt{DeepSea} (bottom row, \Cref{subfig:deepsea_size_10_appendix,subfig:deepsea_size_15_appendix.pdf}). For Size~10, Tsallis-regularized policies perform on par with the softmax--entropy baseline and in some cases converge faster. For the more challenging Size~15 instance, Tsallis policies with $\alpha<1$ remain competitive. In this environment, $\PG$ dominates all other baselines, which fail to make meaningful progress toward high-return policies.

Although no single Tsallis parameter $\alpha<1$ is uniformly optimal across all problem sizes, a clear and robust trend emerges across both \texttt{NChain} and \texttt{DeepSea}: \emph{for every environment, there exists an $\alpha<1$ that strictly outperforms all alternative schemes and matches or exceeds the performance of softmax--entropy}. These results corroborate the observations of \Cref{sec:experiments} and confirm that jointly tailoring the policy parameterization and the regularization to the structure of the problem leads to improved empirical performance.

\subsection{Full description of the deepRL experiments}

We provide here a full description of the \texttt{$\alpha$-Tsallis PPO} algorithm and the experimental setup of \Cref{sec:experiments}.

\paragraph{Algorithm description.} $\alpha$-\texttt{Tsallis PPO} follows the same overall algorithmic structure as standard \texttt{PPO}, with only minor modifications: the softmax policy parameterization and entropy regularization are replaced by their Tsallis $\alpha$-softargmax and Tsallis $\alpha$-regularization counterparts. Specifically, the policy network outputs unnormalised action logits, which are mapped to a probability distribution via the Tsallis $\alpha$-softargmax instead of a usual softmax (see Section~3). At each interaction step, we compute the Tsallis divergence between the current policy and the uniform distribution over actions using the same coefficient $\alpha$, and subtract $
\temp \cdot \divergence[\pi(\cdot|s)][\mathrm{uniform}][f_{\alpha}]$ (see \Cref{tab:softargmax_summary} for the expression of $f_{\alpha}$)
from the received reward.\footnote{This regularized reward is used in the computation of advantages and value targets for the \texttt{DeepSea} environment. The reason is that adding entropy to the rewards turns out to be critical for the method's final performance, since the original softmax-PPO fails even in \texttt{DeepSea} of size $20$. Notice that it contrasts with an original PPO that adds entropy regularization only to a loss function and not to a reward. For \texttt{Noisy Carpole}, we use a standard implementation of PPO as a baseline.}.  All other components, including the clipped surrogate objective, value loss, and advantage normalization, remain unchanged.

\paragraph{Training pipeline.}
At each update, we collect trajectories from $16$ parallel environments for $32$ steps, followed by $16$ epochs of PPO optimisation over $4$ minibatches. We use a discounting factor $\gamma = 0.99$ and GAE $\lambda = 0.95$, and apply gradient clipping at norm $0.5$. Both actor and critic are two-layer multilayer perceptrons with $64$ hidden units and \texttt{tanh} activations. Optimization is performed using \texttt{Adam
}~\citep{kingma2014adam}, and we perform a grid search over
\[
\lambda \in \{10^{-3},\, 10^{-2},\, 10^{-1},\, 1.0\}, \quad
\eta \in \{10^{-4},\, 3 \times 10^{-4},\, 10^{-3}\}.
\] 
Each configuration is evaluated across $25$ seeds. Episode returns are aggregated per configuration and reported as mean~$\pm$~standard error. For each $\alpha$, we select the configuration that produces the highest last reward on average.

\end{document}